\newcommand{\rgta}{\rightarrow}
\newcommand{\lt}{\left}
\newcommand{\rt}{\right}
\newcommand{\zo}{[0,1]}
\newcommand{\bits}{\{0,1\}}
\newcommand{\fr}[1]{\ensuremath{\frac{1}{#1}}}
\newcommand{\ext}{{\mathsf{ext}}}
\newcommand{\one}{{\mathbf 1}}
\newcommand{\ECE}{\mathsf{ECE}}
\newcommand{\dCE}{\mathsf{dCE}}
\newcommand{\ldCE}{\underline{\dCE}}
\newcommand{\udCE}{\overline{\dCE}}
\newcommand{\kCE}{\mathsf{kCE}}
\newcommand{\kCEG}{\kCE^{\mathsf{Gauss}}}
\newcommand{\kCEL}{\kCE^{\mathsf{Lap}}}
\newcommand{\wCE}{\mathsf{wCE}}
\newcommand{\ekCE}{\widehat{\kCE}}
\newcommand{\ewCE}{\widehat{\wCE}}
\newcommand{\Rad}{\mathcal{R}}
\newcommand{\scerror}{\mathsf{smCE}}
\newcommand{\intCE}{\mathsf{intCE}}
\newcommand{\RintCE}{\mathsf{RintCE}}
\newcommand{\SintCE}{\mathsf{SintCE}}
\DeclareMathOperator{\Ima}{Im}
\newcommand{\poly}{\mathrm{poly}}
\newcommand{\X}{\mathcal{X}}
\newcommand{\mS}{\mathcal{S}}
\newcommand{\mF}{\mathcal{F}}
\newcommand{\mD}{\mathcal{D}}
\newcommand{\mI}{\mathcal{I}}
\newcommand{\mX}{\mathcal{X}}
\newcommand{\FX}{\mathcal{F}_\mathcal{X}}
\newcommand{\level}{\mathrm{level}}
\newcommand{\favg}{\bar{f}}
\newcommand{\wh}{\widehat}
\DeclareMathOperator{\supp}{supp}
\newcommand{\lift}{\mathrm{lift}}
\newcommand{\calD}{\mathsf{cal}(\mathcal{D})}
\newcommand{\lap}{\mathsf{Lap}}
\newcommand{\gauss}{\mathsf{Gauss}}
\newcommand*\Let[2]{\State #1 $\gets$ #2}
\renewcommand{\d}{\,\mathrm{d}}
\title{A Unifying Theory of Distance from Calibration}
\author{
Jaros\l aw B\l asiok\\
Columbia University \and
Parikshit Gopalan\\
Apple \and
Lunjia Hu\\
Stanford University \and
Preetum Nakkiran\\
Apple   
}
\date{}
\newcommand{\citet}{\cite}
\newcommand{\citep}{\cite}
\begin{document}

\maketitle

\begin{abstract}
We study the fundamental question of how to define and measure the distance from calibration for probabilistic predictors. While the notion of \emph{perfect calibration} is well-understood, there is no consensus on how to quantify the distance from perfect calibration. Numerous calibration measures have been proposed in the literature, but it is unclear how they compare to each other, and many popular measures such as Expected Calibration Error (ECE) fail to satisfy basic properties like continuity.

We present a rigorous framework for analyzing calibration measures, inspired by the literature on property testing. We propose a ground-truth notion of distance from calibration: the $\ell_1$ distance to the nearest perfectly calibrated predictor. We define a \emph{consistent calibration measure} as one that is polynomially related to this distance. Applying our framework, we identify three calibration measures that are \emph{consistent} and can be estimated efficiently: smooth calibration, interval calibration, and Laplace kernel calibration. The former two give quadratic approximations to the ground truth distance, which we show is information-theoretically optimal in a natural model for measuring calibration which we term the \emph{prediction-only access} model. Our work thus establishes fundamental lower and upper bounds on measuring the distance to calibration, and also provides theoretical justification for preferring certain metrics (like Laplace kernel calibration) in practice.

\end{abstract}
\thispagestyle{empty}
\newpage

\tableofcontents

\newpage
\section{Introduction}

Probabilistic predictions are central to many domains which involve categorical, even deterministic, outcomes. Whether it is doctor predicting a certain incidence probability of heart disease,
a meteorologist predicting a certain chance of rain,
or an autonomous vehicle system predicting a probability of road obstruction--- probabilistic prediction allows the predictor to incorporate and convey epistemic and aleatory uncertainty in their predictions.

In order for predicted probabilities to be operationally meaningful, and
not just arbitrary numbers, they must be accompanied by some form of formal probabilistic guarantee. The most basic requirement of this form is \emph{calibration}
\citep{dawid1982well}. Given a distribution $\mD$ on $\X \times \bits$ representing points with binary labels, a predictor $f:\X \to \zo$ which predicts the probability of the label being $1$ is calibrated if for every $v \in \Ima(f)$, we have $\E[y|f(x) =v] =v$. Calibration requires that, for example, among the set of patients
which are predicted to have a 10\% incidence of heart disease,
the true incidence of heart disease is exactly 10\%. Calibration is recognized as a crucial aspect of probabilistic predictions in many applications,
from their original development in meteorological forecasting \citep{degroot1983comparison,murphy1984probability,hallenbeck1920forecasting,murphy1998early}, to models of risk and diagnosis in medicine
\citep{jiang2012calibrating,li2004radiologists,doi2007computer,mealiffe2010clinical,kompa2021second,van2015calibration,crowson2016assessing},
to image classification settings in computer vision
\citep{minderer2021revisiting,mustafa2021supervised}. There is a large body of theoretical work on it in forecasting, for example \cite{dawid1985calibration, FosterV98, kakadeF08, FosterH18}. More recently, the work of \cite{hkrr2018} on multicalibration as a notion of group fairness (see also \cite{KleinbergMR17, kearns2017preventing}), and connections to indistinguishability \cite{OI} and loss minimization \cite{omni, op2} have spurred renewed interest in calibration from theoretical computer science.

In practice, it is of course rare to encounter \emph{perfectly} calibrated predictors, and thus it is important to quantify their \emph{distance from calibration}.
However, while there is consensus across domains on what it means for a predictor to be \emph{perfectly calibrated},
there is no consensus even within a domain on how to measure this distance.
This is because the
commonly-used metrics of calibration have fundamental theoretical flaws,
which manifest as practical frustrations.
Consider the Expected Calibration Error (ECE), which is
the de-facto standard metric in the machine learning community
(e.g. \citep{naeini2014binary,guo2017calibration,minderer2021revisiting,rahaman2021uncertainty}).

\begin{definition}
\label{def:ece}
    For a predictor $f: \cX \to [0, 1]$ and distribution $\cD$ over $(x,y) \in \cX \x \bits$, the expected calibration error $\ECE_\cD(f)$ is defined as
    \[ \ECE_\cD(f) = \E_\cD\left[\left|\E_\cD[y \mid f(x)]  - f(x)\right|\right] .\]
\end{definition}

The ECE has a couple of flaws: First, 
it is impossible to estimate in general from finite samples (e.g. \citep[Proposition 5.1]{lee2022t} and \citep{arrieta2022metrics}). This is partly because estimating the conditional expectation $\E[y|f(x) = v]$ requires multiple examples with the exact same prediction $f(x) = v$, which could happen with arbitrarily low probability in a fixed and finite number of examples over a large domain $\cX$.
Second,  the ECE is \emph{discontinuous} as a function of the predictor $f$, as noted by \cite{kakadeF08, FosterH18}.
That is, arbitrarily small perturbations to the predictor $f$ can cause large fluctuations in $\ECE_\cD(f)$. We illustrate this with a simple example below.

Consider the uniform distribution over a two-point space $X = \{a,b\}$, with the label for $a$ is $0$ whereas for $b$ it is $1$. The predictor $\favg$ which always predicts $1/2$ is perfectly calibrated under $\mD$, so $\ECE(\favg) =0$. 
In contrast, the related predictor
where $f(a) = (1/2 - \varepsilon)$ and $f(b) = (1/2 + \varepsilon)$, for arbitrarily small $\eps > 0$, has $\ECE(f) = 1/2 - \eps$.
Thus the infinitesimal change from $\favg$ to $f$
causes a jump of almost $1/2$ in $\ECE$. 

This discontinuity also presents a barrier to popular heuristics for estimating the $\ECE$. For example, the estimation problem is usually handled by discretizing the range of $f$, yielding an alternate quantity --- the ``binned-ECE''--- that can be estimated from samples \citep{naeini2015obtaining}. However,
the choice of discretization turns out to matter significantly
both in theory \citep[Example 3.2]{kumar2019verified} and in practice
\citep{nixon2019measuring}.
For example, both \citep[Section 5]{minderer2021revisiting} and \citep[Section 5.3]{nixon2019measuring} found
that changing the number of bins used in binned-ECE can change conclusions
about which of two models is better calibrated. In the simple two-point example above, if we choose $m$ bins of equal width, then we observe a binned-ECE of either $0$ or $\approx 1/2$, depending on whether $m$ is odd or even!

To address the shortcomings of the ECE, a long line of works
have proposed alternate metrics of miscalibration.
These metrics take a diversity of forms:
some are based on modifications to the ECE
(e.g. alternate binning schemes, debiased estimators, or smoothing)
\citep{zhang2020mix,kumar2019verified,roelofs2022mitigating,karandikar2021soft,lee2022t},
some use proper scoring rules,
some rely on distributional tests such as 
Kolmogorov–Smirnov \citep{gupta2020calibration}, Kernel MMD \citep{ksj18a}, or other nonparametric tests \citep{arrieta2022metrics}.
Yet it is not clear what to make of this smorgasbord of calibration metrics:
whether these different metrics are at all related to each other,
and whether they satisfy desirable properties (such as continuity). For a practitioner training a model, if their model is calibrated under some of these notions, but not others, what are they to make of it? Should they report the most optimistic metrics, or should they strive to be calibrated for all of them? Or is there some inherent but undiscovered reason why all these metrics should paint a similar picture? 

Underlying this confusion is a foundational question: {\em what is the  ground truth distance of a predictor from calibration?}  To our knowledge, this question has not been answered or even asked in the prior literature. Without a clearly articulated ground truth and a set of desiderata that a calibration measure must satisfy, we cannot hope to have meaningful comparisons among metrics.  

At best one can say that $\ECE$ and (certain but not all) binning based variants give an upper bound on the true distance to calibration; we prove this formally for $\ECE$ in \cref{sec:ece}. Thus if a predictor can be guaranteed to have small $\ECE$, then it is indeed close to being calibrated in a formal sense (see for instance \cite[Claim 2.10]{hkrr2018}). 
But small $\ECE$ might an unnecessarily strong (or even impossible) constraint to satisfy
in many realistic settings, especially when dealing with predictors
which are allowed to produce real-valued outputs.
For example, consider the standard setting of a deep neural network
trained from random initialization for binary classification.
The predicted value $f(x) \in [0, 1]$ is likely to be different for every individual $x$ in the population, which could result in a similar situation to our example. The $\ECE$ is likely to greatly overstate the true distance from calibration in such a setting.

This brings us to the main motivations behind this work. We aim to:
\begin{itemize}
    \item Formulate desiderata for good calibration measures,
    based on a rigorous notion of ground truth distance from calibration.
    \item Use our desiderata to compare existing calibration measures, 
    identifying measures that are good approximations to the ground truth distance. 
    \item Apply theoretical insights to inform practical measurements
    of calibration in machine learning,
    addressing known shortcomings of existing methods.
\end{itemize}

\subsection{Summary of Our Contributions}

We summarize the main contributions of our work:
\begin{itemize}
    \item {\bf Framework for measuring the distance to calibration (Section~\ref{sec:framework}).} We propose a ground truth notion of distance to calibration which is the $\ell_1$ distance to the closest perfectly calibrated predictor, inspired by the property testing literature \cite{BlumLR90, GoldreichGR98}. We define the set of \emph{consistent  calibration measures} to be those that provide polynomial upper and lower bounds on the true distance. 
    \item {\bf Consistent calibration measures.} We identify three calibration measures that are in fact consistent: two have been proposed previously \cite{kakadeF08, ksj18a} and the third is new. 
    Interestingly, the two prior measures (smooth and kernel calibration) were proposed with other motivations in mind,
    and not as standalone calibration measures.
    We consider it surprising that they turn out
    to be intimately related to the ground truth $\ell_1$ distance.
   \begin{enumerate}
   \item {\bf Interval calibration error (Section~\ref{sec:interval-calibration})}. This is a new measure which is reminiscent of the binning estimate that is popular in practice \cite{naeini2014binary,guo2017calibration,minderer2021revisiting}. We show that by randomizing both the width of each bin and using a random offset, and by adding the average bin width to the resulting calibration error, one can derive a consistent estimator that this is always an upper bound on the true distance, and it is never more than the square root of the true distance. 
    
    \item {\bf Smooth calibration error (Section~\ref{sec:smooth-ce-duality})} was proposed in the work of \cite{kakadeF08}. We show using LP duality that it is a constant factor approximation of the \emph{lower} distance to calibration, which we define to be, roughly speaking, a particular Wasserstein distance to perfect calibration. The lower distance to calibration is always at most the true distance to calibration and is always at least a constant times the true distance squared.
    
    \item {\bf Laplace-kernel calibration error (Section~\ref{sec:kernels}).} This is a calibration measure that was proposed in the work of \cite{ksj18a}. While they did not recommend a particular choice of kernel, we show that using the Laplace kernel happens to yield a consistent measure,
    while using the Gaussian kernel does not.
    \end{enumerate}
    In contrast to these measures, other commonly used heuristics ($\ECE$ and binning based) do not meet our criteria for being consistent calibration measures. 
Our work thus provides a firm theoretical foundation on which to base evaluations and comparisons of various calibration measures; such a foundation was arguably lacking in the literature.

\item {\bf Matching lower bounds.}
Smooth calibration and interva calibration
provide quadratic approximations to the true distance from calibration.
We prove that this is the best possible approximation,
by showing an information-theoretic barrier:
    for calibration measures depending only on the labels $y$ and predictions $f$, which are oblivious to the points $x$ themselves (as most calibration measures are),
    it is impossible to obtain better than a quadratic approximation to the true distance
    from calibration.
    Thus, the measures above are in fact optimal in this sense.

    \item {\bf Better efficiency in samples and run time.} We present improved algorithms and sample complexity bounds for computing some calibration measures. We present the first efficient algorithm for computing smooth calibration error using a linear program. 
    We also observe %
    that the techniques
    of \cite{rahimi2007kernel}
    yield an alternate algorithm to computing kernel calibration error
    which is (somewhat surprisingly) reminiscent of randomized binning.

    \item {\bf Insights for Practice.}
Our results point to concrete takeaways for practical measurements of calibration.
First, we recommend using either
Laplace kernel calibration or Interval calibration,
as calibration measures that are theoretically consistent, computationally efficient, and simple to implement.
Second, if Binned-ECE must be used, we recommend 
randomly shifting the bin boundaries together, and adding the average width of the bins
to the calibration estimate.
These modifications turn binning into a upper-bound on calibration distance,
and bring it closer to  interval calibration error which is a consistent calibration measure (Section~\ref{sec:summ-consistent}).
Finally, in Section~\ref{sec:experimental} we experimentally evaluate
our calibration measures on a family of synthetic data distributions,
to demonstrate their behavior in more natural settings (beyond worst-case guarantees).
    
\end{itemize}

\paragraph{Organization of this paper.} 
The rest of this paper is organized as follows. 
In \Cref{sec:overview} we present an informal overview of our main results, highlighting the definitions and key conceptual ideas. 
We discuss related works in \Cref{sec:related-work}.
\Cref{sec:framework} sets up our notion of true distance from calibration and the desiderata that we seek from calibration measures. We also explain how $\ECE$ and some other measures fail these desiderata. \Cref{sec:pa} defines the upper and lower distance to calibration. \Cref{sec:int-ce} analyzes Interval Calibration error, \Cref{sec:smooth-ce-duality} analyzes Smooth calibration error and \Cref{sec:kernels} analyzes the Laplace kernel calibration error. In \Cref{sec:sample-complexity} we give sample complexity bounds and efficient algorithms for estimating various calibration measures using random sample. 
And in \Cref{sec:experimental}, we experimentally evaluate our calibration measures on a representative family of synthetic data distributions.
Throughout, we defer technical proofs to \Cref{sec:proofs}.

For the reader primarily interested in 
using calibration measures in practice,
we suggest the following fast-track: %
read \Cref{sec:overview} followed by  Section~\ref{sec:practical-laplace}
for a practical implementation note, and Section~\ref{sec:experimental} for
example experiments. Users of Binned-ECE
may be interested in the relevant parts of Section \ref{sec:summ-consistent}, which 
describes how to ``fix'' the standard binning procedure to
be more closely related to calibration distance.

\section{Overview of Our Results}
\label{sec:overview}

We start by setting up some notation for calibration in the binary classification setting.
Let $\X$ be a discrete domain, defining the input space.
We are given samples $(x, y)$ drawn from a distribution $\mD$ on $\X \times \bits$.
A \emph{predictor} is a function $f:\X \rgta \zo$, where $f(x)$ is interpreted
as an estimate of $\Pr[ y= 1 \mid x]$. 
For a predictor $f$ and distribution $\mD$,
we often consider the induced joint distribution
of prediction-label pairs $(f(x), y) \in [0, 1] \x \bits$,
which we denote $\mD_f$.
We say a prediction-label distribution $\Gamma$ over $[0,1]\times\{0,1\}$ is \emph{perfectly calibrated} if $\E_{(v,y)\sim\Gamma}[y \mid v] = v$.
For a distribution $\mD$ over $\mX\times\{0,1\}$, we say a predictor $f:\mX\to[0,1]$ is \emph{perfectly calibrated w.r.t.\ $\mD$} if the induced distribution $\mD_f$ is perfectly calibrated.
Finally, a \emph{calibration measure} $\mu$ is a function that maps a distribution $\mD$ and a predictor $f:\X \to \zo$ to a value $\mu_\mD(f) \in [0,1]$.

\subsection{Framework for Measuring Distance from Calibration}
\label{sec:summ-framework}

The primary conceptual contribution  of this work is a formal framework in which we can reason about and compare various measures of calibration. We elaborate upon the key ingredients of this framework.

\paragraph{The true distance to calibration.}
We define the ground truth distance from calibration as the distance to the closest calibrated predictor. Measuring distance requires a metric on the space of all predictors. 
A natural metric  is the $\ell_1$ metric given by $\ell_1(f,g) = \E_{\mD}|f(x) - g(x)|$.
Accordingly we define the true distance from calibration as
\begin{equation}
\label{eqn:dce}
\dCE_\mD(f) := \inf_{g \in \calD} \E_\mD|f(x) - g(x)|,
\end{equation}
where $\calD$ denotes the set of predictors that are perfectly calibrated w.r.t.\ $\mD$.
This definition is intuitive, and natural from a property testing point of view \cite{BlumLR90, GoldreichGR98, ParnasRR06}, but has not been proposed before to our knowledge.
Note that it is not clear how to compute this distance efficiently:
the set $\calD$ is non-convex, and in fact it is discrete when the domain $\X$ is discrete. A more subtle issue is that it depends on knowing the domain $\X$, whereas traditionally calibration measures only depend on the joint distribution $\mD_f$ of predictions and labels.

\paragraph{Access model.}
Calibration measures $\mu_\mD(f)$ can depend on the entire distribution
$\mD$, as well as on the predictor $f:\X \to \zo$.
However, we would prefer measures which only depend on the
prediction-label joint distribution $\mD_f$, similar to
standard loss functions in machine learning and classic calibration measures  \cite{dawid1984present, dawid1985calibration, FosterV98}.
This distinction has important consequences for the power of calibration measures, which we describe shortly.
We delineate the two levels of access as follows:
\begin{enumerate}
    \item {\bf Sample access (SA).}
In the SA model, $\mu_\mD(f)$ is allowed to depend on the full joint distribution
$(x, f(x), y)$ for $(x,y)\sim \mD$. This terminology follows \cite{OI}.
    \item {\bf Prediction-only access (PA).}
In the PA model, $\mu_\mD(f)$ is only allowed to depend on $\mD_f$, the joint distribution
$(f(x), y)$ for $(x,y)\sim \mD$.
In particular, $\mu$ cannot depend on the input domain $\X$.
\end{enumerate}

Observe that the ground truth distance ($\dCE$) is defined in the sample access model,
since Equation~\eqref{eqn:dce} depends on the domain and distribution of $x$.
On the other hand, we often desire measures that can be computed in the prediction access model.

    \paragraph{Robust completeness and soundness.} 
    We propose two desiderata for any calibration measure $\mu$: robust completeness and robust soundness, in analogy to completeness and soundness in proof systems. 
    \begin{enumerate}
        \item 
    {\bf Robust completeness} requires $\mu_\mD(f) \leq \Oh(\dCE_\mD(f)^c)$ for some constant $c$.
    This guarantees that any predictor which is close to a perfectly calibrated predictor (in $\ell_1$) has small calibration error under $\mu$. 
    This is a more robust guarantee than standard completeness, which in this setting 
    would mean just that $\dCE_\mD(f) =0$ implies $\mu_\mD(f) = 0$, but would not give any guarantees
    when $\dCE_\mD(f)$ is non-zero but small.
    \item {\bf Robust soundness} requires $\mu_\mD(f) \geq \Omega(\dCE_\mD(f)^s)$ for some constant $s$. 
    That is, if $\dCE_\mD(f)$ is large then so is $\mu_\mD(f)$.
    \end{enumerate}

   We call a calibration measure \emph{consistent} (or more precisely, \emph{$(c, s)$-consistent})
   if it satisfies both robust completeness and robust soundness, for some parameters $c, s > 0$:
    \begin{equation}
    \label{eqn:consistency}
    \Omega(\dCE_\mD(f)^s)  \leq \mu_\mD(f) \leq \Oh(\dCE_\mD(f)^c).
    \end{equation}
   Consistent measures are exactly those that are polynomially-related to the true distance from calibration, $\dCE_\mD(f)$.
The reader might wonder if, in our definition of consistent calibration measures 
(Equation~\ref{eqn:consistency}),
we could require \emph{constant factor} approximations to $\dCE_\mD(f)$
rather than polynomial factors.
It turns out that there are information-theoretic barriers to such approximations
in the prediction-access model.
The core obstacle is that the true distance $\dCE$ is defined in the SA model, and one cannot compute it exactly in the prediction-only access model or approximate it within a constant factor. Indeed, we show that any calibration measure computable in the prediction-access must satisfy $s/c \geq 2$: information theoretically, a quadratic approximation is the best possible. 

Another nice property of our definition is that the set of all consistent measures stays the same, even if we define distances between predictors using the $\ell_p$ metric for $p > 1$ in place of $\ell_1$, since all $\ell_p$ measures are polynomially related.

 \paragraph{Desiderata for calibration measures.}
 Given the discussion so far, we can now stipulate three desiderata
 that we would like calibration measures $\mu$ to satisfy:
\begin{enumerate}
    \item {\bf Access:} $\mu$ is well-defined in the Prediction-only access model (PA).
    \item {\bf Consistency:} $\mu$ is $(c, s)$-consistent -- that is, $\mu$ is polynomially related to the true distance from calibration $\dCE$.
    Ideally we have $s/c =2$, which is optimal in the PA model (\Cref{cor:quadratic}).
    \item {\bf Efficiency:} $\mu_\mD(f)$ can be computed within accuracy $\eps$ in time $\poly(1/\eps)$ using $\poly(1/\eps)$ random samples from $\mD_f$.
\end{enumerate}
Various notions that have been proposed in the literature fail one or more of these desiderata; $\ECE$ for instance fails robust completeness since an arbitrarily small perturbation of a perfectly calibrated predictor could result in  high $\ECE$. We refer the reader to Table \ref{tab:priorworks} for a more complete treatment of such notions.

\subsection{Information-theoretic Limitations of the Prediction-access Model}

\paragraph{Upper and Lower Distances.}
We start with the following question, which formalizes how well one can approximate the true distance to calibration in the prediction-only access (PA) model. 

{\em For a given distribution $\mD$ and predictor $f$,
how large or small can $\dCE_{\mD'}(f')$ be, among all other $(\mD', f')$
which have the same prediction-label distribution ($\mD'_{f'}=\mD_f$)?}

We denote the minimum and maximum by $\ldCE_\mD(f)$ and $\udCE_\mD(f)$ respectively, which we call the lower and upper distance to calibration respectively. Hence
\begin{align}
\label{eq:chain1}
    \ldCE_{\mD}(f) \leq \dCE_{\mD}(f) \leq \udCE_{\mD}(f).
\end{align}
Both these quantities are defined in the PA model,
in which they represent the tightest lower and upper bounds respectively that one can prove on $\dCE$. 
As framed, they involve considering all possible domains and distributions $\mD'$ over them. But we can give simpler characterizations of these notions.

The upper distance $\udCE$ can be alternatively viewed as the minimum distance to calibration via post-processing: it is the distance to the closest calibrated predictor $g \in \calD$ such that $g =\kappa(f)$ can be obtained from $f$ by {\em post-processing} its predictions. For the lower distance $\ldCE$, we can abstract away the domain and ask only for a coupling between $f$ and a perfectly calibrated predictor: 

{\em Consider all joint distributions $\Pi$ of $(u, v, y)$ over $\zo \times \zo \times \bits$ where $(v, y) \sim \mD_f$ and the distribution of $(u,y)$ is perfectly calibrated. How small can $\E|u - v|$ be?}

Limiting ourselves to couplings of the form $(g(x), f(x), y) \sim \mD$ where $g \in \calD$ would recover $\dCE$. Our definition also permits couplings that may not be realizable on the domain $\X$, giving a lower bound.

An equivalent view of these distances is that the upper distance only considers those calibrated predictors whose level sets are obtained by a coarsening of the level sets of $f$. The lower distance allows calibrated predictors that are obtained by a finer partitioning of the level sets of $f$. 
Theorem \ref{thm:dce-upper-lower} proves the equivalence of these various formulations
of $\ldCE$ and $\udCE$.

\paragraph{A Quadratic Barrier.}
How tight are the lower and upper bounds in Equation \eqref{eq:chain1}?
That is, how tightly can $\dCE$ be determined in the Prediction-only access model?
In Lemma \ref{lem:pa-gap}, we show that
there can be at least a quadratic gap in between any two adjacent terms in
Equation \eqref{eq:chain1}.
We construct two distributions $\mD^1$ and $\mD^2$ and a predictor $f$ such that 
\begin{itemize}
    \item $\mD^1_f =\mD^2_f$, so the upper and the lower distance are equal for both distributions, but they are well-separated from each other;  $\ldCE_{\mD^i}(f) = \Theta(\alpha^2)$ whereas $\udCE_{\mD^i}(f) = \Theta(\alpha)$ for $i \in \{1,2\}$.
    \item $\dCE_{\mD^i}(f)$ equals either $\ldCE$ or $\udCE$ depending on whether $i = 1$ or $2$.  
\end{itemize}
This example raises the question of whether an even bigger gap can exist, which we answer next.

\subsection{Consistent Calibration Measures}
\label{sec:summ-consistent}
We describe three consistent calibration measures,
and their relation to the true distance from calibration.

\paragraph{Interval calibration (Section~\ref{sec:interval-calibration})}

Interval calibration error is a subtle modification to the heuristic of binning predictions into buckets and computing the expected calibration error.
Formally, given a partition $\mI = \{I_1, \ldots, I_m\}$ of $[0,1]$ into intervals of width bounded by $w(\mI)$, we first consider the standard quantity
\begin{equation}
\label{eqn:binnedECE}
\mathsf{binnedECE}_\mD(f, \mI) = \sum_{j \in [m]} |\E[(f - y)\one(f \in I_j)]|.
\end{equation}
This quantity, as the name suggests, is exactly the Binned-ECE for the bins defined by the partition $\mI$.
We then define our notion of \emph{Interval calibration error} ($\intCE$)
as the minimum of this Binned-ECE over all partitions $\mI$, when ``regularized'' by maximum bin width $w(\mI)$:
    \[
    \intCE_\mD(f) := \inf_{\mI: ~\textrm{Interval partition}} \left(\mathsf{binnedECE}_\mD(f, \mI) + w(\mI)\right).
    \]
In Theorem \ref{thm:intce}, we show that $\intCE$ satisfies the following bounds.
\[ \udCE_\mD(f) \leq \intCE_\mD(f) \leq 4\sqrt{\ldCE_\mD(f)}.\]
This shows that the measure $\intCE$ is $(1/2,1)$-consistent, and gives the best possible (quadratic) approximation to the true distance to calibration. The outer inequality implies that the gap between the lower and upper distance is no more than quadratic, hence the gap exhibited in Lemma \ref{lem:pa-gap} is tight. 

We now address the computational complexity. While the definition of interval calibration minimizes over all possible interval partitions,
in Section \ref{sec:surrogate}, we show that it suffices to 
consider a geometrically decreasing set of values for the width $w$, 
with a random shift, to get the desired upper bound on $\dCE$.

Our result suggests an additional practical takeaway:
if the standard binning algorithm must be used to measure calibration, then 
the bin width should be added to the binnedECE.
This yields a quantity which is at least an \emph{upper bound} on the true distance to calibration,
which is not true without adding the bin widths.
Specifically, for \emph{any} interval partition $\mI$, we have:
\begin{align}
\label{eqn:binnedECEw}
    \udCE_\cD(f) \leq \mathsf{binnedECE}(f, \mI) + w(\mI).
\end{align}
Thus, if we add the bin width, then $\mathsf{binnedECE}$ can at least be used to certify closeness to calibration.
The extreme case of width $0$ buckets corresponds to $\ECE$,
while the case when the bucket has width $1$ corresponds to the weaker condition of accuracy in expectation \cite{hkrr2018}.
It is natural to penalize larger width buckets which allow cancellations between calibration errors for widely separated values of $f$.  
The notion of using bucket width as a penalty to compare calibration error results obtained from using differing width buckets is intuitive in hindsight, but not done in prior work to our knowledge (e.g. \cite{minderer2021revisiting}).

\paragraph{Smooth Calibration (Section~\ref{sec:smooth-ce-duality})}

Smooth calibration is a calibration measure first defined by \cite{kakadeF08}, see also \cite{FosterH18, GopalanKSZ22}. 
Smooth calibration error is defined as the following maximization
over the family $L$ of all bounded $1$-Lipschitz functions $w: \zo \to [-1,1]$:
\[ \scerror_{\mD}(f) := \scerror(\mD_f) = \sup_{w \in L}\E_{(v, y) \sim \mD_f} [w(v)(y - v)].\]
Without the Lipschitz condition on functions $w$,
this definition would be equivalent to $\ECE(f)$. Adding the Lipschitz condition smooths out the contribution from each neighborhood of $v$ and results in a calibration measure that is Lipschitz in $f$ with respect to the $\ell_1$ distance. This notion has found applications in game theory and leaky forecasting \cite{kakadeF08, FosterH18}. Our main result is that the smooth calibration error captures the lower distance from calibration up to constant factors: 
\[ \frac 12 \, \ldCE_\mD(f) \leq \scerror_\mD(f) \leq 2 \, \ldCE_\mD(f). \]
We find this tight connection to be somewhat surprising, since
$\scerror$ (as a maximization over weight functions $w$)
and $\ldCE$ (as a minimization over couplings) have {\em a priori} very different definitions.
They turn out to be related via LP duality, in a way analogous
to Kantorovich-Rubinstein duality of Wasserstein distances. We present a high-level overview of the proof at the start of \Cref{sec:smooth-ce-duality}. Along the way, we give an efficient polynomial time algorithm for estimating the smooth calibration error, the first such algorithm to our knowledge. 
To summarize the relations between notions discussed so far,
we have
\begin{align}
\label{eqn:main-ineq}
\boxed{
    \scerror \approx \ldCE \leq \dCE \leq \udCE \leq \intCE
    \leq 4\sqrt{\ldCE}
    }
\end{align}
For each of the first three inequalities, we show that the gap can be quadratic. The final inequality
shows that these gaps are \emph{at most} quadratic.

\paragraph{Kernel Calibration (Section~\ref{sec:kernels})}

The notion of kernel calibration error was introduced in \cite{ksj18a} as \emph{Maximum Mean Calibration Error} (MMCE). 
Kernel calibration can be viewed as a variant of smooth calibration error, where we use as weight functions
$w: [0, 1] \to \R$ which are bounded with respect to a norm $\|\cdot\|_{K}$ on the \emph{Reproducing Kernel Hilbert Space} associated with some positive-definite kernel $K$:
\[ \kCE^K_{\mD}(f) := \sup_{w: \|w\|_K \leq 1} ~\E_{(v, y) \sim \mD_f} [w(v)(y - v)].\]

When $\mD_f$ is an empirical distribution over samples $\{(v_1, y_1), \ldots, (v_n, y_n)\}$,
this can be computed as
\[
\kCE^K_\mD(f) = \sqrt{\frac{1}{n^2}\sum_{i,j} (y_i - v_i)(y_j - v_j) K(v_i, v_j)}.
\]

The original motivation of introducing the kernel calibration error was to provide a differentiable proxy for ECE --- allowing for the calibration error to be explicitly penalized during the training of a neural network. 
However, \cite{ksj18a} does not discuss how the choice of the kernel affects the resulting measure, although they used Laplace kernel in their experiments. We prove here that this choice has strong theoretical  justification ---  the kernel calibration error with respect to Laplace kernel is a consistent calibration measure; specifically for some positive absolute constants $c_1,c_2 > 0$,
\[ c_1 \ldCE(f) \leq \kCEL(f) \leq c_2 \sqrt{\ldCE(f)}.\] 
This says that we can view kernel calibration with respect to the Laplace kernel as fundamental measure in its own right, as opposed to a proxy for (the otherwise flawed) $\ECE$.
We also show that the choice of kernel is in fact crucial:
for the Gaussian kernel, another commonly used kernel across machine learning, the resulting measure is not robustly sound anymore
(\Cref{thm:gauss-lb-body}).

\subsection{Better Algorithms and Sample Complexity \label{sec:alg-intro}}

For many of the measures discussed in the paper, we provide efficient algorithms yielding an $\varepsilon$ additive approximation to the measure in question, using samples from the distribution $\mD_f$. In most cases, those results follow a two step paradigm, we give an algorithm that approximates the measure on a finite sample, followed by a generalization bound.
Our generalization bounds follow from essentially standard bounds on Rademacher complexity of the
function families involved in defining our measures (e.g. bounding the Rademacher complexity of 1-Lipshitz functions for $\scerror$).
On the algorithmic side, we prove that the $\scerror$ on the empirical distribution over a sample of size $n$ can be computed by solving a linear program with $\Oh(n)$ variables and constraints. Similarly, the $\ldCE$ can be approximated up to an error $\varepsilon$, by linear time prepossessing followed by solving a linear program with $\Oh(\varepsilon^{-1})$ variables and constrains.

We provide an alternate algorithm
for estimating the kernel calibration error with Laplace kernel,
using the \emph{Random Features Sampling} technique from \cite{rahimi2007kernel}.
This algorithm does not improve on naive estimators 
in worst-case guarantees, but it reveals
an intriguing connection.
After unwrapping the random features abstraction, the final algorithm is similar to the popular interval binning calibration estimator, where we choose the length of the interval at random from a specific distribution, and introduce a uniformly random shift.
We find it surprising that an estimate of this type is  \emph{exactly} equal to $(\kCEL)^2$ in expectation. 

\section{Related Work}
\label{sec:related-work}

\newcommand{\cmark}{\ding{51}}%
\newcommand{\xmark}{\ding{55}}
\begin{table}[t]
\centering
\begin{tabular}{lccc} \toprule
Metric      & Continuity & Completeness & Soundness \\ \midrule 
($\ell_p$-)ECE         & \xmark & \cmark & \cmark \\
Binned-ECE  & \xmark & \cmark & \xmark \\
Proper Scoring Rules (Brier, NLL) & \cmark &  \xmark  & \cmark \\
NCE \citep{siu1997improved} & \cmark & \xmark & \cmark \\
ECCE \citep{arrieta2022metrics} & \xmark & \cmark & \cmark \\
MMCE \citep{ksj18a} & \cmark & \cmark & \cmark \\
smCE \citep{kakadeF08} & \cmark & \cmark & \cmark \\
\bottomrule
\end{tabular}
\caption{Calibration measures proposed in, or based on, prior works.}
\label{tab:priorworks}
\end{table}

We start by discussing the high-level relation between our work and other areas of theoretical computer science, and then discuss work on calibration in machine learning and forecasting.

\paragraph{Property testing.} Our framework for defining the distance to calibration is inspired by the elegant body of literature on property testing \cite{BlumLR90, RubinfeldS96, GoldreichGR98}. Indeed, the notions of ground truth distance to calibration, robust completeness and robust soundness are in direct correspondence to notions in the literature on tolerant property testing and distance estimation \cite{ParnasRR06}. Like in property testing, algorithms for estimating calibration measures operate under stringent resource constraints, although the constraints are different. In property testing, the algorithm only has a local view of the object based on a few queries. In our setting, the constraint comes from having the operate in the prediction-only access model whereas the ground truth distance is defined in the sample-access model.

\paragraph{Multicalibration.} Recent interest in calibration and its variants in theoretical computer science has been spurred by the work of \cite{hkrr2018} introducing multicalibration as a group fairness notion (see also \cite{KleinbergMR17, kearns2018}). This notion has proved to be unexpectedly rich even beyond the context of fairness, with connections to indistinguishability \cite{OI} and loss minimization \cite{omni}. Motivated by the goal of finding more computationally efficient notions of multicalibration, notions such as low-degree multicalibration \cite{GopalanKSZ22} and calibrated multiaccuracy have been analyzed in the literature \cite{op2}, some of these propose new calibration measures.

\paragraph{Level of access to the distribution.}
The sample access model is considered in the work of \cite{OI}, who relate it to the notion of multicalibration \cite{hkrr2018}. Prediction-only access is a restriction of sample access which is natural in the context of calibration, and is incomparable to the no-access model of \cite{OI} where on gets access to point label pairs. This model is not considered explicitly in \cite{OI}, and the name prediction-only access for it is new. But the model itself is well-studied in the literature on calibration \cite{dawid1985calibration, Dawid, FosterV98}, indeed all existing notions of calibration that we are aware of are defined in the PA model, as are the commonly used losses in machine learning.

\paragraph{Prior work on calibration measures.}

Several prior works have proposed alternate measures of calibration
(Table \ref{tab:priorworks} lists a few of them).
Most focus on  the {\em how}: they give a formula or procedure for computing of the calibration measure from a
finite set of samples, sometimes accompanied by a generalization guarantee that connects it to some property of the population. There is typically not much justification for {\em why} the population quantity is a good measure of calibration error, or discussion of its merits relative to other notions in the literature (notable exceptions are the works of \cite{kakadeF08, FosterH18}). The key distinction in our work is that we start from a clear and intuitive ground truth notion and desiderata for calibration measures, we analyze measures based on how well they satisfy these desiderata, and then give efficient estimators and generalization guarantees for consistent calibration measures. 

Our desiderata reveal important distinctions between measures that were proposed previously; Table~\ref{tab:priorworks} summarizes how well calibration measures suggested in prior works satisfy our desiderata. It shows that a host of calibration measures based on variants of $\ECE$, binning and proper scoring rules fail to give basic guarantees.
We present these guarantees formally in \cref{sec:ece}.
Briefly, many ECE variants suffer from the same flaws as ECE itself, and 
proper scoring rules suffer different issues we describe below.
We also discuss other notions of calibration from the literature in \cref{app:related}.

\textbf{Proper Scoring Rules.}
Proper scoring rules such as the Brier Score \citep{brier1950verification}
or Negative-Log-Loss (NLL) are popular proxies for miscalibration.
Every proper scoring rule satisfies soundness, since if the score is $0$,
the function $f$ is perfectly calibrated.
However, such rules violate completeness: there are perfectly-calibrated functions for
which the score is non-zero.
For example, if the true distribution on labels $p(y|x) = \textrm{Bernoilli}(0.5)$,
then the constant function $f(x) = 0.5$ is perfectly calibrated but has non-zero Brier score.
This is because proper scoring rules measure predictive quality, not just calibration.
The same holds for Normalized Cross Entropy (NCE), which is sound but not complete.

{\bf Smooth and Kernel Calibration.} We show that some  definitions in the literature do satisfy our desiderata--- namely the notions of {\em weak calibration} (smooth calibration in our terminology) introduced in \citep{kakadeF08}, and {\em MMCE} (calibration with a Laplace kernel in our terminology) introduced in \citep{ksj18a}.
Smooth calibration was introduced under the name ``weak calibration'' in \cite{kakadeF08}, the terminology of smooth calibration is from \cite{GopalanKSZ22}\footnote{\citep{FosterH18} introduced a notion of ``smooth calibration'' with an unrelated definition, but thankfully, they proved that their ``smooth calibration'' is in fact polynomially related to the \cite{GopalanKSZ22} notion --- therefore in our framework it is also a consistent calibration measure.}.
Interestingly, these were developed with different motivations.
MMCE was proposed by \cite{ksj18a} for practical reasons: as a differentiable proxy for ECE, to allow optimizing for calibration via backpropagation. One of the motivations behind smooth calibration, discussed in both \cite{kakadeF08, FosterH18} was to address the discontinuity of the standard binning measures of calibration and $\ECE$. But it main application was as a weakening of perfect calibration, to study the power of deterministic forecasts in the online setting and derandomize the classical result of \cite{FosterV98} on calibrated forecasters.

Our work establishes that these measures are not just good ways to measure calibration, they are  more fundamental than previously known. Smooth calibration is within constant factors of the lower distance to calibration, and yields the best possible quadratic approximation to the true distance to calibration.

\section{A Framework for Calibration Measures}
\label{sec:framework}

In this section, we will present our framework for calibration measures. We start by characterizing the set of perfectly calibrated predictors. We then propose our ground truth notion of distance from calibration,  in analogy to the distance from a code in property testing. Building on this, we formulate robust completeness and soundness guarantees that we want calibration measures to satisfy. Finally, we show information theoretic reasons why any calibration measure can only hope to give a quadratic approximation to the ground truth distance. We would like to emphasize the new definitions and the rationale behind them, hence most proofs are deferred to Appendix \ref{app:distance} to streamline the flow. We start with some notation. 

\paragraph{Notation.} 
Let $\X$ be a discrete domain.\footnote{We will assume that the domain $\X$ is discrete but possibly very large. As a consequence, $\Ima(f)$ is discrete, and events such as $f(x) =v$ for $v \in \Ima(f)$ are well defined. We can think of the finiteness assumption reflecting the fact that inputs to any model have finite precision. We do this to avoid measure-theoretic intricacies, but assuming $f: \X \to \zo$ is measurable should suffice when $\X$ is infinite.  
} Let $\mD$ be a distribution on $\X \times \bits$; we denote a sample from $\mD$ by $(x,y) \sim \mD$ where $x \in \X, y \in \bits$. A predictor is a function $f:\X \rgta \zo$, where $f(x)$ is an estimate of $\Pr[ y= 1|x]$.  We define the Bayes optimal predictor $f^*$ as $f^*(x) = \E[y|x]$. Note that $\mD$ is completely specified by the marginal distribution $\mD_{\X}$ on $\X$, and the conditional expectations $f^*$. We let $\mF_\mX$ denote the set of all predictors $f:\X \rgta \zo$.  We define the $\ell_1$ distance in $\FX$ as 
\[ \ell_1(f,g) = \E_{\mD}|f(x) - g(x)|. \]
For a distribution $\mD$ and predictor $f$, we use $\mD_f$ to denote the distribution over $\Ima(f) \times \bits$ of $(f(x), y)$ where $(x,y) \sim \mD$. Two predictors $f$ and $g$ might be far apart in $\ell_1$, yet $D_f$ and $\mD_g$ can be identical. \footnote{Consider the uniform distribution on $\X = \bits$ and let $f^*(x) = 1/2$ so labels are drawn uniformly. Consider the predictors $f(x) = x$ and $g(x) = 1 -x$. While $\ell_1(f,g) = 1$, the distributions $\mD_f$ and $\mD_g$ are identical, since $f/g$ is uniform on $\bits$, and the labels are uniform conditioned on $f/g$.} 

A calibration measure $\mu$ is a function that maps a distribution $\mD$ and a predictor $f:\X \to \zo$ to a value $\mu_\mD(f) \in [0,1]$. A crucial question is the level of access to the underlying distribution that a procedure for computing $\mu$ has. We refer to the setting where an algorithm has access to $(x, f(x), y)$ for $(x,y)\sim \mD$ as the sample-access model or SA model for short following \cite{OI}. Calibration measures are typically defined in the more restricted {\em prediction-only access model} or PA model for short, where we only get access to the joint distribution $\mD_f$ of prediction-label pairs $(f, y)$.
Such calibration measures $\mu$ can be defined as follows: we first define $\mu(\Gamma)\in [0,1]$ for every distribution $\Gamma$ over $[0,1]\times\{0,1\}$, and then for a distribution $\mD$ and a predictor $f$, we define $\mu_{\mD}(f)$ to be $\mu(\mD_f)$.

We say a distribution $\Gamma$ over $[0,1]\times\{0,1\}$ is \emph{perfectly calibrated} if $\E_{(v,y)\sim\Gamma}[y|v] = v$. For a distribution $\mD$ over $\mX\times\{0,1\}$, we say a predictor $f:\mX\to[0,1]$ is \emph{perfectly calibrated} w.r.t.\ $\mD$ if $\mD_f$ is perfectly calibrated. We use $\calD$ to denote the set of predictors $f$ that is perfectly calibrated w.r.t.\ $\mD$. 

There is an injection from $\calD$ to the set of partitions of the domain $\X$. A consequence is that when $\X$ is finite, so is $\calD$. In particular, $\calD$ is not a convex subset of $\FX$. We describe the injection below for completeness, although it is not crucial for our results. 
For a partition $\mS = \{S_i\}_{i=1}^m$, we define $g_\mS(x) = E[y|x \in S_i]$ for all $x \in S_i$. It is clear that $g_\mS \in \calD$.  For a predictor $f$, let $\level(f)$ be the partition of the domain $\X$ given by its level sets. By the definition of calibration, $f \in \calD$ iff it is equal to $g_{\level(f)}$, which establishes the injection.

\subsection{Desiderata for Calibration Measures}

A calibration measure $\mu$ is a function that for a given distribution $\mD$, maps predictors $f$ in $\FX$ to values in $[0,1]$. We denote this value as $\mu_{\mD}(f)$. At the bare minimum, we want $\mu_\mD$ to satisfy completeness and soundness, meaning that for all $\mD, f$,
\begin{align} 
    \mu_\mD(f) = 0 \ & \text{if} \ f \in \calD \ \ \tag{Completeness}\\
    \mu_\mD(f) > 0 \ & \text{if} \ f \not\in \calD \ \ \tag{Soundness}
\end{align}
Ideally, we want these guarantees to be robust: $\mu(f)$ is small if $f$ is close to calibrated, and large is $f$ is far from calibrated. Formalizing this  requires us to specify how we wish to measure the distance from calibration. A family of metrics $m$ is a collection of metrics $m_{\mD}$ on $\FX$ for every distribution $\mD$ on $\X$. For instance, the $\ell_p$ distance on $\FX$ under distribution $\mD$ for $p \geq 1$ is given by
\[ \ell_{p, \mD}(f,g) = \E_\mD[|f(x) - g(x)|^p]^{1/p} \]
We note that $m_{\mD}$ only ought to depend on the marginal $\mD_\X$ on $\X$. When the distribution $\mD$ is clear, we will sometimes suppress the dependence on the distribution and refer to $m$ as a metric rather than a family. Indeed, it is common to refer to the above distance as $\ell_p$ distance, ignoring the dependence on $\mD$.

\begin{definition}[True distance to calibration]
Given a metric family $m$ on $\FX$, we define the true $m$-distance to calibration under $\mD$ as
\[ \dCE^m_{\mD}(f) = \min_{g \in \calD} m_\mD(f, g). \]
\end{definition}

With this definition in place, we define consistent calibration measures with respect to $m$.
\begin{definition}[Consistent calibration measures]
For $c, s \geq 0$, we say that $\mu$ satisfies $c$-robust completeness w.r.t. $m$ if there exist a constant $a \geq 0$ such that for every distribution $\mD$ on $\X \times \bits$, and predictor $f \in \FX$
\begin{align} 
\label{eq:rc}
    \mu_\mD(f) \leq a(\dCE^m_\mD(f))^{c} \tag{Robust completeness}
\end{align}
and $s$-robust soundness w.r.t. $m$ if there exist $b \geq 0$ such that for every distribution $\mD$ on $\X \times \bits$, and predictor $f \in \FX$
\begin{align} 
\label{eq:rs}
    \mu_\mD(f) \geq b(\dCE_\mD^m(f))^{s}. \tag{Robust soundness}
\end{align}
We say that $\mu$ is an $(c,s)$-consistent calibration measure w.r.t $m$ if both these conditions hold, and we define its approximation degree to be $s/c$. \footnote{For metrics that can take on arbitrarily small values (such as the $\ell_p$ metrics), it follows that $s > c$.} We say $\mu$ it is an consistent calibration measure w.r.t. $m$ if there exists $c, s \geq 0$ for which $(c,s)$-consistency for $m$ holds. 
\end{definition}

To see that these names indeed make sense, observe that if $f$ is $\eps$-close to being perfectly calibrated, then robust completeness ensures that $\mu_\mD(f)$ is $O(\eps^c)$ and hence goes to $0$ with $\epsilon$. Robust soundness ensures that if $\mu_\mD(f) = \eps \to 0$, then $\dCE^m_\mD(f) = O(\eps^{1/s}) \to 0$. Conversely, when the true $m$-distance to calibration for $f$ is $\eta \gg 0$, robust soundness ensures that $\mu_\mD(f) = \Omega(\eta^{s})$ is also bounded away from $0$.  

Given a sequence of predictors $\{f_n\}$, we say that the sequence converges to $f \in\FX$, denote $F_n \to f$ if 
\[ \lim_{n \to \infty} m_{\mD}(f_n,f) = 0.\]
Robust soundness ensures that if $f_n \to g \in \calD$, then $\mu(f_n) \to 0$.  
$\dCE^m_\mD$ satisfies a stronger continuity property, namely that it is $1$-Lipshcitz with respect to $m_\mD$:
\[ |\dCE^m_\mD(f) - \dCE^m_\mD(f')| \leq m_\mD(f, f'). \]
This property is easy to verify from the definition. 
It implies that for any $f \in \FX$ not necessarily calibrated,  if $f_n \to f$, $\mu(f_n) \to \mu(f)$. 
Not every $\ell_1$-consistent calibration measure have this stronger property of convergence everywhere, although some do. 

Indeed, the following lemma implies that among all calibration measures that satisfy completeness and are $1$-Lipschitz with respect to $m_\mD$, $\dCE^m_\mD$ is the largest. Thus any consistent calibration measure that can grow as $\omega(\dCE^m)$ cannot be Lipschitz.

\begin{lemma}
\label{lem:best}
    Any calibration measure $\mu_\mD$ which satisfies completeness and is $L$-Lipschitz w.r.t $m_\mD$ must satisfy $\mu_\mD(f) \leq L\ \dCE_\mD(f)$ for all $f \in \FX$.
\end{lemma}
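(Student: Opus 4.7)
The plan is to execute a direct three-line argument built around the nearest perfectly calibrated predictor. First I would let $g^* \in \calD$ achieve the infimum in the definition of $\dCE^m_\mD(f) = \min_{g \in \calD} m_\mD(f,g)$. The paper's notation already uses $\min$ rather than $\inf$, which is justified because the earlier injection from $\calD$ to partitions of $\X$ shows that $\calD$ is finite when $\X$ is discrete, so the minimum is attained. (In the unlikely event one wished to handle non-discrete $\X$, one would instead take a sequence $g_n \in \calD$ with $m_\mD(f, g_n) \to \dCE^m_\mD(f)$ and pass to a limit using the $L$-Lipschitz hypothesis on $\mu_\mD$.)

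The second step applies the completeness hypothesis: since $g^* \in \calD$, we have $\mu_\mD(g^*) = 0$. The third step applies the $L$-Lipschitz hypothesis to the pair $(f, g^*)$, giving
\[
\mu_\mD(f) = \mu_\mD(f) - \mu_\mD(g^*) \leq |\mu_\mD(f) - \mu_\mD(g^*)| \leq L \cdot m_\mD(f, g^*) = L \cdot \dCE^m_\mD(f),
\]
which is the desired inequality.

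There is essentially no obstacle here; the argument is a standard ``best-approximation plus Lipschitz triangle'' move, analogous to how in the property-testing literature the distance to a property is the largest $1$-Lipschitz function vanishing on the property. The only mild subtlety worth mentioning in the write-up is that $\mu_\mD(f)$ is assumed non-negative (since $\mu$ maps into $[0,1]$), so one may replace the absolute value by $\mu_\mD(f)$ directly; this is what makes the completeness hypothesis give a one-sided bound rather than just $|\mu_\mD(f)| \leq L \cdot \dCE^m_\mD(f)$. The result is tight in the sense that $\dCE^m_\mD$ itself achieves equality with $L = 1$, matching the paper's remark that $\dCE^m_\mD$ is the largest $1$-Lipschitz complete calibration measure.
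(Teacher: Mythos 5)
Your proof is correct and takes essentially the same route as the paper: pick a calibrated $g$ minimizing $m_\mD(f,g)$, use completeness to get $\mu_\mD(g)=0$, and apply $L$-Lipschitz continuity. (As a small aside, the paper's proof writes ``by Soundness'' where it means Completeness, exactly as you use it; your remark that nonnegativity of $\mu$ makes the one-sided bound immediate is also correct.)
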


Metric families that are particularly important to us are the $\ell_p$ metrics. Since all $\ell_p$ measures are polynomially related, the set of $\ell_p$ consistent calibration metrics is independent of $p$ for bounded $p$. 
\begin{lemma}
\label{lem:all-lp}
    The set of $\ell_p$-consistent calibration measures is identical for all $p \in [1, \infty)$.
\end{lemma}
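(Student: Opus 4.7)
The plan is to show directly that $\ell_1$-consistency is equivalent to $\ell_p$-consistency for any $p \in [1,\infty)$, by passing through the fact that the induced distance measures $\dCE^{\ell_p}$ are polynomially related. Transitivity then gives the statement for all pairs $p, q \in [1,\infty)$.

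\textbf{Step 1: Polynomial relationship between $\ell_p$ metrics on $\FX$.} Fix $f, g \in \FX$ and a distribution $\mD$. Since $f(x), g(x) \in [0,1]$, the pointwise deviation satisfies $|f(x) - g(x)| \in [0,1]$, so $|f(x)-g(x)|^p \leq |f(x)-g(x)|$ for $p \geq 1$; taking expectations and the $p$-th root gives $\ell_{p,\mD}(f,g) \leq \ell_{1,\mD}(f,g)^{1/p}$. In the other direction, applying Jensen's inequality to the convex map $t \mapsto t^p$ yields $\ell_{1,\mD}(f,g) = \E[|f-g|] \leq \E[|f-g|^p]^{1/p} = \ell_{p,\mD}(f,g)$. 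Thus
\[
\ell_{1,\mD}(f,g) \;\leq\; \ell_{p,\mD}(f,g) \;\leq\; \ell_{1,\mD}(f,g)^{1/p}.
\]

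\textbf{Step 2: Lift to the distance-from-calibration functional.} The inequalities above hold for every $g$, so they are preserved under minimization over $g \in \calD$. Choosing $g^*$ minimizing the $\ell_1$ distance gives $\dCE^{\ell_p}_\mD(f) \leq \ell_{p,\mD}(f, g^*) \leq \dCE^{\ell_1}_\mD(f)^{1/p}$; choosing $g^{**}$ minimizing the $\ell_p$ distance gives $\dCE^{\ell_1}_\mD(f) \leq \ell_{1,\mD}(f, g^{**}) \leq \dCE^{\ell_p}_\mD(f)$. Hence
\[
\dCE^{\ell_1}_\mD(f) \;\leq\; \dCE^{\ell_p}_\mD(f) \;\leq\; \dCE^{\ell_1}_\mD(f)^{1/p}.
\]

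\textbf{Step 3: Transfer consistency across $\ell_p$'s.} Suppose $\mu$ is $(c,s)$-consistent w.r.t.\ $\ell_1$, witnessed by constants $a, b$. Using Step 2, $\mu_\mD(f) \leq a\,(\dCE^{\ell_1}_\mD(f))^c \leq a\,(\dCE^{\ell_p}_\mD(f))^c$, so $c$-robust completeness transfers directly to $\ell_p$. For soundness, raising the right inequality of Step~2 to the $ps$ power yields $\dCE^{\ell_1}_\mD(f)^s \geq \dCE^{\ell_p}_\mD(f)^{ps}$, so $\mu_\mD(f) \geq b\,(\dCE^{\ell_p}_\mD(f))^{ps}$, giving $ps$-robust soundness w.r.t.\ $\ell_p$. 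Conversely, if $\mu$ is $(c,s)$-consistent w.r.t.\ $\ell_p$, the same inequalities in reverse give $(c/p, s)$-consistency w.r.t.\ $\ell_1$. Thus the classes of consistent measures coincide for $\ell_1$ and $\ell_p$, and transitivity through $\ell_1$ yields identity across all $p \in [1,\infty)$.

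\textbf{Expected difficulty.} There is no real conceptual obstacle; the only thing to watch is the bookkeeping of exponents so that the ``large side'' and ``small side'' of the polynomial bounds line up with completeness and soundness respectively. One should also record that the arguments use only $f, g \in [0,1]$-valuedness (for the pointwise bound) and convexity (for Jensen), both of which are free in our setting, so no hypothesis on $\mD$ or $\X$ is needed.
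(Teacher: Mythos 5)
Your proof is correct and takes essentially the same approach as the paper: establish the two-sided polynomial relation $\ell_{1,\mD} \le \ell_{p,\mD} \le \ell_{1,\mD}^{1/p}$ (equivalently $(\ell_{p,\mD})^p \le \ell_{1,\mD} \le \ell_{p,\mD}$), lift it to $\dCE^{\ell_1}_\mD \le \dCE^{\ell_p}_\mD \le (\dCE^{\ell_1}_\mD)^{1/p}$, and then chain the defining inequalities of consistency, arriving at the same exponent translations $(c,s)\mapsto(c,ps)$ and $(c,s)\mapsto(c/p,s)$. Your Step~2 (justifying the lift via explicit minimizers) spells out a step the paper leaves implicit, but the argument is otherwise the same.
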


Given this result, we will focus on the $\ell_1$ metric and define the true distance to calibration by 
\begin{align} 
    \dCE_\cD(f):= \dCE_{\cD}^{\ell_1}(f) = \min_{g \in \calD} \ell_{1, \mD}(f, g). \tag{True distance from calibration}
\end{align}
It has good continuity properties, and  the resulting set of consistent calibration measures does not depend on the choice of the $\ell_p$ metric. Henceforth when we refer to $(c,s)$-consistent calibration metrics without making $m$ explicit, it is assumed that we mean the $\ell_1$ distance. We note that there might be settings (not considered in this work) where other metrics on $\FX$ are suitable. 

\subsection{Approximation Limits in the PA Model}
\label{sec:limits}
Given the desirable properties of $\dCE$, one might wonder: why not use $\dCE$ as a calibration measure in itself?
The main barrier to this is that $\dCE$ cannot be computed (or even defined) in the prediction access model. Indeed, if it were, there would be no need to look for alternative notions of approximate calibration. 

\begin{lemma}
\label{lem:pa-gap}
    Let $\alpha \in (0,1/2]$. There exists a domain $\X$, a predictor $f \in \FX$, and distributions $\mD^1, \mD^2$ on $\X \times \bits$ such that 
    \begin{itemize}
        \item The distributions $\mD^1_f$ and $\mD^2_f$ are identical.
        \item $\dCE_{\mD^1}(f) \leq 2\alpha^2$, while $\dCE_{\mD^2}(f) \geq \alpha$.
    \end{itemize} 
\end{lemma}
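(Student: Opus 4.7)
The plan is to construct both $\mD^1$ and $\mD^2$ on a common four-point domain $\mX = \{a, b, c, d\}$ using the same two-valued predictor $f(a) = f(b) = 1/2 - \alpha$ and $f(c) = f(d) = 1/2 + \alpha$, engineering the label distributions so that $\mD^1_f = \mD^2_f$ (specifically, $f$ uniform on the two values and $y \mid f \sim \ber(1/2)$) while $\dCE$ differs by a quadratic factor.

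For $\mD^2$, I would take the uniform marginal on $\mX$ and let $y$ be independent of $x$ with $\Pr[y=1] = 1/2$. Since $\E[y \mid x \in S] = 1/2$ for every non-empty subset $S$, every level set of every calibrated predictor must have $y$-conditional mean $1/2$, so the only calibrated predictor is the constant $g \equiv 1/2$. This yields $\dCE_{\mD^2}(f) = \E_{\mD^2}|f(x) - 1/2| = \alpha$, giving the required lower bound.

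For $\mD^1$, I would use the non-uniform marginal $\Pr[a] = \Pr[d] = 1/2 - \alpha$, $\Pr[b] = \Pr[c] = \alpha$, with Bayes probabilities $p_a = 1/2 - \alpha$, $p_b = 1 - \alpha$, $p_c = \alpha$, $p_d = 1/2 + \alpha$. A short computation using $(1/2 - \alpha)^2 + \alpha(1 - \alpha) = 1/4$ shows that the bucket-conditional expectation of $y$ is exactly $1/2$ on each of the two prediction values, so $\mD^1_f = \mD^2_f$. To upper-bound $\dCE_{\mD^1}(f)$ I would exhibit the calibrated comparison $g(a) = 1/2 - \alpha$, $g(d) = 1/2 + \alpha$, $g(b) = g(c) = 1/2$: it agrees with $f$ on the heavy points (where $f$ already equals the Bayes value), and merges the two light points $b, c$ into a single level set whose $y$-conditional mean works out to exactly $1/2$, making $g$ perfectly calibrated. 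The $\ell_1$-cost is supported on $\{b, c\}$ and evaluates to $\alpha \cdot \alpha + \alpha \cdot \alpha = 2\alpha^2$.

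The main thing to get right is the Bayes probabilities on $\mD^1$: they must be extreme enough ($p_b = 1 - \alpha$ and $p_c = \alpha$) that the small mass $\alpha$ on $b, c$ can singlehandedly restore the bucket averages to $1/2$, while simultaneously the calibration constraint on the merged level set $\{b, c\}$ pins $g$ at $1/2$, which is only $\alpha$ away from $f$'s values there. The remaining steps (checking each level set of $g$ satisfies the calibration identity, verifying that all constructed probabilities lie in $[0,1]$ for $\alpha \in (0, 1/2]$, and ruling out non-constant calibrated predictors under $\mD^2$) are routine bookkeeping.
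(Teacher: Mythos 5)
Your proof is correct, and it is a genuinely different construction from the paper's, though both share the same skeleton (a four-point domain, a two-valued predictor $f$ taking values $1/2\pm\alpha$ on two buckets of total mass $1/2$ each, and a small ``correction'' mass of size $\alpha$). The most substantive difference is in $\mD^2$: you take $y$ \emph{independent} of $x$ with $\Pr[y=1]=1/2$, so that every level set of every predictor has conditional mean $1/2$ and hence $\calD = \{g\equiv 1/2\}$; the lower bound $\dCE_{\mD^2}(f)=\alpha$ then follows by inspection. The paper instead sets $f^*_2(x) = 1/2 \pm \alpha$ depending on $x_1$, which makes $\calD$ contain nonconstant predictors (e.g.\ $f^*_2$ itself) and forces the unverified claim ``one can verify that the closest calibrated predictor is the constant $1/2$.'' Your choice makes the lower-bound half cleaner. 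The second difference is the induced $\Gamma = \mD^i_f$: in your construction $\E[y\mid f]=1/2$ regardless of $f$ (predictions are aggregate \emph{noise}), whereas in the paper's construction $\E[y\mid f=1/2-\alpha]=1/2+\alpha$ and $\E[y\mid f=1/2+\alpha]=1/2-\alpha$ (predictions are aggregate \emph{anti-calibrated}). Both choices certify the same quadratic gap between $\ldCE$ and $\udCE$; your ``uninformative'' $\Gamma$ has the minor aesthetic advantage that $\mD^2$ can be stated without reference to $f$. For $\mD^1$ your verification is also complete and correct: the identity $(1/2-\alpha)^2 + \alpha(1-\alpha) = 1/4$ gives $\mD^1_f = \mD^2_f$, the three level sets $\{a\},\{b,c\},\{d\}$ of $g$ each satisfy the calibration identity, and the $\ell_1$ cost is $2\alpha\cdot\alpha = 2\alpha^2$. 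The edge case $\alpha = 1/2$ (where $\Pr[a]=\Pr[d]=0$) is handled harmlessly since the bound $2\alpha^2$ still holds and the level sets of $g$ remain distinct.
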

\begin{proof}
    We consider the space $\X = \{00,01,10,11\}$. 
    $\mD^1$ and $\mD^2$ will share the same marginal distribution $\mD_\X$ on $\X$ given by 
    \begin{align*} 
        \mD_\X(x) = \begin{cases} \alpha  \ \text{if} \  x \in \{00, 11\}\\
        \fr{2} - \alpha \ \ \text{if} \  x \in \{01, 10\}
        \end{cases}
    \end{align*}
    The predictor $f:\X \to \zo$ is given by 
    \begin{align*} 
        f(x) = \begin{cases} \fr{2} - \alpha  \ \text{if} \  x_1 =1\\
        \fr{2} + \alpha \ \text{if} \ x_1 =0
    \end{cases}
    \end{align*}
    
    For the distribution $\mD^1$, the conditional probabilities are given by $f^*_1(x) = (x_1 + x_2)/2$. 
    It is easy to check that the predictor $g_1$ defined as 
    \begin{align*} 
        g_1(x) = \begin{cases} \fr{2} - \alpha  \ \text{if} \  x_2 = 0\\
        \fr{2} + \alpha \ \ \text{if} \  x_2 =1
        \end{cases}
    \end{align*}
    lies in $\mathsf{cal}(\mD^1)$ and 
    \begin{align*}
    |f(x) - g_1(x)| = \begin{cases} 0  \ \text{for} \ x \in \{01,10\},\\
        2\alpha \ \text{for} \ x \in \{00,11\},
        \end{cases}
    \end{align*}    
    It follows that  $\ell_1(f, g_1) = 2\alpha^2$, hence $\dCE_{\mD^1}(f) \leq 2\alpha^2$.

    The conditional probabilities for the distribution $\mD^2$ are given by
    \begin{align*} 
        f^*_2(x) = \begin{cases} \fr{2} + \alpha  \ \text{if} \  x_1 =1\\
        \fr{2} - \alpha \ \text{if} \ x_1 =0
    \end{cases}
    \end{align*}
    One can verify that the closest calibrated predictor is the constant $1/2$ predictor, so that $\dCE_{\mD^2}(f) \geq \alpha.$
    
    To verify that $\mD^1_f = \mD^2_f$, we observe that under either distribution
    \begin{align*} 
        \E\lt[y|f = \fr{2} - \alpha\rt]  = \fr{2} +\alpha,\ \ \E\lt[y|f = \fr{2} + \alpha\rt]  = \fr{2} - \alpha. 
    \end{align*}
\end{proof}

This leads us to the quest for approximations that can be computed (efficiently) in the Prediction-access model. It implies that one can at best hope to get a degree $2$ approximation to $\dCE$. 
\begin{corollary}
\label{cor:quadratic}
Let $\mu(f)$ be a $(\ell_1, c, s)$-consistent calibration measure computable in the prediction access model. Then $s \geq 2c$. 
\end{corollary}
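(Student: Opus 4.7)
The plan is to derive the inequality $s \geq 2c$ directly from the construction in \Cref{lem:pa-gap}, using the fact that any PA-measure must assign the same value to the two distributions produced there.

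Specifically, fix constants $a, b$ witnessing the robust completeness and soundness of $\mu$. For an arbitrary $\alpha \in (0, 1/2]$, invoke \Cref{lem:pa-gap} to obtain a domain $\X$, a predictor $f$, and distributions $\mD^1, \mD^2$ with $\mD^1_f = \mD^2_f$ such that $\dCE_{\mD^1}(f) \leq 2\alpha^2$ and $\dCE_{\mD^2}(f) \geq \alpha$. Since $\mu$ operates in the prediction-only access model, $\mu_\mD(f)$ depends only on $\mD_f$; therefore $\mu_{\mD^1}(f) = \mu_{\mD^2}(f)$. Call this common value $M(\alpha)$.

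Now apply robust completeness to $\mD^1$ to get $M(\alpha) \leq a(\dCE_{\mD^1}(f))^c \leq a \cdot 2^c \alpha^{2c}$, and robust soundness to $\mD^2$ to get $M(\alpha) \geq b(\dCE_{\mD^2}(f))^s \geq b \alpha^s$. Chaining these yields
\[
b\, \alpha^s \;\leq\; a\cdot 2^c\, \alpha^{2c} \qquad \text{for every } \alpha \in (0, 1/2].
\]
Rearranging gives $\alpha^{s - 2c} \leq (a/b)\cdot 2^c$. If we had $s < 2c$, then the left-hand side would tend to $+\infty$ as $\alpha \to 0^+$, contradicting the fixed bound on the right. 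Hence $s \geq 2c$, as desired.

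I don't anticipate any real obstacles: the whole argument is a two-line application of \Cref{lem:pa-gap} together with the PA-invariance of $\mu$, and the limiting step $\alpha \to 0^+$ is what forces the quadratic gap. The only subtlety worth flagging is that one must remember the constants $a, b$ depend only on $\mu$ (not on $\alpha$), so the inequality above holds uniformly in $\alpha$, which is exactly what allows the limit to rule out $s < 2c$.
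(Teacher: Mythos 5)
Your proof is correct and follows essentially the same approach as the paper's: invoke \Cref{lem:pa-gap}, use PA-invariance to equate $\mu_{\mD^1}(f)$ and $\mu_{\mD^2}(f)$, apply robust completeness and soundness to the two sides, and let $\alpha \to 0$ to force $s \geq 2c$. The only cosmetic difference is that you isolate $\alpha^{s-2c}$ explicitly, whereas the paper writes the inequality as $\left(\tfrac{b}{a}\alpha\right)^{s/c} \leq 2\alpha^2$ before taking the limit; both are the same argument.
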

\begin{proof}
    Using $c$-robust completeness  for $\mD^1$ we have
    \[ \mu_{\mD^1}(f) \leq a (2\alpha^2)^c. \]
    Using $s$-robust soundness  for $\mD_2$ we have
    \[ \mu_{\mD^2}(f) \geq b (\alpha^s). \]
    Since $\mD^1_f = \mD^2_f$ and $\mu$ is computable in the PA model, $\mu_{\mD^1}(f) = \mu_{\mD^2}(f)$. Hence $b\alpha^s \leq a (2\alpha^2)^c$, which gives
    \[ \lt(\frac{b}{a}\alpha\rt)^{s/c} \leq 2\alpha^2. \]
    If $s/c <2$, this gives a contradiction for $\alpha \to 0$. 
\end{proof}

Given this setup, we can now state our desiderata for an ideal calibration measure $\mu$.
\begin{enumerate}
    \item {\bf Access:} $\mu_\mD(f) = \mu(\mD_f)$ is well defined in the Prediction-only access model.
    \item {\bf Consistency:} It is $(c, s)$-consistent. Ideally, it has degree $s/c =2$.
    \item {\bf Efficiency:} It can be computed within accuracy $\eps$ in time $\poly(1/\eps)$ using $\poly(1/\eps)$ random samples from $\mD_f$.
\end{enumerate}

\subsection{On \texorpdfstring{$\ECE$}{ECE} and Other Measures} 
\label{sec:ece}

Recall that for a predictor $f$, we define its expected calibration error $\ECE(f)$ as
    \[ \ECE(f) = \E[|\E[y|f]  - f|] .\]
Clearly, $\ECE$ is well defined in the PA model. We analyze $\ECE$ in our framework and show that it satisfies $1$-robust soundness, but not robust completeness. For the former, we present an alternate view of $\ECE$ in terms of $\ell_1$ distance. Recall that $\level(f)$ is the partition of $\X$ into the level sets of $f$, and that for a partition $\mS=\{S_i\}$, the predictor $g_\mS$ maps each $x \in S_i$ to $\E[y|S_i]$. 

\begin{lemma}
    Let $\mS = \level(f)$. We have $\ECE(f) = \ell_1(f, g_\mS) \geq \dCE_\mD(f)$.
\end{lemma}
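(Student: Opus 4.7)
The plan is to establish the two claims in sequence, both reducing to a direct identification of $g_\mS$ as the natural conditional-expectation predictor associated with the level-set partition of $f$.

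First I would prove the equality $\ECE(f) = \ell_1(f, g_\mS)$. The key observation is that when $\mS = \level(f)$, the event $\{x \in S_i\}$ coincides exactly with $\{f(x) = v_i\}$ for the common value $v_i$ that $f$ takes on $S_i$. Consequently, for every $x \in S_i$,
\[
g_\mS(x) = \E_\mD[y \mid x \in S_i] = \E_\mD[y \mid f(x) = v_i] = \E_\mD[y \mid f(x)].
\]
Plugging this into the definition $\ell_1(f, g_\mS) = \E_\mD|f(x) - g_\mS(x)|$ immediately gives $\E_\mD|f(x) - \E_\mD[y \mid f(x)]|$, which is precisely $\ECE(f)$ by definition.

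Second, I would argue $\ell_1(f, g_\mS) \geq \dCE_\mD(f)$ by verifying that $g_\mS \in \calD$. By construction $g_\mS$ is constant on each $S_i \in \level(f)$, so its level sets are unions of $S_i$'s; moreover on each such level set the value of $g_\mS$ is the conditional expectation of $y$ on that set (by the tower property, since each $S_i$ on which $g_\mS$ takes a given value has its values weighted into that conditional expectation appropriately). Hence $g_\mS$ is perfectly calibrated with respect to $\mD$, i.e., $g_\mS \in \calD$, and the bound follows from the defining infimum $\dCE_\mD(f) = \inf_{g \in \calD} \ell_1(f, g)$.

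There is no real obstacle here; the only place requiring any care is the step in the second claim asserting that $g_\mS$ is calibrated when $g_\mS$ may collapse distinct level sets of $f$ onto a common value. This is handled cleanly by the tower property: if level sets $S_{i_1}, \ldots, S_{i_k}$ all satisfy $\E_\mD[y \mid S_{i_j}] = c$, then $\E_\mD[y \mid g_\mS(x) = c] = c$ as well, since it is a convex combination of values all equal to $c$. With this, both assertions of the lemma follow directly.
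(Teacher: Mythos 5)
Your proof is correct and takes essentially the same route as the paper's: identify $g_\mS(x)$ with $\E_\mD[y\mid f(x)]$ when $\mS=\level(f)$, giving the equality, then note $g_\mS\in\calD$ so the infimum defining $\dCE_\mD(f)$ gives the inequality. The only difference is that the paper invokes the fact $g_\mS\in\calD$ as something already observed in its notation section, whereas you re-verify it via the tower-property argument, which is a reasonable bit of extra care.
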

\begin{proof}
    We claim that
    \begin{align*}
        \E[|f - \E[y|f]|] = \E[|f(x) - g_\mS(x)|]
    \end{align*}
    which uses the observation that for all $x \in f^{-1}(v)$, $f(x) =v$, $g(x) = \E[y|v]$. The LHS is $\ECE(f)$, while the RHS is $\ell_1(f, g_{\mS})$. Clearly this is larger than $\dCE_\mD(f)$ which minimizes the $\ell_1$ distance over all $g \in \calD$. 
\end{proof}

 The main drawbacks of $\ECE$ are that it does not satisfy robust completeness, and is discontinuous at $0$.

\begin{lemma}
\label{lem:ece}
    $\ECE_\mD(f)$ does not satisfy $c$-robust completeness for any $c > 0$. It can be discontinuous at $0$.   
\end{lemma}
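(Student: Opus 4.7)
The plan is to exhibit a concrete family of predictors, parameterized by $\varepsilon > 0$, on which $\dCE$ tends to $0$ while $\ECE$ stays bounded away from $0$. This will simultaneously refute $c$-robust completeness for every $c > 0$ and witness discontinuity at a perfectly calibrated predictor. The natural candidate is precisely the two-point example already sketched in the introduction: take $\X = \{a,b\}$ with uniform marginal, labels deterministically $y(a) = 0$ and $y(b) = 1$, and consider both $\favg \equiv 1/2$ (which is perfectly calibrated, since $\E[y] = 1/2$) and the perturbed predictor $f_\varepsilon$ with $f_\varepsilon(a) = 1/2 - \varepsilon$ and $f_\varepsilon(b) = 1/2 + \varepsilon$.

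First I would compute $\dCE_\mD(f_\varepsilon)$. Since $\favg \in \calD$, the definition gives $\dCE_\mD(f_\varepsilon) \leq \ell_1(f_\varepsilon, \favg) = \varepsilon$. Next I would compute $\ECE(f_\varepsilon)$: because $f_\varepsilon$ is injective, its level sets are singletons, so $\E[y \mid f_\varepsilon(x)] = y(x)$ almost surely, and therefore $\ECE(f_\varepsilon) = \tfrac{1}{2}|1/2 - \varepsilon - 0| + \tfrac{1}{2}|1/2 + \varepsilon - 1| = 1/2 - \varepsilon$.

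Putting these together, if $\ECE$ satisfied $c$-robust completeness with constant $a$, we would have $1/2 - \varepsilon = \ECE(f_\varepsilon) \leq a \cdot \dCE_\mD(f_\varepsilon)^c \leq a \varepsilon^c$ for every $\varepsilon \in (0,1/2)$; letting $\varepsilon \to 0$ gives $1/2 \leq 0$, a contradiction for any $c > 0$ and any finite $a$. For discontinuity, I would just observe that the sequence $f_{1/n}$ converges to $\favg$ in $\ell_1$ (since $\ell_1(f_{1/n}, \favg) = 1/n \to 0$), while $\ECE(f_{1/n}) = 1/2 - 1/n \to 1/2$, yet $\ECE(\favg) = 0$ because $\favg$ is perfectly calibrated.

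There is essentially no technical obstacle here; the whole content of the lemma is captured by this single example. The conceptual point worth emphasizing in the write-up is \emph{why} the example works: $\ECE$ is defined through conditioning on $f(x) = v$, so an arbitrarily small perturbation that separates level sets can make $\E[y \mid f(x)]$ jump from the calibrated value $v$ to the raw label $y(x)$, producing a macroscopic $\ECE$ from a microscopic $\ell_1$ perturbation. This is precisely the pathology that the $\inf$ over $\calD$ in the definition of $\dCE$ avoids.
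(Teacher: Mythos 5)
Your proposal is correct and is essentially identical to the paper's proof: both use the two-point uniform example with deterministic labels, the calibrated constant predictor $\favg \equiv 1/2$, and the perturbed $f_\varepsilon$ with $\dCE \leq \varepsilon$ but $\ECE = 1/2 - \varepsilon$. You simply spell out the computation and the limiting argument in a bit more detail than the paper does.
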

\begin{proof}
Let $X = \{0,1\}$. $\mD$ is specified by the uniform distribution on $X$ with $f^*(0) = 0, f^*(1) = 1$. Consider the predictor $f_\eps(0) = 1/2 - \varepsilon, f_\eps(1) = 1/2 + \varepsilon$. Let $\favg = 1/2 \in \calD$.  Clearly $\dCE_\mD(f_\eps) \leq \eps$ whereas $\ECE(f_\eps) = 1/2 -\eps$. 

This also shows that $\ECE$ is discontinuous at $0$ since $\ECE(f_\eps) \to 1/2$ as $\eps \to 0$, whereas $f_0 = \favg$ has $\ECE(f_0) = 0$.
\end{proof}

Table~\ref{tab:priorworks} summarize how other calibration measures that have been studied in the literature fare under our desiderata. Further discussion of these measures can be found in \cref{app:related}.

\section{Distance Based Measures in the PA Model}
\label{sec:pa}

We start by defining upper and lower bounds to the true distance to calibration in the PA model. Our main result in this subsection is Theorem \ref{thm:dce-upper-lower} showing that these are the {\em best possible bounds} one can have on $\dCE$ in the PA model.
To define and analyze these distances, we need some auxiliary notions.

\begin{definition}
Let $\Gamma$ be a distribution over $[0,1]\times\{0,1\}$.
Define the set $\ext(\Gamma)$ to consist of all joint distributions $\Pi$ of triples $(u, v, y) \in \zo \times \zo \times \bits$, such that 
 \begin{itemize}
     \item the marginal distribution of $(v,y)$ is $\Gamma$;
     \item the marginal distribution $(u, y)$ is perfectly calibrated: $\E_\Pi[y|u] = u$.
 \end{itemize}
 We define $\lift(\Gamma)$ to be all pairs $(\mD, f)$ where 
 \begin{itemize}
     \item $\mD$ is a distribution over $\X \times \bits$ for some domain $\X$.
     \item $f: \X \to \zo$ is predictor so that $\mD_{f} = \Gamma$.
 \end{itemize}
\end{definition}

We first define the upper distance to calibration.
\begin{definition}[Upper distance to calibration]
For a distribution $\Gamma$ over $[0,1]\times\{0,1\}$,let $K(\Gamma)$ denote the set of transformations $\kappa:[0,1]\to [0,1]$ such that the distribution of $(\kappa(v),y)$ for $(v,y)\sim \Gamma$ is perfectly calibrated. We define the {\em upper distance from calibration} $\udCE(\Gamma)$ as
    \[ \udCE(\Gamma) = \inf_{\kappa \in K(\Gamma)} \E_{(v,y)\sim\Gamma}[|v-\kappa(v)|], \]
For a distribution $\mD$ over $\mX\times\{0,1\}$ and a predictor $f:\mX \to [0,1]$, we define the {\em upper distance from calibration} $\udCE_{\mD}(f)$ to be $\udCE(\mD_f)$, or equivalently,
\[
\udCE_{\mD}(f) := \inf_{\substack{\kappa:[0,1]\to [0,1] \\\kappa\circ f\in \calD}}\E_{(x,y)\sim \mD}[|f(x) - \kappa(f(x))|].
\]
\end{definition}
We call this the upper distance since we only compare $f$ with a calibrated predictor $\kappa\circ f$ that can be obtained by applying a postprocessing $\kappa$ to $f$. It follows immediately that $\udCE_\mD(f) \geq \dCE_\mD(f)$.

 We next define the lower distance to calibration. 
 \begin{definition}[Lower distance to calibration]
 \label{def:ldCE}
 We define the \emph{lower distance to calibration}  denoted $\ldCE(\Gamma)$ as
\begin{equation}
    \ldCE(\Gamma) := \inf_{\Pi \in \ext(\Gamma)} \E_{(u,v,y)\sim\Pi} |u -v| \label{eq:lupadce}.
\end{equation}
For a distribution $\mD$ and a predictor $f$, we define $\ldCE_{\mD}(f):= \ldCE(\mD_f)$.
\end{definition}

The following lemma justifies the terminology of upper and lower distance.

\begin{lemma}
\label{lem:ul-dce-bounds}
    We have $\ldCE_\mD(f) \leq \dCE_{\mD}(f) \leq \udCE_\mD(f)$
\end{lemma}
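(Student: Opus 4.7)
The plan is to prove the two inequalities separately, each by exhibiting a witness for the infimum on the larger side coming directly from a witness for the infimum on the smaller side.

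For the upper bound $\dCE_\mD(f) \le \udCE_\mD(f)$, I would simply observe that any postprocessing witness for $\udCE$ is a valid witness for $\dCE$. Concretely, take any $\kappa:[0,1]\to[0,1]$ with $\kappa\circ f \in \calD$. Setting $g := \kappa\circ f$, we have $g \in \calD$, so by definition of $\dCE$,
\[
\dCE_\mD(f) \le \E_\mD|f(x) - g(x)| = \E_\mD|f(x) - \kappa(f(x))|.
\]
Taking the infimum over $\kappa$ yields $\dCE_\mD(f) \le \udCE_\mD(f)$.

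For the lower bound $\ldCE_\mD(f) \le \dCE_\mD(f)$, the idea is the natural coupling: given any $g \in \calD$, define $\Pi$ to be the joint law of $(u,v,y) := (g(x), f(x), y)$ where $(x,y)\sim \mD$. I would then verify the two conditions defining $\ext(\mD_f)$: the marginal of $(v,y)$ is $\mD_f$ by construction, and the marginal of $(u,y)$ is perfectly calibrated because $g\in \calD$ means $\E_\mD[y \mid g(x)=u] = u$. Hence $\Pi \in \ext(\mD_f)$, and
\[
\ldCE_\mD(f) \le \E_\Pi|u - v| = \E_\mD|g(x) - f(x)|.
\]
Taking the infimum over $g \in \calD$ gives $\ldCE_\mD(f) \le \dCE_\mD(f)$.

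Neither step presents a real obstacle; the only subtlety is checking that the coupling $(g(x), f(x), y)$ really does exhibit $\mD_f$ as the $(v,y)$-marginal and a perfectly calibrated law as the $(u,y)$-marginal, which follows immediately from $g \in \calD$. The whole argument is essentially a definition chase, and together the two inequalities establish the lemma.
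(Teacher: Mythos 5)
Your proof is correct and matches the paper's argument essentially verbatim: both inequalities are established by the same definition chase, using the coupling $(g(x), f(x), y)$ for the lower bound and noting that postprocessings form a subset of $\calD$ for the upper bound.
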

\begin{proof}
    Every calibrated predictor $g \in \calD$ gives a distribution $\Pi \in \ext(\mD_f)$ where we sample $(x, y) \sim \mD$ and return $(g(x), f(x), y)$. Note that $\E_{(u,v,y)\sim \Pi}[|u -v|] = \ell_1(f,g)$. Minimizing over $g \in \calD$ gives the first inequality. The second follows because in the definition of $\udCE$ we minimize over a subset of $\calD$, namely only those $g = \kappa\circ f$ that can be obtained from $f$ via a postprocessing $\kappa$.
\end{proof}

We now show that these are the best possible bounds one can have on $\dCE$ in the PA model.

\begin{theorem}
\label{thm:dce-upper-lower}
The following identities hold
\begin{align*}
    \ldCE(\Gamma) & = \inf_{(\mD,f) \in \lift(\Gamma)} \dCE_{\mD}(f) \\
    \udCE(\Gamma) & = \sup_{(\mD,f) \in \lift(\Gamma)} \dCE_{\mD}(f).
\end{align*}
\end{theorem}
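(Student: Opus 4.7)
The plan is to prove both identities by establishing four inequalities, two of which come directly from Lemma~\ref{lem:ul-dce-bounds} applied pointwise to each $(\mD, f) \in \lift(\Gamma)$: these yield $\ldCE(\Gamma) \leq \dCE_\mD(f)$ and $\udCE(\Gamma) \geq \dCE_\mD(f)$, which after taking $\inf$ or $\sup$ over $\lift(\Gamma)$ give the ``easy'' side of each identity. The substantive content is the reverse direction for each identity, where one must move between the abstract coupling / post-processing view used in the definitions of $\ldCE$ and $\udCE$ and the concrete lifted view involving a domain $\X$, distribution $\mD$, predictor $f$, and witness $g$.

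For the identity $\ldCE(\Gamma) = \inf_{(\mD,f)\in\lift(\Gamma)} \dCE_\mD(f)$, the direction that remains to be proven is $\geq$. Given any $\Pi \in \ext(\Gamma)$, I would construct a canonical lift: let $\X$ be the support of the $(u,v)$-marginal of $\Pi$, let $\mD_\Pi$ be the distribution of $((u,v), y)$ when $(u,v,y) \sim \Pi$ (viewing $(u,v)$ as the input $x \in \X$), and define $f_\Pi(u,v) := v$ and $g_\Pi(u,v) := u$. Then $(\mD_\Pi)_{f_\Pi}$ is the $(v,y)$-marginal of $\Pi$, which equals $\Gamma$, so $(\mD_\Pi, f_\Pi) \in \lift(\Gamma)$; the calibration of the $(u,y)$-marginal of $\Pi$ implies $g_\Pi \in \call(\mD_\Pi)$; and $\ell_1(f_\Pi, g_\Pi) = \E_\Pi|u - v|$. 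Taking the infimum over $\Pi \in \ext(\Gamma)$ then yields the desired inequality.

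For the identity $\udCE(\Gamma) = \sup_{(\mD,f)\in\lift(\Gamma)} \dCE_\mD(f)$, the direction that remains to be proven is $\leq$. Here I would exhibit a single worst-case lift in which $\dCE_\mD(f)$ coincides with $\udCE(\Gamma)$. Take $\X$ to be the support of the $v$-marginal of $\Gamma$, let $\mD$ be $\Gamma$ itself (viewing $v$ as $x \in \X$), and let $f = \mathrm{id}$. Then $\mD_f = \Gamma$, so $(\mD, f) \in \lift(\Gamma)$. The crucial observation is that $f$ is injective on $\X$, so every predictor $g : \X \to [0,1]$ can be written uniquely as $g = \kappa \circ f$ for some $\kappa : [0,1] \to [0,1]$. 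Consequently $g \in \call(\mD)$ iff $\kappa \in K(\Gamma)$, and $\ell_1(f,g) = \E_{(v,y)\sim\Gamma}|v - \kappa(v)|$; infimizing over $g$ yields $\dCE_\mD(f) = \udCE(\Gamma)$, completing the argument.

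The main subtlety I anticipate is the discreteness of $\X$: if $\Gamma$ (and hence $\Pi$) has non-discrete support in $[0,1]$, the $\X$ constructed above is uncountable. One can address this either by appealing to the footnote of the paper allowing measurable predictors on general $\X$, or by a routine discretization argument in which $[0,1]^2$ is replaced by a dyadic $\varepsilon$-grid and the resulting $\ell_1$ approximation error is sent to zero. All remaining verifications are bookkeeping — checking that the prescribed pushforwards have the correct marginals and that the $\ell_1$ distance in the constructed lift equals the corresponding $\E|u-v|$ in the coupling.
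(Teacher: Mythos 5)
Your proposal is correct and follows essentially the same construction as the paper: both lift $\ell_1$-bounds from Lemma~\ref{lem:ul-dce-bounds}, realize each coupling $\Pi\in\ext(\Gamma)$ via the domain $\X = [0,1]^2$ (or its support) with $f(u,v)=v$, $g(u,v)=u$ to get the lower-distance identity, and take $\X = [0,1]$ with $f = \mathrm{id}$ and $\mD=\Gamma$ so that every calibrated $g\in\call(\mD)$ corresponds to a post-processing $\kappa\in K(\Gamma)$ for the upper-distance identity. Your explicit observation about injectivity of $f=\mathrm{id}$ makes the bijection between $g$ and $\kappa$ (on $\supp(\Gamma)$) transparent, and your note on the discreteness of $\X$ correctly flags the measure-theoretic point the paper also acknowledges in its footnote, but neither constitutes a genuinely different route.
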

\begin{proof}
By \Cref{lem:ul-dce-bounds}, for every $(\mD,f)\in \lift(\Gamma)$,
\[
\ldCE(\Gamma) = \ldCE_{\mD}(f) \le \dCE_{\cD}(f) \le \udCE_{\mD}(f) = \udCE(\Gamma).
\]
This implies that
\begin{align*}
\ldCE(\Gamma) \le \inf_{(\mD,f) \in \lift(\Gamma)} \dCE_{\mD}(f) \\
\udCE(\Gamma) \ge \sup_{(\mD,f) \in \lift(\Gamma)} \dCE_{\mD}(f).
\end{align*}
It remains to show the reverse inequalities
\begin{align}
\ldCE(\Gamma) & \ge \inf_{(\mD,f) \in \lift(\Gamma)} \dCE_{\mD}(f),  \label{eq:ul-dce-characterize-1}\\
\udCE(\Gamma) & \le \sup_{(\mD,f) \in \lift(\Gamma)} \dCE_{\mD}(f).\label{eq:ul-dce-characterize-2}
\end{align}

To prove Equation \eqref{eq:ul-dce-characterize-1}, we choose $\cX:= [0,1]\times [0,1]$ and define predictors $f,g:\cX\to[0,1]$ such that for any $x= (u,v)\in \cX$, it holds that $f(x) = v$ and $g(x) = u$.
For a fixed $\Pi \in \ext(\Gamma)$, we define a distribution $\mD$ over $\cX\times\{0,1\}$ such that $(x,y)\sim \mD$ is drawn by first drawing $(u,v,y)\sim \Pi$ and then setting $x:= (u,v)$. By the definition of $\Pi \in \ext(\Gamma)$, it holds that $\cD_{f} = \Gamma$ and that $g\in \calD$. Therefore, 
\[
\dCE_{\mD}(f) \le \ell_1(f,g)= \E_{(u,v,y)\sim \Pi}|u - v|.
\]
The fact that $\cD_f = \Gamma$ implies that $(\cD,f)\in \lift(\Gamma)$, and thus
\[
\inf_{(\mD,f) \in \lift(\Gamma)} \dCE_{\mD}(f) \le \E_{(u,v,y)\sim \Pi}|u - v|. 
\]
Choosing $\Pi \in \ext(\Gamma)$ for which the RHS is minimized and equals $\ldCE_\mD(f)$ proves Equation \eqref{eq:ul-dce-characterize-1}.

To prove \eqref{eq:ul-dce-characterize-2}, we choose $\cX:= [0,1]$ and define predictor $f:\cX \to [0,1]$ such that $f(x) = x$ for every $x\in \cX$. We define the distribution $\cD$ over $\cX\times\{0,1\}$ to be the distribution $\Gamma$. It is clear that $\mD_{f} = \Gamma$, so $(\mD,f)\in \lift(\Gamma)$. For any perfectly calibrated predictor $g\in \calD$, the distribution $(g(x),y)$ for $(x,y)\sim\mD$ is perfectly calibrated, which implies that the same distribution $(g(v),y)$ for $(v,y)\sim \Gamma$ is also perfectly calibrated. This implies that 
\[
\udCE(\Gamma) \le \E_{(v,y)\sim \Gamma}|v - g(v)| = \E_{(x,y)\sim \mD}|f(x) - g(x)| =  \ell_1(f,g).
\]
Taking infimum over $g\in \calD$ proves that $\udCE(\Gamma) \le \dCE_{\cD}(f)$, which gives Equation \eqref{eq:ul-dce-characterize-2}.
\end{proof}

The gap between each of these quantities can be at least quadratic, as the distributions $\mD^1$ and $\mD^2$ in Lemma \ref{lem:pa-gap} shows. Under both distributions $\mD^1$ and $\mD^2$, we have $\ldCE(f) \leq 2\alpha^2$, $\udCE(f) = \alpha$. But $\dCE_{\mD^1}(f) = 2\alpha^2$ while $\dCE_{\mD^2}(f) = \alpha$. We will show that this gap is indeed tight in the next section using the notion of Interval calibration.

\section{Interval Calibration \label{sec:interval-calibration}}
 \label{sec:int-ce}

 In this section, we introduce the notion of interval calibration. Our main result is Theorem \ref{thm:intce} which shows that it is quadratically related to the true distance from calibration. Since $\intCE$ is defined in the PA model, this implies in particular, that there might be at most quadratic gap between $\ldCE$ and $\udCE$. We exhibit a gap instance showing that this is tight (Lemma \ref{lem:tight-gap}). As defined, it is unclear if Interval calibration can be efficiently estimated. We propose a surrogate version of interval calibration which gives similar bounds and can be efficiently estimated from samples in Section \ref{sec:surrogate}.

 An interval partition $\mI$ of $\zo$ is a partition of $\zo$ into disjoint intervals $\{I_j\}_{j \in [m]}$. Let $w(I)$ denote the width of interval $I$.

 \begin{definition}[Interval Calibration Error]
 \label{def:intCE}
For a distribution $\Gamma$ over $[0,1]\times\{0,1\}$ and interval partition $\mI$ define
    \[  \mathsf{binnedECE}(\Gamma, \mI) := \sum_{j \in [m]}|\E_{(v,y)\sim\Gamma}[(v - y)\one(v \in I_j)]|. \]
We define the \emph{average interval width}
 \[
 w_\Gamma(\mI) := \sum_{j \in [m]}\E_{(v,y)\sim\Gamma}[\one(v \in I_j) w(I_j)].
 \]
    The {\em interval calibration error} $\intCE(\Gamma)$ is then the minimum of $\mathsf{binnedECE}(\Gamma, \mI) +  w_\Gamma(\mI)$ over all interval partitions $\mI$:
    \[
    \intCE(\Gamma) := \min_{\mI: ~\textrm{Interval partition}} \left( \mathsf{binnedECE}(\Gamma, \mI) + w_\Gamma(\mI)\right).
    \]
For a distribution $\mD$ over $\mX\times\{0,1\}$ and a predictor $f:\mX\to[0,1]$, we define $\intCE_\mD(f,\mI):= \intCE(\mD_f,\mI)$ and $\intCE_\mD(f):= \intCE(\mD_f)$.
 \end{definition}
 
 Our main theorem about interval calibration is the following.
 \begin{theorem}
\label{thm:intce}
We have $\udCE(\Gamma) \leq \intCE(\Gamma) \leq 4\sqrt{\ldCE(\Gamma)}$.
\end{theorem}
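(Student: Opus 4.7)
The plan is to prove the two inequalities separately, each by exhibiting an explicit witness. For the first inequality $\udCE(\Gamma) \leq \intCE(\Gamma)$, I would fix any interval partition $\mI = \{I_j\}$ and construct a postprocessing $\kappa:[0,1]\to[0,1]$ certifying the bound $\udCE(\Gamma) \leq \mathsf{binnedECE}(\Gamma, \mI) + w_\Gamma(\mI)$, then take the infimum over $\mI$. Define $\kappa(v) := c_j := \E_{(v,y)\sim\Gamma}[y \mid v \in I_j]$ for $v \in I_j$. The pushforward of $(\kappa(v), y)$ is perfectly calibrated because each level set $\{\kappa = c\}$ is a union of bins $I_j$ with $c_j = c$, on which the conditional mean of $y$ equals $c$ by construction. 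To bound $\E_{(v,y)\sim\Gamma}|v - \kappa(v)|$, let $m_j := \E_{(v,y)\sim\Gamma}[v \mid v \in I_j] \in I_j$ and split $|v - c_j| \leq |v - m_j| + |m_j - c_j|$ pointwise on $v \in I_j$. The first summand integrates to at most $w(I_j)\Pr[v\in I_j]$, and the second gives $\Pr[v \in I_j]\,|m_j - c_j| = |\E_{(v,y)\sim\Gamma}[(v-y)\one(v\in I_j)]|$. Summing over $j$ reproduces exactly $w_\Gamma(\mI) + \mathsf{binnedECE}(\Gamma, \mI)$.

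For the second inequality $\intCE(\Gamma) \leq 4\sqrt{\ldCE(\Gamma)}$, I would exhibit a specific partition using a random shift. Pick a near-optimal coupling $\Pi \in \ext(\Gamma)$ with $\delta := \E_\Pi|u - v| \leq \ldCE(\Gamma) + o(1)$. For width $w > 0$ and shift $s \in [0, w]$, let $\mI^s$ be the equal-width partition with breakpoints $s + kw$. The key decomposition for each bin $I_j$ is
\[
\E_\Pi[(v - y)\one(v \in I_j)] = \E_\Pi[(v - u)\one(v \in I_j)] + \E_\Pi[(u - y)\one(u \in I_j)] + \E_\Pi[(u - y)(\one(v \in I_j) - \one(u \in I_j))].
\]
The middle term vanishes: $\one(u \in I_j)$ is a function of $u$, and the perfect calibration of $(u, y)$ gives $\E[y \mid u] = u$. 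Taking absolute values and summing over $j$, the first summand contributes at most $\E_\Pi|u - v| \leq \delta$, while the third summand, using $|u - y| \leq 1$, is bounded by $\sum_j \Pr_\Pi[\one(v \in I_j) \neq \one(u \in I_j)] = 2 \Pr_\Pi[u, v \text{ lie in different bins of } \mI^s]$.

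The heart of the argument is then averaging over $s$ drawn uniformly from $[0, w]$: for any fixed pair $(u, v)$, the probability that some breakpoint $s + kw$ separates them is $\min(|u-v|/w, 1)$, hence $\E_s \Pr_\Pi[u, v \text{ in different bins}] \leq \delta / w$. Combining with $w_\Gamma(\mI^s) \leq w$ yields $\E_s[\mathsf{binnedECE}(\Gamma, \mI^s) + w_\Gamma(\mI^s)] \leq \delta + 2\delta/w + w$. Choosing $w := \sqrt{\delta}$ and using $\delta \leq 1$ balances the three terms and gives a bound of at most $4\sqrt{\delta}$, so some realization of $s$ achieves the claimed bound and sending the $o(1)$ slack to zero finishes the proof. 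The main obstacle is this boundary-crossing control: without the random shift, adversarially chosen coupled pairs $(u,v)$ straddling fixed bin endpoints could force the third term to be $\Theta(1)$ independently of $\delta$; the shift trick spreads the boundary risk uniformly and produces the $O(1/w)$ scaling that makes the $w = \sqrt{\delta}$ balance possible.
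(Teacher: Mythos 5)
Your proof is correct and follows essentially the same approach as the paper's: the same post-processing witness $\kappa(v)=\E[y\mid v\in I_j]$ plus triangle inequality for the first bound, and for the second the same coupling-to-calibrated decomposition of $(v-y)$ into $(v-u)$ and $(u-y)$ pieces, the same vanishing-by-calibration argument, and the same random shift to bound boundary crossings by $|u-v|/w$ before balancing with $w\asymp\sqrt{\delta}$. The only cosmetic difference is that the paper routes through an intermediate quantity $\RintCE(\Gamma,\varepsilon)$ and a two-term triangle inequality while you write a single exact three-term decomposition whose middle term vanishes identically; the underlying argument is the same.
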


Combining this with Lemma \ref{lem:ul-dce-bounds}, we conclude that $\intCE$ is indeed a quadratic approximation to the true distance from calibration, which is the best achievable in the PA model by Corollary \ref{cor:quadratic}.
\begin{corollary}
\label{cor:intce1}
$\intCE$ is a $(1/2, 1)$-consistent calibration measure. We have 
\begin{align*}
    \dCE_\mD(f) \leq \intCE_\mD(f) \leq 4\sqrt{\dCE_\mD(f)}.
\end{align*}
\end{corollary}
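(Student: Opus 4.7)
The two inequalities need fundamentally different arguments, and I would handle them separately.

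\textbf{Upper bound $\udCE(\Gamma) \le \intCE(\Gamma)$.} Given any interval partition $\mI = \{I_1,\ldots,I_m\}$, I would construct an explicit postprocessing $\kappa\colon[0,1]\to[0,1]$ by setting $\kappa(v):= \mu_j:= \E_{(v',y)\sim\Gamma}[y\mid v'\in I_j]$ whenever $v\in I_j$. The resulting predictor $\kappa\circ f$ is perfectly calibrated: for any value $\mu$ in its image, the preimage is the union of bins $I_j$ with $\mu_j=\mu$, whose conditional expectation of $y$ is $\mu$ by construction. Writing $p_j:=\Pr(v\in I_j)$ and $\bar v_j:= \E[v\mid v\in I_j]$, the triangle inequality gives
\[
\E|v-\kappa(v)| = \sum_j p_j\,\E[|v-\mu_j|\mid v\in I_j] \le \sum_j p_j\,\E[|v-\bar v_j|\mid v\in I_j] + \sum_j p_j|\bar v_j-\mu_j|.
\]
Since $v$ and $\bar v_j$ both lie in $\overline{I_j}$, the first sum is at most $\sum_j p_j w(I_j) = w_\Gamma(\mI)$, and the second sum is exactly $\mathsf{binnedECE}(\Gamma,\mI)$. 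Taking an infimum over $\mI$ yields $\udCE(\Gamma)\le\intCE(\Gamma)$.

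\textbf{Lower bound $\intCE(\Gamma)\le 4\sqrt{\ldCE(\Gamma)}$.} Let $\delta:=\ldCE(\Gamma)$, and fix (up to $o(1)$) an optimal coupling $\Pi\in\ext(\Gamma)$ with $\E_\Pi|u-v|\le \delta$, where $u$ is a perfectly calibrated predictor. Set $w:=\sqrt{\delta}$, and for each $t\in[0,w)$ let $\mI_t$ be the uniform-width partition of $[0,1]$ with boundaries $\{kw+t\}_{k\in\mathbb{Z}}\cap[0,1]$. The key observation is that perfect calibration of $u$ forces $\E[(u-y)\one(u\in I_j)] = \E[\E[u-y\mid u]\,\one(u\in I_j)]=0$ for every interval. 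Decomposing $v-y = (v-u)+(u-y)$ and using this,
\[
\mathsf{binnedECE}(\Gamma,\mI_t) \le \E|v-u| + \sum_j \bigl|\E[(u-y)(\one(v\in I_j)-\one(u\in I_j))]\bigr| \le \delta + 2\Pr_{(u,v)}(\text{bin}_{\mI_t}(u)\neq \text{bin}_{\mI_t}(v)),
\]
where the factor of $2$ is because exactly two indicators disagree when $u,v$ fall in distinct bins.

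\textbf{Random-shift averaging.} For any fixed pair $(u,v)$, the probability over $t\sim\mathrm{Unif}[0,w)$ that $u,v$ land in different bins of $\mI_t$ is $\min(|u-v|/w,1)\le|u-v|/w$. Fubini then gives $\E_t[\Pr_{(u,v)}(\text{different bins})]\le \delta/w=\sqrt\delta$, so some shift $t^\star$ achieves the average, producing a partition with $\mathsf{binnedECE}(\Gamma,\mI_{t^\star})\le\delta+2\sqrt\delta$ and $w_\Gamma(\mI_{t^\star})\le w=\sqrt\delta$. Summing gives $\intCE(\Gamma)\le 3\sqrt\delta+\delta\le 4\sqrt\delta$, since $\delta\le 1$.

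\textbf{Main obstacle.} The conceptually subtle step is the lower bound: because we work with $\ldCE$ rather than $\dCE$, the coupling $(u,v,y)$ need not arise from any predictor pair on a shared domain $\cX$, so all estimates must be carried out purely on $[0,1]\times[0,1]\times\{0,1\}$. The quantitative heart is the random-shift trick, without which a fixed uniform partition could align adversarially with the coupling and make the bin-mismatch term as large as a constant, rather than $O(\sqrt{\delta})$.
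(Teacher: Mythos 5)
Your proof is correct and follows essentially the same route the paper takes to establish the underlying Theorem~\ref{thm:intce}, from which this corollary follows immediately via Lemma~\ref{lem:ul-dce-bounds}: the same postprocessing construction $\kappa(v)=\E[y\mid v\in I_j]$ for the upper-distance direction, and the same decomposition $v-y=(v-u)+(u-y)$ combined with a random-shift averaging over uniform-width bins for the lower-distance direction. The only cosmetic difference is that the paper factors the random-shift argument through a named intermediate quantity $\RintCE(\Gamma,\varepsilon)$ (which it later reuses for the surrogate estimator in Section~\ref{sec:surrogate}), whereas you inline it directly with the fixed choice $w=\sqrt{\delta}$.
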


Another corollary is the following bounds for distance measures, which shows that the gaps presented in Lemma \ref{lem:pa-gap} are the largest possible.
\begin{corollary}
\label{cor:intce2}
 We have 
\begin{align*}
    \ldCE_\mD(f) & \leq \dCE_\mD(f)  \leq 4\sqrt{\ldCE_\mD(f)},\\
    \fr{16}\udCE_\mD(f)^2 & \leq  \dCE_\mD(f) \leq \udCE_\mD(f).
\end{align*}
\end{corollary}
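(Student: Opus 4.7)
The plan is to derive both displayed inequality chains by combining \Cref{thm:intce} with \Cref{lem:ul-dce-bounds}; no new machinery is needed, only careful bookkeeping of the two fundamental bounds. Specifically, I will use the sandwich
\[
\ldCE_\mD(f) \leq \dCE_\mD(f) \leq \udCE_\mD(f) \leq \intCE_\mD(f) \leq 4\sqrt{\ldCE_\mD(f)}
\]
which is obtained by concatenating Lemma \ref{lem:ul-dce-bounds} (left two inequalities) with Theorem \ref{thm:intce} (right two inequalities), applied to $\Gamma = \mD_f$.

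For the first line of the corollary, the inequality $\ldCE_\mD(f)\le \dCE_\mD(f)$ is immediate from \Cref{lem:ul-dce-bounds}. For the upper bound $\dCE_\mD(f)\le 4\sqrt{\ldCE_\mD(f)}$, I would chain $\dCE_\mD(f)\le \udCE_\mD(f)$ (\Cref{lem:ul-dce-bounds}) with $\udCE_\mD(f)\le \intCE_\mD(f)\le 4\sqrt{\ldCE_\mD(f)}$ (\Cref{thm:intce}).

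For the second line, the inequality $\dCE_\mD(f)\le \udCE_\mD(f)$ is again immediate from \Cref{lem:ul-dce-bounds}. For the lower bound $\tfrac{1}{16}\udCE_\mD(f)^2 \le \dCE_\mD(f)$, I would start from \Cref{thm:intce} to get $\udCE_\mD(f)\le 4\sqrt{\ldCE_\mD(f)}$, then use $\ldCE_\mD(f)\le \dCE_\mD(f)$ from \Cref{lem:ul-dce-bounds} to conclude $\udCE_\mD(f)\le 4\sqrt{\dCE_\mD(f)}$; squaring and rearranging yields the claim.

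There is essentially no hard step here: the corollary is purely a transitivity exercise built on top of the substantive work already done in \Cref{thm:intce} and \Cref{lem:ul-dce-bounds}. The only thing to be mildly careful about is that the statements of \Cref{thm:intce} and \Cref{lem:ul-dce-bounds} are phrased in slightly different notations (one over distributions $\Gamma$ on $[0,1]\times\{0,1\}$, the other for pairs $(\mD,f)$), so I would begin by explicitly instantiating $\Gamma := \mD_f$ and invoking the definitions $\ldCE_\mD(f) = \ldCE(\mD_f)$, $\udCE_\mD(f) = \udCE(\mD_f)$, and $\intCE_\mD(f) = \intCE(\mD_f)$ before chaining the inequalities.
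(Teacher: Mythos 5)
Your proof is correct and matches the paper's (implicit) argument exactly: the paper presents Corollary~\ref{cor:intce2} as a direct consequence of Theorem~\ref{thm:intce} combined with Lemma~\ref{lem:ul-dce-bounds}, and your chaining of the five quantities $\ldCE \le \dCE \le \udCE \le \intCE \le 4\sqrt{\ldCE}$ followed by the squaring/rearranging step for the second line is precisely the intended derivation. No gaps; the note about unifying the $\Gamma$ vs.\ $(\mD,f)$ notation via $\Gamma := \mD_f$ is a sensible bookkeeping remark but not a substantive difference.
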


We first show the lower bound on $\intCE$, which is easier to show. 
\begin{proof}[Proof of Theorem \ref{thm:intce} (Part 1)]
 It suffices to prove the following statement: for any interval partition $\mI= \{I_j\}_{j \in [m]}$, $\udCE(\Gamma) \leq \mathsf{binnedECE}(\Gamma, \mI)$.
For $v \in I_j$, define $\kappa(v) = \E[y|v \in I_j]$. The distribution of $(\kappa(v),y)$ is perfectly calibrated. Hence
\begin{align*}
    \udCE(\Gamma) &\leq \E[|v -\kappa(v)|]\\
    & \leq \sum_{j \in [m]}\E[\one(v \in I_j)|v - \E[y|v \in I_j]|]\\
    & \leq \sum_{j \in [m]}\E[\one(v \in I_j)(|v - \E[v|v \in I_j]| + |[\E[y|v \in I_j] - \E[v|v \in I_j]|)]\\
    & \leq \sum_{j \in [m]}\E[\one(v \in I_j) w(I_j)] + \sum_{j \in [m]}\E[\one(v \in I_j)|[\E[y|v \in I_j] - \E[v|v \in I_j]|]
\end{align*}
where we use the fact that conditioned on $v \in I_j$, $v$ differs from its expectation by at most $w(I_j)$. The first sum is exactly the average width $w_\Gamma(\mI)$. For the second, by the definition of conditional expectations
\begin{align*}
    \E[\one(v \in I_j)\E[y|v \in I_j]] &= \E[\one(v \in I_j)y]\\
    \E[\one(v \in I_j)\E[v|v \in I_j]] &= \E[\one(v \in I_j)v]
\end{align*}
Hence
\begin{align*}
    \E[\one(v \in I_j)|[\E[y|v \in I_j] - \E[v|v \in I_j]|] &= |\E[\one(v \in I_j)\E[y|v \in I_j] - \one(v \in I_j)\E[v|v \in I_j]]|\\
    &= |\E[\one(v \in I_j)(y -v)]|
\end{align*}
We conclude that
\begin{align*}
    \udCE(\Gamma) \leq w_\Gamma(\mI) + \sum_{j \in [m]} |\E[\one(v \in I_j)(y -v)]| = w_\Gamma(\mI) + \mathsf{binnedECE}(\Gamma, \mI).
\end{align*}
The claimed lower bound follows by minimizing over all interval partitions.
\end{proof}

To prove the upper bound on $\intCE$ in \cref{thm:intce}, we consider a variant of the interval calibration error where we focus on intervals with a fixed width $\varepsilon$ and take expectation after shifting the intervals randomly:

\begin{definition}[Random Interval Calibration Error]
For a distribution $\Gamma$ over $[0,1]\times\{0,1\}$ and an interval width parameter $\varepsilon > 0$, we define
\[
\RintCE(\Gamma,\varepsilon) = \E_{r}\Big[\sum_{j\in \Z}|\E_{(v,y)\sim\Gamma}[(y - v)\one(v\in I_{r,j}^\varepsilon)]|\Big],
\]
where the outer expectation is over $r$ drawn uniformly from $[0,\varepsilon)$ and $I_{r,j}^\varepsilon$ is the interval $[r + j\varepsilon, r + (j+1)\varepsilon)$. 
\end{definition}
Note that although the summation is over $j\in \Z$, there are only finitely many $j$'s that can contribute to the sum (which are the $j$'s that satisfy $I_{r,j}^\varepsilon\cap [0,1]\ne \emptyset$).
The following claim follows by averaging argument from the definitions of $\intCE$ and $\RintCE$:
\begin{claim}
\label{claim:intCE-RintCE}
$\intCE(\Gamma) \le \RintCE(\Gamma,\varepsilon) + \varepsilon$.
\end{claim}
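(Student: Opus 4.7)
The plan is to produce, for each fixed shift $r \in [0, \varepsilon)$, an interval partition of $[0,1]$ whose maximum bin width is at most $\varepsilon$, plug that partition into the definition of $\intCE$, and then average over $r$. The role of the additive $\varepsilon$ term on the right-hand side will be exactly to absorb the bin-width penalty $w_\Gamma(\mI)$ that appears inside the infimum defining $\intCE$, which is absent from $\RintCE$.

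Concretely, first I would fix $r \in [0, \varepsilon)$ and define $\mI_r := \{I_{r,j}^\varepsilon \cap [0,1] : j \in \Z,\, I_{r,j}^\varepsilon \cap [0,1] \neq \emptyset\}$. This is a finite interval partition of $[0,1]$, each of whose intervals has width at most $\varepsilon$, so that $w_\Gamma(\mI_r) \leq \varepsilon$ regardless of $r$. Next I would observe that for intervals $I_{r,j}^\varepsilon$ whose intersection with $[0,1]$ is empty, the corresponding indicator $\one(v \in I_{r,j}^\varepsilon)$ vanishes for every $v$ in the support of $\Gamma$, so
\[
\mathsf{binnedECE}(\Gamma, \mI_r) = \sum_{j \in \Z} \big|\E_{(v,y)\sim\Gamma}[(v - y)\one(v \in I_{r,j}^\varepsilon)]\big|,
\]
which is the same expression as inside the outer expectation defining $\RintCE(\Gamma,\varepsilon)$ (the swap between $v-y$ and $y-v$ is harmless inside the absolute value).

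Then by the definition of $\intCE(\Gamma)$ as an infimum over interval partitions,
\[
\intCE(\Gamma) \leq \mathsf{binnedECE}(\Gamma, \mI_r) + w_\Gamma(\mI_r) \leq \mathsf{binnedECE}(\Gamma, \mI_r) + \varepsilon.
\]
Since the left-hand side does not depend on $r$, taking expectation over $r$ drawn uniformly from $[0,\varepsilon)$ preserves the left-hand side and turns the first term on the right into $\RintCE(\Gamma,\varepsilon)$, yielding $\intCE(\Gamma) \leq \RintCE(\Gamma,\varepsilon) + \varepsilon$, as desired.

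There is no real obstacle here; this is a one-line averaging argument once the boundary-clipping is handled. The only minor care point is checking that clipping the bi-infinite partition $\{I_{r,j}^\varepsilon\}_{j\in\Z}$ down to a partition of $[0,1]$ does not change the value of $\mathsf{binnedECE}$ (it does not, since vanishing indicators contribute nothing) and only decreases $w_\Gamma$ (which is enough, since all we need is the upper bound $w_\Gamma(\mI_r) \leq \varepsilon$).
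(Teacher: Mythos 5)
Your proof is correct and is precisely the averaging argument the paper alludes to (the paper does not spell out the details, simply stating that the claim ``follows by averaging argument from the definitions''). You have supplied the two small bookkeeping observations that make the averaging go through: clipping the bi-infinite partition $\{I_{r,j}^\varepsilon\}_{j\in\Z}$ to $[0,1]$ leaves $\mathsf{binnedECE}$ unchanged, and each clipped interval has width at most $\varepsilon$ so $w_\Gamma(\mI_r)\le\varepsilon$.
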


The key to proving the upper bound on $\intCE(\Gamma)$ in \Cref{thm:intce} is the following lemma.
\begin{lemma}
\label{lm:RintCE-ldCE}
$\RintCE(\Gamma,\varepsilon) \le (1 + \frac 2\varepsilon)\,\ldCE(\Gamma)$.
\end{lemma}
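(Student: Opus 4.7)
I would start from an arbitrary coupling $\Pi \in \ext(\Gamma)$ of $(u, v, y)$ with $(v, y) \sim \Gamma$ and $(u, y)$ perfectly calibrated, and bound $\RintCE(\Gamma, \varepsilon)$ in terms of $\E_\Pi|u - v|$. Taking the infimum over $\Pi$ at the end yields the factor $\ldCE(\Gamma)$ on the right side.

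The key identity is a double decomposition. For any interval $I$, write
\[
y - v = (y - u) + (u - v), \qquad \one(v \in I) = \one(u \in I) + \Delta_I(u, v),
\]
where $\Delta_I(u, v) := \one(v \in I) - \one(u \in I)$. Multiplying and expanding,
\[
\E[(y - v)\one(v \in I)] = \E[(y - u)\one(u \in I)] + \E[(y - u)\Delta_I] + \E[(u - v)\one(v \in I)].
\]
Because $\one(u \in I)$ is a function of $u$ alone and $(u, y)$ is perfectly calibrated, the first term vanishes. Thus
\[
|\E[(y - v)\one(v \in I)]| \le \E[|\Delta_I|] + \E[|u - v|\one(v \in I)],
\]
using $|y - u| \le 1$ in the middle term.

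Now sum over $I = I^\varepsilon_{r, j}$ for $j \in \Z$ and take expectation over $r \sim \mathrm{Unif}[0, \varepsilon)$. Since the intervals $\{I^\varepsilon_{r,j}\}_j$ partition $\R$, we have $\sum_j \one(v \in I^\varepsilon_{r,j}) = 1$, so the $\E[|u - v|\one(v \in I)]$ terms sum to $\E|u - v|$ (independent of $r$). For the $|\Delta_I|$ terms, the geometric observation is that $\sum_j |\Delta_{I^\varepsilon_{r,j}}(u, v)| = 2 \cdot \one(u, v \text{ in different bins under shift } r)$, because if $u$ and $v$ land in different bins then exactly two indices $j$ give $|\Delta| = 1$ and the rest give $0$. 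Conditioned on $(u, v)$, the probability (over $r$) that some bin boundary $r + k\varepsilon$ separates $u$ from $v$ is exactly $\min(|u - v|/\varepsilon, 1) \le |u - v|/\varepsilon$. Combining,
\[
\RintCE(\Gamma, \varepsilon) \le \E\big[|u - v|\big] + \E_r\!\sum_j \E[|\Delta_{I^\varepsilon_{r,j}}|] \le \E|u - v| + \tfrac{2}{\varepsilon}\E|u - v| = \bigl(1 + \tfrac{2}{\varepsilon}\bigr)\E_\Pi|u - v|.
\]
Taking the infimum over $\Pi \in \ext(\Gamma)$ gives the claimed bound.

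\textbf{Main obstacle.} Nothing deep; the only slightly delicate step is the geometric computation that the expected number of bin boundaries (over the random shift $r$) separating two fixed points $u, v \in [0, 1]$ equals $\min(|u - v|/\varepsilon, 1)$, and correctly accounting for the factor of $2$ from the two bins involved when $u$ and $v$ differ. The rest is bookkeeping, and crucially uses that the cross term $\E[(y - u)\one(u \in I)]$ vanishes because $\one(u \in I)$ depends only on $u$ and $(u, y)$ is perfectly calibrated.
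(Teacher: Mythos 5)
Your proof is correct and follows essentially the same argument as the paper: decompose $(y-v)\one(v\in I)$ so that the term $\E[(y-u)\one(u\in I)]$ vanishes by calibration of $(u,y)$, bound the remaining $\Delta_I$ term by $|y-u|\le 1$, and then control $\E_r\sum_j\E|\Delta_{I^\varepsilon_{r,j}}|$ via the geometric fact that the random shift separates $u$ from $v$ with probability at most $|u-v|/\varepsilon$, contributing a factor of $2$. The only presentational difference is that you carry out the full product expansion in one step, whereas the paper applies the triangle inequality twice; the content is identical.
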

We first complete the proof of \Cref{thm:intce} using \Cref{lm:RintCE-ldCE}. 
\begin{proof}[Proof of Theorem \ref{thm:intce} (Part 2)]
We prove the upper bound on $\intCE(\Gamma)$ in \Cref{thm:intce}. Combining \Cref{claim:intCE-RintCE} and \Cref{lm:RintCE-ldCE}, for every $\varepsilon > 0$, we have
\[
\intCE(\Gamma) \le \left(1 + \frac 2\varepsilon\right)\, \ldCE(\Gamma) + \varepsilon.
\]
Choosing $\varepsilon = \sqrt{2\, \ldCE(\Gamma)}$ to minimize the right-hand side completes the proof.
\end{proof}
\begin{proof}[Proof of \Cref{lm:RintCE-ldCE}]
Let $\Pi$ be a distribution over $[0,1]\times [0,1]\times\{0,1\}$ in $\ext(\Gamma)$.
Our goal is to prove that 
\[ \RintCE(\Gamma,\varepsilon) \le (1 + 2/\varepsilon)\E_{(u,v,y)\sim \Pi}|u - v|.\] 
By the definition of $\Pi\in \ext(\Gamma)$, for $(u,v,y)\sim \Pi$, the distribution of $(v,y)$ is $\Gamma$, and the distribution of $(u,y)$ is perfectly calibrated. The latter implies that
\begin{equation}
\label{eq:RintCE-ldCE-1}
\E[(u - y)w(u)] = 0 \quad \text{for any function }w:[0,1]\to [-1,1].
\end{equation}

For every $r\in [0,\varepsilon)$, we have the following inequality, where all expectations and probabilities are over $(u,v,y)\sim \Pi$:
\begin{equation}
\label{eq:RintCE-ldCE-2}
\sum_{j\in \Z}|\E[(v - y)\one(v\in I_{r,j}^\varepsilon)]| \le \sum_{j\in \Z}|\E[(v - u)\one(v\in I_{r,j}^\varepsilon)]| + \sum_{j\in \Z}|\E[(u - y)\one(v\in I_{r,j}^\varepsilon)]|.
\end{equation}
The following inequality bounds the first term in the RHS of \eqref{eq:RintCE-ldCE-2}:
\begin{equation}
\label{eq:RintCE-ldCE-3}
\sum_{j\in \Z}|\E[(v - u)\one(v\in I_{r,j}^\varepsilon)]| \le \sum_{j\in \Z}\E[|v - u|\one(v\in I_{r,j}^\varepsilon)] = \E|v - u|.
\end{equation}
The following inequality bounds the second term in the RHS of \eqref{eq:RintCE-ldCE-2}:
\begin{align}
& \sum_{j\in \Z}|\E[(u - y)\one(v\in I_{r,j}^\varepsilon)]| \notag \\
\le {} & \sum_{j\in \Z}|\E[(u - y)\one(u\in I_{r,j}^\varepsilon)]| + \sum_{j\in \Z}|\E[(u - y)(\one(v\in I_{r,j}^\varepsilon) - \one(u\in I_{r,j}^\varepsilon))]|\notag \\
= {} & \sum_{j\in \Z}|\E[(u - y)(\one(v\in I_{r,j}^\varepsilon) - \one(u\in I_{r,j}^\varepsilon))]| \tag{by \eqref{eq:RintCE-ldCE-1}}\\
\le {} & \sum_{j\in \Z}\E|\one(v\in I_{r,j}^\varepsilon) - \one(u\in I_{r,j}^\varepsilon)|\tag{by $|u - y|\le 1$}\\
= {} & 2\Pr[j_r(u)\ne j_r(v)], \label{eq:RintCE-ldCE-4}
\end{align}
where $j_r(u)\in \Z$ is the $j\in \Z$ such that $u\in I_{r,j}^\varepsilon$ and $j_r(v)$ is defined similarly.
Plugging \eqref{eq:RintCE-ldCE-3} and \eqref{eq:RintCE-ldCE-4} into \eqref{eq:RintCE-ldCE-2} and
taking expectation over $r$ drawn uniformly from $[0,\varepsilon)$, we have
\begin{align*}
\RintCE(\Gamma,\varepsilon) & \le \E|v - u| + 2\E_r\Pr_{(u,v,y)\sim\Pi}[j_r(u)\ne j_r(v)]\\
& = \E|v - u| + 2\E_{(u,v,y)\sim\Pi}\Pr_{r}[j_r(u)\ne j_r(v)].
\end{align*}
It is easy to check that for fixed $u,v\in [0,1]$, we have 
\[ \Pr_r[j_r(u) \ne j_r(v)] \le \frac 1\varepsilon |u - v|.\] 
Plugging this into the inequality above, we get
\[
\RintCE(\Gamma,\varepsilon) \le \E|v - u| + \frac 2\varepsilon\E_{(u,v,y)\sim\Pi}|u - v|.
\qedhere
\]
\end{proof}

\subsection{Quadratic Gap between Interval Calibration and Upper Calibration Distance}
For a distribution $\mD$ and a predictor $f$,
our results in previous subsections imply the following chain of inequalities (omitting $\mD$ in the subscript for brevity):
\begin{equation}
\label{eq:int-dce-chain}
\ldCE(f) \le \udCE(f) \le \intCE(f) \le 4\sqrt{\ldCE(f)}.
\end{equation}
These inequalities completely characterize the relationship between $\ldCE(f)$ and $\udCE(f)$ and also the relationship between $\ldCE(f)$ and $\intCE(f)$ for the following reason. By \Cref{lem:pa-gap}, we know that $\udCE(f)$ can be as large as $\Omega(\sqrt{\ldCE(f)})$, which implies that $\intCE(f)$ can be as large as $\Omega(\sqrt{\ldCE(f)})$. Also, it is easy to show that $\intCE(f)$ can be as small as $O(\ldCE(f))$ by choosing $f$ to be a constant function, which implies that $\udCE(f)$ can be as small as $O(\ldCE(f))$.

The remaining question is whether \eqref{eq:int-dce-chain} completely characterizes the relationship between $\udCE(f)$ and $\intCE(f)$. We show that the answer is yes by the following lemma (\Cref{lem:tight-gap}) which gives examples where $\intCE(f) = \Omega((\udCE(f))^{1/2})$.
We also show that $\intCE(f)$ can be discontinuous as a function of $f$ in \Cref{lm:int-discontinuous}.

\begin{lemma}
\label{lem:tight-gap}
For any $\alpha\in (0,1/4)$, there exist distribution $\cD$ and predictor $f$ such that
\[
\udCE_\cD(f) \le 5\alpha^2 \quad \text{and} \quad \intCE_\cD(f) \ge \alpha.
\]
\end{lemma}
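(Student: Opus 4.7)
The plan is to exhibit a simple two-valued predictor with highly asymmetric prediction probabilities, arranged so that a single constant post-processing already achieves perfect calibration at $\ell_1$ cost $\alpha^2$, while every interval partition pays linearly, either in bin width or in binned error. Concretely, I would take $v_1 := 1/2 - \alpha/2$, $v_2 := 1/2 + \alpha/2$, and the prediction--label distribution $\Gamma$ on $[0,1]\times\bits$ with marginals $p_1 := \Pr[v=v_1] = \alpha$, $p_2 := \Pr[v=v_2] = 1-\alpha$ and conditionals $\mu_1 := \E[y\mid v=v_1] = 1$, $\mu_2 := \E[y\mid v=v_2] = 1/2$; any $(\mD,f)\in \lift(\Gamma)$ (e.g.\ on $\mX = \{x_1,x_2\}$) works, since both $\udCE$ and $\intCE$ depend only on $\Gamma = \mD_f$.

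For the upper bound I would use $\kappa \equiv v_2$. The boundary choice $\mu_1 = 1$ was set precisely so that $\E_\Gamma y = \alpha + (1-\alpha)/2 = v_2$, making $\kappa \circ f$ trivially calibrated; then $\udCE_\mD(f) \le \E_\Gamma|v - v_2| = p_1(v_2 - v_1) = \alpha^2 \le 5\alpha^2$.

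For the lower bound on $\intCE$, I would fix an arbitrary interval partition $\mI$ and split on whether $v_1, v_2$ share a bin. If they do, that common bin has width at least $v_2 - v_1 = \alpha$ and carries all of the probability mass, so $w_\Gamma(\mI) \ge \alpha$. If they are separated, the only surviving contributions to $\mathsf{binnedECE}(\Gamma,\mI)$ come from the two singleton bins containing $v_1$ and $v_2$, yielding $|p_1(v_1 - \mu_1)| + |p_2(v_2 - \mu_2)| = \alpha(1+\alpha)/2 + (1-\alpha)\alpha/2 = \alpha$. In either case $\mathsf{binnedECE}(\Gamma,\mI) + w_\Gamma(\mI) \ge \alpha$, hence $\intCE_\mD(f) \ge \alpha$ after taking the infimum. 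The only design choice with any real content is pushing $\mu_1$ to the boundary so that the constant $\kappa \equiv v_2$ is calibrated at quadratic cost; after that the case analysis is a short direct calculation and I do not foresee a genuine obstacle.
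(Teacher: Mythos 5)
Your proposal is correct, and it takes a genuinely different (and simpler) construction than the paper's. The paper recycles the four-point distribution $\mD^1$ from Lemma~\ref{lem:pa-gap} and perturbs the predictor by $\beta=\alpha/2$ so that it becomes injective; this perturbation is what makes the calibrated predictor $g$ (which depends on $x_2$ rather than $x_1$) expressible as a post-processing $\kappa\circ f$, which is needed to bound $\udCE$ rather than just $\dCE$. You instead build a two-point prediction--label distribution directly, engineering $\E_\Gamma[y]=v_2$ so that the \emph{constant} post-processing $\kappa\equiv v_2$ is calibrated, and placing only mass $\alpha$ at distance $\alpha$ to get $\udCE\le\alpha^2$ (beating the paper's constant $5$). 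Your lower-bound case analysis mirrors the paper's: if $v_1,v_2$ share a bin then $w_\Gamma(\mI)\ge v_2-v_1=\alpha$ since all the mass sits in that bin, and if they are separated the two nonzero binnedECE terms are $\alpha(1+\alpha)/2$ and $(1-\alpha)\alpha/2$, summing to exactly $\alpha$. All computations check out ($\mu_1=1$ and $\mu_2=1/2$ are valid conditional means, and both $\udCE$ and $\intCE$ depend only on $\Gamma=\mD_f$, so any lift suffices). The main thing your route buys is self-containedness and a tighter constant; the paper's buys continuity with its earlier gap example, which it is explicitly building on.
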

\begin{proof}
We use an example similar to the one in the proof of \Cref{lem:pa-gap}. Specifically, we choose $\mX = \{00,01,10,11\}$, and choose the distribution $\mD$ over $\cX\times\{0,1\}$ such that the marginal distribution $\mD_\mX$ over $\mX$ is given by the following probability mass function:
\[
\mD_\mX(x) = \begin{cases}
\alpha, & \text{if }x\in \{00,11\};\\
1/2 - \alpha, & \text{if }x\in \{01,10\}.
\end{cases}
\]
We choose the conditional distribution of $y$ given $x$ such that $\E_{(x,y)\sim\mD}[y|x] = (x_1 + x_2)/2$, where $x_1$ and $x_2$ are the two coordinates of $x$. Note that this distribution $\mD$ is the distribution $\mD^1$ in the proof of \Cref{lem:pa-gap}.

Defining $\beta = \alpha/2$, we choose the predictor $f$ slightly differently from the proof of \Cref{lem:pa-gap} as follows:
\begin{align*}
f(00) & = 1/2 + \alpha + \beta\\
f(01) & = 1/2 + \alpha\\
f(10) & = 1/2 - \alpha\\
f(11) & = 1/2 - \alpha - \beta.
\end{align*}
Note that the function $f$ in the proof of \Cref{lem:pa-gap} corresponds to choosing $\beta = 0$ instead.
We define the perfectly calibrated predictor $g\in\calD$ as in \Cref{lem:pa-gap}. That is,
\[
g(x) = \begin{cases}
1/2 - \alpha, & \text{if }x_2 = 0;\\
1/2 + \alpha, & \text{if }x_2 = 1.
\end{cases}
\]

It is easy to check that $\ell_1(f,g) = 2\alpha(2\alpha + \beta) \le 5\alpha^2$, which implies that $\udCE_\cD(f)\le  5\alpha^2$.

Now we show a lower bound for $\intCE_\cD(f)$. Consider an interval partition $\mI$ of $[0,1]$. If $1/2 - \alpha$ and $1/2 + \alpha$ are in different intervals, we have
\[
\intCE_\cD(f,\mI) \ge |\E[(y - f(x))\one(f(x)\le 1/2)]| + |\E[(y - f(x))\one(f(x) > 1/2)]| \ge 2\alpha.
\]
If $1/2 - \alpha$ and $1/2 + \alpha$ are in the same interval, we have
\[
w_{\mD_f}(\mI) \ge \Pr[x\in \{10,01\}]\cdot 2\alpha \ge \alpha.
\]
This implies that $\intCE_\cD(f) \ge \alpha$.
\end{proof}

\begin{lemma}
\label{lm:int-discontinuous}
There exist a distribution $\mD$ over $\cX\times\{0,1\}$ and a sequence of predictors $f_n:\cX\to [0,1]$ converging uniformly to a predictor $f:\cX\to[0,1]$ as $n \to \infty$ such that $\lim_{n\to \infty}\intCE_\mD(f_n)\ne \intCE_\mD(f)$.
\end{lemma}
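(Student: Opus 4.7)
We seek a distribution $\mD$ over $\cX \times \{0,1\}$ and a sequence $f_n:\cX\to[0,1]$ converging uniformly to some $f:\cX\to[0,1]$ such that $\lim_{n\to\infty}\intCE_\mD(f_n)\ne \intCE_\mD(f)$. The guiding intuition is that $\intCE$ is an infimum of the family $g_\mI(h) := \mathsf{binnedECE}(\mD_h,\mI) + w_{\mD_h}(\mI)$, whose members are individually discontinuous in $h$ because the indicators $\mathbf 1(h(x)\in I_j)$ jump as values cross bin boundaries.

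A useful preliminary observation is that $\intCE_\mD$ is upper semicontinuous under uniform convergence: given a partition $\mI^*$ with $g_{\mI^*}(f) \le \intCE_\mD(f)+\varepsilon$, we may shift its boundaries by an arbitrarily small amount to avoid the countable support of $\mD_f$. For the shifted partition $\tilde\mI$, the indicators $\mathbf 1(f_n(x)\in \tilde I_j)$ converge pointwise almost surely to $\mathbf 1(f(x)\in \tilde I_j)$, so dominated convergence gives $g_{\tilde\mI}(f_n)\to g_{\tilde\mI}(f)$, hence $\limsup_n \intCE_\mD(f_n)\le \intCE_\mD(f)$. The sought-after discontinuity must therefore take the form $\lim_n \intCE_\mD(f_n) < \intCE_\mD(f)$, i.e., a ``jump down.''

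To produce such a drop, I would leverage the $\Omega(\sqrt{\udCE})$ gap established in Lemma~\ref{lem:tight-gap}: for the four-point distribution $\mD$ and predictor $f$ constructed there, $\intCE_\mD(f)\ge \alpha$ while $\udCE_\mD(f)\le 5\alpha^2$. Since $\udCE$ depends only on $\mD_f$ and is continuous under the uniform-convergence topology on predictors, we have $\udCE_\mD(f_n)\to \udCE_\mD(f) = O(\alpha^2)$. Because $\intCE$ lives in the interval $[\udCE, 4\sqrt{\udCE}]$, one has structural room to design $f_n \to f$ whose $\intCE$ stays near the \emph{lower} end of this interval, even though $f$'s $\intCE$ sits near the upper end. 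Concretely, I would perturb $f$'s four values by $O(1/n)$ so as to split a coincident or nearly-coincident pair across an interval boundary whose presence at $f$ is blocked by the rigidity of the ordering of $f$'s values, but which at $f_n$ becomes compatible with a small-width bin structure that cancels the $(v-y)$ contributions.

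The verification proceeds in three steps: (i) instantiate $\mD$ and $f$ exactly as in Lemma~\ref{lem:tight-gap}, which yields $\intCE_\mD(f)\ge \alpha$ for free; (ii) define $f_n$ by an explicit $O(1/n)$ perturbation and verify $\|f_n-f\|_\infty = O(1/n)$; (iii) exhibit an interval partition $\mI_n$ of $[0,1]$ for which an explicit calculation gives $g_{\mI_n}(f_n) = O(\alpha^2 + 1/n)$, so $\limsup_n \intCE_\mD(f_n) \le O(\alpha^2) < \alpha \le \intCE_\mD(f)$ for any sufficiently small fixed $\alpha$. The main obstacle is step (iii): uniform convergence constrains perturbations to be small, and generically small perturbations yield only $O(1/n)$ improvements that wash out in the limit. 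The partition $\mI_n$ must therefore exploit a qualitative reconfiguration—placing newly separated values on opposite sides of a boundary that was degenerate at $f$—so that the resulting in-bin cancellation is of magnitude $\Theta(\alpha)$ rather than $O(1/n)$. This requires careful alignment of the perturbation direction with the label and mass structure in the example from Lemma~\ref{lem:tight-gap}.
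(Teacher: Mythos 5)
Your high-level diagnosis is correct and matches the mechanism the paper actually uses: the discontinuity is a ``jump down,'' and it is produced by engineering a \emph{coincident pair} of prediction values at the limit predictor $f$ that gets split apart by $f_n$, enabling in-bin cancellations that are structurally unavailable at $f$. You also correctly identify this as the only viable direction (your upper-semicontinuity sketch is essentially sound for a discrete domain).

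However, step (i) of your plan --- instantiating $\mD$ and $f$ exactly as in Lemma~\ref{lem:tight-gap} --- cannot work, and the obstacle is stronger than the one you flagged. In that construction the four values $1/2\pm\alpha$, $1/2\pm(\alpha+\beta)$ are pairwise distinct for $\beta>0$, so there is no coincident pair to split. More to the point, Lemma~\ref{lem:tight-gap}'s own lower-bound argument ($\intCE_\mD(f)\ge\alpha$) only uses that the two values $1/2-\alpha$ and $1/2+\alpha$ both carry $\Theta(1)$ mass: either they are separated, costing $\ge 2\alpha$ in $\mathsf{binnedECE}$, or they share a bin, costing $\ge\alpha$ in width. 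This argument is stable under $O(1/n)$ perturbations of $f$, so it pins $\intCE_\mD(f_n)\ge\alpha-o(1)$ for \emph{every} uniformly small perturbation $f_n$. Your hoped-for $\intCE_\mD(f_n)=O(\alpha^2+1/n)$ is therefore impossible; no choice of $\mI_n$ in step (iii) can achieve it. (Taking $\beta\to 0$ in that example also fails: one computes that the single-bin partition gives $\intCE(f_\beta)\le 2\alpha+2\beta$ for all $\beta\ge 0$, varying continuously.)

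The paper's proof resolves this by building a \emph{new} four-point example tailored to the cancellation. It takes $\cX=\{a,b,c,d\}$, uniform, with the limit predictor $f_1$ satisfying $f_1(b)=f_1(c)=1/2$ (a genuine coincident pair flanked by $f_1(a)=1/2-\beta$, $f_1(d)=1/2+\beta$), and chooses $\E[y|a],\dots,\E[y|d]$ so that after the perturbation $f_2(b)=1/2-\varepsilon$, $f_2(c)=1/2+\varepsilon$, the partition with bins $[1/2-\beta,1/2)$ and $[1/2,1/2+\beta)$ pairs $a$ with $b$ and $c$ with $d$, and the signed errors cancel exactly in each bin. This gives $\intCE_\mD(f_2)\le\beta$. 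At $f_1$, the coincident pair forces $b,c$ into the same bin, and a short case analysis gives $\intCE_\mD(f_1)\ge 2\beta$. So the missing piece in your proposal is precisely this bespoke label structure enabling the cancellation; the gap you flagged at step (iii) is real and requires abandoning the Lemma~\ref{lem:tight-gap} instance rather than perturbing it.
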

We defer the proof of \Cref{lm:int-discontinuous} to \Cref{sec:proof-pa}.

\subsection{Efficient Estimation of Surrogate Interval Calibration}
\label{sec:surrogate}

From the definition of interval calibration error ($\intCE(\Gamma)$), it is unclear how one can efficiently estimate the notion given examples $(v,y)$ drawn from $\Gamma$ partly because the definition involves a minimization over interval partitions. In this subsection, we give an efficient algorithm for estimating a surrogate version of interval calibration which we define below. We show in \Cref{thm:SintCE} that this surrogate notion enjoys the same quadratic relationship with $\udCE(\Gamma)$ and $\ldCE(\Gamma)$ as in \Cref{thm:intce}.

\begin{definition}[Surrogate Interval Calibration Error] For a distribution $\Gamma$ over $[0,1]\times\{0,1\}$, we define
\[
\SintCE(\Gamma) = \inf_{k\in \Z_{\ge 0}}\left(\RintCE(\Gamma,2^{-k}) + 2^{-k}\right).
\]
\end{definition}
\begin{theorem}
\label{thm:SintCE}
\[
\udCE(\Gamma) \le \intCE(\Gamma) \le \SintCE(\Gamma) \le 6\sqrt{\ldCE(\Gamma)}
\]
\end{theorem}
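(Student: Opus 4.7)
The plan is to handle the three inequalities separately, since the first two are essentially immediate consequences of results already in the excerpt, and only the third requires a small bit of work (a balancing of parameters).

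For the first inequality $\udCE(\Gamma) \le \intCE(\Gamma)$, I would simply invoke Part 1 of Theorem \ref{thm:intce}, which is already proven. For the second, $\intCE(\Gamma) \le \SintCE(\Gamma)$, I would plug $\varepsilon = 2^{-k}$ into Claim \ref{claim:intCE-RintCE} to get $\intCE(\Gamma) \le \RintCE(\Gamma, 2^{-k}) + 2^{-k}$ for every $k \in \Z_{\ge 0}$, and then take the infimum over $k$ on the right-hand side to recover exactly $\SintCE(\Gamma)$.

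For the substantive inequality $\SintCE(\Gamma) \le 6\sqrt{\ldCE(\Gamma)}$, I would combine Lemma \ref{lm:RintCE-ldCE} with the definition of $\SintCE$ to obtain, for every $k \in \Z_{\ge 0}$,
\[
\SintCE(\Gamma) \le \RintCE(\Gamma, 2^{-k}) + 2^{-k} \le (1 + 2^{k+1}) L + 2^{-k}, \qquad L := \ldCE(\Gamma).
\]
The key step is to balance the two terms by a careful integer choice of $k$. Assuming $L \le 1/2$, I would take $k^* := \lfloor \log_2(1/\sqrt{2L}) \rfloor$, which is a nonnegative integer satisfying $\sqrt{2L} \le 2^{-k^*} < 2\sqrt{2L}$. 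Substituting gives $\SintCE(\Gamma) \le L + \sqrt{2L} + 2\sqrt{2L}$, and using $L \le \sqrt{L}$ (valid since $L \in [0,1]$) this is bounded by $(1 + 3\sqrt{2})\sqrt{L} < 6\sqrt{L}$.

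The only loose end is the edge case $L > 1/2$, where the choice above would make $k^*$ negative. There I would just use $k = 0$ in the definition of $\SintCE$: a short direct argument shows $\RintCE(\Gamma,1) \le 1$, since for each shift $r \in [0,1)$ the width-$1$ intervals cover $[0,1]$ in at most two pieces whose contributions are bounded by their probability mass, summing to at most one. This gives $\SintCE(\Gamma) \le 2$, which is at most $6/\sqrt{2} \le 6\sqrt{L}$ in this regime. The only conceptual obstacle in the whole proof is the integrality of $k$, which forces the small case split; otherwise the argument is bookkeeping on top of Lemma \ref{lm:RintCE-ldCE} and Claim \ref{claim:intCE-RintCE}.
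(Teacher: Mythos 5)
Your proof is correct and follows essentially the same route as the paper's: invoke Theorem~\ref{thm:intce} for $\udCE \le \intCE$, Claim~\ref{claim:intCE-RintCE} for $\intCE \le \SintCE$, and combine Lemma~\ref{lm:RintCE-ldCE} with a balancing choice of $k$ for the last inequality. The explicit case split for $L > 1/2$ is harmless but not strictly needed; the paper instead asks for any $k\in\Z_{\ge 0}$ with $\sqrt{L/2}\le 2^{-k}\le\sqrt{2L}$, and since $L\le 1$ one has $\sqrt{L/2}<1$, so $k=0$ is automatically available when $L>1/2$, making the regime analysis implicit rather than explicit.
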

\begin{proof}
The inequality $\udCE(\Gamma) \le \intCE(\Gamma)$ is already shown in \Cref{thm:intce}. The inequality $\intCE(\Gamma) \le \SintCE(\Gamma)$ is an immediate consequence of \Cref{claim:intCE-RintCE}. It remains to show that $\SintCE(\Gamma) \le 6\sqrt{\ldCE(\Gamma)}$. For every $k\in \Z_{\ge 0}$, by \Cref{lm:RintCE-ldCE},
\[
\SintCE(\Gamma) \le \RintCE(\Gamma,2^{-k}) + 2^{-k} \le (1 + 2 \times 2^k)\, \ldCE(\Gamma) + 2^{-k}.
\]
Choosing $k$ such that $\sqrt{\frac 12\, \ldCE(\Gamma)} \le 2^{-k} \le \sqrt{2\, \ldCE(\Gamma)}$ completes the proof.
\end{proof}
Now we give an efficient estimator for $\SintCE(\Gamma)$ up to error $\varepsilon$. Let $k^*\in\Z_{\ge 0}$ be the integer satisfying $\varepsilon/4 < 2^{-k^*} \le \varepsilon/2$. We collect examples $(v_1,y_1),\ldots,(v_n,y_n)$ drawn i.i.d.\ from $\Gamma$. For $k = 0,\ldots,k^*$, we draw $r_1,\ldots,r_m$ independently and uniformly from $[0,2^{-k})$. We compute
\[
\wh\RintCE(\Gamma,2^{-k}):= \frac{1}{m}\sum_{s = 1}^m\sum_{j\in \Z}\left|\frac 1n\sum_{\ell = 1}^n(y_\ell - f_\ell)\one(f_\ell \in I_{r_s,j}^{2^{-k}})\right|
\]
and then output
\[
\wh\SintCE(\Gamma) := \min_{k = 0,\ldots,k^*}\left(\wh\RintCE(\Gamma,2^{-k}) + 2^{-k}\right).
\]
The following theorem ensures that $\wh\SintCE(\Gamma)$ is an accurate and efficient estimator for $\SintCE(\Gamma)$:
\begin{theorem}
\label{thm:estimate-intce}
There exists an absolute constant $C>0$ with the following property. For $\varepsilon,\delta\in (0,1/4)$, define $k^*\in \Z_{\ge 0}$ be the integer satisfying $\varepsilon/4 < 2^{-k^*} \le \varepsilon/2$. Assume that $n \ge C\varepsilon^{-3} + C\varepsilon^{-2}\log(1/\delta)$, $m \ge C\varepsilon^{-2}\log (k^*/\delta)$ and we compute $\wh\SintCE(\Gamma)$ as above for a distribution $\Gamma$ over $[0,1]\times\{0,1\}$. Then with probability at least $1-\delta$, 
\[
|\wh\SintCE(\Gamma) - \SintCE(\Gamma)| \le \varepsilon.
\]
\end{theorem}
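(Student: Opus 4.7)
The plan is to split the estimation error into three pieces: the truncation of the infimum from $k\ge 0$ to $0\le k\le k^*$, the approximation of $\E_r$ by averaging over $m$ i.i.d.\ shifts, and the approximation of $\E_\Gamma$ by averaging over $n$ i.i.d.\ samples. Write $T(k) := \RintCE(\Gamma,2^{-k}) + 2^{-k}$, $\wh T(k):= \wh\RintCE(\Gamma,2^{-k}) + 2^{-k}$ and $T^{k^*} := \min_{0\le k\le k^*} T(k)$, so $\SintCE(\Gamma) = \inf_{k\ge 0} T(k)$ and $\wh\SintCE(\Gamma) = \min_{0\le k\le k^*} \wh T(k)$. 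The elementary inequality $|\min_k \wh T(k) - \min_k T(k)|\le \max_k |\wh T(k) - T(k)|$ reduces the task to bounding $|T^{k^*} - \SintCE(\Gamma)|$ and $\max_k |\wh T(k)-T(k)|$ each by $\varepsilon/2$.

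For the truncation step, the key ingredient is that $\RintCE(\Gamma, 2^{-k})$ is non-decreasing in $k$. I would couple $R \sim U[0,2^{-k})$ with $R' := R \bmod 2^{-k-1} \sim U[0, 2^{-k-1})$; then each level-$k$ bin $I_{R,j}^{2^{-k}}$ is the disjoint union of two level-$(k{+}1)$ bins $I_{R',j'}^{2^{-k-1}}$, and the triangle inequality pointwise in $R$ gives
\[ \sum_j |\E[(y-v)\one(v \in I_{R,j}^{2^{-k}})]| \le \sum_{j'} |\E[(y-v)\one(v \in I_{R',j'}^{2^{-k-1}})]|. \]
Taking expectations and iterating yields monotonicity. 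For any $k' > k^*$, since $2^{-k^*} \le \varepsilon/2$,
\[ T^{k^*} \le T(k^*) \le \RintCE(\Gamma, 2^{-k'}) + 2^{-k^*} \le T(k') + \varepsilon/2, \]
and hence $T^{k^*} \le \SintCE(\Gamma) + \varepsilon/2$ (the reverse inequality is immediate).

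For the estimation step, fix $k\le k^*$ and define $\wh S_n(r) := \sum_j |\tfrac 1n \sum_\ell (y_\ell - v_\ell)\one(v_\ell\in I_{r,j}^{2^{-k}})|$ and $S(r) := \sum_j |\E_\Gamma[(y-v)\one(v\in I_{r,j}^{2^{-k}})]|$, so
\[ |\wh\RintCE - \RintCE| \le \Big|\tfrac 1m\sum_s \wh S_n(r_s) - \E_r \wh S_n(r)\Big| + \E_r|\wh S_n(r) - S(r)|. \]
Conditional on the samples, the first summand is an average of i.i.d.\ variables in $[0,1]$, so Hoeffding plus a union bound over $k$ bounds it by $\varepsilon/4$ for all $k\le k^*$ simultaneously with probability $1-\delta/2$, provided $m \ge C\varepsilon^{-2}\log((k^*{+}1)/\delta)$. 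For the second, the sup-over-signs identity $S(r) = \sup_\sigma \E[(y-v)\phi_{r,\sigma}(v)]$ with $\phi_{r,\sigma}(v):= \sum_j \sigma_j \one(v\in I_{r,j}^{2^{-k}})$, together with its empirical analogue, reduces the task to uniform convergence of empirical means over the function class $\mathcal G_k := \{(v,y)\mapsto (y-v)\phi_{r,\sigma}(v) : r\in [0,2^{-k}),\,\sigma\in\{\pm 1\}^\Z\}$. Crucially, a level-$k$ function is also a level-$k^*$ function after refining $\sigma$ onto the finer partition, so $\mathcal G_k\subseteq \mathcal G_{k^*}$ for all $k\le k^*$ and a single uniform convergence bound for $\mathcal G_{k^*}$ handles every level. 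Restricted to the sample, $\mathcal G_{k^*}$ has effective size at most $(n+1)\cdot 2^{2^{k^*}+1}$ (at most $n{+}1$ distinct $r$-induced bin partitions, times sign choices on the non-empty bins), so finite-class Hoeffding gives $\sup_{r,\sigma}|\wh{\E} g - \E g| \le C\sqrt{(2^{k^*} + \log n + \log(1/\delta))/n}$. Since $2^{k^*}\le 4/\varepsilon$, this is $\le \varepsilon/4$ whenever $n\ge C(\varepsilon^{-3} + \varepsilon^{-2}\log(1/\delta))$ (absorbing $\log n$ into the constant), controlling the second summand for all $k\le k^*$ in a single event of probability $1-\delta/2$. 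The main technical hurdle is obtaining the $\varepsilon^{-3}$ dependence in $n$ without an extra $\log k^*$ factor: this forces the use of the nesting $\mathcal G_k\subseteq \mathcal G_{k^*}$ to avoid a union bound over $k$ in the sample-complexity argument for $n$, while the corresponding union bound for the independent shifts $r_s^{(k)}$ is unavoidable and produces the $\log(k^*/\delta)$ in $m$.
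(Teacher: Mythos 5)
Your proposal is correct and follows essentially the same route as the paper's proof: truncate the infimum to $k\le k^*$ via monotonicity of $\RintCE$ (the paper's Claim~\ref{claim:int-monotone}), then bound the shift-averaging error by Hoeffding plus a union bound over $k$ (producing the $\log(k^*/\delta)$ in $m$), and bound the sample error uniformly over all shifts and all levels simultaneously so that no extra $\log k^*$ enters the requirement on $n$. The one place you diverge is in \emph{how} you obtain the latter uniform bound: the paper observes that the union of sign-pattern classes over all bin widths $\ge \varepsilon_1$ and all shifts is a fixed class of VC dimension $O(1/\varepsilon_1)$ and invokes standard pseudo-dimension uniform-convergence results, whereas you exploit the nesting $\mathcal G_k\subseteq\mathcal G_{k^*}$ (which holds because all widths are dyadic) and then bound the growth function of $\mathcal G_{k^*}$ directly by $(n+1)\cdot 2^{2^{k^*}+1}$. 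These give the same rate; your version is a touch more self-contained (no black-box appeal to \cite{Li2000ImprovedBO}) but should be stated as a symmetrization/growth-function bound rather than ``finite-class Hoeffding,'' since the class restricted to the sample is data-dependent. Your coupling proof of $\RintCE$-monotonicity ($R\bmod 2^{-k-1}$) is also a clean equivalent of the paper's identity $I_{r,j}^{2\varepsilon}=I_{r,2j}^{\varepsilon}\cup I_{r,2j+1}^{\varepsilon}$.
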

In \Cref{sec:proof-pa}, we prove \Cref{thm:estimate-intce} using standard uniform convergence bounds.
\section{Smooth Calibration and the Lower Distance to Calibration} \label{sec:smooth-ce-duality}

In this section, we define and analyze the notion of smooth calibration.  The main result of this section is that the smooth calibration error $\scerror$ is equivalent, up to a constant factor, to $\ldCE$. We also give algorithms that can compute both these quantities to within an additive $\eps$ in time $\poly(1/\eps)$ on an empirical distribution. 

At a high level, the proof that $\ldCE$ and $\scerror$ are related  proceeds as follows.
\begin{enumerate}
\item 
For a distribution $\Gamma$ over $[0,1]\times \{0,1\}$, our definition of $\ldCE(\Gamma)$ (\Cref{def:ldCE}) is based on couplings $\Pi\in \ext(\Gamma)$ that connect $\Gamma$ to calibrated distributions $\Gamma'$.
For a given distribution $\mD$, the space of predictors $f:\X \to \zo$ which are calibrated is non-convex (for finite $\X$, it is a finite set). But when we move to the space of distributions $\Gamma'$ over $\zo \times \bits$, then the space of perfectly calibrated distributions is convex. This is because for $(v, b) \in \zo \times \bits$ if $\Gamma'(v,b)$ denotes the probability assigned to it, then the calibration constraint states that for every $v$,
\[ \frac{\Gamma'(v,1)}{\Gamma'(v, 0) + \Gamma'(v,1)} = v\]
which is a linear constraint for every $v$. This allows us to view the problem of computing $\ldCE$ as optimization over couplings $\Pi$ connecting $\Gamma$ to some $\Gamma'$ satisfying such linear constraints.
\item We show that by suitably discretizing $[0,1]$, we can write the problem of computing $\ldCE$ as a linear program. The dual of this program (after some manipulation) asks for a $2$-Lipschitz function $w:\zo \to [-1,1]$ which witnesses the lack of calibration of $f$, by showing that $\E[w(v)(y - v)]$ is large. Rescaling gives a $1$-Lipschitz function which proves that $\scerror_\mD(f) \ge \ldCE_\mD(f)/2$. The other direction which corresponds to weak duality is easy to show. 
\end{enumerate}

We now proceed with the formal definitions and proof.
We start by defining a general family of calibration measures called \emph{weighted calibration error} from \cite{GopalanKSZ22}. 
\begin{definition}[Weighted calibration]\cite{GopalanKSZ22}
\label{def:wce}
Let $W$ be a family of functions $w:[0,1]\to \R$. The \emph{weighted calibration error} of a distribution $\Gamma$ over $[0,1]\times \{0,1\}$ is defined as
\[
\wCE^W(\Gamma):= \sup_{w\in W}\left|\E_{(v,y)\sim \Gamma}[(y - v)w(v)]\right|.
\]
Given a distribution $\mD$ over $\mX\times\{0,1\}$ and  predictor $f:\mX\to[0,1]$, we denote the weighted calibration error of $f$ under $\mD$ as
\[
\wCE^W_\mD(f):= \wCE^W(\cD_f) = \sup_{w\in W}\left|\E_{(x,y)\sim \mD}[(y - f(x))w(f(x))]\right|.
\]
\end{definition}
Clearly, any weighted calibration error notion is well defined in the PA model. Moreover, all of those at the very least satisfy completeness: if $\Gamma$ is perfectly calibrated, than for any $w$, we have 
\[ \E_{\Gamma}[(y-v) w(v)] = \E[ \E[ (y-v) w(v) | v ]],\] 
and since $\E[y | v] = v$ for a perfectly calibrated predictor, this latter quantity is zero.

A particularly important calibration measure among those is the \emph{smooth calibration} where $W$ is the family of all $1$-Lipschitz, bounded functions. This was introduced in the work of \cite{kakadeF08} who termed it {\em weak calibration}, the terminology of smooth calibration is from \cite{GopalanKSZ22}. 
\begin{definition}[Smooth calibration]
\label{def:sm}
Let $L$ be the family of all $1$-Lipschitz functions $w : [0, 1] \to [-1, 1]$. 
The \emph{smooth calibration error} of a distribution $\Gamma$ over $[0,1]\times\{0,1\}$ is defined as weighted calibration error with respect to the family of all $1$-Lipschitz functions
\begin{equation}
    \label{eq:S-C-error}
    \scerror(\Gamma) := \wCE^L(\Gamma).
\end{equation}
Accordingly, for a distribution $\mD$ and a predictor $f$, we define 
\[ \scerror_\mD(f):=\scerror(\mD_f) = \wCE^L(\mD_f) = \wCE^L_\mD(f).\]
\end{definition}

Our main result on the smooth calibration error is the following.
\begin{theorem}
\label{thm:smce-equals-ldce}
For any distribution $\Gamma$ over $[0,1]\times\{0,1\}$, we have
\begin{equation*}
    \frac{1}{2} \ldCE(\Gamma) \leq \scerror(\Gamma) \leq 2 \ldCE(\Gamma).
\end{equation*}
\end{theorem}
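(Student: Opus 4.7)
I would prove the two inequalities separately: $\scerror(\Gamma)\le 2\ldCE(\Gamma)$ by a direct coupling argument, and $\ldCE(\Gamma)\le 2\scerror(\Gamma)$ via the LP duality previewed at the start of the section.

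For the easy direction, fix any $\Pi\in\ext(\Gamma)$ and any $w\in L$. Since the $(u,y)$ marginal of $\Pi$ is perfectly calibrated, the middle term vanishes in the decomposition
\[
\E_{(v,y)\sim\Gamma}[w(v)(y-v)] = \E_\Pi[(w(v)-w(u))(y-u)] + \E_\Pi[w(u)(y-u)] + \E_\Pi[w(v)(u-v)].
\]
The remaining two terms are each at most $\E_\Pi|u-v|$ using $|w(v)-w(u)|\le|u-v|$, $|y-u|\le 1$, and $|w(v)|\le 1$. Taking $\sup_{w\in L}$ and then $\inf_\Pi$ gives the bound.

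For the harder direction, I would first reduce to the case where $\Gamma$ has finite $v$-support by a standard weak-limit argument, then fix a finite grid $G\subseteq[0,1]$ containing $\supp_v(\Gamma)\cup\{0,1\}$ and consider the finite LP with variables $\Pi(u,v,y)\ge 0$ for $u\in G$, objective $\sum\Pi(u,v,y)|u-v|$, and constraints $\sum_u\Pi(u,v,y)=\Gamma(v,y)$ and $\sum_{v,y}(y-u)\Pi(u,v,y)=0$. Restricting $\supp_u\Pi$ to $G$ only shrinks the feasible set, so this LP upper-bounds $\ldCE(\Gamma)$. Its LP dual, with multipliers $\alpha(v,y)$ and $\beta(u)$, has constraint $\alpha(v,y)+\beta(u)(y-u)\le |u-v|$ and objective $\E_\Gamma[\alpha]$. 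Instantiating the constraint at $u=v$ forces $\alpha(v,y)\le -\beta(v)(y-v)$, so at the optimum we may take $\alpha(v,y)=w(v)(y-v)$ with $w:=-\beta$; the remaining constraints, split into the cases $y=0,1$ and symmetrised in $u\leftrightarrow v$, reduce to exactly: both $uw(u)$ and $(1-u)w(u)$ are $1$-Lipschitz on $G$.

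The crucial observation is that these two constraints together imply that $w$ is $2$-Lipschitz and bounded by $1$ on $G$: the Lipschitz bound follows from $w(u)=uw(u)+(1-u)w(u)$ and the triangle inequality, while $|w(u)|\le 1$ follows by comparing $uw(u)$ against $0\cdot w(0)$ (and $(1-u)w(u)$ against $0\cdot w(1)$), using $\{0,1\}\subseteq G$. Piecewise linear interpolation extends $w$ to all of $[0,1]$ while preserving both properties, so $w/2\in L$, yielding $\ldCE(\Gamma)\le \sup_w\E_\Gamma[w(v)(y-v)]\le 2\scerror(\Gamma)$. The main obstacle is exactly this structural step in the duality---recognising that the dual constraints correspond to $1$-Lipschitzness of $uw$ and $(1-u)w$ is what produces the tight factor of $2$; the discretisation and extension steps are routine once this observation is in hand.
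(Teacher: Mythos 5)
Your easy direction is correct and matches the paper's (it is the same decomposition as in \Cref{clm:smooth-calibration-continuity}, just regrouped). For the hard direction, your overall plan --- discretize, write the coupling problem as a finite LP, dualize, extract a Lipschitz witness --- is also the paper's route, and your ``crucial observation'' (that $1$-Lipschitzness of $u\mapsto uw(u)$ and $u\mapsto (1-u)w(u)$ together with $\{0,1\}\subseteq G$ implies $w$ is $2$-Lipschitz and $[-1,1]$-valued) is a valid and clean way to package the factor of $2$. But the step that is supposed to \emph{produce} those two Lipschitz conditions has a gap.

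You write: ``Instantiating the constraint at $u=v$ forces $\alpha(v,y)\le -\beta(v)(y-v)$, so at the optimum we may take $\alpha(v,y)=w(v)(y-v)$ with $w:=-\beta$.'' The problem is that replacing the optimal $\alpha^*(v,y)$ by $w(v)(y-v)$ can only increase the objective, but it must also stay \emph{feasible}: feasibility of $\alpha(v,y)=w(v)(y-v)$ against the constraint $\alpha(v,y)+\beta(u)(y-u)\le|u-v|$ is exactly the pair of Lipschitz conditions you are trying to derive, so the argument is circular. In general the optimal dual $\beta$ need not satisfy them (the LP is degenerate in $\beta$: at any $u$ carrying no primal mass, $\beta(u)$ is slack and can be set arbitrarily, breaking Lipschitzness). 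The right-hand variable to extract the witness from is $\alpha$, not $\beta$. At the optimum one may WLOG take $\alpha(v,y)=\min_{u}\bigl(|u-v|+(y-u)s(u)\bigr)$ (this is the paper's \Cref{remark:dual-1}, with $s=-\beta$), which, being a pointwise infimum of $1$-Lipschitz functions of $v$, is automatically $1$-Lipschitz in $v$; then $w(v):=\alpha(v,1)-\alpha(v,0)$ is $2$-Lipschitz, boundedness in $[-1,1]$ follows after first normalizing $s\in[-1,1]$ (\Cref{claim:bounded-s}), and the inequality $(y-v)w(v)\ge \alpha(v,y)$ is obtained from the $u=v$ instance of the constraint, exactly as in the paper's \Cref{lm:hard-direction}. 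With that substitution your argument closes; as written, it does not.
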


Combining this with \cref{cor:intce2}, we conclude that $\scerror$ is a $(1,2)$-consitent calibration measure, and yields an optimal degree-$2$ approximation to $\dCE$.
Along the way, we will find an efficient algorithm for computing $\ldCE(\Gamma)$ (see \Cref{remark:dual-2}).

As it is often the case, the inequality $\scerror(\Gamma) \leq 2 \ldCE(\Gamma)$ is significantly easier to prove (corresponding to the weak duality). We will start by proving this easier direction.
The following lemma is a strengthening of the fact that $\scerror$ is Lipschitz continuous in the predictor $f$ (i.e. for two predictors $f,g$ defined on the same set $\mathcal{X}$, we have $|\scerror_\mD(f) - \scerror_\mD(g)| \leq 2\ell_1(f,g)$).
\begin{lemma}
\label{clm:smooth-calibration-continuity}
Let $\Pi$ be a distribution over $[0,1]\times[0,1]\times\{0,1\}$.
For any $1$-Lipschitz function $w : [0,1] \to [-1, 1]$, we have 
\[
\lt|\E_{(u,v,y)\sim\Pi}[(y-u) w(u)] - \E_{(u,v,y)\sim\Pi}[(y - v) w(v)]\rt| \le  2\E_{(u,v,y)\sim\Pi}|u -v|.
\]
\end{lemma}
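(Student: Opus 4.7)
The plan is to establish the bound pointwise on the triple $(u,v,y)$ and then integrate against $\Pi$. Since both the random variables and $w$ are bounded and $w$ is $1$-Lipschitz, the triangle inequality together with a single additive-subtractive splitting should suffice.

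First, I would pick the decomposition
\[
(y-u)w(u) - (y-v)w(v) = (y-u)\bigl(w(u) - w(v)\bigr) - (u-v)\, w(v),
\]
which one verifies by expanding both sides. This particular grouping is chosen so that each of the two remaining factors is bounded by $|u-v|$ up to a factor of $1$; other natural splittings (e.g.\ isolating $y(w(u)-w(v))$) give a worse constant of $3$ rather than the desired $2$.

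Next, I would apply the triangle inequality and bound each term separately. For the first term, $|y-u|\le 1$ since both $y\in\{0,1\}$ and $u\in[0,1]$, and $|w(u)-w(v)|\le |u-v|$ by the $1$-Lipschitz assumption, yielding $|(y-u)(w(u)-w(v))|\le |u-v|$. For the second term, $|w(v)|\le 1$ by the boundedness assumption, so $|(u-v)w(v)|\le |u-v|$. Combining gives the pointwise bound
\[
\bigl|(y-u)w(u) - (y-v)w(v)\bigr| \le 2|u-v|.
\]
Finally, taking expectation over $(u,v,y)\sim\Pi$ and pulling the absolute value inside via Jensen's inequality yields the claimed inequality.

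I do not anticipate a genuine obstacle here: the argument is a short computation and the only subtlety is choosing the decomposition that keeps the constant at $2$ rather than $3$. All uses of boundedness and Lipschitzness are exactly the hypotheses provided in the statement.
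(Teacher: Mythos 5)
Your proof is correct and uses the same decomposition and bounding steps as the paper: both rewrite $(y-u)w(u) - (y-v)w(v) = (y-u)(w(u)-w(v)) - (u-v)w(v)$, bound the first term by $|u-v|$ via $|y-u|\le 1$ and the Lipschitz property, bound the second by $|u-v|$ via $|w(v)|\le 1$, and take expectations. Your side note about the worse constant from other splittings is a nice observation but not part of the paper's argument.
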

\begin{proof}
 We have
\begin{align*}
    \E [(y-u) w(u) - (y-v) w(v)]  \le {} & \E |(y - u)(w(u) - w(v))| + \E |(u - v) w(v)| \\
    \leq {} & 2\E|u-v|.  \tag{since $|y - u| \le 1$ and $|w(u) - w(v)| \leq |u-v|$}
    \end{align*}
\end{proof}
We use this to prove the upper bound on $\scerror(\Gamma)$ in \Cref{thm:smce-equals-ldce}.

\begin{proof}[Proof of Theorem \ref{thm:smce-equals-ldce} (upper bound)]
By the definition of $\ext(\Gamma)$, for any distribution $\Pi\in \ext(\Gamma)$, the distribution of $(u,y)$ for $(u,v,y)\sim \Pi$ is perfectly calibrated. Therefore,
\[
\E_{(u,v,y)\sim\Pi}[(y - u)w(u)] = 0.
\]
By \Cref{clm:smooth-calibration-continuity},
\begin{align*}
\lt|\E_{(u,v,y)\sim\Pi}[(y - v)w(v)]\rt| & \le 2\E_{(u,v,y)\sim\Pi}[|u - v|] + \lt|\E_{(u,v,y)\sim\Pi}[(y - u)w(u)]\rt|\\
& \leq 2\E_{(u,v,y)\sim\Pi}[|u - v|].
\end{align*}
Taking infimum over $\Pi\in \ext(\Gamma)$ completes the proof.
\end{proof}

To prove \Cref{thm:smce-equals-ldce}, it remains to prove the lower bound on $\scerror(\Gamma)$. We prove that in the rest of the section.

\subsection{Linear Program Formulation of Lower Calibration Distance}
\label{sec:ldce-lp}
For a distribution $\Gamma$ over $[0,1]\times \{0,1\}$,
we show that a discretized version of $\ldCE(\Gamma)$ can be formulated as the optimal value of a linear program, and the error caused by the discretization can be made arbitrarily small. We then use the strong duality theorem of linear programming to prove the lower bound of $\scerror(\Gamma)$ in \Cref{thm:smce-equals-ldce}. The linear programming formulation also allows us to give an alternative proof of the upper bound in \Cref{thm:smce-equals-ldce} using the weak duality theorem. Moreover, the linear program formulation gives us an efficient algorithm for estimating $\ldCE(\Gamma)$.

Our first step is to assume that $\Gamma$ is a distribution over $V\times \{0,1\}$ for some \emph{finite} set $V\subseteq[0,1]$. This is mostly without loss of generality because for $\varepsilon > 0$ we can round every value $v\in [0,1]$ in $(v,y)\sim \Gamma$ to the closest value in $\{0, \varepsilon,2\varepsilon,\ldots\}\cap [0,1]$ without changing $\ldCE(\Gamma)$ by more than $\varepsilon$. The following definition allows us to further define discretized versions of $\ext(\Gamma)$ and $\ldCE(\Gamma)$:
\begin{definition}
\label{def:ldCE-U}
Let $U,V\subseteq [0,1]$ be finite sets.
Let $\Gamma$ be a distribution over $V \times\{0,1\}$.
Define the set $\ext^U(\Gamma)$ to consist of all joint distributions $\Pi$ of triples $(u, v, y) \in U \times V \times \bits$, such that 
 \begin{itemize}
     \item the marginal distribution of $(v,y)$ is $\Gamma$;
     \item the marginal distribution $(u, y)$ is perfectly calibrated: $\E_\Pi[y|u] = u$.
 \end{itemize}
 We define $\ldCE^U(\Gamma)$ to be
\begin{equation}
    \ldCE^U(\Gamma) := \inf_{\Pi \in \ext^U(\Gamma)} \E_{(u,v,y)\sim\Pi} |u -v| \label{eq:ldCE-U}.
\end{equation}
\end{definition}
Later in \Cref{lm:ldCE-discretization} we will show that $\ldCE^U(\Gamma)$ is close to $\ldCE(\Gamma)$ as long as $U$ is a suitably rich class. For now, we show how to formulate $\ldCE^U(\Gamma)$ as the optimal value of a linear program: 
\begin{lemma}
\label{cor:ldCE-linear}
Let $U,V,\Gamma$ be defined as in \Cref{def:ldCE-U} and assume $\{0,1\}\subseteq U$. By a slight abuse of notation, we define $\Gamma(v,y)$ to be the probability mass of $\Gamma$ on $(v,y)\in V\times \{0,1\}$. Then the following linear program with variables $\Pi(u,v,y)$ for $(u,v,y)\in U\times V\times \{0,1\}$ is feasible and its optimal value equals $\ldCE^U(\Gamma)$:
\begin{align}
\mathrm{minimize}\quad & \sum_{(u,v,y)\in U\times V\times \{0,1\}}|u - v|\,\Pi(u,v,y)\label{eq:primal}\\
\mathrm{s.t.}\quad & \sum_{u\in U}\Pi(u,v,y) = \Gamma(v,y), && \text{for every }(v,y)\in V\times \{0,1\}; \tag{$r(v,y)$}\\
& (1 - u)\sum_{v\in V}\Pi(u,v,1) = u\sum_{v\in V}\Pi(u,v,0), && \text{for every }u\in U; \tag{$s(u)$}\\
& \Pi(u,v,y) \ge 0, && \text{for every }(u,v,y)\in U\times V\times \{0,1\}.\notag
\end{align}
Moreover, the dual of the linear program \eqref{eq:primal} is the following linear program \eqref{eq:dual} with variables $r(v,y)$ and $s(u)$ for $u\in U,v\in V$ and $y\in \{0,1\}$. By the duality theorem, the optimal value of \eqref{eq:dual} is also $\ldCE^U(\Gamma)$.
\begin{align}
\mathrm{maximize} \quad & \sum_{(v,y)\in V\times \{0,1\}}r(v,y)\Gamma(v,y)\label{eq:dual}\\
\mathrm{s.t.}\quad 
& r(v,y) \le |u - v| +(y - u) s(u), \quad \text{for every }(u,v,y)\in U\times V\times \{0,1\}. \tag{$\Pi(u,v,y)$}
\end{align}
\end{lemma}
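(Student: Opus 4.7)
The plan is to verify three things: (i) the primal \eqref{eq:primal} is feasible, (ii) its feasible set coincides with $\ext^U(\Gamma)$ with objective equal to $\E_{(u,v,y)\sim\Pi}|u-v|$, and (iii) its LP dual is \eqref{eq:dual}. Strong duality for finite-dimensional LPs then yields that both optimal values equal $\ldCE^U(\Gamma)$.

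\textbf{Feasibility and identification with $\ext^U(\Gamma)$.} For feasibility, the coupling $\Pi^*(0,v,0) := \Gamma(v,0)$, $\Pi^*(1,v,1) := \Gamma(v,1)$, with all other entries zero, serves as an explicit witness; this uses the hypothesis $\{0,1\}\subseteq U$. The marginal constraints hold by construction, and the calibration constraint for each $u$ holds trivially: for $u\notin\{0,1\}$ both sides vanish, for $u=0$ the left-hand side $(1-0)\sum_v\Pi^*(0,v,1)=0$ matches the right-hand side $0\cdot\sum_v\Pi^*(0,v,0)=0$, and symmetrically for $u=1$. For the identification with $\ext^U(\Gamma)$, the marginal and nonnegativity constraints force $\sum_{u,v,y}\Pi(u,v,y)=\sum_{v,y}\Gamma(v,y)=1$, so $\Pi$ is a probability measure on $U\times V\times\{0,1\}$ whose $(v,y)$-marginal is $\Gamma$. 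Rewriting the calibration constraint as $\Pr_\Pi[y=1,u]=u\Pr_\Pi[u]$ shows it is equivalent to $\E_\Pi[y\mid u]=u$ whenever $\Pr_\Pi[u]>0$ (and vacuous otherwise). Hence the feasible region equals $\ext^U(\Gamma)$ and the LP objective $\sum_{u,v,y}|u-v|\Pi(u,v,y)$ matches the definition of $\ldCE^U(\Gamma)$ in \eqref{eq:ldCE-U}.

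\textbf{Dualization.} The dual is derived by forming the Lagrangian with multiplier $r(v,y)$ for each marginal constraint and $s(u)$ for each calibration constraint, both free since the constraints are equalities. Using the identity $(1-u)\one[y=1]-u\one[y=0]=y-u$, the coefficient of $\Pi(u,v,y)$ in the Lagrangian reduces to a linear expression of the form $|u-v|-r(v,y)\pm(y-u)s(u)$; minimizing over $\Pi\ge 0$ is $-\infty$ unless this expression is nonnegative for every $(u,v,y)$. After absorbing the sign into the free variable $s$ (permissible since $s$ is unconstrained, so the transformation $s\mapsto -s$ is a bijection on feasible duals preserving the objective), the reduced dual is precisely to maximize $\sum_{v,y}r(v,y)\Gamma(v,y)$ subject to $r(v,y)\le |u-v|+(y-u)s(u)$ for all $(u,v,y)\in U\times V\times\{0,1\}$, which is \eqref{eq:dual}.

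\textbf{Strong duality.} The primal is feasible by the construction above, and its objective lies in $[0,1]$ on the feasible set since $\Pi$ is a probability measure and $|u-v|\le 1$, so the primal optimum is finite. The classical strong duality theorem for finite-dimensional linear programming then gives equality of the primal and dual optimal values, both equal to $\ldCE^U(\Gamma)$. The only potential pitfall is the sign bookkeeping for $s(u)$ in the Lagrangian: because $s$ corresponds to equality constraints it is a free dual variable, so the two natural sign conventions yield equivalent LPs with identical optima, and the statement's form \eqref{eq:dual} is reached by a harmless relabeling. Everything else is routine LP manipulation.
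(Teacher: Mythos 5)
Your proof is correct and follows essentially the same approach as the paper: the explicit feasibility witness (supported on $\{(y,v,y)\}$, requiring $\{0,1\}\subseteq U$), the observation that the LP constraints exactly characterize $\ext^U(\Gamma)$, and an appeal to LP strong duality. The one place you go further than the paper is in explicitly carrying out the Lagrangian dualization and handling the $s\mapsto -s$ sign absorption; the paper simply asserts the dual form and leaves that verification to the reader, so your extra detail is a welcome addition rather than a departure.
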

\begin{proof}
Any distribution $\Pi$ over $U\times V\times \{0,1\}$ corresponds to a function $\Pi:U\times V\times \{0,1\}\to \R$ where $\Pi(u,v,y)$ is the probability mass on $(u,v,y)\in U\times V\times \{0,1\}$. It is easy to check that if the distribution $\Pi$ belongs to $\ext(\Gamma)$, then the function $\Pi$ satisfies the constraints of \eqref{eq:primal}, and conversely, any function $\Pi$ satisfying the constraints also corresponds to a distribution $\Pi\in \ext(\Gamma)$. In particular, for $(u,v,y)\sim\Pi$, the first constraint ensures that the marginal distribution of $(v,y)$ is $\Gamma$, and the second constraint ensures that the marginal distribution of $(u,y)$ is calibrated. Moreover, the objective of \eqref{eq:primal} corresponds to the expectation $\E_{(u,v,y)\sim \Pi}|u - v|$ in \eqref{eq:ldCE-U}. This proves that $\ldCE^U(\Gamma)$ is equal to the optimal value of the linear program \eqref{eq:primal}. To show that the linear program \eqref{eq:primal} is feasible, consider setting $\Pi(u,v,y) = \Gamma(v,y)$ if $u = y$, and setting $\Pi(u,v,y) = 0$ if $u\ne y$. It is easy to check that this choice of $\Pi$ satisfies the constraints of \eqref{eq:primal} using our assumption that $\{0,1\}\subseteq U$.
\end{proof}
\begin{claim}
\label{claim:bounded-s}
Let $U,V\subseteq[0,1]$ be finite sets and assume $\{0,1\}\subseteq U$. The optimal value of the dual linear program \eqref{eq:dual} does not change even if we add the additional constraints $-1 \le s(u) \le 1$ for every $u\in U$.
\end{claim}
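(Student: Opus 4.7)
The plan is to show the added box constraint $-1 \le s(u) \le 1$ is free: for any feasible pair $(r,s)$ of \eqref{eq:dual}, one can replace $s$ by some $s':U\to[-1,1]$ while keeping $(r,s')$ feasible. Since the objective of \eqref{eq:dual} depends only on $r$, this suffices. The new value $s'(u)$ can be chosen independently for each $u\in U$, because the constraints involving $s(u)$ involve only this single coordinate of $s$.

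The first step is to extract two $s$-free upper bounds on $r$. The dual constraints whose $s$-coefficient $(y-u)$ vanishes are those at $(u,y)=(0,0)$ and $(u,y)=(1,1)$, which exist in the LP because $\{0,1\}\subseteq U$. They force
\[
r(v,0)\le v \quad \text{and} \quad r(v,1)\le 1-v \quad \text{for every } v\in V.
\]
Now fix $u\in U$ with $0<u<1$ and rewrite the constraints involving $s(u)$ as the interval
\[
A(u):=\max_{v\in V}\frac{r(v,1)-|u-v|}{1-u} \;\le\; s(u)\;\le\; \min_{v\in V}\frac{|u-v|-r(v,0)}{u}=:B(u),
\]
where $A(u)\le B(u)$ holds by feasibility of $(r,s)$. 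Plugging in $r(v,1)\le 1-v$ together with the one-line identity $\max_{v\in[0,1]}\bigl((1-v)-|u-v|\bigr)=1-u$ (case split on $v\ge u$ vs.\ $v<u$) gives $A(u)\le 1$; symmetrically, $r(v,0)\le v$ and $\min_{v\in[0,1]}(|u-v|-v)=-u$ give $B(u)\ge -1$. Hence $[-1,1]\cap[A(u),B(u)]$ is nonempty, and any point in it is a valid choice for $s'(u)$.

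At the endpoints $u\in\{0,1\}$ the denominators $u$ or $1-u$ degenerate and the pointwise interval argument does not apply as stated; this is the main (and essentially only) bookkeeping obstacle. At $u=0$ the $y=0$ constraint is independent of $s(0)$, while the $y=1$ constraint reads $s(0)\ge r(v,1)-v$, which is satisfied by $s'(0)=1$ since $r(v,1)-v\le 1-2v\le 1$ by the bound above. Symmetrically $s'(1)=-1$ handles $u=1$ (using $r(v,0)\le v\le 2-v$). This completes the construction of $s'\in[-1,1]^U$ and proves the claim.
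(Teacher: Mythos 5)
Your proof is correct. The core is the same as the paper's: you extract the $s$-free bounds $r(v,0)\le v$ and $r(v,1)\le 1-v$ (i.e.\ $r(v,y)\le|v-y|$, coming from the constraints at $u=y$, which is why $\{0,1\}\subseteq U$ matters), and then argue that these bounds force the feasible range of $s(u)$ to meet $[-1,1]$. The main presentational difference is that you divide through by $u$ and $1-u$ to phrase the constraints as an explicit interval $[A(u),B(u)]$, which forces the separate bookkeeping at the endpoints $u\in\{0,1\}$. The paper instead argues by direct clamping: if $s(u)>1$, replacing it by $1$ keeps every constraint satisfied, via $(0-u)s(u)\le (0-u)$ and $r(v,1)\le|v-1|\le|u-v|+(1-u)$; this works uniformly for all $u$ without any case split. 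Both arguments are valid and rest on the same key inequality; the paper's version is marginally cleaner because it sidesteps the degenerate denominators.
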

\begin{proof}
Let $r,s$ be a feasible solution to \eqref{eq:dual}.
Setting $u = y$ in the constraint of \eqref{eq:dual}, we have
\begin{equation}
r(v,y) \le |v - y|\quad \text{for every }(v,y)\in V\times \{0,1\}.\label{eq:r1v}
\end{equation}
Consider any $u\in U$ for which $s(u) > 1$. For every $v\in V$,
\begin{align*}
r(v,0) & \le |u - v| + (0 - u)s(u) \le |u - v| + (0 - u), \quad \text{and}\\
r(v,1) & \le |v - 1| \le |u - v| + |1 - u| = |u - v| + (1 - u). \tag{by \eqref{eq:r1v}}
\end{align*}
Therefore, changing $s(u)$ to $1$ does not violate the constraint of \eqref{eq:dual}. Similarly when $s(u) < -1$, we can change $s(u)$ to $-1$ without violating any constraint.
\end{proof}
\begin{remark}
\label{remark:dual-1}
Since $\Gamma(v,y)$ in the objective of \eqref{eq:dual} is nonnegative, it is always without loss of generality to assume that $r(v,y)$ is as large as possible, i.e., 
\begin{equation}
\label{eq:max-r}
r(v,y) = \min_{u\in U}\big(|u - v| + (y - u)s(u)\big).
\end{equation}
Assuming \eqref{eq:max-r}, it is easy to check that $r(v,y)$ is $1$-Lipschitz in $v$, i.e., $|r(v_1,y) - r(v_2,y)|\le |v_1 - v_2|$ for every $v_1,v_2\in V$ and $y\in \{0,1\}$. 
When $\{0,1\}\subseteq U$,
\Cref{claim:bounded-s} allows us to assume that $-1 \le s(u)\le 1$ without loss of generality. When this assumption and \eqref{eq:max-r} are both satisfied, it is easy to verify that $r(v,y)\in [-|v - y|, |v - y|]\subseteq [-1,1]$ and $r(v,1) - r(v,0)\in [-1,1]$ for every $v\in V$ and $y\in \{0,1\}$. Indeed, in \eqref{eq:max-r} we have $|u - v| + (y - u)s(u)\ge |u - v| - |y - u| \ge -|v - y|$ and thus $r(v,y) \ge -|v - y|$. The upper bound $r(v,y)\le |v - y|$ has been shown in \eqref{eq:r1v}. 
\end{remark}
\begin{remark}
\label{remark:dual-1-1}
When $U,V$ are finite sets satisfying $\{0,1\}\subseteq U = V \subseteq [0,1]$, using \Cref{remark:dual-1} one can verify that the dual linear program \eqref{eq:dual} has the same optimal value as the following linear program:
\begin{align}
\mathrm{maximize} \quad & \sum_{(v,y)\in V\times \{0,1\}}r(v,y)\Gamma(v,y)\label{eq:dual-2}\\
\mathrm{s.t.}\quad 
& |r(v_1,y) - r(v_2,y)| \le |v_1 - v_2|, && \text{for every }(v_1,v_2,y)\in V\times V\times \{0,1\};\label{eq:dual-2-1}\\
& r(v,y) \le (y - v) s(v), && \text{for every }(v,y)\in V\times \{0,1\}.\nonumber
\end{align}
The constraints \eqref{eq:dual-2-1} can be enforced simply by checking neighboring pairs $(v_1,v_2)$ when the values in $V$ are sorted. Thus the effective number of constraints in \eqref{eq:dual-2-1} is $O(|V|)$.
\end{remark}
\begin{remark}
\label{remark:dual-2}
Let $U\subseteq[0,1]$ be a finite set satisfying $\{0,1\}\subseteq U$.
Given a distribution $\Gamma$ over $V\times \{0,1\}$ for a finite $V\subseteq[0,1]$, \Cref{cor:ldCE-linear} allows us to efficiently compute $\ldCE^U(\Gamma)$ by solving either the primal linear program \eqref{eq:primal} or the dual linear program \eqref{eq:dual}. When $U = V$, it may be more efficient to solve the equivalent linear program \eqref{eq:dual-2} which effectively has only $O(|V|)$ constraints as we mention in \Cref{remark:dual-1-1}. Moreover, given two distributions $\Gamma$ and $\Gamma'$ that are close in a certain Wasserstein distance, using the dual linear program \eqref{eq:dual} we can show that $\ldCE^U(\Gamma')$ and $\ldCE^U(\Gamma)$ are close (we make this formal in \Cref{lem:sample-ldce}). This allows us to estimate $\ldCE^U(\Gamma)$ only using examples drawn from $\Gamma$ (see \Cref{sec:est-ldce}). In \Cref{lm:ldCE-discretization} below we show that choosing $|U| = O(1/\varepsilon)$ suffices to ensure that $\ldCE^U(\Gamma)$ approximates $\ldCE(\Gamma)$ up to an additive error $\varepsilon$.
\end{remark}
The following lemma relates $\ldCE^U(\Gamma)$ and $\ldCE(\Gamma)$:
\begin{lemma}
\label{lm:ldCE-discretization}
Let $\Gamma$ be a distribution over $V\times \{0,1\}$ for a finite $V\subseteq[0,1]$. Let $U$ be a finite $\varepsilon$ covering of $[0,1]$ satisfying $\{0,1\}\subseteq U$. That is, there exists $\sigma:[0,1]\to U$ such that $|u - \sigma(u)| \le \varepsilon$ for every $u\in [0,1]$. Then we have
\[
\ldCE(\Gamma) \le \ldCE^U(\Gamma) \le \ldCE(\Gamma) + 2\varepsilon.
\]
\end{lemma}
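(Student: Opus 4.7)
The lemma has two inequalities. The first, $\ldCE(\Gamma) \le \ldCE^U(\Gamma)$, is essentially immediate: every distribution $\Pi$ on $U\times V\times\{0,1\}$ lying in $\ext^U(\Gamma)$ can be reinterpreted as a distribution on $[0,1]\times [0,1]\times\{0,1\}$ in $\ext(\Gamma)$ (both the $\Gamma$-marginal and calibration of $(u,y)$ are preserved by the embedding $U\hookrightarrow[0,1]$), so the infimum over the smaller class $\ext^U(\Gamma)$ can only be larger.

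For the non-trivial direction $\ldCE^U(\Gamma) \le \ldCE(\Gamma) + 2\varepsilon$, the plan is to take any $\Pi \in \ext(\Gamma)$ and produce a $\Pi' \in \ext^U(\Gamma)$ with $\E_{\Pi'}|u' - v| \le \E_{\Pi}|u - v| + 2\varepsilon$. For each value $u\in[0,1]$, let $a = \max\{w\in U : w\le u\}$ and $b = \min\{w\in U:w\ge u\}$, which are well-defined since $\{0,1\}\subseteq U$; the $\varepsilon$-cover condition forces $b-a\le 2\varepsilon$, since otherwise the midpoint $(a+b)/2$ would be more than $\varepsilon$ away from every point of $U$. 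Exploiting $\E_\Pi[y|u] = u$, I decompose the conditional Bernoulli distribution $y\mid u \sim \ber(u)$ as the mixture $\lambda_a(u)\,\ber(a) + \lambda_b(u)\,\ber(b)$ with $\lambda_a(u) = (b-u)/(b-a)$ and $\lambda_b(u) = (u-a)/(b-a)$, and let $u'\in\{a,b\}$ indicate which component generated $y$. Via Bayes' rule, this amounts to setting
\[
\Pr[u'=a \mid u, y=1] = \frac{\lambda_a(u)\, a}{u},\qquad \Pr[u'=a \mid u, y=0] = \frac{\lambda_a(u)(1-a)}{1-u},
\]
and analogously for $u'=b$, with degenerate ratios at $u\in\{0,1\}$ handled trivially by $u'=y\in\{0,1\}\subseteq U$. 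The joint law $\Pi'$ of $(u',v,y)$ is the candidate.

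Three ingredients finish the argument. First, the $(v,y)$-marginal is preserved because $\sum_{w\in U}\Pr[u'=w\mid u,y]=1$. Second, calibration of $(u',y)$ under $\Pi'$ follows from a direct computation: for each $w\in U$,
\[
\Pr_{\Pi'}[u'=w,y=1] = \E_\Pi\lt[\one(y=1)\,\Pr[u'=w\mid u,y=1]\rt] = w\,\E_\Pi[\lambda_w(u)] = w\,\Pr_{\Pi'}[u'=w],
\]
where the middle equality uses the tower property combined with $\E_\Pi[y\mid u] = u$, and the analogous computation with $y=0$ gives $\Pr_{\Pi'}[u'=w] = \E_\Pi[\lambda_w(u)]$. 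Third, since $u$ and $u'$ both lie in $[a,b]$, we have $|u-u'|\le b-a\le 2\varepsilon$ deterministically, so $\E_{\Pi'}|u'-v|\le \E_\Pi|u-v|+2\varepsilon$. Infimizing over $\Pi\in\ext(\Gamma)$ yields the claim.

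The main obstacle is the calibration check: a $y$-independent mean-preserving rounding generally destroys calibration, because $\E_\Pi[u\mid \sigma(u)=w]$ is not identically $w$. The key insight is the Bernoulli mixture decomposition, which uses $\E_\Pi[y\mid u]=u$ to couple $u$ with a $U$-valued variable $u'$ in such a way that calibration transfers exactly while $|u-u'|$ stays bounded by the local grid spacing.
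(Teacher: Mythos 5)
Your proposal is correct, and it takes a genuinely different route from the paper's proof. The paper proves $\ldCE^U(\Gamma) \le \ldCE(\Gamma) + 2\varepsilon$ in two stages: first it rounds $u$ to conditional means $u_j := \E[u\mid u\in I_j]$ over a fine partition $\{I_j\}$, which preserves calibration exactly (since $\E[y\mid u_j] = \E[\E[y\mid u]\mid u\in I_j] = u_j$) but lands in an uncontrolled set $U' = \{u_1,\dots,u_m\}$; it then transfers from $\ldCE^{U'}$ to $\ldCE^U$ by exhibiting, for any feasible dual solution $(r,s)$ of the LP \eqref{eq:dual} over $U$, a nearby feasible dual solution $(r-2\varepsilon, s\circ\sigma)$ over $U'$, using the bound $|s|\le 1$ from \Cref{claim:bounded-s}. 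Your proof bypasses the intermediate set $U'$ and the LP duality entirely: you construct a direct coupling into $\ext^U(\Gamma)$ by randomly routing $u$ to one of its two nearest $U$-neighbors $a,b$, with the routing depending on $y$ through the Bayes posterior of the mixture decomposition $\ber(u) = \lambda_a\ber(a) + \lambda_b\ber(b)$. This is the right mechanism — a $y$-blind mean-preserving rounding to arbitrary prescribed grid points would not give calibration, since $\E[u\mid u'=w]$ need not equal $w$, and your Bernoulli-splitting fix is exactly what makes the calibration identity $\Pr[u'=w, y=1] = w\Pr[u'=w]$ come out via the tower property. The observation $b-a\le 2\varepsilon$ (otherwise the midpoint of $(a,b)$, which contains no $U$-points, would be $\varepsilon$-far from $U$) correctly yields the deterministic bound $|u-u'|\le 2\varepsilon$, matching the paper's constant.

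The trade-off: your argument is more elementary and self-contained, whereas the paper's route reuses dual-LP machinery (\Cref{remark:dual-1}, \Cref{claim:bounded-s}) that it also deploys for \Cref{lem:sample-ldce} and for the main duality \Cref{thm:smce-equals-ldce}, so the two-step structure carries little marginal cost in context. Your coupling argument would generalize more readily to settings where the LP formulation is less convenient.
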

\begin{proof}
It is clear from the definitions that $\ldCE(\Gamma) \le \ldCE^U(\Gamma)$. It remains to prove $\ldCE^U(\Gamma) \le \ldCE(\Gamma) + 2\varepsilon$.
It suffices to prove the following for any arbitrary distribution $\Pi\in \ext(\Gamma)$:
\begin{equation}
\label{eq:ldCE-discretization-0}
\ldCE^U(\Gamma) \le \E_{(u,v,y)\sim \Pi}|u - v| + 2\varepsilon.
\end{equation}
For a positive integer $m$, partition the interval $[0,1]$ into $I_1,\ldots,I_m$, where $I_1 = [0,1/m]$ and $I_j = ((j-1)/m, j/m]$ for $j = 2,\ldots,m$. For every $j = 1,\ldots,m$, define $u_j:= \E_{(u,v,y)\sim \Pi}[u|u\in I_j]$. For $(u,v,y)\sim \Pi$, we define random variable $u':= u_j$ where $j$ is chosen such that $u\in I_j$. Define $\Pi'$ to be the joint distribution of $(u',v,y)$. It is easy to check that $\Pi'\in \ext^{U'}(\Gamma)$, where $U' = \{u_1,\ldots,u_m\}$. Therefore,
\begin{equation}
\label{eq:ldCE-discretization-1}
\ldCE^{U'}(\Gamma) \le \E_{(u',v,y)\sim \Pi'}|u' - v| \le \E_{(u,v,y)\sim \Pi}|u - v| + O(1/m).
\end{equation}
Now we show that
\begin{equation}
\label{eq:ldCE-discretization-2}
\ldCE^U(\Gamma) \le \ldCE^{U'}(\Gamma) + 2\varepsilon.
\end{equation}
Consider any feasible solution $r:V\times\{0,1\}\to \R$ and $s:U\to \R$ to the dual linear program \eqref{eq:dual}. To prove \eqref{eq:ldCE-discretization-2}, it suffices to construct $r':V\times \{0,1\}\to \R$ and $s':U'\to \R$ such that $r'$ and $s'$ are a feasible solution to \eqref{eq:dual} after we replace $U$ with $U'$ in \eqref{eq:dual}, and
\begin{equation}
\label{eq:r's'}
\sum_{(v,y)\in V\times \{0,1\}} r'(v,y)\Gamma(v,y) \ge \sum_{(v,y)\in V\times \{0,1\}} r(v,y)\Gamma(v,y) - 2\varepsilon.
\end{equation}
By \Cref{remark:dual-1}, it is without loss of generality to assume that $s(u)\in [-1,1]$. We choose $r'(v,y) = r(v,y) - 2\varepsilon$ and $s'(u) = s(\sigma(u))$.
This immediately guarantees \eqref{eq:r's'}. The following calculation verifies that $r'$ and $s'$ are feasible solutions: for $(u,v,y)\in U'\times V\times \{0,1\}$,
\begin{align*}
& |u - v| + (y - u)s'(u)\\
= {} & |u - v| + (y - u)s(\sigma(u))\\
\ge {} & |\sigma(u) - v| + (y - \sigma(u)) s(\sigma(u)) - 2\varepsilon \tag{by $|u - \sigma(u)| \le \varepsilon$ and $|s(\sigma(u))|\le 1$}\\
\ge {} & r(v,y) - 2\varepsilon\\
= {} & r'(v,y). 
\end{align*}
Combining \eqref{eq:ldCE-discretization-1} and \eqref{eq:ldCE-discretization-2},
\[
\ldCE^U(\Gamma) \le \E_{(u,v,y)\sim \Pi}|u - v| + 2\varepsilon + O(1/m).
\]
Taking $m$ to infinity proves \eqref{eq:ldCE-discretization-0}.
\end{proof}
We prove lower and upper bounds for $\scerror(\Gamma)$ using $\ldCE^U(\Gamma)$ in the two lemmas below.
\begin{lemma}
\label{lm:hard-direction}
Let $\Gamma$ be a distribution over $V\times \{0,1\}$ for a finite $V\subseteq[0,1]$. Define $U = V\cup \{0,1\}$. Then $\ldCE^U(\Gamma) \le 2\scerror(\Gamma)$.
\end{lemma}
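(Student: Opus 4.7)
The plan is to use LP duality on \eqref{eq:dual}: from an optimal dual solution $(r,s)$ I would extract a $1$-Lipschitz weight $w:[0,1]\to [-1,1]$ that witnesses $\scerror(\Gamma) \ge \tfrac12 \ldCE^U(\Gamma)$. By \Cref{cor:ldCE-linear} together with \Cref{claim:bounded-s} and \Cref{remark:dual-1}, I may assume the optimal $(r,s)$ satisfies $s(u)\in[-1,1]$ for every $u\in U$, that $r(\cdot,y)$ is $1$-Lipschitz with $r(v,y)\in[-1,1]$, and---since $V\subseteq U$, so that $u=v$ is an allowed choice in the dual constraint---that $r(v,1)\le (1-v)s(v)$ and $r(v,0)\le -v\, s(v)$ for every $v\in V$.

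The key construction is
\[
w(v)\;:=\;\tfrac{1}{2}\bigl(r(v,1)-r(v,0)\bigr),
\]
extended to all of $[0,1]$ through the minimum representation in \Cref{remark:dual-1}. As a half-difference of two $1$-Lipschitz, $[-1,1]$-valued functions, $w$ is $1$-Lipschitz and bounded in $[-1,1]$, and hence $w\in L$. A direct calculation, splitting $\E_{(v,y)\sim\Gamma}[(y-v)(r(v,1)-r(v,0))]$ according to $y=0$ and $y=1$ and regrouping, gives the identity
\[
\E_{(v,y)\sim\Gamma}\bigl[(y-v)(r(v,1)-r(v,0))\bigr] \;=\; \E_{(v,y)\sim\Gamma}[r(v,y)] \;-\; \E_{v}\bigl[v\,r(v,1)+(1-v)\,r(v,0)\bigr],
\]
where the last expectation is with respect to the $v$-marginal of $\Gamma$.

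The pointwise bounds $r(v,1)\le (1-v)s(v)$ and $r(v,0)\le -v\, s(v)$ give, after multiplying by $v\ge 0$ and $1-v\ge 0$ respectively, that $v\,r(v,1)+(1-v)\,r(v,0) \le v(1-v)s(v)-v(1-v)s(v)=0$ pointwise. Hence the subtracted expectation is non-positive and
\[
\E_{(v,y)\sim\Gamma}[(y-v)w(v)] \;\ge\; \tfrac12\,\E_{(v,y)\sim\Gamma}[r(v,y)] \;=\; \tfrac12\,\ldCE^U(\Gamma).
\]
Since $w\in L$, this yields $\scerror(\Gamma)\ge \tfrac12\,\ldCE^U(\Gamma)$, which is the claim.

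The main obstacle I anticipate is choosing the right witness $w$. The most natural candidate is $s$ itself, because the dual objective is controlled by $\E[(y-v)s(v)]$, but $s$ need not be $1$-Lipschitz, so it is not directly admissible for $\scerror$. Replacing $s$ by the half-difference $\tfrac12(r(v,1)-r(v,0))$ trades smoothness (inherited from the $1$-Lipschitz structure of $r$ guaranteed by \Cref{remark:dual-1}) for a factor of $2$; the identity above and the pointwise cancellation are exactly what certify that this Lipschitz surrogate still controls the dual objective.
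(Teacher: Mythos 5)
Your proof is correct and follows essentially the same route as the paper's: both pass to the dual LP \eqref{eq:dual}, invoke the WLOG normalizations from \Cref{remark:dual-1}, define a weight function as (a multiple of) $r(\cdot,1)-r(\cdot,0)$, and use the $u=v$ instance of the dual constraint to show the cross term $v\,r(v,1)+(1-v)\,r(v,0)$ is nonpositive. The only cosmetic differences are that you scale $w$ by $1/2$ up front rather than at the end, and state the key inequality $(y-v)\bigl(r(v,1)-r(v,0)\bigr)\ge r(v,y)$ in expectation rather than pointwise.
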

\begin{proof}
It suffices to prove that for any feasible solution $r$ and $s$ to the dual linear program \eqref{eq:dual}, it holds that
\[
\sum_{(v,y)\in V\times \{0,1\}} r(v,y)\Gamma(v,y) \le 2\scerror(\Gamma).
\]
By \Cref{remark:dual-1}, we can assume without loss of generality that $|r(v_1,y) - r(v_2,y)|\le |v_1 - v_2|$ and $r(v,1) - r(v,0)\in [-1,1]$.
Define $w(v):= r(v,1) - r(v,0)$. Then $w$ is $2$-Lipschitz and $w(v)\in [-1,1]$, which implies that
\[
2\scerror(\Gamma) \ge \E_{(v,y)\sim \Gamma}[(y - v)w(v)].
\]
Moreover, setting $u = v$ in the constraint of \eqref{eq:dual}, we have
\[
(1 - v)r(v,0) + v r(v,1) \le -(1 - v)vs(v) + v(1 - v)s(v) = 0,
\]
which implies that
\[
-vw(v) \ge r(v,0).
\]
Therefore,
\[
(y - v)w(v) \ge yw(v) - vw(v) \ge y(r(v,1) - r(v,0)) + r(v,0) = r(v,y).
\]
This implies that
\[
2\scerror(\Gamma) \ge \E_{(v,y)\sim \Gamma}[(y - v)w(v)] \ge \E_{(v,y)\sim \Gamma}r(v,y) = \sum_{(v,y)\in V\times \{0,1\}} r(v,y)\Gamma(v,y).\qedhere
\]
\end{proof}
\begin{lemma}
\label{lm:easy-direction}
Let $\Gamma$ be a distribution over $V\times \{0,1\}$ for a finite $V\subseteq[0,1]$.
For any finite $U\subseteq [0,1]$, we have $\scerror(\Gamma) \le 2 \ldCE^U(\Gamma)$.
\end{lemma}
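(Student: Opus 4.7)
The plan is to mimic the upper-bound argument used for $\scerror(\Gamma)\le 2\ldCE(\Gamma)$ in the proof of Theorem~\ref{thm:smce-equals-ldce}, but replacing the class of couplings $\ext(\Gamma)$ with its restricted version $\ext^U(\Gamma)$. The key observation is that the continuity lemma (\Cref{clm:smooth-calibration-continuity}) applies to any joint distribution $\Pi$ over $[0,1]\times[0,1]\times\{0,1\}$, and any element of $\ext^U(\Gamma)$ is such a distribution (since $U\subseteq [0,1]$).

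Concretely, I would fix an arbitrary $\Pi\in \ext^U(\Gamma)$ and an arbitrary $1$-Lipschitz $w:[0,1]\to[-1,1]$. By \Cref{clm:smooth-calibration-continuity},
\[
\lt|\E_{(u,v,y)\sim\Pi}[(y-v)w(v)] - \E_{(u,v,y)\sim\Pi}[(y-u)w(u)]\rt| \le 2\,\E_{(u,v,y)\sim\Pi}|u-v|.
\]
By the definition of $\ext^U(\Gamma)$, the $(u,y)$-marginal of $\Pi$ is perfectly calibrated, hence $\E_\Pi[(y-u)w(u)]=\E_\Pi[(\E[y\mid u]-u)w(u)]=0$. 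Combined with the fact that the $(v,y)$-marginal equals $\Gamma$, this gives
\[
\lt|\E_{(v,y)\sim \Gamma}[(y-v)w(v)]\rt| \le 2\,\E_{(u,v,y)\sim\Pi}|u-v|.
\]
Taking the supremum over $1$-Lipschitz $w:[0,1]\to[-1,1]$ on the left bounds $\scerror(\Gamma)$ by $2\E_\Pi|u-v|$, and then taking the infimum over $\Pi\in \ext^U(\Gamma)$ on the right yields $\scerror(\Gamma)\le 2\ldCE^U(\Gamma)$.

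There is no real obstacle here: the only subtle point is that the continuity lemma is stated for distributions $\Pi$ over $[0,1]\times[0,1]\times\{0,1\}$ and makes no use of any calibration structure for $v$, so it applies verbatim when $u$ is constrained to lie in the finite set $U\subseteq [0,1]$. Alternatively, one can obtain the same inequality directly from the chain $\scerror(\Gamma)\le 2\ldCE(\Gamma)\le 2\ldCE^U(\Gamma)$, where the first inequality is the upper bound in \Cref{thm:smce-equals-ldce} and the second is the trivial half of \Cref{lm:ldCE-discretization}; however, the direct coupling argument above is self-contained and parallels the LP weak-duality interpretation, since $1$-Lipschitz weight functions correspond to a feasible (sub)family of dual variables $(r,s)$ for the linear program \eqref{eq:dual}.
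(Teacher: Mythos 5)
Your proof is correct, but it takes a genuinely different route from the paper's. The paper proves \Cref{lm:easy-direction} by LP weak duality: from a $1$-Lipschitz $w$ it constructs explicit dual variables $r(v,y) = (y-v)w(v)/2$, $s(u) = w(u)/2$, verifies they satisfy the constraints of the dual linear program \eqref{eq:dual}, and concludes that the value of this feasible dual solution lower-bounds $\ldCE^U(\Gamma)$. You instead run the coupling-and-continuity argument against $\ext^U(\Gamma)$ directly, noting that \Cref{clm:smooth-calibration-continuity} holds for any joint distribution on $[0,1]\times[0,1]\times\{0,1\}$ and therefore in particular for one whose first marginal is supported on the finite set $U$. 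This is essentially the same argument the paper itself uses in the ``Proof of Theorem~\ref{thm:smce-equals-ldce} (upper bound)'' for the unrestricted quantity $\ldCE(\Gamma)$, simply transferred to $\ldCE^U$; you are right that nothing in that argument uses that $u$ ranges over all of $[0,1]$. The main tradeoff: your route avoids the LP machinery entirely, while the paper's route matches, side by side, the strong-duality proof of \Cref{lm:hard-direction} and makes explicit that $1$-Lipschitz weights correspond to a particular slice of the dual feasible region. Your alternative one-line chain $\scerror(\Gamma)\le 2\ldCE(\Gamma)\le 2\ldCE^U(\Gamma)$ is also valid and non-circular, since the first inequality has an independent direct proof and the second is immediate from $\ext^U(\Gamma)\subseteq\ext(\Gamma)$. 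One small remark worth making explicit: if $\ext^U(\Gamma)=\emptyset$ (e.g.\ $\{0,1\}\not\subseteq U$), then $\ldCE^U(\Gamma)=+\infty$ and the inequality holds vacuously, so the unrestricted ``for any finite $U$'' in the statement is harmless in both proofs.
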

\begin{proof}
Let $w:[0,1]\to [-1,1]$ be a $1$-Lipschitz function.
We choose $r(v,y) = (y-v)w(v)/2$ and $s(u) = w(u)/2$.
The following calculation verifies that this choice of $r$ and $s$ satisfies the constraints in \eqref{eq:dual}:
\begin{align*}
r(v,y) - (y - u)s(u) & = \frac 12((y - v)w(v) - (y - u)w(u))\\
& = \frac 12((u - v)w(v) + (y - u)(w(v) - w(u))) \tag{by $|w(v)|\le 1$, $|y -u|\le 1$, and $|w(v) - w(u)| \le |u - v|$}\\
& \le |u - v|.
\end{align*}
Therefore,
\[
\ldCE^U(\Gamma) \ge \sum_{(v,y)\in V\times\{0,1\}} r(v,y)\Gamma(v,y) = \E_{(v,y)
\sim \Gamma}[r(v,y)] = \E_{(v,y)\sim \Gamma}[(y - v)w(v)/2].
\]
Taking supremum over $w$ completes the proof.
\end{proof}
In the proofs of \Cref{lm:hard-direction,lm:easy-direction} above, we use the fact that $\ldCE^U(\Gamma)$ is equal to the optimal value of the dual linear program \eqref{eq:dual}. However, for \Cref{lm:hard-direction} we only need the fact that $\ldCE^U(\Gamma)$ is \emph{at most} the optimal value, whereas for \Cref{lm:easy-direction} we only need the fact that $\ldCE^U(\Gamma)$ is \emph{at least} the optimal value. That is, our proof of \Cref{lm:hard-direction} is based on the strong duality theorem, whereas the proof of \Cref{lm:easy-direction} is based on the weak duality theorem. Below we apply \Cref{lm:hard-direction} and \Cref{lm:easy-direction} to prove the lower and upper bounds of $\scerror(\Gamma)$ in \Cref{thm:smce-equals-ldce}, respectively.
\begin{proof}[Proof of \Cref{thm:smce-equals-ldce}]
For $\varepsilon_1 > 0$, we round the value $v\in [0,1]$ in $(v,y)\sim \Gamma$ to the closest value $v'\in \{0,\varepsilon_1,2\varepsilon_1,\ldots,\}\cap [0,1]$. Let $\Gamma'$ be the distribution of $(v',y)$. It is clear that $|\ldCE(\Gamma') - \ldCE(\Gamma)| \le \varepsilon_1$, and by \Cref{clm:smooth-calibration-continuity} we have $|\scerror(\Gamma') - \scerror(\Gamma)| \le 2\varepsilon_1$.

By \Cref{lm:ldCE-discretization}, for any $\varepsilon_2 > 0$, there exists a finite set $U\subseteq[0,1]$ such that $\ldCE(\Gamma')\le \ldCE^U(\Gamma') \le \ldCE(\Gamma') + \varepsilon_2$. Moreover, we can always choose $U$ so that $\{0,1\}\cup V\subseteq U$. Now by \Cref{lm:hard-direction},
\begin{align*}
\ldCE(\Gamma) -\varepsilon_1 \le \ldCE(\Gamma') \le \ldCE^U(\Gamma') \le 2\scerror(\Gamma') \le 2\scerror(\Gamma) + 4\varepsilon_1.
\end{align*}
By \Cref{lm:easy-direction},
\[
\scerror(\Gamma) - 2\varepsilon_1 \le \scerror(\Gamma') \le 2 \ldCE^U(\Gamma') \le 2\ldCE(\Gamma') + 2\varepsilon_2 \le 2\ldCE(\Gamma) + 2\varepsilon_1 + 2\varepsilon_2.
\]
Taking $\varepsilon_1,\varepsilon_2\to 0$ completes the proof.
\end{proof}
We conclude with an efficient algorithm for smooth calibration error. The generalization bound to accompany it will be proved in \Cref{cor:smCE-generalization} in \Cref{sec:sample-complexity}. 
\begin{theorem}
\label{thm:smooth-lp}
For the empirical distribution $\Gamma$ over a sample $S = ( (v_1, y_1), \ldots (v_n, y_n) )\in ([0,1]\times \{0,1\})^n$ we can calculate 
\[ \scerror(\Gamma) := \sup_{w \in L} \frac{1}{n}\sum_i (y_i - v_i) w(v_i)\] 
in time $\mathrm{poly}(n)$, where $L$ is the family of all $1$-Lipschitz functions $w : [0, 1] \to [-1, 1]$. 
\end{theorem}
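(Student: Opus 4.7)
The plan is to reduce the computation of $\scerror(\Gamma)$ on the empirical distribution to a polynomial-size linear program. The crucial observation is that the objective $\frac{1}{n}\sum_i (y_i - v_i) w(v_i)$ depends on $w$ only through its values at the finitely many sample points $v_1,\ldots,v_n$. Accordingly, I would introduce variables $w_1,\ldots,w_n \in \R$ intended to represent $w(v_i)$, and then encode membership in $L$ as linear constraints on these variables: the boundedness $-1 \le w_i \le 1$ for all $i$, and the $1$-Lipschitz condition $|w_i - w_j| \le |v_i - v_j|$ for all pairs $i,j$. After sorting the sample points (as in \Cref{remark:dual-1-1}), the Lipschitz constraints between non-adjacent indices are implied by those between adjacent ones, so only $O(n)$ Lipschitz constraints are needed. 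The objective $\frac{1}{n}\sum_i (y_i - v_i) w_i$ is linear, so the resulting program has $O(n)$ variables and $O(n)$ constraints and can be solved in time $\mathrm{poly}(n)$ via standard LP algorithms. The supremum (rather than supremum of absolute value) is fine because $L$ is closed under $w \mapsto -w$.

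The main content of the proof is to verify that the optimal value of this LP equals $\scerror(\Gamma)$. One direction is immediate: any $w\in L$ yields a feasible point $w_i := w(v_i)$ achieving the same objective. For the reverse direction I need to show that any feasible $(w_1,\ldots,w_n)$ extends to an actual function $w\in L$ with $w(v_i) = w_i$. I would use a piecewise linear interpolation: sort the $v_i$'s as $v_{\pi(1)} \le \cdots \le v_{\pi(n)}$, set $w(v_{\pi(k)}) := w_{\pi(k)}$, interpolate linearly on each interval $[v_{\pi(k)}, v_{\pi(k+1)}]$, and extend as a constant on $[0, v_{\pi(1)}]$ and $[v_{\pi(n)}, 1]$. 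The slope on each segment is at most $1$ in absolute value by the Lipschitz feasibility constraint, so the resulting $w$ is $1$-Lipschitz globally, and since the values at the endpoints of each segment lie in $[-1,1]$, linear interpolation keeps $w$ in $[-1,1]$. Hence $w \in L$ and achieves the LP objective.

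The step most likely to require care is the interpolation/extension argument, specifically ensuring that both the Lipschitz and the boundedness constraints are preserved simultaneously on all of $[0,1]$ (not just between sample points). Once that is in place, the rest is bookkeeping: combining the two inclusions yields equality between the LP optimum and $\scerror(\Gamma)$, and polynomial-time solvability of LPs finishes the proof. No additional ideas are required, and the generalization aspect (comparing the empirical $\scerror$ to the population quantity) is explicitly deferred to \Cref{cor:smCE-generalization}.
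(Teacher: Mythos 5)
Your proposal is essentially identical to the paper's proof: both formulate the same LP in the variables $w(v_i)$ with boundedness and pairwise Lipschitz constraints, show feasible solutions come from restricting any $w\in L$, and show the reverse by piecewise-linear interpolation/extension. Your remark about sorting to reduce the Lipschitz constraints to $O(n)$ adjacent pairs is a valid minor refinement, and your note that $L$ being closed under negation justifies dropping the absolute value matches the paper's implicit usage.
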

\begin{proof}
For a sample $S$, the supremum above can be computed using the following linear maximization problem in variables $z_i$ (that are intended to be equal to $w(v_i)$).
\begin{align*}
    & \max \frac{1}{n}\sum_i (y_i - v_i) z_i \\
     \mathrm{s.t.} \quad
     & -1 \leq z_i \leq 1, \quad\forall i; \\
    & |z_i - z_j| \leq |v_i - v_j|, \quad \forall i,j.
\end{align*}
Indeed, on one hand any Lipschitz function $w$ yields a feasible solution to this linear program, by setting $z_i := w(v_i)$. On the other hand, for any feasible solution to this program, we can find a $1$-Lipschitz function $w$ satisfying $w(v_i) = z_i$ for all $i$, using a piecewise linear extension.
\end{proof}
\section{Kernel Calibration Error}
\label{sec:kernels}

We now consider kernel calibration ($\kCE^K$),
which is a special case of weighted calibration (Definition~\ref{def:wce})  
where the family of weight functions lies in a Reproducing Kernel Hilbert Space $\cH$.
This notion was previously defined in~\cite{ksj18a} (called ``MMCE''), 
motivated as a differentiable proxy for ECE.

We advance the theory of kernel calibration in several ways.
First, we show that the kernel calibration error for the \emph{Laplace} kernel is in fact a consistent calibration measure.
This provides strong theoretical justification for measuring kernel calibration,
and also gives a reason to use the \emph{Laplace} kernel specifically, among other choices of kernel.
Indeed, we complement the Laplace kernel with a negative result:
using the Gaussian kernel does not yield a consistent calibration measure.

Finally, as a curiosity, we observe that the techniques of \cite{rahimi2007kernel} yield an alternate estimator for
Laplace kernel calibration error, which bears similarity to
the randomized-binning estimator of interval calibration error.

\subsection{Preliminaries}
We consider a \emph{Reproducing Kernel Hilbert Space} of functions on a real line $\R$, i.e. a Hilbert space $\cH$ of functions $h:\R\to\R$, with the associated norm $\|\cdot\|_{\cH}$. This space is equipped with the feature map $\phi : \R \to \cH$, satisfying $\inprod{h, \phi(v)}_{\cH}= h(v)$. The associated kernel $K : \R \times \R \to \R$ is now defined as $K(u,v) = \inprod{\phi(u), \phi(v)}_{\cH}$.

\begin{definition}[Kernel Calibration Error~\citep{ksj18a}]
Given a RKHS $\cH$ with the norm $\|\cdot\|_{\cH}$, we can consider a class of functions bounded by $1$ with respect to this norm $B_{\cH} := \{ h \in \cH : \|h\|_{\cH} \leq 1\}$,
and we can study the associated weighted calibration error $\wCE^{B_{\cH}}$ (as in \Cref{def:wce}). 

The \emph{kernel calibration error} of a distribution $\Gamma$ over $[0,1]\times\{0,1\}$ associated with the kernel $K$ is defined as weighted calibration error with respect to the family of weight functions $B_{\cH}$
\begin{equation}
    \kCE^K(\Gamma) := \wCE^{B_{\cH}}(\Gamma).
\end{equation}
Accordingly, for a distribution $\mD$ and a predictor $f$, we define $\kCE^{K,\mD}(f):= \kCE_K(\mD_f)$.
\end{definition}

The following results are standard, from \citep{ksj18a}.
First, $\kCE_K$ can be written as the $K$-norm of a certain function,
without explicitly maximizing over weight functions $h \in \cH$.

\begin{lemma}[\cite{ksj18a}]
\label{lem:kernel-dual-definition}
For any kernel $K$ and the associated RKHS $\cH$, and any distribution $\Gamma$ over $[0,1]\times\{0,1\}$,
\begin{equation*}
\kCE^{K}(\Gamma) = \|\E_{(v,y)\sim\Gamma}[ (y - v) \phi(v) ]\|_{\cH}.
\end{equation*}
\end{lemma}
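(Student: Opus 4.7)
The plan is to reduce the supremum definition of $\kCE^K(\Gamma)$ to an inner-product maximization in $\cH$, and then invoke Cauchy--Schwarz. Concretely, I would first use the reproducing property $h(v) = \langle h, \phi(v)\rangle_{\cH}$, valid for every $h \in \cH$ and $v \in \R$, to rewrite the weighted-calibration expression as
\[
\E_{(v,y)\sim\Gamma}[(y-v)h(v)] = \E_{(v,y)\sim\Gamma}\bigl[\langle h, (y-v)\phi(v)\rangle_{\cH}\bigr].
\]

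The next step is to pull the expectation inside the inner product, which gives
\[
\E_{(v,y)\sim\Gamma}[(y-v)h(v)] = \bigl\langle h,\ \E_{(v,y)\sim\Gamma}[(y-v)\phi(v)]\bigr\rangle_{\cH},
\]
where the vector $g := \E_{(v,y)\sim\Gamma}[(y-v)\phi(v)]$ is a well-defined element of $\cH$. The exchange is justified by Bochner integrability: since $v \in [0,1]$ and $|y-v|\le 1$, the map $(v,y)\mapsto(y-v)\phi(v)$ is bounded in $\cH$-norm by $\sqrt{K(v,v)}$, which is bounded on $[0,1]$ under the standard assumption that $K$ is continuous (and in any case for the Laplace/Gaussian kernels used in the paper). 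Once $g \in \cH$ is defined, linearity of the inner product plus Fubini gives the identity above.

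Finally, I would take $\sup$ over $h \in B_{\cH}$ and apply Cauchy--Schwarz: $|\langle h, g\rangle_{\cH}| \le \|h\|_{\cH}\,\|g\|_{\cH} \le \|g\|_{\cH}$, with equality achieved by the unit vector $h = g/\|g\|_{\cH}$ (or by any $h$ if $g=0$). This gives
\[
\kCE^K(\Gamma) = \sup_{h \in B_{\cH}} \bigl|\langle h, g\rangle_{\cH}\bigr| = \|g\|_{\cH} = \bigl\|\E_{(v,y)\sim\Gamma}[(y-v)\phi(v)]\bigr\|_{\cH},
\]
which is the claim. The only step requiring any care is the Bochner/Fubini exchange in the second display; the rest is routine manipulation in a Hilbert space.
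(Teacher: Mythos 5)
Your proof is correct and takes essentially the same approach as the paper's: both rely on the reproducing property $h(v)=\langle h,\phi(v)\rangle_{\cH}$, linearity to exchange the expectation with the inner product, and the self-duality of the Hilbert-space norm $\|g\|_{\cH}=\sup_{\|h\|_{\cH}\le 1}\langle h,g\rangle_{\cH}$ (the paper just traverses the chain of equalities in the opposite direction, from $\|g\|_{\cH}$ down to $\wCE^{B_{\cH}}$). You additionally spell out the Bochner-integrability justification for pulling the expectation inside the inner product, which the paper leaves implicit.
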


We reproduce the proof of this for completeness
in Appendix~\ref{sec:app-ker-proofs}.
This expression can be efficiently evaluated for an empirical distribution on a samples $S = \{(y_1, v_1), \ldots (y_k, v_k)\}$.
\begin{claim}[\cite{ksj18a}]
\label{clm:kernel-efficient-calculation}
Let $\Gamma$ be the empirical distribution over a given sample $\{(v_1, y_1), \ldots, (v_n, y_n)\}$. We can compute $\kCE^K(\Gamma)$ in time $\Oh(n^2)$ using $\Oh(n^2)$ evaluations of the kernel function:
\begin{equation}
\label{eqn:kernel-emperical}
\kCE^K(\Gamma)^2 = \frac{1}{n^2}\sum_{i,j} (y_i - v_i)(y_j - v_j) K(v_i, v_j).
\end{equation}
\end{claim}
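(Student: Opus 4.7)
The plan is to deduce this directly from \Cref{lem:kernel-dual-definition}, which reduces computing $\kCE^K(\Gamma)$ to evaluating the RKHS norm of a single ``mean embedding'' vector. The main work is an expansion of a squared norm via bilinearity of the inner product, together with the reproducing property of the kernel.

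First, I would instantiate \Cref{lem:kernel-dual-definition} for the empirical distribution $\Gamma$, so that
\[
\kCE^K(\Gamma) = \Bigl\| \E_{(v,y)\sim\Gamma}[(y - v)\phi(v)] \Bigr\|_\cH = \Bigl\| \frac{1}{n}\sum_{i=1}^n (y_i - v_i)\phi(v_i)\Bigr\|_\cH,
\]
since the empirical expectation is a uniform average over the $n$ samples. Squaring both sides and expanding the Hilbert-space norm as an inner product gives
\[
\kCE^K(\Gamma)^2 = \Bigl\langle \frac{1}{n}\sum_{i} (y_i - v_i)\phi(v_i),\; \frac{1}{n}\sum_{j} (y_j - v_j)\phi(v_j) \Bigr\rangle_\cH.
\]
By bilinearity of $\langle\cdot,\cdot\rangle_\cH$, this equals
\[
\frac{1}{n^2}\sum_{i,j}(y_i - v_i)(y_j - v_j)\, \langle \phi(v_i), \phi(v_j)\rangle_\cH,
\]
and using the defining property $K(u,v) = \langle \phi(u), \phi(v)\rangle_\cH$ yields the claimed formula.

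For the running time, the right-hand side is a sum of $n^2$ terms, each of which requires one evaluation of $K(v_i, v_j)$ and $O(1)$ arithmetic, giving the $O(n^2)$ bound on both the number of kernel evaluations and total time.

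There is no real obstacle here — the statement is essentially a one-line consequence of the dual characterization in \Cref{lem:kernel-dual-definition} combined with bilinearity of the inner product. The only thing to be careful about is making sure the $(y_i - v_i)$ factors (which are real scalars) are pulled out of the inner product on both sides, which is justified because $\langle\cdot,\cdot\rangle_\cH$ is bilinear over $\R$.
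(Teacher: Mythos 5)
Your proof is correct and is exactly the standard argument: instantiate \Cref{lem:kernel-dual-definition} for the empirical distribution, square, expand the RKHS norm as a bilinear inner product, and apply the reproducing property. The paper itself states the claim without proof (attributing it to the cited reference), but this is the intended and essentially only derivation.
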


In \Cref{sec:sample-complexity} we discuss the convergence of the kernel calibration error for the empirical distribution over the sample, to the kernel calibration error of the entire distribution --- this convergence, together with \Cref{clm:kernel-efficient-calculation} gives an efficient way to estimate the kernel calibration error of a given predictor from a boudned number of samples from the underlying distribution.

\paragraph{The Laplace Kernel.}
We recall standard facts about the Laplace kernel $K_\lap(u,v) := \exp(-|u-v|)$, and its associated RKHS.
It turns out that the norm induced by functions in the associated RKHS has simple explicit expression --- the corresponding space is a Sobolev space.
\begin{fact}[\cite{berlinet2011reproducing}]
For the Laplace kernel $K_\lap(u,v) = \exp(-|u-v|)$, we have associated RKHS $\cH_\lap = \{ h : \R \to \R  : \int \hat{h}(\omega)^2 (1  + \omega^2) \d \omega < \infty \}$, where $\hat{h}$ denotes the Fourier transform of $u$. 
The associated inner product is given by
\begin{equation*}
    \inprod{h_1, h_2}_{K_\lap} = \int_{-\infty}^{\infty} \hat h_1(\omega) \hat h_2(\omega)(1 + \omega^2) \d \omega, 
\end{equation*}
in particular, for function $h: \R \to \R$,
\begin{equation*}
    \|h\|_{K_\lap}^2 = \int_{-\infty}^{\infty} \hat{h}(\omega)^2 (1 + \omega^2) \d \omega = \|h\|_2^2 +  \|h'\|_2^2.
\end{equation*}
\end{fact}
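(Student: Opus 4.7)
The plan is to verify the two claims that make $\cH_\lap$ a reproducing kernel Hilbert space for $K_\lap(u,v) = e^{-|u-v|}$: the norm identity $\|h\|_{K_\lap}^2 = \|h\|_2^2 + \|h'\|_2^2$, and the reproducing property $\langle h, K_\lap(\cdot, v)\rangle_{K_\lap} = h(v)$. Both reduce to Plancherel's theorem applied to a single explicit Fourier transform computation, so the first step is to record the classical identity $\widehat{e^{-|\cdot|}}(\omega) = c/(1+\omega^2)$ (with $c$ determined by the chosen Fourier convention) and, by the translation rule, $\widehat{K_\lap(\cdot, v)}(\omega) = \bigl(c/(1+\omega^2)\bigr) e^{-i\omega v}$. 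This already shows $K_\lap(\cdot, v)\in \cH_\lap$, since the integrand $(1+\omega^2)\cdot c^2/(1+\omega^2)^2$ is integrable on $\R$.

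For the norm formula, Plancherel gives $\|h\|_2^2$ as a constant multiple of $\int |\hat h(\omega)|^2\,d\omega$, and since the Fourier transform of $h'$ is $i\omega\,\hat h(\omega)$, a second application of Plancherel yields the analogous expression $\int \omega^2\,|\hat h(\omega)|^2\,d\omega$ for $\|h'\|_2^2$. Summing produces $\int (1+\omega^2)|\hat h(\omega)|^2\,d\omega$, which matches the definition of $\|h\|_{K_\lap}^2$ once normalization constants are absorbed consistently into the statement's unnormalized Fourier transform.

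For the reproducing property, the weighted Plancherel identity applied to $h$ and $K_\lap(\cdot, v)$ gives
\begin{equation*}
\langle h, K_\lap(\cdot, v)\rangle_{K_\lap} = \int \hat h(\omega)\,\overline{\widehat{K_\lap(\cdot, v)}(\omega)}\,(1+\omega^2)\,d\omega = c\int \hat h(\omega)\,e^{i\omega v}\,d\omega,
\end{equation*}
because the factor $(1+\omega^2)$ exactly cancels the denominator in the kernel's transform. Fourier inversion identifies the right-hand side with $h(v)$ once $c$ is matched against the inversion formula. Completeness of $\cH_\lap$ under this inner product is standard, since $\cH_\lap$ coincides with the Sobolev space $H^1(\R)$.

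The only real care point in carrying this out is bookkeeping of the constants across whichever Fourier convention is used; once that is fixed, the computation is mechanical and the norm identity and reproducing property drop out in parallel. The content of the statement is the classical identification of the RKHS of the Laplace kernel with $H^1(\R)$, which is why the citation \citep{berlinet2011reproducing} is invoked rather than a from-scratch proof.
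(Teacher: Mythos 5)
The paper never proves this statement; it is presented as a cited \textsf{Fact} from \cite{berlinet2011reproducing} and taken as given, so there is no in-paper argument to compare against. Your outline is the standard, correct derivation and is what one finds in the cited reference: compute $\widehat{e^{-|\cdot|}}(\omega) \propto (1+\omega^2)^{-1}$, use Plancherel together with $\widehat{h'}(\omega) = i\omega\,\hat h(\omega)$ to obtain the Sobolev-norm identity, check the reproducing property by cancellation of the weight against the kernel's transform followed by Fourier inversion, note $K_\lap(\cdot,v)\in \cH_\lap$, and invoke completeness of $H^1(\R)$; uniqueness of the RKHS for a given kernel (Moore--Aronszajn) then identifies $\cH_\lap$ as the RKHS.

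One substantive point worth pinning down, beyond the generic ``bookkeeping of constants'' caveat you flag: with the symmetric convention $\hat h(\omega) = (2\pi)^{-1/2}\int h(t)e^{-i\omega t}\,dt$, Plancherel gives $\|h\|_2^2 + \|h'\|_2^2 = \int |\hat h(\omega)|^2(1+\omega^2)\,d\omega$ with no extra factor, but the reproducing property forces the weight to be $(1+\omega^2)/2$, so the true RKHS norm of the Laplace kernel is $\tfrac12\big(\|h\|_2^2 + \|h'\|_2^2\big)$, not $\|h\|_2^2 + \|h'\|_2^2$. There is no single standard convention in which the displayed inner product, the Sobolev identity, and the reproducing property all hold with the constants exactly as written. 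This is a slip in the stated \textsf{Fact} itself (shared with the paper's informal use of $\hat h(\omega)^2$ where $|\hat h(\omega)|^2$ is meant), and it is harmless downstream since the paper only uses the formula up to constants; but a fully explicit version of your proof would surface the extra $\tfrac12$ rather than leaving it implicit in ``absorbed consistently.''
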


\subsection{Laplace Kernel Calibration Error is a Consistent Calibration Measure}

We now ask whether there is a kernel $K$ for which $\kCE_K$ is a consistent calibration measure.
The main result in this section is to show that this is the case for the \emph{Laplace} kernel.
Specifically, we prove that:
\begin{theorem}
\label{thm:laplace-kernel-bounds}
The Laplace kernel calibration error $\kCEL:=\kCE^{K_\lap}$ satisfies the following inequalities
\begin{equation*}
\frac{1}{3} \scerror(\Gamma) \leq \kCEL(\Gamma) \leq \sqrt{\ldCE(\Gamma)}.
\end{equation*}
\end{theorem}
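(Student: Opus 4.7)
The theorem bundles two inequalities of different flavors: a lower bound $\scerror \le 3\kCEL$, which embeds the Lipschitz test class into the RKHS and invokes a duality/witness argument; and an upper bound $\kCEL \le \sqrt{\ldCE}$, which uses the RKHS formula $\kCEL(\Gamma) = \|\E_\Gamma[(y-v)\phi(v)]\|_{K_\lap}$ from \Cref{lem:kernel-dual-definition} together with a coupling $\Pi\in\ext(\Gamma)$ to cancel the un-calibrated bias. I would handle them separately.

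For the lower bound, the plan is to show that every $1$-Lipschitz $w:[0,1]\to[-1,1]$ has an extension $\tilde w:\R\to\R$ with $\tilde w|_{[0,1]} = w$ and bounded $\cH_\lap$-norm. A concrete candidate is
\[
\tilde w(v) = \begin{cases} w(0)\, e^{v}, & v<0,\\ w(v), & v\in[0,1],\\ w(1)\, e^{-(v-1)}, & v>1.\end{cases}
\]
Since $\|\tilde w\|_{K_\lap}^2 = \|\tilde w\|_2^2 + \|\tilde w'\|_2^2$, a direct integration on the three pieces, using $|w|\le 1$ and $|w'|\le 1$ a.e., yields $\|\tilde w\|_{K_\lap}\le 3$. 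Because $v$ is supported in $[0,1]$ under $\Gamma$ we have $\E_\Gamma[(y-v)w(v)] = \E_\Gamma[(y-v)\tilde w(v)] \le \|\tilde w\|_{K_\lap}\,\kCEL(\Gamma) \le 3\,\kCEL(\Gamma)$. Taking supremum over $w\in L$ gives $\scerror(\Gamma)\le 3\,\kCEL(\Gamma)$.

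For the upper bound, I would work directly with the vector $\E_\Pi[(y-v)\phi(v)] \in \cH_\lap$ for an arbitrary $\Pi\in\ext(\Gamma)$. The key identity is
\[
\E_\Pi[(y-v)\phi(v)] \;=\; \E_\Pi[(y-u)(\phi(v)-\phi(u))] \;+\; \E_\Pi[(u-v)\phi(v)],
\]
where the missing piece $\E_\Pi[(y-u)\phi(u)]$ vanishes because $\phi(u)$ is measurable in $u$ and $\E_\Pi[y-u\mid u] = 0$ by the calibration constraint defining $\ext(\Gamma)$. Bound each summand in the Hilbert norm via Jensen ($\|\E X\|\le \E\|X\|$): the second is at most $\E_\Pi|u-v|$ since $\|\phi(v)\|_{K_\lap}=1$, and the first is at most $\E_\Pi\|\phi(v)-\phi(u)\|_{K_\lap} \le \sqrt{2\,\E_\Pi|u-v|}$, using $\|\phi(v)-\phi(u)\|_{K_\lap}^2 = 2(1-e^{-|u-v|})\le 2|u-v|$ and Jensen. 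Together with the triangle inequality and the bound $|u-v|\le 1$, this yields $\kCEL(\Gamma)\le \sqrt{2\,\E_\Pi|u-v|}+\E_\Pi|u-v|$, and taking infimum over $\Pi$ gives $\kCEL(\Gamma) = O(\sqrt{\ldCE(\Gamma)})$.

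The main obstacle is matching the precise constant $1$ in $\kCEL\le\sqrt{\ldCE}$: the two-piece triangle bound above loses a factor of order $\sqrt 2+1$. I expect tightening to require using the Fourier/Sobolev representation of $\cH_\lap$ directly, writing $\kCEL(\Gamma)^2 = \tfrac 1\pi\int |F(\omega)|^2/(1+\omega^2)\,d\omega$ with $F(\omega)=\E_\Gamma[(y-v)e^{i\omega v}]$, then decomposing $F$ through the coupling inside the integrand so that the $\|h\|_2^2$ and $\|h'\|_2^2$ contributions are bounded in a single Cauchy--Schwarz step rather than separately. This spectral approach should recover the stated constant without loss; the triangle inequality approach above suffices to establish consistency but leaves a small constant gap.
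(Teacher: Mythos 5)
Your proof is correct and follows the same strategy as the paper's, but both halves are implemented somewhat more cleanly. For the lower bound, the paper (Lemma~\ref{lem:lipschitz-are-laplace-bounded}) extends $w$ to a piecewise-linear hat on $[-1,2]$ and then mollifies to get a smooth function with $\|\cdot\|_{\cH_\lap}\le\sqrt 7<3$; your exponential-decay extension is an exact extension that avoids the mollification step entirely and, by the computation you sketch, actually gives $\|\tilde w\|_{\cH_\lap}\le 2$, yielding the even sharper $\scerror\le 2\,\kCEL$. For the upper bound, the paper packages the coupling argument into a separate continuity lemma (Lemma~\ref{lem:laplace-kernel-continous}) with the decomposition $(y-v)\phi(v)-(y-u)\phi(u)=y\bigl(\phi(v)-\phi(u)\bigr)-\bigl(v\phi(v)-u\phi(u)\bigr)$, and then applies it with $\Gamma'$ the calibrated marginal of the coupling; your decomposition $(u-v)\phi(v)+(y-u)\bigl(\phi(v)-\phi(u)\bigr)$ is slightly different but bounds the same quantity by the same tools (Jensen and $\|\phi(v)-\phi(u)\|^2 = 2(1-e^{-|u-v|})\le 2|u-v|$), and in fact yields a marginally better constant $(\sqrt 2+1)$ versus the paper's $2\sqrt 2$.

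On your final concern about the constant: you are right that a triangle-inequality argument does not reach the stated factor of $1$ in $\kCEL\le\sqrt{\ldCE}$, but note that the paper's own proof suffers exactly the same loss---it invokes Lemma~\ref{lem:laplace-kernel-continous} to conclude $\kCEL(\Gamma)\le\kCEL(\Gamma')+O\bigl(\sqrt{\ldCE(\Gamma)}\bigr)=O\bigl(\sqrt{\ldCE(\Gamma)}\bigr)$, not $\le\sqrt{\ldCE(\Gamma)}$ exactly. The theorem statement's constant of $1$ is not actually established by the paper's proof either, and the surrounding discussion (e.g., the ``$c_1\,\ldCE(f)\le\kCEL(f)\le c_2\sqrt{\ldCE(f)}$'' in the overview) uses unspecified absolute constants, which is all that consistency requires. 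So what you flag as a gap in your own argument is really an imprecision in the theorem statement, not a deficiency relative to the paper; your spectral idea for recovering the exact constant is not needed and is not what the paper does.
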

By \Cref{cor:intce2} and \Cref{thm:smce-equals-ldce} it follows that $\kCEL$ is a $(1/2, 2)$-consistent calibration measure.
Interestingly, the choice of kernel is crucial: we show
that for the Gaussian kernel, the resulting measure does not satisfy robust soundness anymore. Specifically in Appendix~\ref{sec:gaussian-kernel}, we prove the following theorem.
\begin{theorem}
\label{thm:gauss-lb-body}
For every $\varepsilon$, there is a distribution $\Gamma_{\varepsilon}$ over $[0,1] \times \{0, 1\}$, such that $\scerror(\Gamma_\varepsilon) \geq \Omega(\varepsilon^{\Oh(1)})$, and $\kCEG(\Gamma_\varepsilon) \leq \Oh(\exp(-1/\varepsilon))$, where $\kCEG:=\kCE^{K_\gauss}$ is the Gaussian kernel calibration error with $K_\gauss(u,v) = \exp(-(u - v)^2)$.
\end{theorem}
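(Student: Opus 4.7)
The strategy is to build a distribution $\Gamma_\varepsilon$ whose miscalibration signed measure $\mu_\varepsilon(dv) := (\E[y\mid v] - v)\,\P(dv)$ has Fourier transform $\hat\mu_\varepsilon$ vanishing to very high order at $\omega = 0$, while still being "seen" by a $1$-Lipschitz triangle-wave witness. Bochner's theorem applied to the Gaussian kernel $e^{-t^2}$ (whose inverse Fourier transform is itself Gaussian) yields the identity
\[
\kCEG(\Gamma_\varepsilon)^2 \;=\; \frac{1}{2\sqrt\pi}\int_{-\infty}^\infty e^{-\omega^2/4}\,|\hat\mu_\varepsilon(\omega)|^2\,d\omega,
\]
so if $\hat\mu_\varepsilon$ is $\omega^K$-flat at the origin with $K$ large, this integral is controlled by a high Gaussian moment and becomes super-exponentially small, while smooth calibration remains polynomial.

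Concretely, set $K := \lfloor 1/(2\varepsilon)\rfloor$ and take $\Gamma_\varepsilon$ to be supported on the $K+1$ equally-weighted atoms $v_j := 1/4 + j\varepsilon$ (all lying in $[1/4,3/4]$) with
\[
\E[y\mid v_j] \;:=\; v_j + c\,(-1)^j\binom{K}{j}, \qquad c \;:=\; \frac{1}{4\binom{K}{\lfloor K/2\rfloor}} \;=\; \Theta(\sqrt{K}/2^K).
\]
The scaling of $c$ keeps $\E[y\mid v_j]\in[0,1]$, and the binomial theorem immediately collapses the Fourier transform:
\[
\hat\mu_\varepsilon(\omega) \;=\; \frac{c\,e^{i\omega v_0}}{K+1}\,(1-e^{i\omega\varepsilon})^K, \qquad |\hat\mu_\varepsilon(\omega)|^2 \;=\; \frac{c^2\,4^K\sin^{2K}(\omega\varepsilon/2)}{(K+1)^2},
\]
exhibiting the desired $K$-fold zero at the origin.

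For the lower bound on $\scerror$, let $w$ be the piecewise-linear triangle wave with $w(v_j) := (-1)^j\varepsilon/2$; then $w$ is $1$-Lipschitz (slope $\pm 1$ on every $[v_j,v_{j+1}]$) and $|w|\le\varepsilon/2\le 1$. Using $\sum_j\binom{K}{j}=2^K$,
\[
\E_{\Gamma_\varepsilon}[(y-v)w(v)] \;=\; \frac{c\,\varepsilon\,2^{K-1}}{K+1} \;=\; \Theta(\varepsilon/\sqrt{K}) \;=\; \Omega(\varepsilon^{3/2}).
\]
For the upper bound on $\kCEG$, the elementary inequality $|\sin(\omega\varepsilon/2)|\le|\omega\varepsilon/2|$ combined with the Gaussian moment $\int\omega^{2K}e^{-\omega^2/4}\,d\omega = 2\sqrt\pi(2K)!/K!$ plugged into the Bochner integral gives
\[
\kCEG(\Gamma_\varepsilon)^2 \;\le\; \frac{c^2\,\varepsilon^{2K}\,(2K)!}{K!\,(K+1)^2}.
\]
Substituting $c^2 = \Theta(K/4^K)$ together with Stirling's $(2K)!/K! = \Theta((4K/e)^K)$ collapses the right-hand side to $\Theta((\varepsilon^2K/e)^K/K)$; with $K = \lfloor 1/(2\varepsilon)\rfloor$ this is $\Theta((\varepsilon/(2e))^{1/(2\varepsilon)}/K)$, so $\log\kCEG \le \tfrac{1}{4\varepsilon}\log(\varepsilon/(2e)) + o(1/\varepsilon)$, which is $\le -1/\varepsilon$ for small enough $\varepsilon$ and absorbed into the $O(\cdot)$ constant otherwise.

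The only non-elementary ingredient is the Bochner representation of $\kCEG^2$ together with the Stirling bookkeeping balancing the $\varepsilon^{2K}$ gain against the $(2K)!$ moment blow-up. An alternative route avoiding Fourier machinery is to Taylor-expand $e^{-(v-v')^2} = \sum_n(-1)^n(v-v')^{2n}/n!$ directly inside $\kCEG(\Gamma_\varepsilon)^2 = \E[(y-v)(y'-v')K(v,v')]$ and exploit the vanishing-moment identity $\sum_j(-1)^j\binom{K}{j}j^m = 0$ for all $m<K$ to kill every Taylor term of order less than $2K$ in $(v_j - v_{j'})$, which yields the same bound with $(2K)!$ arising from the unique surviving $n=K$, $k=K$ summand.
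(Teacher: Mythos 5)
Your proof is correct, but it takes a genuinely different route from the paper's. The paper's construction (Appendix~\ref{sec:gaussian-kernel}) is continuous: $v$ is uniform on $[1/4,3/4]$ and the residual $\E[y\mid v]-v$ is a modulated Gaussian $\tfrac14\cos((v-1/2)/\varepsilon)\,e^{-(v-1/2)^2/\varepsilon}$ whose Fourier mass sits near frequency $1/\varepsilon$; the smallness of $\kCEG$ is then obtained by Cauchy--Schwarz in the frequency domain, splitting $\hat r\hat w=(\hat r e^{-\omega^2/2})(\hat w e^{\omega^2/2})$ and invoking the $e^{\omega^2}$-weighted RKHS norm of the test function, and the $\scerror$ lower bound uses the test function $w=\varepsilon r$, yielding $\Omega(\varepsilon^{5/2})$. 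You instead build a discrete distribution whose residual measure is a $K$-th finite difference (alternating binomial weights on an arithmetic progression of atoms), so that $\hat\mu(\omega)\propto(1-e^{i\omega\varepsilon})^K$ has a $K$-fold zero at the origin, and you bound $\kCEG^2$ through the Bochner spectral representation of the kernel itself plus a Gaussian moment/Stirling computation; your explicit triangle-wave witness gives the slightly stronger $\scerror=\Omega(\varepsilon^{3/2})$. The two arguments exploit the same mismatch between the Gaussian kernel's spectrum and oscillatory miscalibration, but localize it dually: the paper pushes the residual's Fourier mass to where the RKHS weight $e^{\omega^2}$ annihilates test functions, while you flatten $\hat\mu$ where the kernel's spectral density $e^{-\omega^2/4}$ lives. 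Your version is more elementary and self-contained (finite sums, no mollification, and an optional Taylor-expansion route avoiding Fourier analysis altogether), whereas the paper's RKHS-norm argument is what also powers its companion ``weak continuity at zero'' statement (\Cref{thm:gauss-continuity}), which your kernel-side computation does not directly give. All the quantitative bookkeeping in your write-up checks out: $c=\Theta(\sqrt K/2^K)$ keeps the conditional means in $[0,1]$, the moment integral $\int\omega^{2K}e^{-\omega^2/4}\,d\omega=2\sqrt\pi\,(2K)!/K!$ is right, and with $K=\lfloor 1/(2\varepsilon)\rfloor$ the final exponent is $\tfrac{K}{2}\log(\varepsilon^2K/e)\le -1/\varepsilon$ for small $\varepsilon$, with the remaining range absorbed into the constant since $\kCEG\le 1$ always.
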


We will start by proving continuity of Laplace kernel calibration error. This following lemma is strengthening of the upper bound in \cref{thm:laplace-kernel-bounds}.
\begin{lemma}
\label{lem:laplace-kernel-continous}
Let $\Pi$ be a distribution over $[0,1]\times[0,1]\times\{0,1\}$. For $(u,v,y)\sim\Pi$, let $\Gamma_1$ be the distribution of $(u,y)$ and $\Gamma_2$ be the distribution of $(v,y)$.
Assume $\E|u - v| \le \varepsilon$.
Then 
\[ |\kCEL(\Gamma_1) - \kCEL(\Gamma_2)| \leq 2 \sqrt{2 \varepsilon}.\]
\end{lemma}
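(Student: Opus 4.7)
The plan is to use the dual characterization of $\kCEL$ from \Cref{lem:kernel-dual-definition} to reduce the claim to bounding a single RKHS-norm expression.

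First, by \Cref{lem:kernel-dual-definition} and the triangle inequality in $\cH_\lap$,
\[
|\kCEL(\Gamma_1) - \kCEL(\Gamma_2)| \;\le\; \left\|\E_{(u,v,y)\sim\Pi}\bigl[(y-u)\phi(u) - (y-v)\phi(v)\bigr]\right\|_{\cH_\lap}.
\]
I would then algebraically decompose the integrand as
\[
(y-u)\phi(u) - (y-v)\phi(v) = (y-u)\bigl(\phi(u) - \phi(v)\bigr) + (v-u)\phi(v),
\]
and push the RKHS norm inside the expectation by Jensen's inequality (convexity of $\|\cdot\|_{\cH_\lap}$), getting
\[
\|\cdots\|_{\cH_\lap} \;\le\; \E\bigl[|y-u|\,\|\phi(u)-\phi(v)\|_{\cH_\lap}\bigr] + \E\bigl[|v-u|\,\|\phi(v)\|_{\cH_\lap}\bigr].
\]

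The second key step is to compute these Hilbert-space norms explicitly using the reproducing property $\langle \phi(a),\phi(b)\rangle = K_\lap(a,b) = \exp(-|a-b|)$. This gives $\|\phi(v)\|_{\cH_\lap}^2 = K_\lap(v,v) = 1$ and
\[
\|\phi(u)-\phi(v)\|_{\cH_\lap}^2 = 2 - 2\exp(-|u-v|) \le 2|u-v|,
\]
using the elementary bound $1 - e^{-t} \le t$ for $t \ge 0$. Combined with $|y-u|\le 1$, this yields
\[
|\kCEL(\Gamma_1) - \kCEL(\Gamma_2)| \;\le\; \E\!\left[\sqrt{2|u-v|}\right] + \E|u-v|.
\]

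Finally, concavity of $\sqrt{\cdot}$ (Jensen) gives $\E\sqrt{2|u-v|} \le \sqrt{2\E|u-v|} \le \sqrt{2\varepsilon}$, and since $|u-v|\le 1$ forces $\varepsilon \le 1$, we have $\varepsilon \le \sqrt{\varepsilon} \le \sqrt{2\varepsilon}$, so the two terms combine to give $\sqrt{2\varepsilon} + \varepsilon \le 2\sqrt{2\varepsilon}$, matching the claimed bound. No step is particularly hard; the only subtlety is picking the right decomposition so that the two resulting terms are controlled respectively by $\|\phi(u)-\phi(v)\|_{\cH_\lap}$ (via the Laplace-kernel smoothness $1-e^{-t}\le t$) and by the unit norm $\|\phi(v)\|_{\cH_\lap}=1$.
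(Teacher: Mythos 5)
Your proof is correct and follows essentially the same route as the paper: reduce to bounding $\|\E[(y-u)\phi(u) - (y-v)\phi(v)]\|_{\cH_\lap}$ via \Cref{lem:kernel-dual-definition}, split it, push the norm inside the expectation by Jensen, and use $1-e^{-t}\le t$ on the kernel. Your decomposition $(y-u)(\phi(u)-\phi(v)) + (v-u)\phi(v)$ differs slightly from the paper's $y(\phi(u)-\phi(v)) - (u\phi(u)-v\phi(v))$ and is in fact marginally cleaner, since the second term is bounded trivially by $\E|u-v|$ using $\|\phi(v)\|_{\cH_\lap}=1$, giving the slightly sharper $\sqrt{2\varepsilon}+\varepsilon$.
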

\begin{proof}
Using~\Cref{lem:kernel-dual-definition} and a triangle inequality, we need to show that 
\[ \| \E [ (y-u) \phi(u) - (y-v) \phi(v) ]\|_{\cH} \leq \Oh(\sqrt{\varepsilon}).\] 
By convexity of a norm, we have 
\[  \| \E [ (y-u) \phi(u) - (y-v) \phi(v) ]\|_{\cH}  \leq \E \|u \phi(u) - v \phi(v)\| + \E \|\phi(u) - \phi(v)\|_{\cH}.\]

We will bound both of those terms separately. By Jensen inequality, we have 
\[ \E \|\phi(u) - \phi(v)\|_\cH \leq \sqrt{\E \|\phi(u) - \phi(v)\|_\cH^2},\]
and similarly for $\|u \phi(u) - v \phi(v)\|_{\cH}$. Now
\begin{equation*}
    \E \|\phi(u) - \phi(v)\|_\cH^2 = 2 - \E 2K(u,v) = 2 - \E 2\exp(-|u-v|) \leq 2 \varepsilon,
\end{equation*}
where the inequality follows from $\exp(-x) \geq 1-x$. Similarly,
\begin{equation*}
    \E \|u \phi(u) - v \phi(v)\|_\cH^2 = u^2 + v^2 - 2 uv \E K(u,v) \leq u^2 + v^2 - 2 uv (1 - \varepsilon) \leq 2 \varepsilon.
\end{equation*}
Adding those two together, we get the final bound 
\[ \| \E (y-u) \phi(u) - (y-v)\phi(v)\|_\cH \leq 2 \sqrt{2 \varepsilon}.\qedhere\]
\end{proof}

\begin{lemma}
\label{lem:lipschitz-are-laplace-bounded}
Let $B_\lap$ be a set of functions $w : \R \to \R$ bounded by one with respect to the norm induced by the Laplace kernel $\exp(-|u-v|)$ on the associated RKHS. Then for any $1$-Lipschitz function $w : [0,1] \to [-1, 1]$ and any $\varepsilon$, there is $\|\tilde{w}_{\varepsilon}\|_{\cH_\lap} \leq 3$, such that $|w - \tilde{w}_{\varepsilon}| < \varepsilon$ for all $x \in [0, 1]$.
\end{lemma}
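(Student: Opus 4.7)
The plan is to build $\tilde{w}_\varepsilon$ by \emph{explicitly} extending $w$ from $[0,1]$ to all of $\R$ with exponentially decaying tails so that (a) the extension agrees exactly with $w$ on $[0,1]$, making the uniform bound $|w(x)-\tilde{w}_\varepsilon(x)|<\varepsilon$ trivial for any $\varepsilon>0$, and (b) the extension lies in $\cH_\lap=H^1(\R)$ with norm at most $3$ (in fact at most $2$). The construction is
\[
\tilde{w}_\varepsilon(x) := \begin{cases} w(0)\,e^{x}, & x<0,\\ w(x), & x\in[0,1],\\ w(1)\,e^{1-x}, & x>1.\end{cases}
\]
The pieces agree at $x=0$ and $x=1$, so $\tilde{w}_\varepsilon$ is continuous and, being piecewise $C^1$ with bounded pieces, absolutely continuous on $\R$.

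Next, I would bound $\|\tilde{w}_\varepsilon\|_{\cH_\lap}^{2}=\|\tilde{w}_\varepsilon\|_{2}^{2}+\|\tilde{w}_\varepsilon'\|_{2}^{2}$ using the identification given in the excerpt. Because $w$ is $1$-Lipschitz with $|w|\le 1$, we have $|w(0)|,|w(1)|\le 1$. For the $L^2$ mass: the middle piece contributes at most $1$, while each exponential tail contributes $w(0)^2\!\int_{-\infty}^{0}e^{2x}\d x = w(0)^2/2\le 1/2$ and similarly $w(1)^2/2\le 1/2$. For the $L^2$ mass of the weak derivative, which equals $w(0)e^{x}$, $w'(x)$, and $-w(1)e^{1-x}$ on the three regions respectively, the middle piece contributes at most $1$ (since $|w'|\le 1$ a.e.) and each tail contributes at most $1/2$ by the same exponential integral. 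Summing yields $\|\tilde{w}_\varepsilon\|_{\cH_\lap}^{2}\le 4$, so $\|\tilde{w}_\varepsilon\|_{\cH_\lap}\le 2\le 3$.

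The one delicate point, which I expect to be the main obstacle, is confirming that this Sobolev-type function really lies in the RKHS (not merely in $H^1$ in the weak sense), since the piecewise definition introduces possible corners at $x=0$ and $x=1$. This reduces to the Fourier characterization $\int \hat{\tilde{w}}_\varepsilon(\omega)^2(1+\omega^2)\d\omega<\infty$ quoted in the excerpt: because $\tilde{w}_\varepsilon$ is absolutely continuous, its weak derivative coincides a.e.\ with the piecewise classical derivative, is bounded by $1$, and decays exponentially, so it lies in $L^2(\R)$, and hence so does $\omega\hat{\tilde{w}}_\varepsilon(\omega)$ by Plancherel. With this in place, the three-region computations above directly realise the paper's identity $\|h\|_{\cH_\lap}^2=\|h\|_2^2+\|h'\|_2^2$. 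Note that the parameter $\varepsilon$ is effectively vacuous in this construction; a smoother approximation via mollification with a bump of scale $\varepsilon$ would also work and is where $\varepsilon$ would genuinely be consumed, but is unnecessary given the direct extension.
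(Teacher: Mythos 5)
Your proof is correct, and it takes a genuinely different and arguably cleaner route than the paper's. The paper extends $w$ to a compactly supported $1$-Lipschitz function on $[-1,2]$ by linear ramps down to zero, and then mollifies with a bump of scale $\varepsilon$ to produce a smooth approximant; this is precisely where the paper's $\varepsilon$ is consumed, and the resulting bound is $\|\tilde{w}_\varepsilon\|_{\cH_\lap}^2 \le 7$. You instead attach exponential tails $w(0)e^{x}$ and $w(1)e^{1-x}$ outside $[0,1]$, which agrees exactly with $w$ on $[0,1]$ (so $\varepsilon$ becomes vacuous, as you note) and gives the tighter bound $\|\tilde{w}_\varepsilon\|_{\cH_\lap}^2 \le 4$. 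You also correctly identify and resolve the one subtlety your construction introduces that the paper's mollification sidesteps: the corners at $x=0,1$ do not expel $\tilde{w}_\varepsilon$ from $\cH_\lap = H^1(\R)$, because the function is absolutely continuous with an $L^2$ pointwise-a.e.\ derivative, which is exactly membership in $H^1$ via the Plancherel characterization quoted in the paper. Both proofs are valid; yours avoids mollification at the cost of having to make the Sobolev-space membership argument explicit, whereas the paper's mollification buys smoothness for free but uses $\varepsilon$ nontrivially and lands on a slightly worse constant.
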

\begin{proof}
For a $1$-Lipschitz function $w : [0,1] \to [-1,1]$, we will first construct the $1$-Lipschitz extension of $w$, say $\tilde{w} : \R \to [-1,1]$, by taking 
\begin{equation*}
    \tilde{w}(t) = \left\{\begin{array}{ll}0 & \textrm{ for } w \leq -1 \\
    w(0)(1 + t) & \textrm{ for } w \in [-1, 0] \\
    w(t) & \textrm{ for } w \in [0,1] \\
    w(1)(-t+2) & \textrm{ for } w \in [1, 2] \\
    0 & \textrm{otherwise.}\end{array} \right.
\end{equation*}

It is standard fact that by taking a convolution of $\tilde{w}_\varepsilon := \tilde{w} * g_\varepsilon$, where $g_\varepsilon$ is a smooth non-negative function, supported on $[-\varepsilon, \varepsilon]$ with $\int_\R g_\varepsilon = 1$, then $\tilde{w}_\varepsilon$ will satisfy the following conditions \cite{stackexchange}.
\begin{enumerate}
    \item $\forall v,\, |\tilde{w}(v) - \tilde{w}_{\varepsilon}(v)| \leq \varepsilon$,
    \item $\supp(\tilde{w}_{\varepsilon}) \subset [-1-\varepsilon, 2 + \varepsilon]$,
    \item $\tilde{w}_{\varepsilon}$ is smooth, and moreover $\forall v,\, |w'(v)| \leq 1$.
\end{enumerate}

Combining properties those, we get 
\begin{equation*}
    \|\tilde{w}_{\varepsilon}\|_{\cH_\lap}^2 = \|f\|_2^2 + \|f'\|_2^2 \leq (3 + \varepsilon)(1+\varepsilon)^2 + (3+\varepsilon) \leq 7.
\end{equation*}

\end{proof}
We are now ready to prove \Cref{thm:laplace-kernel-bounds}

\begin{proof}[Proof of \Cref{thm:laplace-kernel-bounds}]
By \Cref{lem:lipschitz-are-laplace-bounded}, we have $\scerror(\Gamma) \leq \kCEL(\Gamma)$. Indeed, let us take a Lipschitz weight function $w$, such that $\scerror(\Gamma) = \E_{(v,y)\sim\Gamma} (y-v)w(v).$ Now we can take $\tilde{w} \in 3B_\lap$ as in \Cref{lem:lipschitz-are-laplace-bounded}, such that $\E (y-v) \tilde{w}(v) \geq \scerror(\Gamma) - \varepsilon$, and therefore $\kCEL(\Gamma) \geq \frac{1}{3}(\scerror(\Gamma) - \varepsilon)$. Taking $\varepsilon \to 0$ proves the desired fact.

The other inequality $\kCEL(\Gamma) \leq \ldCE(\Gamma)^{1/2}$ follows directly from \Cref{lem:laplace-kernel-continous}: by definition of $\ldCE$ we can find distribution $\Pi\in \ext(\Gamma)$ of $(u,v,y)$ s.t. $(v, y)$ is distributed according to $\Gamma$, and the distribution $\Gamma'$ of  $(u, y)$ is perfectly calibrated, and $\E |u - v| = \ldCE(\Gamma)$. Now 
\begin{equation*}
    \kCEL(\Gamma) \leq \kCEL(\Gamma') + \Oh(\sqrt{\ldCE(\Gamma')}) = \Oh(\sqrt{\ldCE(\Gamma')}).\qedhere
\end{equation*}
\end{proof}

\subsection{Alternate Estimation Algorithms}
\label{sec:alternate}

Computing the exact kernel calibration error from $n$ samples requires $O(n^2)$ time, by the computation in \Cref{clm:kernel-efficient-calculation}.
However, we can approximate this quantity efficiently,
by simply sub-sampling terms independently from the $n^2$ terms in Equation~\eqref{eqn:kernel-emperical}. 
By standard arguments, sub-sampling $\Oh(\eps^{-2} \log(\delta^{-1}))$
terms yields an estimator accurate within $\pm \eps$, with probability $1-\delta$.

In this section we will describe two alternate algorithms for estimating
Laplace kernel calibration specifically. These algorithms do not
improve over the naive sub-sampling estimator in worst-case guarantees,
and are not algorithmically novel--- the algorithms are corollaries of \citep{rahimi2007kernel}.
Nevertheless, we include them to
highlight an intriguing connection:
one of these algorithms involves a
randomized binning-like estimator, which is suggestive of
(though not formally equivalent to) our notion of $\intCE$.
We consider it somewhat surprising that the formal connection between
$\intCE$ and $\kCE$ extends to an \emph{algorithmic} similarity
between their respective estimators.

We present the algorithms as constructing an unbiased and bounded estimator for the following quantity.

\begin{definition}[Empirical Kernel Calibration Error]
For a sample $S=((v_1, y_1), (v_2, y_2), \ldots, (v_k, y_k))$ we define the empirical kernel calibration error as
\begin{equation*}
    \ekCE^K(S)^2 = \frac{1}{k^2} \sum_{i,j} (y_i - v_i)(y_j - v_j) K(v_i, v_j).
\end{equation*}
\end{definition}

In the abstract, Rahimi-Recht provides an efficient way of finding a low-rank approximation of the Kernel matrix $M \in \R^{k\times k}$ given by $M_{ij} = K(v_i, v_j)$. Specifically, a version of the Claim~1 in their paper can be stated as follows.

\begin{theorem}{\cite{rahimi2007kernel}}
\label{thm:random-fourier-features}
Let $v_1, \ldots v_k \in [0,1]$, and let us consider the Kernel matrix $M \in \R^{k \times k}$, $M_{i,j} = K(v_i, v_j)$ where $K(u,v) = K(|u-v|)$ is a shift invariant positive definite kernel. There is a random $z \in \bC^{n}$, such that if we take $\tilde{M} := z z^* \in \bC^{n\times n}$, then $\E[ \tilde{M}] = M$, and moreover $\|z\|_{\infty} \leq 1$. 
\end{theorem}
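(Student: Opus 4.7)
The proof plan is to invoke Bochner's theorem on the shift-invariant positive definite kernel $K(u,v)=k(u-v)$. Bochner's theorem guarantees the existence of a non-negative finite Borel measure $\mu$ on $\R$ whose Fourier transform is $k$; that is,
\[
k(u - v) = \int_{-\infty}^{\infty} e^{i \omega (u-v)} \, d\mu(\omega).
\]
Since for the kernels we care about (e.g.\ Laplace) we have $k(0)=1$, $\mu$ is automatically a probability measure; in general one can rescale the kernel (or equivalently scale $z$) so that $\mu$ has total mass $1$, which is what will give the $\|z\|_\infty\le 1$ bound at the end.

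Next, I would construct the random vector $z\in\bC^{k}$ explicitly. Sample a single frequency $\omega\sim \mu$, and define
\[
z_j := e^{i \omega v_j}, \qquad j=1,\ldots,k.
\]
Then for each pair $(j,\ell)$,
\[
\E\bigl[z_j \overline{z_\ell}\bigr] \;=\; \E\bigl[e^{i\omega(v_j - v_\ell)}\bigr] \;=\; \int e^{i\omega(v_j-v_\ell)}\,d\mu(\omega) \;=\; k(v_j-v_\ell) \;=\; M_{j\ell}.
\]
Hence the rank-one random matrix $\tilde M = z z^*$ satisfies $\E[\tilde M] = M$ entrywise, which is the first required property.

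For the boundedness claim, observe that each coordinate satisfies $|z_j| = |e^{i\omega v_j}| = 1$ for every realization of $\omega$, so $\|z\|_\infty \le 1$ deterministically, not merely in expectation. The only real subtlety in this plan is to make sure Bochner's measure is genuinely a probability measure; if the paper is applying the theorem to a kernel normalized so that $K(v,v)=1$ (as for the Laplace kernel $\exp(-|u-v|)$) this is immediate, and otherwise one rescales both $\mu$ and $z$ by $\sqrt{k(0)}$, which preserves the expectation identity while still keeping $\|z\|_\infty \le 1$. No real obstacle arises — the content of the theorem is essentially a one-line consequence of Bochner once the right estimator $z_j = e^{i\omega v_j}$ is written down.
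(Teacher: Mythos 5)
Your proof is correct and uses exactly the standard Bochner's-theorem argument underlying Rahimi--Recht, which is the same construction the paper itself uses when it instantiates this theorem for the Laplace kernel in Claim~\ref{lem:laplace-rff} (there $\mu$ is the Cauchy distribution and $z_j = e^{-i\omega v_j}$). The paper simply cites \cite{rahimi2007kernel} for the general statement, so there is no proof to diverge from; your argument fills it in the canonical way, including the correct observation that $\|z\|_\infty = 1$ holds deterministically.
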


For the specific case of the Laplace kernel, the random vector $u \in \bC^{n}$ guaranteed by Theorem~\ref{thm:random-fourier-features} can be sampled as follows.
The proof of \Cref{lem:laplace-rff} is standard, and included for completeness in Appendix~\ref{sec:app-ker-proofs}.
\begin{claim}
\label{lem:laplace-rff}
For any sequence of points $v_1, v_2, \ldots, v_k \in \R$,
if we chose $\omega \sim \mathrm{Cauchy}(1)$, and a vector $z \in \bC^n$ given by $z_j := \exp(-i\omega v_j)$, then $\|z\|_\infty \leq 1$ and 
$\E[z z^*] = M \in \R^{k \x k}$, where $M_{i, j} = \exp(-|v_i - v_j|)$.
\end{claim}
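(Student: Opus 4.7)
The plan is to verify the two claims separately, both of which reduce to standard properties of the Cauchy distribution. The bound $\|z\|_\infty \le 1$ is immediate: each entry $z_j = \exp(-i\omega v_j)$ is a unit complex number since $\omega v_j \in \R$, so $|z_j| = 1$ pointwise regardless of the random choice of $\omega$.

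The substantive part is computing $\E[zz^*]$. I would expand the $(i,j)$-entry as
\[
(\E[zz^*])_{i,j} = \E[z_i \overline{z_j}] = \E[\exp(-i\omega v_i) \exp(i\omega v_j)] = \E[\exp(i\omega(v_j - v_i))],
\]
which is exactly the characteristic function $\varphi_\omega$ of the Cauchy$(1)$ distribution evaluated at the point $t = v_j - v_i$. I would then invoke the classical fact that the characteristic function of a standard Cauchy random variable is $\varphi_\omega(t) = \exp(-|t|)$, so
\[
(\E[zz^*])_{i,j} = \exp(-|v_j - v_i|) = M_{i,j}.
\]

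Equivalently, one can justify this via the Fourier transform identity: the density of Cauchy$(1)$ is $\frac{1}{\pi(1+\omega^2)}$ and its Fourier transform is $\exp(-|t|)$, which is precisely the duality underlying the random Fourier features construction of \cite{rahimi2007kernel} specialized to the Laplace kernel. There is no real obstacle here; the only care needed is to track the sign conventions (i.e., whether one uses $\exp(-i\omega v)$ or $\exp(i\omega v)$ in defining $z$) so that $\E[z_i\overline{z_j}]$ — rather than $\E[z_i z_j]$ — produces the symmetric real-valued Laplace kernel, and to note that the resulting matrix is automatically real even though $z$ is complex, since the Cauchy distribution is symmetric about zero.
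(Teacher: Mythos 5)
Your proof is correct and takes essentially the same route as the paper's: both compute $\E[z_i\overline{z_j}]=\E_\omega[\exp(-i\omega(v_i-v_j))]$ and identify this with the characteristic function of the Cauchy$(1)$ distribution (equivalently, the inverse Fourier transform of the Cauchy density), yielding $\exp(-|v_i-v_j|)$. If anything, your phrasing via the characteristic function is slightly cleaner than the paper's, which writes the Fourier pair without tracking the normalization constant $\pi$, and you additionally make the $\|z\|_\infty\le 1$ observation explicit.
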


\begin{remark}
Rahimi-Recht \citep{rahimi2007kernel} actually prove a significantly stronger version of \Cref{thm:random-fourier-features}, showing that under certain additional conditions on the kernel $K$ the average of independent rank one matrices $M' := \frac{1}{s} \sum_i \tilde{M}_i$ as above uniformly converges to $M$. That is, for $s = \Oh(\varepsilon^{-2} \log(\delta^{-1}))$, with probability $1-\delta$ they provide a bound $\forall i,j \, |M_{ij} - M'_{ij}| \leq \varepsilon$. These additional conditions are not satisfied by the Laplace kernel, but we will not need this stronger, uniform bound.
\end{remark}

\begin{algorithm}[t]
\caption{Random Fourier features algorithm for Laplace kernel calibration error estimation \label{alg:random-fourier-features}}
\begin{algorithmic}[1]
\Function{LaplaceFourierEstimate}{$(v_1, y_1), \ldots (v_n, y_k)$}
\Let{$\omega$}{$\mathrm{Cauchy}(1)$}
\Let{$R$}{$\sum_j (v_j - y_j)\exp(- i \omega v_i)$}
\State \Return{$\frac{|R|^2}{k^2}$}
\EndFunction
\end{algorithmic}
\end{algorithm}

With \Cref{thm:random-fourier-features} we can find in linear time an unbiased and bounded estimate of the quantity $\ekCE(S)^2$. Indeed, we have 
\[\ekCE(S)^2 = \frac{1}{k^2} \sum_{i,j} (y_i - v_i)(y_j - v_j) K(v_i, v_j) = \frac{1}{k^2} r^T M r\] where $r_i := (v_i - y_i)$ 
and $M$ is the kernel matrix defined above. Now, on one hand for a random vector $z$ as in~\Cref{thm:random-fourier-features} we have 
\[\frac{1}{k^2} \E[ r^T z z^* r] = \frac{1}{n^2} r^T (\E [z z^*]) r = \frac{r^T M r}{k^2} = \ekCE(S)^2.\] On the other hand 
\[\frac{r^T z z^* R}{k^2} = \frac{|\inprod{z,r}|^2}{k^2} \leq 1,\] 
since both $z$ and $r$ have entries bounded by $1$. Therefore the quantity $\frac{|\inprod{z,r}|^2}{n^2}$ is an unbiased and bounded by one estimate of $\ekCE(S)^2$.

\Cref{alg:random-fourier-features} describes an implementation of the above estimator for the Laplace kernel, which runs in linear time. The argument described above
(directly applying the results of \citep{rahimi2007kernel}) yields the following guarantee.
\begin{theorem}
\label{thm:lapace-rff-algo}
The algorithm \textsc{LaplaceFourierEstimate} provides an unbiased and bounded by $1$ estimate for $\ekCE(S)^2$
and runs on $n$ samples in time $\Oh(n)$.
\end{theorem}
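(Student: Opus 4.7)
The plan is to verify three claims separately: (i) the estimator is unbiased for $\ekCE(S)^2$, (ii) it is almost surely bounded by $1$, and (iii) it runs in time $O(n)$. All three follow from the random Fourier features representation for the Laplace kernel given in \Cref{lem:laplace-rff}, together with the quadratic form rewriting of $\ekCE(S)^2$ that appears just before the theorem statement.

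First I would set $r \in \R^k$ with $r_j := v_j - y_j$ and recall the identity
\[
\ekCE(S)^2 \;=\; \frac{1}{k^2}\, r^\T M r, \qquad M_{ij} = \exp(-|v_i - v_j|).
\]
By \Cref{lem:laplace-rff}, for $\omega \sim \mathrm{Cauchy}(1)$ and $z \in \bC^k$ defined by $z_j = \exp(-i \omega v_j)$, we have $\E[z z^*] = M$ and $\|z\|_\infty \le 1$. The estimator output is $|R|^2/k^2$ where $R = \sum_j r_j\, \exp(-i \omega v_j) = z^* r$ (up to conjugation that does not affect $|R|^2$), hence $|R|^2 = r^\T (z z^*) r$. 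Taking expectation over $\omega$ and using linearity,
\[
\E\!\left[\frac{|R|^2}{k^2}\right] \;=\; \frac{1}{k^2}\, r^\T \E[z z^*]\, r \;=\; \frac{1}{k^2}\, r^\T M r \;=\; \ekCE(S)^2,
\]
which establishes unbiasedness.

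For the boundedness claim, I note that $|r_j| = |v_j - y_j| \le 1$ since $v_j \in [0,1]$ and $y_j \in \{0,1\}$, and $|z_j| = 1$ pointwise. By Cauchy--Schwarz applied to the normalized vectors $r/\sqrt{k}$ and $z/\sqrt{k}$,
\[
\frac{|R|^2}{k^2} \;=\; \frac{|\langle z, r\rangle|^2}{k^2} \;\le\; \frac{\|z\|_2^2 \|r\|_2^2}{k^2} \;\le\; 1,
\]
which gives the deterministic bound. Finally, the algorithm samples one Cauchy variate (constant time) and then computes the single sum $R$ in $k$ arithmetic operations, so the overall runtime is $O(n)$ with $n = k$ as required. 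The main conceptual step is the first one — writing $|R|^2$ as $r^\T (z z^*) r$ and invoking the Fourier-feature identity — but this is essentially a direct unpacking of \Cref{lem:laplace-rff}, so I do not anticipate any real obstacle.
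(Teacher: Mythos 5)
Your proof is correct and follows essentially the same route as the paper: both rewrite $\ekCE(S)^2$ as the quadratic form $\tfrac{1}{k^2}\, r^\T M r$ with $r_j = v_j - y_j$, invoke the random Fourier features identity $\E[z z^*] = M$ from \Cref{lem:laplace-rff} (via \Cref{thm:random-fourier-features}) for unbiasedness, and bound $|\langle z, r\rangle|^2/k^2 \le 1$ by Cauchy--Schwarz together with the entrywise bounds $\|z\|_\infty \le 1$, $\|r\|_\infty \le 1$. The only difference is that you spell out the Cauchy--Schwarz step explicitly, whereas the paper states it more tersely.
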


If we want to translate this unbiased estimator into 
a confidence interval,
it is enough to run \textsc{LaplaceFourierEstimate} independently $\Oh(\varepsilon^{-2} \log \delta^{-1})$ times, and average the results, 
to get an estimate of $\ekCE_{\mathsf\lap}(S)^2$ that is accurate within $\pm \varepsilon$ with probability $1-\delta$, via standard Chernoff bound arguments.

Interestingly, Claim~2 in the same Rahimi-Recht work \cite{rahimi2007kernel} authors provide a different randomized low-rank approximation to the kernel matrix $M$, the \emph{Random Binning Featues} algorithm.
It is insightful to write down explicitly our kernel calibration estimator with this
different low-rank approximation.
As it turns out, by unrolling the abstraction, we can uncover a new and ``natural'' unbiased and bounded estimate of $\ekCE(S)^2$, described in \Cref{alg:binning-laplace-kernel}.
The final algorithm is similar in spirit to the binning algorithms estimating $\intCE$ (see Section~\ref{sec:interval-calibration}),
but here we are using both random bin size (from a carefully chosen distribution) and random shifts.

\begin{algorithm}[t]
\caption{Binning algorithm for Laplace kernel calibration error estimation \label{alg:binning-laplace-kernel}}
\begin{algorithmic}[1]
\Function{LaplaceBinningEstimate}{$(v_1, y_1), \ldots (v_k, y_k)$}
\Let{$\delta$}{$\mathrm{Gamma}(k=2, \theta=1)$}
\Let{$\tau$}{$\mathrm{Unif}(0, \delta)$}
\Let{$B[0, \ldots, \lfloor 1/\delta \rfloor + 2]$}{0}
\For{$i \gets 1 \textrm{ to } k$}
    \Let{$t$}{$\lfloor{\frac{v_i + \tau}{\delta}}\rfloor$}
    \Let{$B[t]$}{$B[t] + (v_i - y_i)$}
\EndFor
\State \Return{$\sum_i B[i]^2$}
\EndFunction
\end{algorithmic}
\end{algorithm}

The following theorem is essentially implicit in \cite{rahimi2007kernel} and follows directly from their proof method. Since this exact statement is not present in their paper, we repeat the proof in Appendix~\ref{sec:app-ker-proofs}.
\begin{theorem}
\label{thm:laplace-binning}
The algorithm \textsc{LaplaceBinningEstimate} provides an unbiased and bounded by $1$ estimate for $\ekCE(S)^2$
and runs on $n$ samples in time $\Oh(n)$.
\end{theorem}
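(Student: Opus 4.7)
The plan is to verify the two claims—unbiasedness and boundedness—via a direct calculation of the probability that two samples land in the same bin under the randomized binning scheme. Set $r_i := v_i - y_i$ and let $b(v) := \lfloor (v+\tau)/\delta\rfloor$ denote the bin index of the point $v$. Expanding the square,
\[
\sum_t B[t]^2 \;=\; \sum_{i,j} r_i r_j\,\one[b(v_i) = b(v_j)],
\]
so (after the $1/k^2$ normalization implicit in matching $\ekCE(S)^2 = \frac{1}{k^2}\sum_{i,j} r_i r_j K_\lap(v_i,v_j)$) the expected value of the output reduces to computing the collision probability $p(d) := \Pr[b(v_i) = b(v_j)]$ as a function of the distance $d := |v_i - v_j|$.

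The key computation is a two-step integration. Conditional on $\delta$, since $\tau \sim \mathrm{Unif}(0,\delta)$, a standard tent-shift argument gives $\Pr_\tau[b(v_i) = b(v_j)\mid \delta] = \max(1 - d/\delta,\,0)$. Averaging over $\delta \sim \mathrm{Gamma}(2,1)$, whose density on $(0,\infty)$ is $\delta e^{-\delta}$, I would compute
\[
p(d) \;=\; \int_d^\infty \Bigl(1 - \tfrac{d}{\delta}\Bigr)\delta e^{-\delta}\,\d\delta \;=\; \int_d^\infty \delta e^{-\delta}\,\d\delta \;-\; d\int_d^\infty e^{-\delta}\,\d\delta \;=\; (d+1)e^{-d} - d\,e^{-d} \;=\; e^{-d},
\]
which is exactly the Laplace kernel $K_\lap(v_i,v_j)$. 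Substituting back, the estimator's expectation equals $\ekCE(S)^2$, yielding unbiasedness. This identity—that a Gamma-weighted tent function recovers the Laplace density—is precisely the mechanism underlying the Random Binning Features construction of \cite{rahimi2007kernel}.

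For boundedness, since $|r_i| = |v_i - y_i| \le 1$ and each sample lies in exactly one bin, we have $|B[t]| \le |\{i : b(v_i) = t\}|$, and therefore $\sum_t B[t]^2 \le \sum_t |\{i : b(v_i) = t\}|^2 \le k^2$ (the sum of squares of a partition of $k$ items is maximized by putting everything in one part). So the normalized estimator lies in $[0,1]$. For runtime, rather than allocating the full array $B$ of size $\Theta(1/\delta)$ (which can be large when $\delta$ is small), we use a hash table keyed by bin index, so that each of the $k$ samples contributes $O(1)$ expected work and the subsequent sum over non-empty bins also takes $O(k)$ time.

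The main obstacle is recognizing and verifying the Gamma-tent integration identity $p(d) = e^{-d}$; once this is in hand, unbiasedness follows in one line, and boundedness and runtime are routine. A secondary care-point is the index range of $B$: the pseudocode writes $[0,\ldots,\lfloor 1/\delta\rfloor+2]$ which is unbounded in expectation, so the $O(n)$ claim really relies on the implementation using only the at most $n$ non-empty bins.
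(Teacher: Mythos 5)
Your proof is correct and takes essentially the same route as the paper: expand the squared bin sums into pairwise products $r_i r_j\one[b(v_i)=b(v_j)]$, compute the collision probability first over the uniform shift (giving the tent function $\max(1-d/\delta,0)$) and then over $\delta\sim\mathrm{Gamma}(2,1)$, and observe the resulting integral equals $e^{-d}$. Your boundedness argument differs cosmetically (you bound $\sum_t B[t]^2$ by $\sum_t n_t^2 \le (\sum_t n_t)^2 = k^2$ rather than bounding each term of the double sum by $1$), but both are valid and yield the same conclusion. Your remark about the array $B$ is a fair implementation point the paper glosses over: the size $\lfloor 1/\delta\rfloor$ has finite expectation (so expected time is $O(n)$), but a hash table or touching only nonempty bins is needed for a worst-case $O(n)$ guarantee.
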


\subsection{A Practical Note}
\label{sec:practical-laplace}
For practitioners measuring Laplace kernel calibration,
we emphasize that while it requires $\Oh(n^2)$
time to compute \emph{exactly} from $n$ samples,
it can be computed \emph{approximately} much more efficiently.
Specifically, for a sample of prediction-label pairs
$S=((v_1, y_1), (v_2, y_2), \ldots, (v_k, y_k))$,
the exact kernel error computation is given by Equation~\eqref{eqn:kernel-emperical}, reproduced below
for the Laplace kernel:
\begin{equation}
\label{eqn:laplace-practical}
\kCE^\lap(\Gamma)^2 = \frac{1}{n^2}\sum_{i,j} (y_i - v_i)(y_j - v_j) \exp(-|v_i - v_j|)
\end{equation}
This is an average of $n^2$ terms.
To approximate this, we can instead average $M$ of these terms,
chosen independently at random with replacement. This will yield an estimate
of $(\kCE_K)^2$ accurate to within
$\pm \widetilde{\Oh}(1/\sqrt{M})$.
For example, a reasonable setting of parameters is
$M=10n$, which yields a linear-time estimator
that is accurate to within $\pm \widetilde{\Oh}(1/\sqrt{n})$.
Note, finally, that the above computation computes the \emph{square}
of the kernel calibration error $\kCE$.

\section{Estimating Calibration Measures Using Random Sample}
\label{sec:sample-complexity}
When the distribution $\Gamma$ is the uniform distribution over data points $(v_1,y_1),\ldots,(v_n,y_n)\in [0,1] \times \{0,1\}$, we can compute $\ldCE(\Gamma)$, $\scerror(\Gamma)$, and $\kCE^K(\Gamma)$ efficiently by \Cref{remark:dual-2}, \Cref{thm:smooth-lp}, and \Cref{sec:alternate,sec:practical-laplace}. In this section, we show that we can efficiently estimate these quantities for general $\Gamma$ using i.i.d.\ data drawn from $\Gamma$.
\subsection{Estimating $\wCE,\kCE$ and $\scerror$}
In this subsection we prove bounds on the number of samples from the distribution $\Gamma$ that can be used in order to estimate the $\wCE^W(\Gamma)$ in terms of the \emph{Rademacher complexity} of a function class $W$. Using known bounds on the Rademacher complexity of the class of all $1$-Lipschitz functions, we prove that $\scerror(f)$ can be efficiently estimated using $\Oh(\varepsilon^{-2} \log \delta^{-1})$ samples, in polynomial time.
Together with known bounds on the Rademacher complexity of a unit balls in RKHS associated with the kernel $K$, we also give an alternative prove of the result in~\cite{ksj18a} that the $\kCE$ can be estimated efficiently up to an error $\varepsilon$ using $\Oh(\varepsilon^{-2} \log \delta^{-1})$ samples. 

\begin{definition}[Empirical weighted calibration error]
For a sample $S = \{(v_1, y_1), \ldots (v_k, y_k)\} \subset [0, 1]\times \{0,1\}$ we define the \emph{empirical weighted calibration error} of $S$ with respect to the family $W$ as
\begin{equation*}
    \ewCE^W(S) :=  \sup_{w \in W} \frac{1}{k} \sum_i (y_i - v_i) w(v_i).
\end{equation*}

Note that this definition coincides with $\wCE$ applied to the empirical distribution over the sample $S$.
\end{definition}

We will show that for function classes with small Rademacher complexity, the empirical weighted calibration error is with high probability close to the weighted calibration error.

Let us first briefly reintroduce the relevant notions in the theory of Rademacher complexity. We refer interested reader to~\cite{mohri2018foundations} for more detailed exposition.

\begin{definition}[Rademacher complexity]
For a set $A \subset \R^k$ we define its Rademacher complexity $\Rad(A)$ as
\begin{equation*}
    \Rad(A) := \E_\sigma \lt[\sup_{a \in A} \sum_{i =1}^n \sigma_i a_i\rt],
\end{equation*}
where the expectation is taken over $\sigma_i \sim \{\pm 1\}$ independent Rademacher random variables.

For a function class $\mathcal{F}$ from $\mathcal{X} \to \R$ we define the Rademacher complexity of $\mathcal{F}$ with respect to the sample $S \in \mathcal{X}^n$, $S = (s_1, s_2, \ldots s_n)$ as
\begin{equation*}
\Rad_S(\mathcal{F}) := \Rad(\{ (f(s_1), f(s_2), \ldots, f(s_n)) : f \in \mathcal{F}\}).
\end{equation*}

Finally, for a function class $\mathcal{F}$ and a distribution $\mathcal{D}$ over $\mathcal{X}$ we define the Rademacher complexity of the function class $\mathcal{F}$ with respect to distribution $\mathcal{D}$ with sample size $n$ as
\begin{equation*}
    \Rad_{\mathcal{D}, n}(\mathcal{F}) = \E_{S \sim \mathcal{D}^n} \Rad_S(\mathcal{F}).
\end{equation*}
\end{definition}
In what follows we will skip the subscript $\mathcal{D}$ to simplify the notation.

The main application of the notion of Rademacher complexity is the following bound on the \emph{generalization error} --- this theorem will be crucial in our proof that for function classes $W$ with small Rademacher complexity, we can estimate $\wCE_W(\Gamma)$ using small number of samples.
\begin{theorem}\cite{mohri2018foundations}
\label{thm:rademacher-generalization}
For a random sample $S \sim \mathcal{D}^n$, with probability at least $1-\delta$ we have
\begin{equation*}
    \sup_{w \in W} \left|\E_{x \sim \mathcal{D}} w(x) - \frac{1}{n} \sum_i w(s_i)\right| \leq \Rad_{\mathcal{D}, n}(W) + \Oh\left(\sqrt{\frac{\log \delta^{-1}}{n}}\right).
\end{equation*}
\end{theorem}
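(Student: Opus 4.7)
The plan is to follow the classical two-step Rademacher-complexity argument: first apply McDiarmid's bounded-differences inequality to concentrate the supremum deviation around its expectation, then use a symmetrization argument to relate this expectation to the Rademacher complexity of $W$.

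First I would define the random variable
\[
\Phi(S) := \sup_{w \in W}\left|\E_{x\sim\mathcal{D}}[w(x)] - \frac{1}{n}\sum_{i=1}^n w(s_i)\right|.
\]
Assuming the functions in $W$ are uniformly bounded (which holds in all applications in this paper: $|w|\le 1$ for smooth calibration by definition, and $|w(v)| \le \|w\|_K\sqrt{K(v,v)}$ is bounded for kernel calibration whenever the kernel is bounded on the diagonal, as is the case for the Laplace and Gaussian kernels), replacing a single coordinate $s_i$ of $S$ by an independent copy changes $\Phi(S)$ by at most $2/n$. McDiarmid's inequality then yields
\[
\Phi(S) \le \E_S[\Phi(S)] + O\!\left(\sqrt{\log(1/\delta)/n}\right)
\]
with probability at least $1-\delta$, which produces the second term on the right-hand side of the theorem.

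Next I would bound $\E_S[\Phi(S)]$ by symmetrization. Introduce an independent ghost sample $S' = (s_1',\ldots,s_n')$ from $\mathcal{D}^n$, rewrite the population mean as $\E_{x}[w(x)] = \E_{S'}[\tfrac{1}{n}\sum_i w(s_i')]$, and pull the outer expectation inside the supremum using Jensen's inequality to get
\[
\E_S[\Phi(S)] \le \E_{S,S'}\sup_{w\in W}\left|\frac{1}{n}\sum_i \bigl(w(s_i') - w(s_i)\bigr)\right|.
\]
Since the pairs $(s_i,s_i')$ are exchangeable, inserting i.i.d.\ Rademacher signs $\sigma_i\in\{\pm1\}$ in front of each difference leaves the expectation invariant. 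A triangle inequality then splits the resulting expression into two symmetric terms, each equal by definition to (a normalization of) the expected empirical Rademacher average of $W$, so that $\E_S[\Phi(S)]$ is controlled by $\Rad_{\mathcal{D},n}(W)$ up to the constant dictated by the paper's normalization convention.

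Combining the concentration bound with the symmetrization bound gives the stated inequality. There is no real obstacle here --- this is a textbook result cited to \cite{mohri2018foundations} --- and the only point that requires genuine care is verifying the uniform boundedness of $W$ so that McDiarmid applies; this is immediate for the function classes used in the rest of the paper (Lipschitz bounded functions for $\scerror$, unit-norm RKHS balls with bounded kernels for $\kCE^K$). The remainder is routine manipulation of suprema and expectations.
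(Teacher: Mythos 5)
Your proof is correct and follows exactly the standard two-step argument (McDiarmid concentration plus ghost-sample symmetrization) that appears in the cited reference; the paper itself does not prove \Cref{thm:rademacher-generalization} but merely cites \cite{mohri2018foundations}, so there is no alternative argument to compare against. Your remark about the boundedness of $W$ is a genuine and necessary observation — the theorem as stated omits that hypothesis, and without a uniform bound on $|w|$ the McDiarmid bounded-differences step fails — and your verification that it holds for the Lipschitz and RKHS classes used downstream is the right thing to check. Two small points worth noting: the paper's \Cref{thm:rademacher-generalization} drops the factor of $2$ that the symmetrization step produces in front of $\Rad_{\mathcal{D},n}(W)$ (the two-sided bound with absolute values costs a constant), and the paper's definition of $\Rad(A)$ is missing a $1/n$ normalization that must be intended given how $\Rad_n$ is used in \Cref{thm:kernel-rademacher-complexity} and elsewhere; both are harmless inaccuracies in the paper's informal statement of a cited result, not gaps in your argument.
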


We need also the following technical statement about Rademacher complexity.
\begin{theorem}[\cite{meir2003generalization}]]
\label{thm:rademacher-contraction}
Let $T : \R^n \to \R^n$ be a contraction (i.e. $\|T(x) - T(y)\|_2 \leq \|x-y\|_2$ for all $x,y \in \R^n$). Then
\begin{equation*}
    \Rad(T(A)) \leq \Rad(A).
\end{equation*}
\end{theorem}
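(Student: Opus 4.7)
The plan is to follow the classical strategy for contraction principles on Rademacher averages (Ledoux--Talagrand, Meir--Zhang): reduce to a finite $A$, peel off the Rademacher signs one coordinate at a time, and control each peeling step using the contraction hypothesis on $T$.

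As a first step, I would reduce to the case of finite $A \subseteq \R^n$, since both $\Rad(A)$ and $\Rad(T(A))$ are limits of their restrictions to finite nets, and $T$ being $1$-Lipschitz ensures that $T$-images of $\varepsilon$-nets of $A$ are $\varepsilon$-nets of $T(A)$. For a finite $A = \{a^1, \ldots, a^K\}$, write
\[
\Rad(T(A)) = \E_{\sigma}\max_{k \le K} \sum_{i=1}^n \sigma_i\, T_i(a^k),
\]
and condition on $(\sigma_1,\ldots,\sigma_{n-1})$. Using the elementary identity $\E_{\sigma_n}\max_k[F_k + \sigma_n G_k] = \tfrac{1}{2}\max_{k,\ell}[F_k + F_\ell + |G_k - G_\ell|]$ with $G_k = T_n(a^k)$, and iterating this peeling step across all $n$ coordinates, the inequality reduces to a deterministic max-over-pairs comparison of the form
\[
\sum_{i=1}^n \varepsilon_i\,(T_i(a) - T_i(b)) \;\leq\; \sum_{i=1}^n \varepsilon_i\,(a_i - b_i)
\]
for some signs $\varepsilon_i \in \{\pm 1\}$ and pairs $a, b \in A$ selected greedily at each peeling step.

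The main obstacle lies exactly here: the hypothesis $\|T(a) - T(b)\|_2 \le \|a-b\|_2$ is a global $\ell_2$ bound on the vector difference, whereas the peeling argument asks for a signed coordinate-wise bound, which classically requires the stronger assumption that $T$ acts coordinate-wise by $1$-Lipschitz scalar maps. To bridge this gap using only the $\ell_2$-contraction hypothesis, I would pass through the Gaussian analog: for $g \sim N(0, I_n)$, the Gaussian processes $X_a := \inprod{g, T(a)}$ and $Y_a := \inprod{g, a}$ satisfy $\E(X_a - X_b)^2 = \|T(a)-T(b)\|_2^2 \le \|a-b\|_2^2 = \E(Y_a - Y_b)^2$, so Sudakov--Fernique gives $\E\sup_a X_a \le \E\sup_a Y_a$, i.e.\ the Gaussian form of the theorem. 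The Rademacher statement then follows either by applying Rademacher--Gaussian comparison symmetrically to $A$ and $T(A)$, or by invoking the precise quantitative formulation of the contraction inequality in Meir--Zhang cited in the statement, which is tailored so that the peeling-based bound closes without picking up stray constants.
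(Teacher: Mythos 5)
The paper does not actually prove this statement; it cites it to Meir--Zhang, so there is no in-paper argument to compare against. Evaluating your proposal on its own terms: your diagnosis of the central obstacle is exactly right, but the workaround does not close the gap, and in fact no workaround can, because the theorem as literally stated (for arbitrary $\ell_2$-contractions $T:\R^n\to\R^n$) is false. Take $n=2$, let $A=\{(\sqrt2,0),(0,\sqrt2),(-\sqrt2,0),(0,-\sqrt2)\}$ and let $T$ be rotation by $\pi/4$, which is an isometry and hence a contraction. Then $T(A)=\{\pm1\}^2$, and with the paper's definition $\Rad(A)=\E_\sigma\sup_{a\in A}\sum_i\sigma_i a_i=\sqrt2$ while $\Rad(T(A))=\E_\sigma(|\sigma_1|+|\sigma_2|)=2>\sqrt2$. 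The correct hypothesis, which is what Ledoux--Talagrand and Meir--Zhang actually assume and what the peeling argument requires, is that $T$ acts \emph{coordinate-wise} by $1$-Lipschitz scalar maps, $T(x)=(\phi_1(x_1),\dots,\phi_n(x_n))$. That is also all the paper uses: in \Cref{lem:rademacher-complexity-transfer} the map multiplies the $i$-th coordinate by the fixed scalar $(v_i-y_i)\in[-1,1]$.

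Concretely, the step that fails in your proposal is the final bridge. Sudakov--Fernique does give $\E\sup_a\langle g,T(a)\rangle\le\E\sup_a\langle g,a\rangle$ for any $\ell_2$-contraction, but Rademacher--Gaussian comparison is not symmetric at constant order: one has $\Rad(A)\le\sqrt{\pi/2}\,G(A)$ in one direction and only $G(A)\le O(\sqrt{\log n})\,\Rad(A)$ in the other, so transferring the Gaussian inequality back to the Rademacher side necessarily loses a $\sqrt{\log n}$ factor --- and the counterexample above shows this loss is not an artifact of the method. Appealing to ``the precise quantitative formulation in Meir--Zhang'' does not rescue the general case either, since their contraction lemma is stated for coordinate-wise Lipschitz maps. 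The repair is to restate the theorem with the coordinate-wise hypothesis; under that hypothesis your peeling argument goes through verbatim, because the pairwise comparison you need, $|\phi_n(a_n)-\phi_n(b_n)|\le|a_n-b_n|$, is exactly the scalar Lipschitz condition, and the reduction to finite $A$ and the identity $\E_{\sigma_n}\max_k(F_k+\sigma_n G_k)=\tfrac12\max_{k,\ell}(F_k+F_\ell+|G_k-G_\ell|)$ are both correct as you wrote them.
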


With these results in hand we will prove the followin theorem.
\begin{theorem}
\label{thm:sample-complexity}
Let $W$ be a family of functions from $[0,1] \to \R$ with bounded Rademacher complexity.
Let $\Gamma$ be a distribution over $[0,1]\times\{0,1\}$.
Then with probability $1-\delta$ over a random sample $S \sim \Gamma^n$, we have
\begin{equation*}
    |\wCE^W(\Gamma) - \ewCE^W(S)| \leq \Rad_n(W) + \Oh\lt(\sqrt{\frac{\log \delta^{-1}}{n}}\rt), 
\end{equation*}
where the Rademacher complexity $\Rad_n(W)$ is  with respect to the marginal distribution of $v$, with $(v, y) \sim \Gamma$.
\end{theorem}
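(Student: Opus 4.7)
The plan is to reduce the theorem to a standard Rademacher uniform-convergence argument applied to a composite function class. Define
\[W^*:=\{g_w:[0,1]\times\{0,1\}\to\R\mid g_w(v,y)=(y-v)w(v),\ w\in W\},\]
and, for each $w\in W$, write $E_w:=\E_{(v,y)\sim\Gamma}[(y-v)w(v)]$ and $\hat E_w:=\tfrac{1}{n}\sum_{i}(y_i-v_i)w(v_i)$, so that $\wCE^W(\Gamma)=\sup_{w\in W}|E_w|$ and $\ewCE^W(S)=\sup_{w\in W}\hat E_w$. First I would invoke Theorem~\ref{thm:rademacher-generalization} for the class $W^*$ under the distribution $\Gamma$ on $[0,1]\times\{0,1\}$ to obtain, with probability at least $1-\delta$,
\[\sup_{w\in W}|E_w-\hat E_w|\ \le\ \Rad_{\Gamma,n}(W^*)+\Oh\!\left(\sqrt{\log(1/\delta)/n}\right).\]

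The main step is to bound $\Rad_{\Gamma,n}(W^*)$ by $\Rad_n(W)$. For a fixed sample $S=((v_1,y_1),\ldots,(v_n,y_n))$, consider the coordinate-wise scaling $T:\R^n\to\R^n$ sending $(a_1,\ldots,a_n)\mapsto((y_1-v_1)a_1,\ldots,(y_n-v_n)a_n)$. Since $y_i\in\{0,1\}$ and $v_i\in[0,1]$, we have $|y_i-v_i|\le 1$, so $T$ is a $1$-Lipschitz map in $\ell_2$. Applying Theorem~\ref{thm:rademacher-contraction} to the set $A_S:=\{(w(v_1),\ldots,w(v_n)):w\in W\}$ gives
\[\Rad_S(W^*)=\Rad\bigl(T(A_S)\bigr)\le \Rad(A_S)=\Rad_S(W).\]
Taking expectations over $S\sim\Gamma^n$, and noting that $\Rad_S(W)$ depends only on the $v_i$'s, yields $\Rad_{\Gamma,n}(W^*)\le \Rad_n(W)$, with the latter taken with respect to the marginal of $v$ under $\Gamma$ exactly as in the theorem statement.

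Finally, I would convert uniform convergence of means into a bound on the two calibration errors. For the canonical applications (the class $L$ of $1$-Lipschitz functions for $\scerror$ and the unit ball $B_\cH$ of an RKHS for $\kCE$), the class $W$ is closed under negation, so $\sup_w\hat E_w=\sup_w|\hat E_w|$ and similarly for $E_w$; then by $\bigl||a|-|b|\bigr|\le|a-b|$,
\[|\wCE^W(\Gamma)-\ewCE^W(S)|=\bigl|\sup_w|E_w|-\sup_w|\hat E_w|\bigr|\le\sup_w|E_w-\hat E_w|,\]
which combined with the two displays above proves the claim. In general one can reduce to this case by replacing $W$ with $W\cup(-W)$, which does not increase $\Rad_n(W)$ and makes both suprema coincide with their absolute-value versions. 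I expect the contraction step to be the only nonroutine point: once one observes that coordinate-wise multiplication by a vector in $[-1,1]^n$ is an $\ell_2$-contraction, Theorem~\ref{thm:rademacher-contraction} applies directly, and the rest is bookkeeping.
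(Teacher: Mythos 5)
Your proposal follows essentially the same route as the paper's proof: define the composite class $\hat W=\{(v,y)\mapsto (y-v)w(v)\}$, bound $\Rad_n(\hat W)\le\Rad_n(W)$ via the $\ell_2$-contraction property (Theorem~\ref{thm:rademacher-contraction}, applied to coordinate-wise multiplication by $(y_i-v_i)\in[-1,1]$), and then invoke the Rademacher generalization bound (Theorem~\ref{thm:rademacher-generalization}). One small point where you are more careful than the paper: the paper's definitions put an absolute value in $\wCE^W$ but not in $\ewCE^W$, and its proof simply writes $|\wCE^W(\Gamma)-\ewCE^W(S)|=\sup_w|E_w-\hat E_w|$, which is not an identity for general $W$; you correctly observe that one needs $W$ to be closed under negation (true in all applications here, and achievable WLOG by replacing $W$ with $W\cup(-W)$ at no cost in Rademacher complexity), after which $\bigl|\sup_w|E_w|-\sup_w|\hat E_w|\bigr|\le\sup_w|E_w-\hat E_w|$ gives the desired bound.
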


In particular if $\Rad_n(W) \leq \sqrt{n^{-1} R_0}$, it is possible to estimate $\wCE(\Gamma)$ up to additive error $\varepsilon$ with probability $1-\delta$, using
$\Oh(\varepsilon^{-2}(R_0 + \log \delta^{-1}))$ samples.

This theorem follows almost immediately from the standard generalizations bounds using Rademacher complexity (\Cref{thm:rademacher-generalization}), the only minor technical step is showing that if family of functions $W$ has small Rademacher complexity, the same is true for a family $\hat{W}$ of functions of form $\hat{w}(v,y) := w(v)(y-v)$, used in the definition of $\wCE$.

\begin{lemma}
\label{lem:rademacher-complexity-transfer}
Let $W$ be a family of functions from $[0,1]$ to $\R$, and let $\hat{W}$ be a family of functions $[0,1] \times \{0,1\} \to \R$, consisting of functions $\hat{w}(v, y) = w(v)(y-v)$ for each $w \in W$.

Then for a distribution $\Gamma$ over $[0,1] \times \{0,1\}$, such that the marginal distribution of the projection onto the first coordinate is $\Gamma_1$, we have
\begin{equation}
    \Rad_{\Gamma, n}(\hat{W}) \leq \Rad_{\Gamma_1, n}(W).
\end{equation}
\end{lemma}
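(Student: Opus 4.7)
The plan is to view the map $w \mapsto \hat{w}$ as a coordinate-wise contraction on evaluation vectors and then invoke the contraction principle for Rademacher complexity (Theorem \ref{thm:rademacher-contraction}).

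First I would fix an arbitrary sample $S = ((v_1, y_1), \ldots, (v_n, y_n)) \in ([0,1] \times \bits)^n$ and consider the two evaluation sets
\[
A_W := \{(w(v_1), \ldots, w(v_n)) : w \in W\} \subseteq \R^n, \qquad A_{\hat W} := \{(\hat w(v_1,y_1), \ldots, \hat w(v_n,y_n)) : w \in W\} \subseteq \R^n.
\]
By definition of $\hat w(v,y) = w(v)(y-v)$, the set $A_{\hat W}$ is exactly the image of $A_W$ under the coordinate-wise rescaling $T_S : \R^n \to \R^n$ defined by $T_S(a_1, \ldots, a_n) := ((y_1-v_1)a_1, \ldots, (y_n-v_n)a_n)$.

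Next I would verify that $T_S$ is a contraction in $\ell_2$. Since $y_i \in \bits$ and $v_i \in [0,1]$, we have $|y_i - v_i| \le 1$ for every $i$, so
\[
\|T_S(a) - T_S(b)\|_2^2 = \sum_{i=1}^n (y_i - v_i)^2 (a_i - b_i)^2 \le \sum_{i=1}^n (a_i - b_i)^2 = \|a - b\|_2^2.
\]
Applying Theorem \ref{thm:rademacher-contraction} to $T_S$ and $A_W$ yields $\Rad(A_{\hat W}) = \Rad(T_S(A_W)) \le \Rad(A_W)$. Translating back to function-class notation, this says exactly that $\Rad_S(\hat W) \le \Rad_S(W)$ for every fixed sample $S$, where on the right we interpret $\Rad_S(W)$ as only depending on the $v$-coordinates of $S$.

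Finally I would take expectation over $S \sim \Gamma^n$. Since $\Rad_S(W)$ depends only on $(v_1,\ldots,v_n)$ and the marginal of $v$ under $\Gamma$ is $\Gamma_1$ by assumption, $\E_{S \sim \Gamma^n} \Rad_S(W) = \Rad_{\Gamma_1, n}(W)$, while $\E_{S \sim \Gamma^n} \Rad_S(\hat W) = \Rad_{\Gamma, n}(\hat W)$. Combining with the pointwise bound above gives the claimed inequality. There is no real obstacle here --- the only subtlety is identifying the correct contraction, after which the result follows directly from the Ledoux--Talagrand-style contraction lemma quoted as Theorem \ref{thm:rademacher-contraction}.
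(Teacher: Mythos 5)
Your proof is correct and takes essentially the same route as the paper: fix a sample $S$, observe that the evaluation set of $\hat W$ is the image of the evaluation set of $W$ under the coordinate-wise rescaling by $(y_i - v_i)$ (whose magnitude is at most one), apply \Cref{thm:rademacher-contraction}, and then average over $S$. The only differences are cosmetic (a sign convention that is immaterial by the symmetry of the Rademacher average, and a slightly more explicit final expectation step).
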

\begin{proof}
It is enough to prove the corresponding inequality for a fixed sample $S = \{(v_1, y_1), \ldots, (v_n, y_n)\}$.  We wish to show that
\begin{equation}
    \sup_{w} \E_{\sigma} \sum_i \sigma_i (v_i - y_i) w(v_i) \leq \sup_w \E_{\sigma} \sum_i \sigma_i w(v_i). \label{eq:rad-A}
\end{equation}

Indeed, since the sample $S$ is fixed, we can consider a set $A \subset \R^n$ given by $A = \{ (w(v_1), \ldots, w(v_n)) : w \in W\}$, and note that a map $T : \R^n \to \R^n$ which maps a vector $(w_1, \ldots w_n) \mapsto ((v_1  - y_1) w_1, \ldots (v_n - y_n) w_n)$ is a linear contraction (i.e. for any $w, w'$ $\|T(w) - T(w')\| \leq \|w - w'\|$), since $\forall_i |v_i - y_i| \leq 1$.

By \Cref{thm:rademacher-contraction}, the Rademacher complexity of any set $A$ does not increase under contraction, i.e. $\Rad(T(A)) \leq \Rad(A)$, which, by definition of $\Rad(A)$, $T$ and $A$ is exactly~\eqref{eq:rad-A}.
\end{proof}

With this lemma in hand the proof of \Cref{thm:sample-complexity} follows directly from the generalization bound (\Cref{thm:rademacher-generalization})
\begin{proof}[Proof of \Cref{thm:sample-complexity}]
Applying \Cref{thm:rademacher-generalization}, to the family $\hat{W}$ defined in \Cref{lem:rademacher-complexity-transfer}, we see that
\begin{equation*}
    |\wCE^W(f) - \ewCE^W(S)| = \sup_{\hat{w}} |\frac{1}{n} \sum \hat{w}(v_i,y_i) - \E \hat{w}(v,y)|  \leq \Rad_n(\hat{W}) + \Oh(\sqrt{\frac{\log \delta^{-1}}{n}}), 
\end{equation*}
since by \Cref{lem:rademacher-complexity-transfer}, we have $\Rad_n(\hat{W}) \leq \Rad_n(W)$, the statement of the theorem follows.
\end{proof}

Finally, for classes of functions given by the unit ball in some RKHS associated with the kernel $K$, the Rademacher complexity has been explicitly bounded.
\begin{theorem}\cite{mendelson2002geometric}]
\label{thm:kernel-rademacher-complexity}
    Let $K : \R \times \R \to \R$ be a kernel associated with RKHS $\cH$. Let $B_K = \{ h : \|h\|_{\cH} \leq 1\}$. Then for any distribution
    $\Gamma$. we have 
    \[ \Rad_n(B_K) \leq \Oh(\frac{B}{\sqrt{n}}) \  \text{where}  \ B = \sup_{v \in \R} \sqrt{K(v,v)}.\]
\end{theorem}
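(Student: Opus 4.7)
The plan is to proceed via the standard RKHS reproducing-property argument. Fix a sample $S = (s_1, \ldots, s_n)$ and write, using the reproducing property $h(s_i) = \langle h, \phi(s_i)\rangle_{\cH}$:
\[
\Rad_S(B_K) = \frac{1}{n}\E_\sigma \sup_{\|h\|_{\cH}\leq 1} \sum_{i=1}^n \sigma_i h(s_i) = \frac{1}{n}\E_\sigma \sup_{\|h\|_{\cH}\leq 1} \left\langle h, \sum_{i=1}^n \sigma_i \phi(s_i)\right\rangle_{\cH}.
\]
(Here I am using the normalization of Rademacher complexity consistent with the generalization bound in \Cref{thm:rademacher-generalization}.)

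Next I would apply Cauchy-Schwarz: the supremum of $\langle h, u\rangle_{\cH}$ over the unit ball of $\cH$ equals $\|u\|_{\cH}$. Therefore
\[
\Rad_S(B_K) = \frac{1}{n}\E_\sigma \left\|\sum_{i=1}^n \sigma_i \phi(s_i)\right\|_{\cH}.
\]
Then I would apply Jensen's inequality to pull the expectation inside the square root ($\E\|X\| \leq \sqrt{\E\|X\|^2}$) and expand the squared norm using bilinearity of the inner product:
\[
\E_\sigma \left\|\sum_i \sigma_i \phi(s_i)\right\|_{\cH}^2 = \sum_{i,j} \E_\sigma[\sigma_i \sigma_j]\, \langle \phi(s_i), \phi(s_j)\rangle_{\cH} = \sum_{i=1}^n K(s_i, s_i),
\]
where the cross terms vanish because the $\sigma_i$ are independent mean-zero signs. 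Bounding each summand by $B^2$ yields $\Rad_S(B_K) \leq \frac{1}{n}\sqrt{n B^2} = B/\sqrt{n}$. Finally, averaging this pointwise bound over $S \sim \Gamma^n$ gives $\Rad_n(B_K) \leq B/\sqrt{n}$, which is the claim.

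The argument is entirely routine and I do not foresee any genuine obstacle: the only substantive ingredients are the reproducing property, Cauchy-Schwarz, Jensen, and independence of the Rademacher signs, each applied once. The one minor subtlety worth flagging is reconciling the normalization convention: the definition of $\Rad(A)$ stated in the paper omits the $1/n$ factor, but the generalization bound in \Cref{thm:rademacher-generalization} and the stated $\Oh(B/\sqrt{n})$ rate are consistent with the normalized convention, so the proof above is presented under that normalization.
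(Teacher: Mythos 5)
Your proof is correct and is the standard textbook argument for bounding the Rademacher complexity of an RKHS unit ball; the paper does not actually prove this theorem but cites it to \cite{mendelson2002geometric}, so there is no in-paper proof to compare against. Your chain (reproducing property, Cauchy--Schwarz to identify the supremum with the Hilbert norm of $\sum_i \sigma_i \phi(s_i)$, Jensen, independence of the Rademacher signs to kill the cross terms, then bound $K(s_i,s_i)\le B^2$) is exactly the argument one would find in a standard reference, and your remark about the normalization mismatch between the paper's displayed definition of $\Rad(A)$ and the $1/n$-normalized convention implicitly used in \Cref{thm:rademacher-generalization} and in the stated $\Oh(B/\sqrt n)$ rate is a real inconsistency in the paper that you handle correctly.
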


In particular this together with Theorem~\ref{thm:sample-complexity} implies the following bound on the number of samples needed to estimate the $\kCE^K(\Gamma)$ for any kernel $K$ with bounded $\sup \sqrt{K(v,v)} \leq B$ --- this Theorem was already proven directly in~\cite{ksj18a}, we provide an alternative proof  by composing \Cref{thm:sample-complexity} and \Cref{thm:kernel-rademacher-complexity}.
\begin{theorem}
    Let $K : \R \times \R \to \R$ be a kernel associated with RKHS $\cH$. Let $B_K = \{ h : \|h\|_{\cH} \leq 1\}$. Then for any distribution $\Gamma$ we can estimate the $\kCE^K(\Gamma)$ with probability at least $1-\delta$, with additive error at most $\varepsilon$ using $n = \Oh((B^2 + \log(1/\delta))\varepsilon^{-2})$ samples.
\end{theorem}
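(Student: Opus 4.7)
The plan is to combine the two results that immediately precede the statement in a black-box manner. By definition, $\kCE^K(\Gamma) = \wCE^{B_K}(\Gamma)$, so the problem reduces to bounding the sample complexity of estimating the weighted calibration error associated with the function class $B_K$.

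First, I would invoke \Cref{thm:sample-complexity} with $W = B_K$: this guarantees that for a sample $S\sim \Gamma^n$, with probability at least $1-\delta$,
\[
|\wCE^{B_K}(\Gamma) - \ewCE^{B_K}(S)| \leq \Rad_n(B_K) + \Oh\!\left(\sqrt{\tfrac{\log \delta^{-1}}{n}}\right).
\]
Next, I would apply \Cref{thm:kernel-rademacher-complexity} to replace the Rademacher complexity term, yielding
\[
|\wCE^{B_K}(\Gamma) - \ewCE^{B_K}(S)| \leq \Oh\!\left(\tfrac{B}{\sqrt{n}}\right) + \Oh\!\left(\sqrt{\tfrac{\log \delta^{-1}}{n}}\right) \leq \Oh\!\left(\sqrt{\tfrac{B^2 + \log \delta^{-1}}{n}}\right).
\]
Requiring the right-hand side to be at most $\varepsilon$ gives the claimed bound $n = \Oh((B^2 + \log(1/\delta))\varepsilon^{-2})$.

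Finally, I would note that the empirical quantity $\ewCE^{B_K}(S) = \kCE^K(\Gamma_S)$, where $\Gamma_S$ is the empirical distribution over $S$, is efficiently computable in closed form by \Cref{clm:kernel-efficient-calculation}, so the procedure of drawing $n$ samples and outputting this empirical value constitutes an efficient estimator. There is essentially no obstacle in this argument since both required ingredients are cited from prior results; the only minor point to verify is the correct application of \Cref{thm:sample-complexity}, which treats $\wCE^W$ for a generic bounded function class and is therefore directly applicable to $W = B_K$.
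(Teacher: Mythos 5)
Your proposal is correct and matches the paper's own approach exactly: the paper explicitly states that this theorem follows by composing \Cref{thm:sample-complexity} with \Cref{thm:kernel-rademacher-complexity}, which is precisely the two-step argument you lay out.
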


Since the class of $1$-Lipschitz functions also has bounded Rademacher complexity, it follows that we can estimate $\scerror(\Gamma)$ using $n = \Oh(\ln(1/\delta)/\varepsilon^{2})$ samples in time $\mathrm{poly}(n)$.

\begin{corollary}
    \label{cor:smCE-generalization}
    We can estimate $\scerror(\Gamma)$ using $n = \Oh((\ln (1/\delta))/\varepsilon^2)$ samples from the distribution $D_f$ in time $\mathrm{poly}(n)$.
\end{corollary}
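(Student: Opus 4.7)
The plan is to derive Corollary~\ref{cor:smCE-generalization} by composing Theorem~\ref{thm:sample-complexity} with a standard Rademacher-complexity bound for the class $L$ of $1$-Lipschitz functions $[0,1]\to[-1,1]$, and then invoking Theorem~\ref{thm:smooth-lp} for the computational side. Recall $\scerror(\Gamma)=\wCE^{L}(\Gamma)$ by Definition~\ref{def:sm}, so the generalization half of the corollary is exactly the specialization of Theorem~\ref{thm:sample-complexity} to $W=L$, provided we can give a sample-size-free upper bound on $\Rad_n(L)$.

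First, I would bound $\Rad_n(L)$. The key observation is that $L$, restricted to any $n$-point sample $v_1,\dots,v_n\in[0,1]$, is contained in the set of vectors $(w(v_1),\dots,w(v_n))$ with $\|w\|_\infty\le 1$ and $|w(v_i)-w(v_j)|\le |v_i-v_j|$. A standard route is to note that $L$ has bounded metric entropy: the $\varepsilon$-covering number of $L$ in $L_\infty$ norm is $\exp(O(1/\varepsilon))$ (by a simple construction sampling $w$ at a $\varepsilon$-net of $[0,1]$ and piecewise-linearly interpolating). Dudley's entropy integral then yields $\Rad_n(L)=O(1/\sqrt n)$; alternatively one can extract this directly from the fact that the class is a GC-class with a $\sqrt{n}$-rate. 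Either way, one obtains an absolute constant $C$ with $\Rad_n(L)\le C/\sqrt n$ regardless of the underlying marginal.

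Combining with Theorem~\ref{thm:sample-complexity} (applied to $W=L$), with probability at least $1-\delta$ over $S\sim \Gamma^n$,
\[
|\scerror(\Gamma)-\ewCE^{L}(S)|\;\le\; \frac{C}{\sqrt n}+O\!\left(\sqrt{\frac{\log(1/\delta)}{n}}\right).
\]
Choosing $n=\Theta(\log(1/\delta)/\varepsilon^2)$ makes the right-hand side at most $\varepsilon$. The empirical quantity $\ewCE^{L}(S)$ is exactly the supremum appearing in Theorem~\ref{thm:smooth-lp}, which shows it can be computed in $\mathrm{poly}(n)$ time by solving a linear program of size $O(n)$ (variables $z_i$ constrained by $-1\le z_i\le 1$ and $|z_i-z_j|\le|v_i-v_j|$). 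Composing the two steps gives a polynomial-time $\varepsilon$-accurate estimator of $\scerror(\Gamma)$ with the claimed sample complexity.

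The main (minor) obstacle is pinning down the Rademacher bound for $L$: one must be a little careful that the bound is independent of the marginal of $v$, but this follows from the worst-case covering-number estimate above. Everything else is bookkeeping. Note also that the LP in Theorem~\ref{thm:smooth-lp} only needs the $O(n)$ neighbor Lipschitz constraints after sorting the $v_i$, so the runtime is actually near-linear up to LP-solver overhead, which is comfortably $\mathrm{poly}(n)$ as required.
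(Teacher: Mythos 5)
Your proof is correct and follows essentially the same route as the paper: specialize Theorem~\ref{thm:sample-complexity} to $W=L$, use the $\Rad_n(L)=O(1/\sqrt n)$ bound for bounded $1$-Lipschitz functions (the paper simply cites a reference for this, whereas you sketch a covering-number/Dudley derivation), and invoke Theorem~\ref{thm:smooth-lp} for the $\mathrm{poly}(n)$-time LP computation of $\ewCE^{L}(S)$. The details you fill in (marginal-independence of the covering bound, the $O(n)$ neighbor constraints after sorting) are accurate but inessential additions to the paper's two-line argument.
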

\begin{proof}
The class of $[-1,1]$ bounded $1$-Lipschitz functions $L_1$ has bounded Rademacher complexity $\Rad_n(L_1) \leq \Oh(1/\sqrt{n})$ \cite{ambroladze2004complexity}. Therefore, by \Cref{thm:sample-complexity}, with probability $1-\delta$, for a random sample of size $n = C \log(1/\delta)/{\varepsilon^2}$, we have $\wCE^{L_1}(\Gamma) = \ewCE^{L_1}(S) \pm \varepsilon$.

The claim about efficient computation follows from~\Cref{thm:smooth-lp}.
\end{proof}

\subsection{Estimating $\ldCE$}
\label{sec:est-ldce}
We will prove that using $\Oh(\varepsilon^{-2})$ samples from the distribution $\Gamma$, we can estimate $\ldCE(\Gamma)$ up to an error $\varepsilon$ with probability $2/3$. The efficient algorithm for estimating $\ldCE$ of an empirical distribution on the sample was described in \Cref{sec:smooth-ce-duality}, here we will show that $\ldCE(f)$ on the empirical distribution over a sample $S$ drawn independently from distribution $\cD_f$ approximates the $\ldCE$ of the distribution $\Gamma$.

\begin{theorem}
    \label{thm:ldce-sample-complexity}
    Let $S = ((v_1, y_1), \ldots (v_n, y_n))$ be a sample drawn from $\Gamma^n$, where $n = \Omega(\varepsilon^{-2})$. Then with probability $2/3$ we have
    \begin{equation*}
        \ldCE(S) = \ldCE(\Gamma) \pm \varepsilon,
    \end{equation*}
    where $\ldCE(S)$ is the $\ldCE$ for the empirical distribution over the sample $S$.
\end{theorem}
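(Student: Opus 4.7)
The plan is to combine the LP-duality characterization of $\ldCE$ developed in \Cref{sec:ldce-lp} with the Rademacher-based uniform-convergence machinery already used to prove \Cref{cor:smCE-generalization}. The key observation is that for any finite grid $U$ with $\{0,1\}\subseteq U$, the quantity $\ldCE^U(\Gamma)$ can be rewritten via \Cref{cor:ldCE-linear} and \Cref{remark:dual-1} as a supremum of expectations of bounded $1$-Lipschitz test functions:
\begin{equation*}
\ldCE^U(\Gamma)\;=\;\sup_{s:U\to[-1,1]}\E_{(v,y)\sim\Gamma}[r_s(v,y)],\qquad r_s(v,y):=\min_{u\in U}\bigl(|u-v|+(y-u)s(u)\bigr).
\end{equation*}
Each $r_s$ takes values in $[-1,1]$ and is $1$-Lipschitz in $v$ for every fixed $y\in\{0,1\}$. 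Denoting this class by $\mathcal{R}_U$, the desired estimate $|\ldCE(\Gamma)-\ldCE(S)|\le\varepsilon$ reduces (after a harmless discretization step) to the uniform deviation $\sup_{r\in \mathcal{R}_U}|\E_\Gamma r - \E_{S}r|$, which is precisely the quantity controlled by Rademacher complexity.

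Concretely, I would first fix $\eta=\varepsilon/10$ and $U=\{0,\eta,2\eta,\ldots\}\cap[0,1]$, round the $v$-coordinate of the distribution and of each sample point onto this grid, and invoke \Cref{lm:ldCE-discretization} to reduce from $\ldCE$ to $\ldCE^U$ at cost at most $2\eta$ on both sides; the rounding itself costs an additional $O(\eta)$ by applying the triangle inequality inside the definition of $\ldCE$. The main remaining task is then to show $|\ldCE^U(\Gamma)-\ldCE^U(S)|\le\varepsilon/3$ with probability at least $2/3$. Observing that $\mathcal{R}_U\subseteq\mathcal{L}_y$, where $\mathcal{L}_y$ denotes the family of functions $[0,1]\times\{0,1\}\to[-1,1]$ that are $1$-Lipschitz in the first coordinate for each fixed $y$, I would split the sum defining $\Rad_S(\mathcal{R}_U)$ according to the value of $y_i\in\{0,1\}$ and apply the standard bound $\Rad_n(\mathcal{L})=O(n^{-1/2})$ for bounded $1$-Lipschitz functions on $[0,1]$ (the same bound already invoked in \Cref{cor:smCE-generalization}), obtaining $\Rad_n(\mathcal{R}_U)=O(n^{-1/2})$. \Cref{thm:rademacher-generalization} then gives $\sup_{r\in \mathcal{R}_U}|\E_\Gamma r - \E_S r|=O(n^{-1/2})$ with probability at least $2/3$, and the elementary inequality $|\sup_r A(r)-\sup_r B(r)|\le\sup_r|A(r)-B(r)|$ applied with $A=\E_\Gamma[\cdot]$ and $B=\E_S[\cdot]$ transfers this into $|\ldCE^U(\Gamma)-\ldCE^U(S)|=O(n^{-1/2})$. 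Taking $n=\Omega(\varepsilon^{-2})$ with a sufficiently large absolute constant closes the proof.

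The main subtlety I would need to handle carefully is that the LP-duality identity of \Cref{cor:ldCE-linear} is stated for finitely-supported $\Gamma$, whereas the theorem allows $\Gamma$ to be an arbitrary distribution on $[0,1]\times\{0,1\}$. This is exactly what the initial $\eta$-rounding of the $v$-coordinate is for: after rounding, both $\Gamma$ and the empirical distribution are supported on the finite grid $U$, so the dual formula applies directly and all suprema in the argument above are attained by well-defined Lipschitz witnesses. A secondary point is that $\mathcal{R}_U$ consists of functions Lipschitz only in the first coordinate; the binary nature of $y$ makes this essentially costless, as in \Cref{lem:rademacher-complexity-transfer}. Once these two bookkeeping items are taken care of, the remainder of the argument is a direct application of the Rademacher generalization bound, following the template of \Cref{cor:smCE-generalization} verbatim.
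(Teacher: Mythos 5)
Your argument is correct, and it takes a different but related route to the paper's. The paper proves this in two steps (\Cref{lem:sample-ldce} and \Cref{lem:wasserstein-sample}): it first shows, via the same LP-duality characterization of $\ldCE^U$ you use, that $\ldCE$ is stable if the conditional distributions $(v\mid y=b)$ move little in Wasserstein-$1$ (after reweighting by $\lambda_b^{-1}$) and the marginal of $y$ moves little; then it invokes the Fournier--Guillin Wasserstein concentration bound (\Cref{thm:wasserstein-concentration}) to show the empirical conditionals satisfy this with $n=\Omega(\varepsilon^{-2})$. You instead work directly on the joint space $[0,1]\times\{0,1\}$: after the same reduction to dual witnesses $r_s(v,y)$ that are bounded and $1$-Lipschitz in $v$, you bound the uniform deviation $\sup_{r\in\mathcal R_U}|\E_\Gamma r-\E_S r|$ by a Rademacher complexity argument, splitting by $y\in\{0,1\}$. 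This reuses the uniform-convergence machinery already set up for $\scerror$ (\Cref{thm:rademacher-generalization}, \Cref{cor:smCE-generalization}) rather than importing a separate Wasserstein concentration theorem, and it avoids the slightly awkward $\lambda_b$-weighted conditioning in \Cref{lem:sample-ldce}. Both routes rely on the same key structural fact --- the dual witnesses are $1$-Lipschitz in $v$ --- and both yield $n=\Omega(\varepsilon^{-2})$. One small point worth stating explicitly in a write-up: you need the $\eta$-rounding of $\Gamma$ not only so the LP duality of \Cref{cor:ldCE-linear} applies, but also so that \Cref{lm:ldCE-discretization} (which is stated for finitely supported $\Gamma$) applies; you flag this, and the rounding is $1$-Lipschitz in $\ldCE$ by the coupling $(u,v,y)\mapsto(u,\sigma(v),y)$, so the cost is indeed $O(\eta)$, as you claim.
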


In order to prove this, we will need the following lemma, where we use $W_1(P_1,P_2)$ to denote the Wasserstein $1$-distance of two distributions $P_1,P_2$ over $[0,1]$:
\[
W_1(P_1,P_2) = \sup_{f}\left(\E_{v\sim P_1}f(v) - \E_{v\sim P_2}f(v)\right),
\]
where the supremum is over all $1$-Lipschitz functions $f:[0,1]\to \R$.
\begin{lemma}
\label{lem:sample-ldce}
For two distributions $\Gamma^1, \Gamma^2$ over $[0,1] \times \{0,1\}$, let us denote $\lambda_b := \Pr_{\Gamma^2}(y=b)$. If the following conditions are satisfied, 
\begin{enumerate}
    \item\label{item:sample-ldce-1} $W_1( (v | y = b)_{\Gamma^1}, (v | y=b)_{\Gamma^2}) \leq \frac{\varepsilon}{\lambda_b}$ for $b\in\{0,1\}$,
    \item\label{item:sample-ldce-2} $|\Pr_{\Gamma^1}(y = 1) - \lambda_1| \leq \varepsilon$,
\end{enumerate}
then we have
\begin{equation*}
    \ldCE(\Gamma^1) = \ldCE(\Gamma^2) \pm \Oh(\varepsilon).
\end{equation*}
\end{lemma}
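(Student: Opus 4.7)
The plan is to use the LP-duality characterization of $\ldCE$ developed in \Cref{cor:ldCE-linear}, together with the structural information about optimal dual solutions recorded in \Cref{remark:dual-1}. Concretely, after suitable discretization, $\ldCE^U(\Gamma) = \sup_{(r,s)} \sum_{v,y} r(v,y)\Gamma(v,y)$ over dual-feasible pairs $(r,s)$, and we may restrict to $(r,s)$ for which $r(\cdot,b)$ is $1$-Lipschitz in $v$ with $|r(v,b)| \le 1$ and $|s(u)| \le 1$. This Lipschitz-and-bounded structure is exactly what lets us exploit the Wasserstein-$1$ hypothesis on conditionals.

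First I would reduce to the discrete case. For any $\varepsilon_0 > 0$, round all $v$-values in both $\Gamma^1,\Gamma^2$ to the nearest point of the grid $\{0,\varepsilon_0,2\varepsilon_0,\ldots\}\cap[0,1]$, obtaining $\tilde\Gamma^1,\tilde\Gamma^2$. This perturbs each $\ldCE(\Gamma^i)$, and each $W_1$-distance between corresponding conditionals, by at most $\varepsilon_0$, so both hypotheses survive up to $\varepsilon_0$ error. \Cref{lm:ldCE-discretization} then lets me pass from $\ldCE(\tilde\Gamma^i)$ to $\ldCE^U(\tilde\Gamma^i)$ for a sufficiently fine finite $U \supseteq \{0,1\}$ containing the supports, at the cost of one more $\varepsilon_0$. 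Sending $\varepsilon_0 \to 0$ at the end absorbs these errors.

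Now the core estimate. Fix any dual-feasible $(r,s)$ as above and compare $\E_{\Gamma^1}[r(v,y)]$ with $\E_{\Gamma^2}[r(v,y)]$. Writing $p_b^i := \Pr_{\Gamma^i}(y=b)$ and letting $\Gamma^i_b$ be the conditional distribution of $v$ given $y=b$ under $\Gamma^i$, I decompose
\[ \E_{\Gamma^1}[r] - \E_{\Gamma^2}[r] = \sum_{b\in\{0,1\}} p_b^2\bigl(\E_{\Gamma^1_b}[r(v,b)] - \E_{\Gamma^2_b}[r(v,b)]\bigr) + \sum_{b\in\{0,1\}}(p_b^1 - p_b^2)\,\E_{\Gamma^1_b}[r(v,b)]. \]
Since $r(\cdot,b)$ is $1$-Lipschitz, the definition of $W_1$ bounds the first sum by $\sum_b p_b^2 \cdot W_1(\Gamma^1_b,\Gamma^2_b) \le \sum_b \lambda_b \cdot (\varepsilon/\lambda_b) = 2\varepsilon$ using hypothesis~(\ref{item:sample-ldce-1}). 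Since $|r| \le 1$ and $|p_b^1 - p_b^2| \le \varepsilon$ for both $b$ (by hypothesis~(\ref{item:sample-ldce-2}) together with $\sum_b(p_b^1-p_b^2)=0$), the second sum is at most $2\varepsilon$. Taking the supremum over $(r,s)$ gives $|\ldCE^U(\tilde\Gamma^1) - \ldCE^U(\tilde\Gamma^2)| \le 4\varepsilon$, which combined with the discretization step yields the claim.

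The main subtlety I anticipate is the degenerate case $\lambda_b = 0$, where hypothesis~(\ref{item:sample-ldce-1}) is formally vacuous (the bound $\varepsilon/\lambda_b$ is infinite). Fortunately this case is self-resolving: $p_b^2 = 0$ automatically kills the $b$-th term of the first sum, and hypothesis~(\ref{item:sample-ldce-2}) forces $p_b^1 \le \varepsilon$, so the $b$-th term of the second sum is bounded by $\varepsilon\cdot\|r\|_\infty \le \varepsilon$. Thus the overall $O(\varepsilon)$ estimate goes through unchanged, and the lemma follows.
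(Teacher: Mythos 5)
Your proof is correct and follows essentially the same route as the paper's: discretize, pass to the LP dual of $\ldCE^U$, use \Cref{remark:dual-1} to restrict to dual solutions $r$ that are $1$-Lipschitz in $v$ and bounded by $1$, and then split $\E_{\Gamma^1}[r]-\E_{\Gamma^2}[r]$ by conditioning on $y=b$ with exactly the same add-and-subtract decomposition, bounding one piece by the $W_1$ hypothesis and the other by $|r|\le 1$ and the bound on $|p^1_b - p^2_b|$. Your remark on the degenerate case $\lambda_b=0$ is a small clarification not spelled out in the paper's proof, but otherwise the arguments coincide.
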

\begin{proof}
Similarly to the proof of \Cref{thm:smce-equals-ldce} in \Cref{sec:ldce-lp}, by discretization it suffices to consider the case where $\Gamma^1$ and $\Gamma^2$ are both distributions over $V\times \{0,1\}$ for a finite $V\subseteq[0,1]$ and show that for every finite $U\subseteq[0,1]$ satisfying $\{0,1\}\subseteq U$, it holds that
\[
\ldCE^U(\Gamma^1) = \ldCE^U(\Gamma^2) \pm \Oh(\varepsilon).
\]
By \Cref{remark:dual-1}, it suffices to show that for every
function $r:V\times \{0,1\}\to [-1,1]$ satisfying $|r(v_1,y) - r(v_2,y)| \le |v_1 - v_2|$ for every $v_1,v_2\in V$ and $y\in \{0,1\}$, it holds that
\begin{equation}
\label{eq:sample-ldce-goal}
|\E_{(v,y)\sim \Gamma_1}[r(v,y)] - \E_{(v,y)\sim \Gamma_2}[r(v,y)]| \le \Oh(\varepsilon).
\end{equation}
By our assumption in \Cref{item:sample-ldce-1} and the definition of $W_1$, we have
\begin{equation}
\label{eq:sample-ldce-1}
|\E_{(v,y)\sim \Gamma_1}[r(v,y)|y = b] - \E_{(v,y)\sim \Gamma_2}[r(v,y)|y = b)]| \le \varepsilon / \lambda_b.
\end{equation}
Therefore,
\begin{align*}
& |\E_{(v,y)\sim \Gamma_1}[r(v,y),y = b] - \E_{(v,y)\sim \Gamma_2}[r(v,y),y = b]|\\
= {} & |\E_{\Gamma_1}[r(v,y)|y = b]\Pr_{\Gamma_1}[y = b] - \E_{\Gamma_2}[r(v,y)|y = b]\Pr_{\Gamma_2}[y = b]|\\
\le {} & \left|\E_{\Gamma_1}[r(v,y)|y = b] - \E_{\Gamma_2}[r(v,y)|y = b]\right|\Pr_{\Gamma_2}[y = b]\\
& + \E_{\Gamma_1}[r(v,y)|y = b]\left|\Pr_{\Gamma_1}[y = b] - \Pr_{\Gamma_2}[y = b]\right|\\
\le {} & (\varepsilon/\lambda_b)\lambda_b + \varepsilon\tag{By \eqref{eq:sample-ldce-1} and \Cref{item:sample-ldce-2}}\\
= {} & \Oh(\varepsilon).
\end{align*}
Summing up over $b = 0,1$ proves \eqref{eq:sample-ldce-goal}.
\end{proof}

In order to prove that the conditions of the lemma above are indeed satisfied, where $\Gamma^2$ is an empirical distribution over a sample from $\Gamma^1$, we will need to following concentration theorem.
\begin{theorem}[\cite{fournier2015rate}]
\label{thm:wasserstein-concentration}
Let $\mu$ be a distribution over $[0,1]$. If we sample $n = \Omega(\varepsilon^{-2} \log \delta^{-1})$ points from the distribution $(x_1, x_2, \ldots x_n) \sim \mu^n$, then the empirical distribution $\hat{\mu}_S$ on the sample $S$ satisfies
\begin{equation*}
    \Pr(W_1(\hat{\mu}_S, \mu) > \varepsilon) \leq \delta.
\end{equation*}
\end{theorem}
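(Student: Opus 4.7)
I would reduce the one-dimensional $W_1$ concentration to a uniform CDF concentration, then invoke the Dvoretzky--Kiefer--Wolfowitz--Massart (DKW) inequality. The key observation is that for probability measures $\mu,\nu$ supported on $[0,1]$, the Kantorovich--Rubinstein duality admits the explicit CDF representation
\[
W_1(\mu,\nu) \;=\; \int_0^1 |F_\mu(t) - F_\nu(t)|\,\mathrm{d}t,
\]
where $F_\mu(t)=\mu([0,t])$. This is standard and follows from writing any $1$-Lipschitz $f:[0,1]\to\R$ as the integral of its (a.e.) derivative $f'\in[-1,1]$ and integrating by parts. Since we integrate over the unit interval, we get the pointwise-to-integral bound $W_1(\mu,\nu)\le \sup_{t\in[0,1]}|F_\mu(t)-F_\nu(t)|$.

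Next, I would apply the DKW--Massart inequality to the empirical CDF $F_{\hat\mu_S}$ of an i.i.d.\ sample $S=(x_1,\dots,x_n)\sim\mu^n$:
\[
\Pr\!\left(\sup_{t\in\R}\bigl|F_{\hat\mu_S}(t)-F_\mu(t)\bigr| > \varepsilon\right) \;\le\; 2\exp(-2n\varepsilon^2).
\]
Combining with the previous display gives $\Pr(W_1(\hat\mu_S,\mu)>\varepsilon)\le 2\exp(-2n\varepsilon^2)$. Setting the right-hand side at most $\delta$ yields the sample complexity $n\ge \tfrac{1}{2\varepsilon^2}\log(2/\delta)=\Omega(\varepsilon^{-2}\log\delta^{-1})$, exactly as claimed.

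\textbf{Main obstacle and alternatives.} The main obstacle is justifying the CDF formula for $W_1$ and invoking DKW as black boxes; both are standard but merit a one-line reminder in the write-up. If one prefers to avoid DKW, an alternative route is to (i) bound $\E[W_1(\hat\mu_S,\mu)]$ directly by $O(1/\sqrt n)$ via a symmetrization and Rademacher complexity argument on the class of $1$-Lipschitz functions $[0,1]\to[-1,1]$ (whose Rademacher complexity is $O(1/\sqrt n)$), and (ii) concentrate $W_1(\hat\mu_S,\mu)$ around its mean via McDiarmid's inequality, exploiting the fact that replacing a single sample point changes $W_1$ by at most $1/n$ (since the support has diameter $1$). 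Either route delivers the advertised $\Oh(\varepsilon^{-2}\log\delta^{-1})$ sample complexity; the DKW route is the shorter and more direct one, so that is the route I would present.
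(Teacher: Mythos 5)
Your proposal is correct. Note, however, that the paper does not prove this statement at all: it is invoked as a black-box citation to Fournier--Guillin, whose general result concerns empirical measures in $\R^d$ under moment conditions and is proved by entirely different (multiscale/partitioning) techniques. For the special case actually needed here --- a distribution supported on the bounded interval $[0,1]$ --- your elementary route via the CDF representation $W_1(\mu,\nu)=\int_0^1|F_\mu(t)-F_\nu(t)|\,\mathrm{d}t$ followed by the DKW--Massart inequality is a clean, self-contained derivation, and it even yields the explicit constant $n\ge \tfrac{1}{2}\varepsilon^{-2}\log(2/\delta)$ rather than an unspecified $\Omega(\cdot)$. Both steps you flag as black boxes (the CDF formula for $W_1$ on the line, and DKW) are standard and correctly applied, and the bound $W_1\le\sup_t|F_{\hat\mu_S}(t)-F_\mu(t)|$ is valid precisely because the integration domain has length $1$. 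Your alternative route via Rademacher complexity of $1$-Lipschitz functions plus McDiarmid would also work and is closer in spirit to how the paper handles generalization for $\scerror$ elsewhere (\Cref{thm:sample-complexity}), but the DKW argument is indeed the shorter one for this one-dimensional statement.
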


We are now ready to prove the following lemma.
\begin{lemma}
\label{lem:wasserstein-sample}
If we draw $n = \Omega(\varepsilon^{-2})$ samples from a distribution $\Gamma^1$, and denote by $\Gamma^2$ the empirical distribution on the sample $S = ((v_1, y_1), \ldots, (v_n, y_n))$, then with probability at least $2/3$ the pair of distributions $(\Gamma^1, \Gamma^2)$ satisfies the conditions of \Cref{lem:sample-ldce}.
\end{lemma}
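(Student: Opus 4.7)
The plan is to verify conditions (1) and (2) of \Cref{lem:sample-ldce} separately and then take a union bound. Condition (2) is immediate: the labels $y_1, \ldots, y_n$ are i.i.d.\ Bernoulli random variables with mean $p_1 := \Pr_{\Gamma^1}[y=1]$, and $\lambda_1$ is the empirical mean, so Hoeffding's inequality yields $|\lambda_1 - p_1| \leq \varepsilon$ with probability at least $1 - 2\exp(-2n\varepsilon^2) \geq 9/10$ once $n = \Omega(\varepsilon^{-2})$.

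For condition (1), I would fix a label $b\in\{0,1\}$, let $\mu_b := (v \mid y=b)_{\Gamma^1}$ and $\hat\mu_b := (v\mid y=b)_{\Gamma^2}$, and denote by $m_b = \lambda_b n$ the number of samples with $y_i = b$. The argument splits on the size of $\lambda_b$. If $\lambda_b \leq \varepsilon$ (including the degenerate case $m_b = 0$, which makes condition (1) vacuous in the proof of \Cref{lem:sample-ldce} since $\Pr_{\Gamma^2}[y=b]=0$), the required bound $W_1(\mu_b,\hat\mu_b) \leq \varepsilon/\lambda_b \geq 1$ holds trivially because both distributions are supported in $[0,1]$. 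If $\lambda_b > \varepsilon$, I would condition on the set of indices with $y_i=b$; under this conditioning the values $\{v_i : y_i=b\}$ are i.i.d.\ draws from $\mu_b$, so \Cref{thm:wasserstein-concentration} applies. With target accuracy $\varepsilon' := \varepsilon/\lambda_b = \varepsilon n/m_b$ and constant failure probability, the required number of i.i.d.\ samples is $\Omega(\varepsilon'^{-2}) = \Omega(m_b^2/(\varepsilon n)^2)$, which is at most $m_b$ exactly when $\varepsilon^2 n^2 \gtrsim m_b$; since $m_b \leq n$, this is satisfied by $n = \Omega(\varepsilon^{-2})$ uniformly in the realized value of $m_b$. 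Integrating over the (random) value of $m_b$ preserves the bound.

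A union bound over the three failure events (condition (2) and condition (1) for each of $b\in\{0,1\}$) keeps the total failure probability below $1/3$, giving the claimed $2/3$ success probability; feeding this into \Cref{lem:sample-ldce} then yields \Cref{thm:ldce-sample-complexity}. The only subtle point is the dependence between $m_b$ and $\hat\mu_b$ in condition (1), which is handled cleanly by the case split: the vacuous case absorbs small $\lambda_b$ where the threshold $\varepsilon/\lambda_b$ is already too loose to ever be tight, so concentration is only needed in the regime $\lambda_b > \varepsilon$ where the conditional sample size $m_b$ is provably large enough.
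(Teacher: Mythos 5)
Your proof is correct and follows essentially the same approach as the paper's: a Chernoff bound for the label frequency (condition (2)), conditioning on which indices have $y_i=b$ so that \Cref{thm:wasserstein-concentration} applies to the $m_b$ resulting i.i.d.\ conditional samples (condition (1)), and a union bound over the three failure events. The paper compresses your case analysis into the single chain $W_1(\mu_b,\hat\mu_b) \le O(1/\sqrt{n_b}) \le O(\varepsilon/\sqrt{\lambda_b}) \le O(\varepsilon/\lambda_b)$, which already holds uniformly for all realized $n_b\ge 1$ (with $n_b=0$ vacuous), so your explicit split on $\lambda_b\le\varepsilon$ is not strictly necessary, though it does make the degenerate and small-$m_b$ regimes more transparent.
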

\begin{proof}
    First of all, except with probability $1/10$ we have $|\Pr_{\Gamma^1}(y=1) - \Pr_{\Gamma^2}(y=1)| \le \varepsilon$ by the standard Chernoff bound argument.
    
    If this is the case, for $b \in \{0, 1\}$ we get $n_b$ independent samples from the conditional distribution $(v | y=b)_{\Gamma^1}$, where
    \begin{equation*}
        n_b = \Pr_{\Gamma^2}(y = b) n = \Omega\left(\Pr_{\Gamma^2}(y=b) \varepsilon^{-2}\right).
    \end{equation*}
    Hence, by \Cref{thm:wasserstein-concentration}, with sufficiently high probability, for $b\in \{0,1\}$,
    \begin{equation*}
    W_1( (v | y= b)_{\Gamma^1}, (v | y=b)_{\Gamma^2}) \leq \Oh\left(\frac{1}{\sqrt{n_b}}\right) \leq \Oh\left(\frac{\varepsilon}{\sqrt{\Pr_{\Gamma^2}(y=b)}}\right)\le \Oh\left(\frac{\varepsilon}{\Pr_{\Gamma^2}(y = b)}\right).\qedhere
    \end{equation*}
\end{proof}

\begin{proof}[Proof of \Cref{thm:ldce-sample-complexity}]
Follows by composition of \Cref{lem:sample-ldce} and \Cref{lem:wasserstein-sample}.    
\end{proof}

\section{Experiments}
\label{sec:experimental}

We now conduct several experiments evaluating the calibration measures discussed in this work
on synthetic datasets.
The purpose of this is twofold:
First, to explore how our measures behave (and compare to each other)
on reasonable families of distributions, beyond our worst-case theoretical bounds.
And second, to demonstrate that these measures can be computed efficiently, some even in linear time.

\subsection{Synthetic Datasets}
\begin{figure}[t]
\centering
\includegraphics[width=\linewidth]{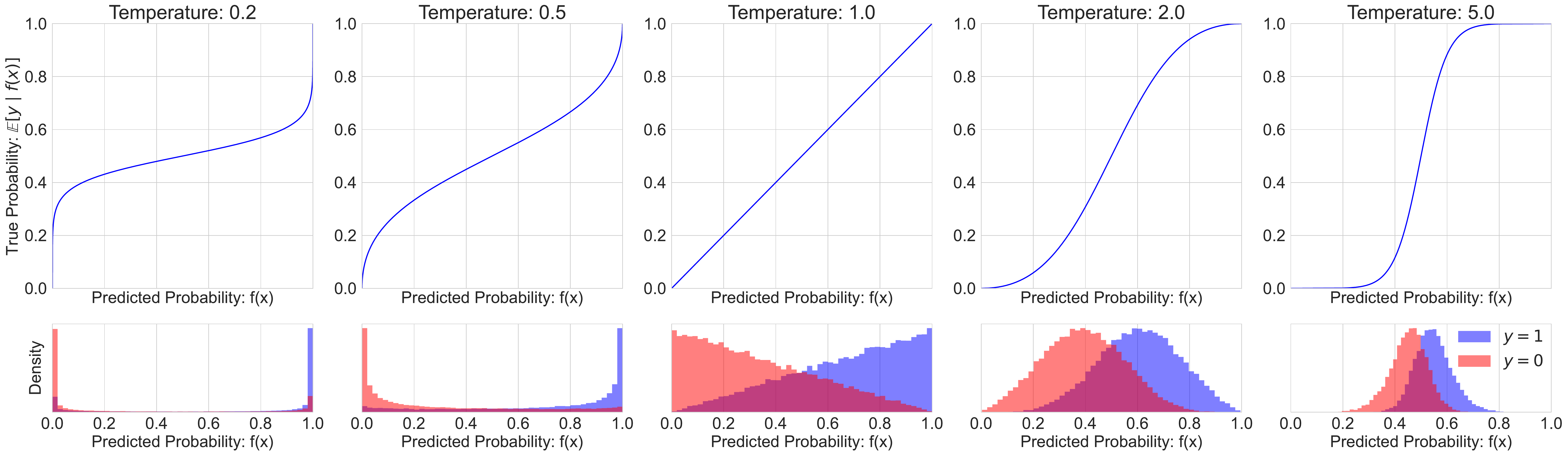}
\caption{{\bf Synthetic Dataset Family.} Reliability diagrams for the family of synthetic distributions
$\mD_\beta$, as the inverse-temperature $\beta$ is varied.
The center plot $\mD_1$, at unit temperature, is perfectly calibrated.
Temperatures greater than $1$ are underconfident, while temperatures less than $1$ are overconfident.
The bottom row shows density plots for the conditional distributions
$p(f \mid y = 0)$ and $p(f \mid y = 1)$.
}
\label{fig:temp-scaling}
\end{figure}

We consider evaluating calibration on the following family of distributions.
Define the ``baseline'' distribution $\mD_1$ as the following
joint distribution $(f, y) \sim \mD_1$:
\begin{align*}
&f \sim \textrm{Unif}\lbrack 0, 1\rbrack\\
&y \sim \textrm{Bernoulli}(f)
\end{align*}
This is a perfectly calibrated distribution,
where predictions $f$ are uniform over $\lbrack 0, 1 \rbrack$.
Now consider modifying the predicted outputs, by changing their ``temperature.''
Formally, for any inverse-temperature parameter $\beta = 1/T \in \R_{> 0}$,
define the distribution $\mD_\beta$ as:
\begin{equation}
\mD_\beta := \left\{\left(\frac{f^\beta}{f^\beta + (1-f)^\beta}~,~ y \right)\right\}_{(f, y) \sim \mD_1}
\end{equation}
If the original predictions in $\mD_1$ were the output of a softmax, the distribution $\mD_\beta$
corresponds exactly to changing the softmax temperature to $(1/\beta)$.
For small temperatures, this pushes predictions towards the endpoints of the interval $\lbrack 0, 1 \rbrack$, yielding an overconfident classifier.
And for large temperatures, this pushes predictions towards $0.5$, yielding an underconfident classifier.
This family of distributions $\{\mD_\beta\}$ is illustrated in Figure~\ref{fig:temp-scaling}.

\subsection{Implementation Details}
We will evaluate calibration measures on the family of distributions $\{\mD_\beta\}$.
We draw $N=10000$ samples from each distribution $\mD_\beta$, and evaluate
the calibration measures as described below.
All of the measures here are computed in linear time $\Oh(N)$, except for smooth calibration which requires
solving a linear program.

\paragraph{binnedECE.} This is the standard measurement of binnedECE (Equation~\ref{eqn:binnedECE}),
using 20 equal-width bins.

\paragraph{binnedECEw.} This is simply binnedECE plus the average bin width (which is $1/20$ in our case).
As noted in Equation~\eqref{eqn:binnedECEw}, adding the bin width penalty turns binnedECE into a provable upper-bound on $\dCE$.

\paragraph{kCE.} Kernel calibration error using the Laplace kernel. We compute the approximate kernel calibration using $M=10N$ terms,
as described in Section~\ref{sec:practical-laplace}. This runs in $\Oh(N)$ time.

\paragraph{intCE.} Interval calibration error. We approximate this via the
surrogate described in Section~\ref{sec:surrogate}.
Specifically, we compute $\wh\SintCE$ with choice of precision $\eps=0.01$.

\paragraph{smCE.} Smooth calibration error. We compute this from samples via the
linear program described in \Cref{thm:smooth-lp}.

\subsection{Evaluation and Discussion}

\begin{figure}[p]
\centering
\includegraphics[width=\linewidth]{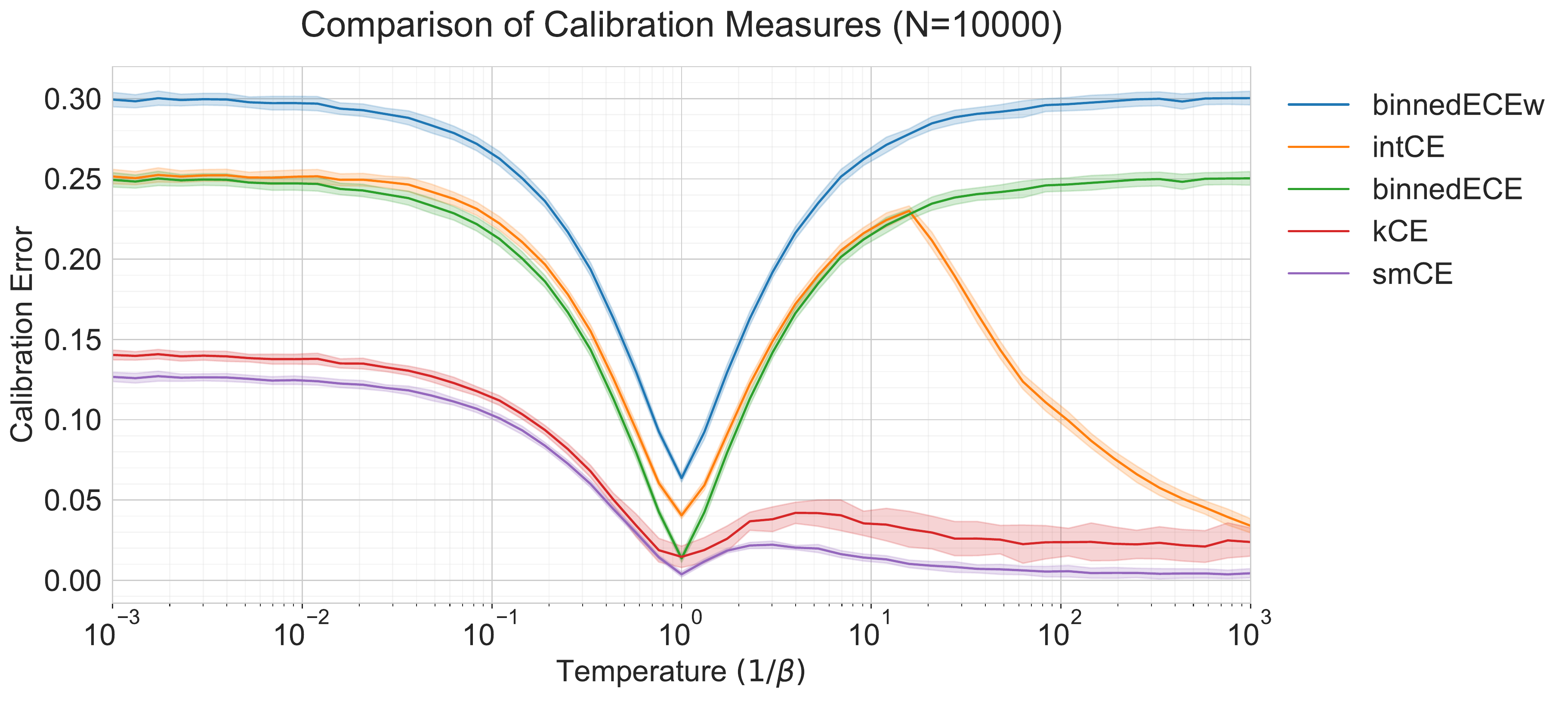}\\
\vspace{2cm}
\includegraphics[width=\linewidth]{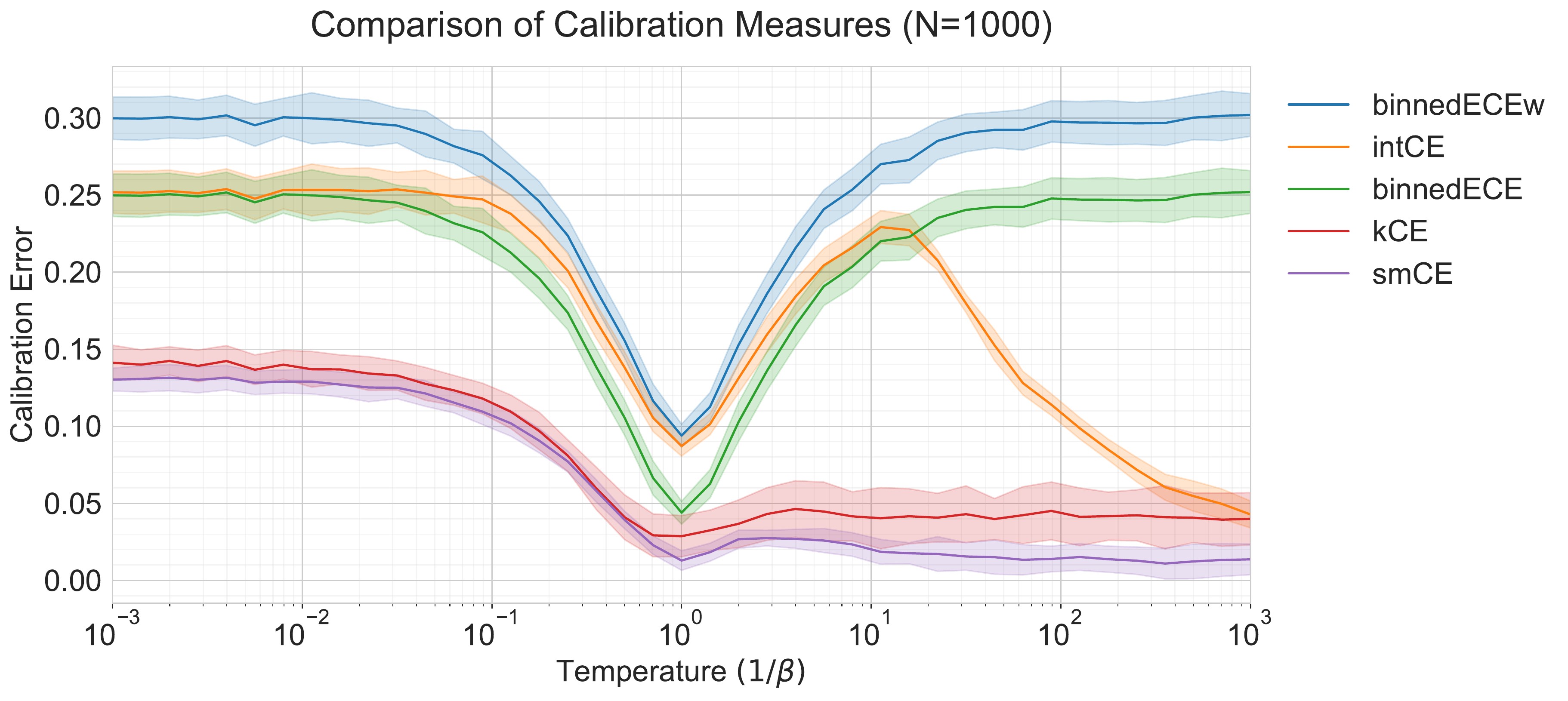}
\caption{{\bf Experimental Evaluation.} 
We experimentally evaluate calibration measures on the family of distributions $\{\mD_\beta\}$
of varying temperature.
Measures are computed over the empirical distribution of
$N=10000$ independent samples (top figure), and $N=1000$ samples (bottom figure).
We plot mean and standard deviations over $50$ independent trials.
Note that these measures concentrate well around their mean, even for $N=1000$ samples.
}
\label{fig:comparison}
\end{figure}

In Figure~\ref{fig:comparison}, we numerically evaluate calibration measures as we vary the temperature
of the distribution $\mD_\beta$.
The true calibration distance $\dCE$ lies between $\scerror$ and $\intCE$, as per Equation~\eqref{eqn:main-ineq}, but its true value
cannot be determined in the prediction-access model.

We observe that while all metrics are correlated at temperatures $T \leq 1$ (when the classifier is overconfident), the behavior at $T \gg 1$ is more subtle.
At moderately large temperatures ($T \in [1,10]$),
$\{\intCE, \mathsf{binnedECE}\}$ 
are much higher than $\{\kCE,\scerror\}$.
This is because as predictions concentrate
around $0.5$, the distribution $\mD_\beta$ becomes
similar to the construction of \Cref{lem:tight-gap},
which exhibits a quadratic gap between $\intCE$ and $\scerror$.
As the temperature $T\to \infty$, the true calibration distance $\dCE \to 0$,
and so all consistent measures ($\kCE, \scerror, \intCE$) will also go to $0$.
However $\mathsf{binnedECE}$, which is not consistent, does not go to $0$.

Finally, we note that $\scerror$ is numerically close to $\kCE$
for the range of distributions tested here.

\subsubsection*{Acknowledgements}
Part of this work was performed while LH was interning at Apple. LH is also supported by Moses Charikar’s and Omer Reingold’s Simons Investigators awards, Omer Reingold’s NSF Award IIS-1908774, and the Simons Foundation Collaboration on the Theory of Algorithmic Fairness.

JB is supported by a Junior Fellowship from the Simons Society of Fellows.

PN and JB acknowledge the city of New Orleans,
for providing an environment conducive to both
research and recreation
in the early stages of this project.
PN also acknowledges his partner
Shreya Shankar for their invaluable support.

\bibliographystyle{alpha}
\bibliography{refs}

\newcommand{\etalchar}[1]{$^{#1}$}
\begin{thebibliography}{JOKOM12}

\bibitem[AIGT{\etalchar{+}}22]{arrieta2022metrics}
Imanol Arrieta-Ibarra, Paman Gujral, Jonathan Tannen, Mark Tygert, and Cherie
  Xu.
\newblock Metrics of calibration for probabilistic predictions.
\newblock {\em arXiv preprint arXiv:2205.09680}, 2022.

\bibitem[AST04]{ambroladze2004complexity}
Amiran Ambroladze and John Shawe-Taylor.
\newblock Complexity of pattern classes and lipschitz property.
\newblock In {\em International Conference on Algorithmic Learning Theory},
  pages 181--193. Springer, 2004.

\bibitem[BdW02]{BuhrmanW02}
Harry Buhrman and Ronald de~Wolf.
\newblock Complexity measures and decision tree complexity: a survey.
\newblock {\em Theor. Comput. Sci.}, 288(1):21--43, 2002.

\bibitem[BLR90]{BlumLR90}
Manuel Blum, Michael Luby, and Ronitt Rubinfeld.
\newblock Self-testing/correcting with applications to numerical problems.
\newblock In Harriet Ortiz, editor, {\em Proceedings of the 22nd Annual {ACM}
  Symposium on Theory of Computing, May 13-17, 1990, Baltimore, Maryland,
  {USA}}, pages 73--83. {ACM}, 1990.

\bibitem[Bri50]{brier1950verification}
Glenn~W Brier.
\newblock Verification of forecasts expressed in terms of probability.
\newblock {\em Monthly weather review}, 78(1):1--3, 1950.

\bibitem[BTA11]{berlinet2011reproducing}
A.~Berlinet and C.~Thomas-Agnan.
\newblock {\em Reproducing Kernel Hilbert Spaces in Probability and
  Statistics}.
\newblock Springer US, 2011.

\bibitem[CAT16]{crowson2016assessing}
Cynthia~S Crowson, Elizabeth~J Atkinson, and Terry~M Therneau.
\newblock Assessing calibration of prognostic risk scores.
\newblock {\em Statistical methods in medical research}, 25(4):1692--1706,
  2016.

\bibitem[Daw82a]{Dawid}
A.~P. Dawid.
\newblock Objective probability forecasts.
\newblock {\em University College London, Dept. of Statistical Science.
  Research Report 14}, 1982.

\bibitem[Daw82b]{dawid1982well}
A~Philip Dawid.
\newblock The well-calibrated bayesian.
\newblock {\em Journal of the American Statistical Association},
  77(379):605--610, 1982.

\bibitem[Daw84]{dawid1984present}
A~Philip Dawid.
\newblock Present position and potential developments: Some personal views
  statistical theory the prequential approach.
\newblock {\em Journal of the Royal Statistical Society: Series A (General)},
  147(2):278--290, 1984.

\bibitem[Daw85]{dawid1985calibration}
A~Philip Dawid.
\newblock Calibration-based empirical probability.
\newblock {\em The Annals of Statistics}, pages 1251--1274, 1985.

\bibitem[DF83]{degroot1983comparison}
Morris~H DeGroot and Stephen~E Fienberg.
\newblock The comparison and evaluation of forecasters.
\newblock {\em Journal of the Royal Statistical Society: Series D (The
  Statistician)}, 32(1-2):12--22, 1983.

\bibitem[DKR{\etalchar{+}}21]{OI}
Cynthia Dwork, Michael~P. Kim, Omer Reingold, Guy~N. Rothblum, and Gal Yona.
\newblock Outcome indistinguishability.
\newblock In {\em ACM Symposium on Theory of Computing (STOC'21)}, 2021.

\bibitem[Doi07]{doi2007computer}
Kunio Doi.
\newblock Computer-aided diagnosis in medical imaging: historical review,
  current status and future potential.
\newblock {\em Computerized medical imaging and graphics}, 31(4-5):198--211,
  2007.

\bibitem[FG15]{fournier2015rate}
Nicolas Fournier and Arnaud Guillin.
\newblock On the rate of convergence in wasserstein distance of the empirical
  measure.
\newblock {\em Probability Theory and Related Fields}, 162(3):707--738, 2015.

\bibitem[FH18]{FosterH18}
Dean~P. Foster and Sergiu Hart.
\newblock Smooth calibration, leaky forecasts, finite recall, and nash
  dynamics.
\newblock {\em Games Econ. Behav.}, 109:271--293, 2018.

\bibitem[FH21]{foster2021forecast}
Dean~P Foster and Sergiu Hart.
\newblock Forecast hedging and calibration.
\newblock {\em Journal of Political Economy}, 129(12):3447--3490, 2021.

\bibitem[FV98]{FosterV98}
Dean~P. Foster and Rakesh~V. Vohra.
\newblock Asymptotic calibration.
\newblock {\em Biometrika}, 85(2):379--390, 1998.

\bibitem[GGR98]{GoldreichGR98}
Oded Goldreich, Shafi Goldwasser, and Dana Ron.
\newblock Property testing and its connection to learning and approximation.
\newblock {\em J. {ACM}}, 45(4):653--750, 1998.

\bibitem[GHK{\etalchar{+}}23]{op2}
Parikshit Gopalan, Lunjia Hu, Michael~P. Kim, Omer Reingold, and Udi Wieder.
\newblock {Loss Minimization Through the Lens Of Outcome Indistinguishability}.
\newblock In Yael Tauman~Kalai, editor, {\em 14th Innovations in Theoretical
  Computer Science Conference (ITCS 2023)}, volume 251 of {\em Leibniz
  International Proceedings in Informatics (LIPIcs)}, pages 60:1--60:20,
  Dagstuhl, Germany, 2023. Schloss Dagstuhl -- Leibniz-Zentrum f{\"u}r
  Informatik.

\bibitem[GKR{\etalchar{+}}22]{omni}
Parikshit Gopalan, Adam~Tauman Kalai, Omer Reingold, Vatsal Sharan, and Udi
  Wieder.
\newblock Omnipredictors.
\newblock In {\em Innovations in Theoretical Computer Science (ITCS'2022)},
  2022.

\bibitem[GKSZ22]{GopalanKSZ22}
Parikshit Gopalan, Michael~P. Kim, Mihir Singhal, and Shengjia Zhao.
\newblock Low-degree multicalibration.
\newblock In {\em Conference on Learning Theory, 2-5 July 2022, London, {UK}},
  volume 178 of {\em Proceedings of Machine Learning Research}, pages
  3193--3234. {PMLR}, 2022.

\bibitem[GPSW17]{guo2017calibration}
Chuan Guo, Geoff Pleiss, Yu~Sun, and Kilian~Q Weinberger.
\newblock On calibration of modern neural networks.
\newblock In {\em International Conference on Machine Learning}, pages
  1321--1330. PMLR, 2017.

\bibitem[GRA{\etalchar{+}}20]{gupta2020calibration}
Kartik Gupta, Amir Rahimi, Thalaiyasingam Ajanthan, Thomas Mensink, Cristian
  Sminchisescu, and Richard Hartley.
\newblock Calibration of neural networks using splines.
\newblock {\em arXiv preprint arXiv:2006.12800}, 2020.

\bibitem[Hal20]{hallenbeck1920forecasting}
Cleve Hallenbeck.
\newblock Forecasting precipitation in percentages of probability.
\newblock {\em Monthly Weather Review}, 48(11):645--647, 1920.

\bibitem[HKRR18]{hkrr2018}
{\'{U}}rsula H{\'{e}}bert{-}Johnson, Michael~P. Kim, Omer Reingold, and Guy~N.
  Rothblum.
\newblock Multicalibration: Calibration for the (computationally-identifiable)
  masses.
\newblock In {\em Proceedings of the 35th International Conference on Machine
  Learning, {ICML}}, 2018.

\bibitem[hss]{stackexchange}
Yiorgos S.~Smyrlis (https://math.stackexchange.com/users/57021/yiorgos-s
  smyrlis).
\newblock How to approximate a globally lipschitz function by differentiable
  functions with bounded derivatives?
\newblock Mathematics Stack Exchange.
\newblock URL:https://math.stackexchange.com/q/938390 (version: 2014-09-19).

\bibitem[Hua19]{Huang19}
Hao Huang.
\newblock Induced subgraphs of hypercubes and a proof of the sensitivity
  conjecture.
\newblock {\em Annals of Mathematics}, 190:949--955, 2019.

\bibitem[JOKOM12]{jiang2012calibrating}
Xiaoqian Jiang, Melanie Osl, Jihoon Kim, and Lucila Ohno-Machado.
\newblock Calibrating predictive model estimates to support personalized
  medicine.
\newblock {\em Journal of the American Medical Informatics Association},
  19(2):263--274, 2012.

\bibitem[KCT{\etalchar{+}}21]{karandikar2021soft}
Archit Karandikar, Nicholas Cain, Dustin Tran, Balaji Lakshminarayanan,
  Jonathon Shlens, Michael~C Mozer, and Becca Roelofs.
\newblock Soft calibration objectives for neural networks.
\newblock {\em Advances in Neural Information Processing Systems},
  34:29768--29779, 2021.

\bibitem[KF08]{kakadeF08}
Sham Kakade and Dean Foster.
\newblock Deterministic calibration and nash equilibrium.
\newblock {\em Journal of Computer and System Sciences}, 74(1):115–130, 2008.

\bibitem[KLM19]{kumar2019verified}
Ananya Kumar, Percy~S Liang, and Tengyu Ma.
\newblock Verified uncertainty calibration.
\newblock In {\em Advances in Neural Information Processing Systems}, pages
  3792--3803, 2019.

\bibitem[KMR17]{KleinbergMR17}
Jon~M. Kleinberg, Sendhil Mullainathan, and Manish Raghavan.
\newblock Inherent trade-offs in the fair determination of risk scores.
\newblock In {\em 8th Innovations in Theoretical Computer Science Conference,
  {ITCS}}, 2017.

\bibitem[KNRW17]{kearns2017preventing}
Michael Kearns, Seth Neel, Aaron Roth, and Zhiwei~Steven Wu.
\newblock Preventing fairness gerrymandering: Auditing and learning for
  subgroup fairness.
\newblock {\em arXiv preprint arXiv:1711.05144}, 2017.

\bibitem[KNRW18]{kearns2018}
Michael Kearns, Seth Neel, Aaron Roth, and Zhiwei~Steven Wu.
\newblock Preventing fairness gerrymandering: Auditing and learning for
  subgroup fairness.
\newblock In {\em International Conference on Machine Learning}, pages
  2564--2572, 2018.

\bibitem[KSB21]{kompa2021second}
Benjamin Kompa, Jasper Snoek, and Andrew~L Beam.
\newblock Second opinion needed: communicating uncertainty in medical machine
  learning.
\newblock {\em NPJ Digital Medicine}, 4(1):1--6, 2021.

\bibitem[KSJ18]{ksj18a}
Aviral Kumar, Sunita Sarawagi, and Ujjwal Jain.
\newblock Trainable calibration measures for neural networks from kernel mean
  embeddings.
\newblock In Jennifer Dy and Andreas Krause, editors, {\em Proceedings of the
  35th International Conference on Machine Learning}, volume~80 of {\em
  Proceedings of Machine Learning Research}, pages 2805--2814. PMLR, 10--15 Jul
  2018.

\bibitem[LAS{\etalchar{+}}04]{li2004radiologists}
Feng Li, Masahito Aoyama, Junji Shiraishi, Hiroyuki Abe, Qiang Li, Kenji
  Suzuki, Roger Engelmann, Shusuke Sone, Heber MacMahon, and Kunio Doi.
\newblock Radiologists' performance for differentiating benign from malignant
  lung nodules on high-resolution ct using computer-estimated likelihood of
  malignancy.
\newblock {\em American Journal of Roentgenology}, 183(5):1209--1215, 2004.

\bibitem[LHHD22]{lee2022t}
Donghwan Lee, Xinmeng Huang, Hamed Hassani, and Edgar Dobriban.
\newblock T-cal: An optimal test for the calibration of predictive models.
\newblock {\em arXiv preprint arXiv:2203.01850}, 2022.

\bibitem[LLS00]{Li2000ImprovedBO}
Yi~Li, Philip~M. Long, and Aravind Srinivasan.
\newblock Improved bounds on the sample complexity of learning.
\newblock {\em J. Comput. Syst. Sci.}, 62:516--527, 2000.

\bibitem[MDR{\etalchar{+}}21]{minderer2021revisiting}
Matthias Minderer, Josip Djolonga, Rob Romijnders, Frances Hubis, Xiaohua Zhai,
  Neil Houlsby, Dustin Tran, and Mario Lucic.
\newblock Revisiting the calibration of modern neural networks.
\newblock {\em Advances in Neural Information Processing Systems},
  34:15682--15694, 2021.

\bibitem[Men02]{mendelson2002geometric}
Shahar Mendelson.
\newblock Geometric parameters of kernel machines.
\newblock In {\em International conference on computational learning theory},
  pages 29--43. Springer, 2002.

\bibitem[MLF{\etalchar{+}}21]{mustafa2021supervised}
Basil Mustafa, Aaron Loh, Jan Freyberg, Patricia MacWilliams, Megan Wilson,
  Scott~Mayer McKinney, Marcin Sieniek, Jim Winkens, Yuan Liu, Peggy Bui,
  et~al.
\newblock Supervised transfer learning at scale for medical imaging.
\newblock {\em arXiv preprint arXiv:2101.05913}, 2021.

\bibitem[MRT18]{mohri2018foundations}
M.~Mohri, A.~Rostamizadeh, and A.~Talwalkar.
\newblock {\em Foundations of Machine Learning, second edition}.
\newblock Adaptive Computation and Machine Learning series. MIT Press, 2018.

\bibitem[MSR{\etalchar{+}}10]{mealiffe2010clinical}
Matthew Mealiffe, Renee Stokowski, Brian Rhees, Ross Prentice, Mary Pettinger,
  and David Hinds.
\newblock Clinical validity assessment of a breast cancer risk model combining
  genetic and clinical information.
\newblock {\em Nature Precedings}, pages 1--1, 2010.

\bibitem[Mur98]{murphy1998early}
Allan~H Murphy.
\newblock The early history of probability forecasts: Some extensions and
  clarifications.
\newblock {\em Weather and forecasting}, 13(1):5--15, 1998.

\bibitem[MW84]{murphy1984probability}
Allan~H Murphy and Robert~L Winkler.
\newblock Probability forecasting in meteorology.
\newblock {\em Journal of the American Statistical Association},
  79(387):489--500, 1984.

\bibitem[MZ03]{meir2003generalization}
Ron Meir and Tong Zhang.
\newblock Generalization error bounds for bayesian mixture algorithms.
\newblock {\em Journal of Machine Learning Research}, 4(Oct):839--860, 2003.

\bibitem[NCH14]{naeini2014binary}
Mahdi~Pakdaman Naeini, Gregory~F Cooper, and Milos Hauskrecht.
\newblock Binary classifier calibration: Non-parametric approach.
\newblock {\em arXiv preprint arXiv:1401.3390}, 2014.

\bibitem[NCH15]{naeini2015obtaining}
Mahdi~Pakdaman Naeini, Gregory~F Cooper, and Milos Hauskrecht.
\newblock Obtaining well calibrated probabilities using bayesian binning.
\newblock In {\em Proceedings of the... AAAI Conference on Artificial
  Intelligence. AAAI Conference on Artificial Intelligence}, volume 2015, page
  2901. NIH Public Access, 2015.

\bibitem[NDZ{\etalchar{+}}19]{nixon2019measuring}
Jeremy Nixon, Michael~W. Dusenberry, Linchuan Zhang, Ghassen Jerfel, and Dustin
  Tran.
\newblock Measuring calibration in deep learning.
\newblock In {\em Proceedings of the IEEE/CVF Conference on Computer Vision and
  Pattern Recognition (CVPR) Workshops}, June 2019.

\bibitem[NS94]{NisanS94}
Noam Nisan and Mario Szegedy.
\newblock On the degree of boolean functions as real polynomials.
\newblock {\em Comput. Complex.}, 4:301--313, 1994.

\bibitem[PRR06]{ParnasRR06}
Michal Parnas, Dana Ron, and Ronitt Rubinfeld.
\newblock Tolerant property testing and distance approximation.
\newblock {\em J. Comput. Syst. Sci.}, 72(6):1012--1042, 2006.

\bibitem[R{\etalchar{+}}21]{rahaman2021uncertainty}
Rahul Rahaman et~al.
\newblock Uncertainty quantification and deep ensembles.
\newblock {\em Advances in Neural Information Processing Systems},
  34:20063--20075, 2021.

\bibitem[RCSM22]{roelofs2022mitigating}
Rebecca Roelofs, Nicholas Cain, Jonathon Shlens, and Michael~C Mozer.
\newblock Mitigating bias in calibration error estimation.
\newblock In {\em International Conference on Artificial Intelligence and
  Statistics}, pages 4036--4054. PMLR, 2022.

\bibitem[RR07]{rahimi2007kernel}
Ali Rahimi and Benjamin Recht.
\newblock Random features for large-scale kernel machines.
\newblock In J.~Platt, D.~Koller, Y.~Singer, and S.~Roweis, editors, {\em
  Advances in Neural Information Processing Systems}, volume~20. Curran
  Associates, Inc., 2007.

\bibitem[RS96]{RubinfeldS96}
Ronitt Rubinfeld and Madhu Sudan.
\newblock Robust characterizations of polynomials with applications to program
  testing.
\newblock {\em {SIAM} J. Comput.}, 25(2):252--271, 1996.

\bibitem[SGR97]{siu1997improved}
Man-hung Siu, Herbert Gish, and Fred Richardson.
\newblock Improved estimation, evaluation and applications of confidence
  measures for speech recognition.
\newblock In {\em Fifth European Conference on Speech Communication and
  Technology}, 1997.

\bibitem[VCV15]{van2015calibration}
Ben Van~Calster and Andrew~J Vickers.
\newblock Calibration of risk prediction models: impact on decision-analytic
  performance.
\newblock {\em Medical decision making}, 35(2):162--169, 2015.

\bibitem[ZKH20]{zhang2020mix}
Jize Zhang, Bhavya Kailkhura, and T~Yong-Jin Han.
\newblock Mix-n-match: Ensemble and compositional methods for uncertainty
  calibration in deep learning.
\newblock In {\em International conference on machine learning}, pages
  11117--11128. PMLR, 2020.

\end{thebibliography}
\addtocontents{toc}{\protect\setcounter{tocdepth}{1}}
\appendix

\section{Other Related Works}
\label{app:related}

\paragraph{Boolean function complexity.} A number of low-level complexity measures for Boolean functions have been studied in the literature \cite{BuhrmanW02}. Starting from the seminal work of Nisan and Szegedy \cite{NisanS94}, it has been shown that several of these measures are in fact polynomially related; the latest addition being sensitivity\cite{Huang19}. One can view polynomial degree or decision tree depth as the central notion in this web of reductions. Our work suggests  a similar structure in the world of calibration measures, where all consistent calibration measures for a particular notion of ground truth distance are polynomially related. We show that the $\ell_p$ metrics give rise to one family of consistent measures, and our framework allows for other families which are consistent with  other metrics. 

\paragraph{ECE Variants.}
Many metrics are essentially variants of the $\ECE$, and also inherit flaws of the $\ECE$.
We use the term $\ECE$ to refer to Definition~\ref{def:ece}, which does not satisfy continuity.
\emph{Binned-ECE} refers to the family of metrics which
discretize the range of $f$ into a constant number of bins,
and then compute ECE of this rounded function. Binned-ECE, and in general most ``binned'' estimators,
do not satisfy soundness: some functions which are not perfectly calibrated will have Binned-ECE of 0.
This is intuitively because binning allows calibration error to cancel-out within bins.
Moreover, Binned-ECE is not continuous, due to discontinuities at bin boundaries.
Among proposed modifications to the Binned-ECE, ``debiasing'' adjustments
(such as \citep{roelofs2022mitigating,kumar2019verified}) unfortunately remain discontinuous and unsound.
To enforce continuity, a smoothed version of Binned-ECE was proposed in \citep{karandikar2021soft},
but it is still unsound due to the finite number of (smoothed-)bins.
Finally, the recently-introduced T-Cal \citep{lee2022t} is an estimator of the population ECE
that applies to distributions satisfying certain smoothness conditions.
Such smoothness assumptions are necessary in T-Cal, since without them, the ECE is both discontinuous
and impossible to estimate from finite samples.

\paragraph{Continuous Calibration}
The notion of \emph{Continuous Calibration} was introduced in \cite{foster2021forecast} --- the continuous calibration is not directly a calibration measure in our sense, instead it is a way of assigning for a sequence of probability distributions $\Gamma_t$ over $[0,1] \times \{0, 1\}$, whether $\Gamma_t$ is asymptotically calibrated. Their notion of continuous calibration of a sequence $\Gamma_t$ turns out to be equivalent to saying for any consistent calibration measure $\mu$ that as $t\to \infty$ the $\mu(\Gamma_t) \to 0$. This is equivalent to saying that $\mu(\Gamma_t) \to 0$ for \emph{all} consistent calibration measures $\mu$, since they are all polynomially related).

On the other hand, the notion of \emph{Continuous Calibration} cannot be used to say anything quantitative about the miscalibration of one given classifier.

\section{Proofs \label{sec:proofs}}

\subsection{Proofs from Section \ref{sec:framework}}
\label{app:distance}

\begin{proof}[Proof of Lemma \ref{lem:best}]
    Consider any calibration measure $\mu$ as in the statement. For any $g \in \calD$, 
    \begin{align*}
        \mu_\mD(f) = \mu_\mD(f) - \mu_\mD(g) \leq L\ m_{\mD}(f, g)
    \end{align*}
    where the equality is because $\mu_\mD(g) = 0$ by Soundness, and the inequality by Lipschitz continuity. Hence 
    \begin{align}
        \label{eq:L-lip}
     \mu_\mD(f) \leq \min_{g \in \calD}L \ m_\mD(f,g) = L\ \dCE_\mD(f).
     \end{align}
\end{proof}

\begin{proof}[Proof of Lemma \ref{lem:all-lp}]
Since for every $\mD$ we have the relation 
\[ (\ell_p^\cD(f,g))^p \le \ell_1^{\mD}(f,g) \le \ell_{p}^\mD(f,g) \]
it follows that
\[ (\dCE^{\ell_p}_{\mD}(f))^p \le \dCE^{\ell_1}_{\mD}(f) \le \dCE^{\ell_p}_\mD(f,g) \]
Assume that $\mu$ is $(\ell_1, c, s)$-robust. This implies 
\[b(\dCE^{\ell_p}_\mD(f))^{ps} \leq  b(\dCE^{\ell_1}_\mD(f))^s \leq \mu(f) \leq a(\dCE^{\ell_1}_\mD(f))^c \leq a(\dCE^{\ell_p}_{\mD}(f))^c\]
hence $\mu$ is $(\ell_p, c, ps)$-robust. Conversely, if $\mu$ is $(\ell_p, c', s')$ robust then 
\[b'(\dCE^{\ell_1}_\mD(f))^{s'} \leq  b'(\dCE^{\ell_p}_\mD(f))^{s'} \leq \mu(f) \leq a'(\dCE^{\ell_p}_\mD(f))^{c'} \leq a'(\dCE^{\ell_p}_{\mD}(f))^{'c/p}\]
hence $\mu$ is $(\ell_1, c'/p, s')$-robust. Note that in either case, the translation loses a factor of $p$ in the approximation degree.
\end{proof}

\subsection{Proofs from \Cref{sec:int-ce}}
\label{sec:proof-pa}
We first prove \Cref{thm:estimate-intce} using \Cref{claim:int-monotone} and \Cref{lm:intCE-uniform-conv} below. We prove \Cref{lm:int-discontinuous} after that.
\begin{claim}
\label{claim:int-monotone}
$\RintCE(\Gamma,2\varepsilon) \le \RintCE (\Gamma,\varepsilon)$.
\end{claim}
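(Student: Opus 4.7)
The plan is to use the triangle inequality after observing that every width-$2\varepsilon$ interval can be written as the disjoint union of two consecutive width-$\varepsilon$ intervals. Concretely, for a shift $r' \in [0, 2\varepsilon)$, set $r := r' \bmod \varepsilon \in [0, \varepsilon)$. A short case analysis (depending on whether $r' < \varepsilon$ or $r' \ge \varepsilon$) shows that for every $j \in \Z$, the interval $I_{r', j}^{2\varepsilon}$ equals $I_{r, 2j}^{\varepsilon} \cup I_{r, 2j+1}^{\varepsilon}$ in the first case, and $I_{r, 2j+1}^\varepsilon \cup I_{r, 2j+2}^\varepsilon$ in the second; in either case it is a disjoint union of two consecutive width-$\varepsilon$ intervals (with the same shift $r$) whose indices are determined by $r'$ up to a global reindexing.

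Next I would introduce the shorthand $a_{r, k} := \E_{(v, y)\sim \Gamma}[(y-v)\one(v \in I_{r, k}^\varepsilon)]$. The decomposition above gives $\E[(y-v)\one(v \in I_{r', j}^{2\varepsilon})] = a_{r, 2j} + a_{r, 2j+1}$ (or a shifted analogue). Applying the triangle inequality termwise,
\[
\sum_{j \in \Z} \bigl|\E[(y-v)\one(v \in I_{r', j}^{2\varepsilon})]\bigr|
\le \sum_{j \in \Z} \bigl(|a_{r, 2j}| + |a_{r, 2j+1}|\bigr)
= \sum_{k \in \Z} |a_{r, k}|,
\]
which is exactly the inner quantity appearing in $\RintCE(\Gamma, \varepsilon)$ at shift $r$. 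This is the one real inequality in the argument, and it is the triangle inequality, so there is no substantive obstacle here.

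Finally I would take expectation over the outer shift. As $r'$ ranges uniformly over $[0, 2\varepsilon)$, the map $r' \mapsto r = r' \bmod \varepsilon$ is a measure-preserving surjection onto $[0, \varepsilon)$ equipped with the uniform distribution (each $r \in [0, \varepsilon)$ is the image of exactly two values of $r'$, each of density $\tfrac{1}{2\varepsilon}$, combining to density $\tfrac{1}{\varepsilon}$). Hence
\[
\RintCE(\Gamma, 2\varepsilon) = \E_{r'}\!\Bigl[\,\sum_j \bigl|\E[(y-v)\one(v \in I_{r', j}^{2\varepsilon})]\bigr|\Bigr]
\le \E_{r}\!\Bigl[\,\sum_k |a_{r, k}|\Bigr] = \RintCE(\Gamma, \varepsilon),
\]
which is the desired conclusion. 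The only slightly delicate point is the bookkeeping in the case split for $r' \in [\varepsilon, 2\varepsilon)$, where the reindexing shifts the roles of even/odd $k$; once that is written out carefully the rest is routine.
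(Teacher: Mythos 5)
Your proposal is correct and takes essentially the same approach as the paper: decompose each width-$2\varepsilon$ interval into two consecutive width-$\varepsilon$ intervals, apply the triangle inequality termwise, and match up the distributions of the random shifts. The paper writes the chain starting from $\RintCE(\Gamma,\varepsilon)$ with $r$ uniform over $[0,2\varepsilon)$ (using the $\varepsilon$-periodicity of the inner sum implicitly), whereas you write it starting from $\RintCE(\Gamma,2\varepsilon)$ with an explicit case split on $r' \bmod \varepsilon$ and a measure-preservation argument — the same proof, just with the bookkeeping made more explicit.
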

\begin{proof}
The claim is proved by the following chain, where the distribution of $r$ is the uniform distribution over $[0,2\varepsilon)$.
\begin{align*}
\RintCE(\Gamma,\varepsilon) & = \E_{r}\Big[\sum_{j\in \Z}|\E_{(v,y)\sim\Gamma}[(y - v)\one(v\in I_{r,j}^\varepsilon)]|\Big]\\
& = \E_{r}\Big[\sum_{j\in \Z}|\E_{(v,y)\sim\Gamma}[(y - v)\one(v\in I_{r,2j}^\varepsilon)]| + |\E_{(v,y)\sim\Gamma}[(y - v)\one(v\in I_{r,2j + 1}^\varepsilon)]|\Big]\\
& \ge \E_{r}\Big[\sum_{j\in \Z}|\E_{(v,y)\sim\Gamma}[(y - v)\one(v\in I_{r,2j}^\varepsilon\cup  I_{r,2j + 1}^\varepsilon)]|\Big]\\
& = \E_{r}\Big[\sum_{j\in \Z}|\E_{(v,y)\sim\Gamma}[(y - v)\one(v\in I_{r,j}^{2\varepsilon})]|\Big]\\
& = \RintCE(\Gamma,2\varepsilon).\qedhere
\end{align*}
\end{proof}
\begin{lemma}
\label{lm:intCE-uniform-conv}
There exists an absolute constant $C > 0$ with the following property. For $\varepsilon_0,\varepsilon_1,\delta\in (0,1/2)$, let $n\in \Z_{>0}$ satisfy $n \ge C\varepsilon_0^{-2}(\log(1/\delta) + \varepsilon_1^{-1})$. 
Let $(v_1,y_1),\ldots,(v_n,y_n)$ be drawn i.i.d.\ from a distribution $\Gamma$ over $[0,1]\times\{0,1\}$.
Then with probability at least $1-\delta$, for every $\varepsilon \ge \varepsilon_1$ and every $r\in [0,\varepsilon)$, we have
\begin{equation}
\label{eq:intCE-uniform-conv-1}
\left|\sum_{j\in \Z}\left|\frac 1n\sum_{\ell = 1}^n(y_\ell - v_\ell)\one(v_\ell \in I_{r,j}^{\varepsilon})\right| - \sum_{j\in \Z}\left|\E_{(v,y)\sim \Gamma}[(y - v)\one(v\in I_{r,j}^{\varepsilon})]\right|\right| \le \varepsilon_0.
\end{equation}
\end{lemma}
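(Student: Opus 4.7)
The plan is to recognize the quantity in \eqref{eq:intCE-uniform-conv-1}, taken uniformly over $\varepsilon\ge\varepsilon_1$ and $r\in[0,\varepsilon)$, as a uniform convergence statement over a single, modestly complex class of step functions, and then invoke standard VC/Rademacher-type concentration.

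First, I would flatten the two outer suprema into one. For any fixed $\varepsilon,r$, the identity $|a|=\max_{\sigma\in\{-1,+1\}}\sigma a$ applied coordinatewise in $j$ gives
\[
\sum_{j\in\Z}\left|\frac{1}{n}\sum_{\ell=1}^n(y_\ell-v_\ell)\one(v_\ell\in I_{r,j}^\varepsilon)\right|=\sup_{w\in\mathcal{W}_{\varepsilon,r}}\frac{1}{n}\sum_{\ell=1}^n(y_\ell-v_\ell)w(v_\ell),
\]
where $\mathcal{W}_{\varepsilon,r}$ is the class of $\{-1,+1\}$-valued functions on $[0,1]$ that are constant on every interval $I_{r,j}^\varepsilon$; the analogous identity holds for the population version. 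Using $|\sup f-\sup g|\le\sup|f-g|$ for the two functionals $f(w)=\frac{1}{n}\sum_\ell(y_\ell-v_\ell)w(v_\ell)$ and $g(w)=\E_{(v,y)\sim\Gamma}[(y-v)w(v)]$, the LHS of \eqref{eq:intCE-uniform-conv-1} is bounded, uniformly in $\varepsilon\ge\varepsilon_1$ and $r\in[0,\varepsilon)$, by
\[
\Delta_n:=\sup_{w\in\mathcal{W}^*}|f(w)-g(w)|,\qquad \mathcal{W}^*:=\bigcup_{\varepsilon\ge\varepsilon_1,\,r\in[0,\varepsilon)}\mathcal{W}_{\varepsilon,r}.
\]

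Next I would control the complexity of $\mathcal{W}^*$. Every $w\in\mathcal{W}^*$ is a $\{-1,+1\}$-valued step function on $[0,1]$ with at most $k:=\lceil\varepsilon_1^{-1}\rceil+2$ pieces, and hence can be written as $w=2\one_A-1$ for a union $A$ of at most $k$ disjoint intervals. The class of such indicators has VC dimension $O(k)=O(\varepsilon_1^{-1})$, and composing with the $[-1,1]$-valued multiplier $(y-v)$ preserves the VC-subgraph dimension up to a constant factor. The remaining step is concentration: a standard symmetrization argument together with Massart's finite-class lemma---or, elementarily, a union bound over the $n^{O(k)}$ distinct restrictions of $\mathcal{W}^*$ to the sample combined with Hoeffding---gives $\E[\Delta_n]\le C_1\sqrt{\varepsilon_1^{-1}/n}$, and McDiarmid's bounded-differences inequality then yields, with probability at least $1-\delta$,
\[
\Delta_n\le C_1\sqrt{\varepsilon_1^{-1}/n}+\sqrt{2\log(1/\delta)/n}.
\]
Requiring the right-hand side to be at most $\varepsilon_0$ recovers precisely the claimed condition $n\ge C\varepsilon_0^{-2}(\varepsilon_1^{-1}+\log(1/\delta))$.

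The main obstacle is to ensure the complexity bound is genuinely $O(\sqrt{k/n})$ rather than $O(\sqrt{k\log n/n})$: a naive union-bound-plus-Hoeffding produces an extra $\sqrt{\log n}$ that, while polylogarithmic in $1/\varepsilon_0,1/\varepsilon_1,1/\delta$ and therefore absorbable into the constant $C$, is cosmetically undesirable. Two clean ways to remove it are (a) a single Dudley-chaining step applied to the VC class $\mathcal{W}^*$, or (b) observing that shifting $r$ by $\eta$ changes the value of $w$ on at most $k$ intervals each of measure $O(\eta)$, so discretizing $r$ on an $O(\varepsilon\varepsilon_0)$-grid and bounding the perturbation by an $L^1$ argument reduces to a finite union bound without chaining. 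Either route ends in the same sample complexity stated in the lemma.
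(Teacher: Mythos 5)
Your proposal is correct and takes essentially the same approach as the paper: both rewrite each absolute-value sum as a supremum over the $\{\pm 1\}$-valued step functions constant on the bins, observe that the union over all $\varepsilon\ge\varepsilon_1$ and $r\in[0,\varepsilon)$ gives a class of VC dimension $O(\varepsilon_1^{-1})$, note that multiplying by the bounded factor $(y-v)$ does not increase the pseudo-dimension, and then invoke a uniform convergence bound (the paper black-boxes this via a citation, whereas you sketch symmetrization/Massart/McDiarmid and flag the $\sqrt{\log n}$ issue and its removal via chaining or $r$-discretization, which the cited result handles internally).
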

\begin{proof}[Proof of \Cref{lm:intCE-uniform-conv}]
For every $\varepsilon \ge \varepsilon_1$ and $r\in [0,\varepsilon)$, define a class $B_r^{\varepsilon}$ consisting of functions $b:[0,1]\to\{-1,1\}$ such that for every $j\in \Z$, there exists $b_j\in \{-1,1\}$ such that $b(v) = b_j$ for every $v\in [0,1]\cap I_{r,j}^{\varepsilon}$. It is now clear that
\begin{align}
\sum_{j\in \Z}\left|\frac 1n\sum_{\ell = 1}^n(y_\ell - v_\ell)\one(v_\ell \in I_{r,j})\right| & = \sum_{j\in \Z}\sup_{b_j\in \{-1,1\}}\frac 1n\sum_{\ell = 1}^n(y_\ell - v_\ell)\one(v_\ell \in I_{r,j})b_j\notag \\
& = \sup_{b\in B_r^\varepsilon}\frac 1n\sum_{\ell = 1}^n(y_\ell - v_\ell)b(v_\ell),\label{eq:intCE-uniform-conv-2}
\end{align}
and
\begin{align}
\sum_{j\in \Z}\left|\E[(y - v)\one(v\in I_{r,j})]\right| & = \sum_{j\in \Z}\sup_{b_j\in \{-1,1\}}\E[(y - v)\one(v\in I_{r,j})b_j]\notag \\
& = \sup_{b\in B_r^\varepsilon}\E[(y - v)b(v)]. \label{eq:intCE-uniform-conv-3}
\end{align}
It is clear that the functions class $B:=\bigcup_{\varepsilon \ge \varepsilon_1}\bigcup_{r\in [0,\varepsilon)}B_r^\varepsilon$ has VC dimension $O(\varepsilon_1^{-1})$. Let $G$ be the function class consisting of functions $g:\{0,1\}\times[0,1] \to [-1,1]$ such that there exists $b\in B$ satisfying
\[
g(y,v) = (y - v)b(v) \quad \text{for every }(y,v)\in \{0,1\}\times [0,1].
\]
The fact that $B$ has VC dimension $O(\varepsilon_1^{-1})$ implies that $G$ has pseudo-dimension $O(\varepsilon_1^{-1})$. By standard uniform convergence results \cite{Li2000ImprovedBO}, with probability at least $1-\delta$, 
\begin{equation}
\label{eq:intCE-uniform-conv-4}
\left|\frac 1n\sum_{\ell = 1}^n(y_\ell - v_\ell)b(v_\ell) - \E[(y - v)b(v)]\right| \le \varepsilon_0 \quad \text{for every }b\in B.
\end{equation}
By \eqref{eq:intCE-uniform-conv-2} and \eqref{eq:intCE-uniform-conv-3}, the inequality \eqref{eq:intCE-uniform-conv-4} implies that \eqref{eq:intCE-uniform-conv-1} holds for every $\varepsilon \ge \varepsilon_1$ and $r\in [0,\varepsilon)$.
\end{proof}
\begin{proof}[Proof of \Cref{thm:estimate-intce}]
By \Cref{lm:intCE-uniform-conv}, with probability at least $1-\delta/2$, for every $k = 0, \ldots,k^*$,
\[
\left|\wh\RintCE(\Gamma,2^{-k})  - \frac 1m \sum_{s = 1}^m \sum_{j\in \Z}|\E_{(v,y)\sim\Gamma}[(y - v)\one(v\in I_{r_s,j}^\varepsilon)]|\right| \le \varepsilon/4.
\]
By the Chernoff bound and the union bound, with probability at least $1-\delta/2$, for every $k = 0,\ldots,k^*$,
\[
\left|\frac 1m \sum_{s = 1}^m \sum_{j\in \Z}|\E_{(v,y)\sim\Gamma}[(y - v)\one(v\in I_{r_s,j}^\varepsilon)]| - \RintCE(\Gamma,2^{-k})\right| \le \varepsilon/4.
\]
Combining the above inequalities using the union bound, with probability at least $1 - \delta$, for every $k = 0,\ldots,k^*$,
\[
|\wh\RintCE(\Gamma,2^{-k}) - \RintCE(\Gamma,2^{-k})| \le \varepsilon/2.
\]
This implies that
\begin{align*}
\SintCE(\Gamma) & \le \min_{k = 0,\ldots,k^*}\left(\RintCE(\Gamma,2^{-k}) + 2^{-k}\right)\\
& \le \min_{k = 0,\ldots,k^*}\left(\wh\RintCE(\Gamma,2^{-k}) + 2^{-k}\right) + \varepsilon/2\\
& = \wh\SintCE(\Gamma) + \varepsilon/2.
\end{align*}
It remains to prove that $\SintCE(\Gamma) \ge \wh\SintCE(\Gamma) - \varepsilon$.
For every $k\in \Z_{\ge 0}$, if $k \le k^*$, we have
\[
\RintCE(\Gamma,2^{-k}) + 2^{-k} \ge \wh\RintCE(\Gamma,2^{-k}) + 2^{-k} - \varepsilon/2 \ge \wh\SintCE(\Gamma) - \varepsilon/2.
\]
If $k \ge k^*$, we have
\begin{align*}
\RintCE(\Gamma,2^{-k}) + 2^{-k} & \ge \RintCE(\Gamma,2^{-k^*})\tag{by \Cref{claim:int-monotone}}\\
 & \ge \wh\RintCE(\Gamma,2^{-k^*}) - \varepsilon/2\\
& \ge \wh\SintCE(\Gamma) - 2^{-k^*} - \varepsilon/2\\
& \ge \wh \SintCE(\Gamma) - \varepsilon.
\end{align*}
Therefore, we have $\RintCE(\Gamma,2^{-k}) + 2^{-k} \ge \wh\SintCE(\Gamma) - \varepsilon$ for every $k\in \Z_{\ge 0}$, which implies that $\SintCE(\Gamma) \ge \wh\SintCE(\Gamma) - \varepsilon$, as desired.
\end{proof}

We prove \Cref{lm:int-discontinuous} using the following example.
We choose $\mX = \{a,b,c,d\}$. Define $\alpha = 1/6$ and $\beta = 1/48$ and let $\varepsilon \in (0,\beta)$ be a parameter. We choose the distribution $\mD$ over $\mX\times\{0,1\}$ such that the marginal distribution of $x$ in $(x,y)\sim\mD$ is the uniform distribution over $\mX$, and the conditional distribution of $y$ given $x$ is defined according to the $\E[y|x]$ values in \Cref{table:discontinuous}.
We consider two predictors $f_1,f_2:\cX\to[0,1]$ also defined in \Cref{table:discontinuous}.

\begin{table}[ht]
\centering
\begin{tabular}[t]{lllll}
\hline
$x$ & $a$ & $b$ & $c$ & $d$\\
\hline
$f_1(x)$ & $1/2 - \beta$ & $1/2$ & $1/2$ & $1/2 + \beta$\\
$f_2(x)$ & $1/2 - \beta$ & $1/2 - \varepsilon$ & $1/2 + \varepsilon$ & $1/2 + \beta$\\
$\E_{\mD}[y|x]$ & $1/2 - \beta + \alpha $ & $1/2 - \varepsilon - \alpha$ & $1/2 + \varepsilon + 2\alpha$ & $1/2 + \beta - 2\alpha$\\
\hline
\end{tabular}
\caption{Discontinuity Example for $\intCE$}
\label{table:discontinuous}
\end{table}
The following claim follows immediately from the definitions of $f_1$ and $f_2$:
\begin{claim}
As $\varepsilon \to 0$, $f_2$ converges to $f_1$ uniformly.
\end{claim}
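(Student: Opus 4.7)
The plan is to verify the uniform convergence by direct inspection of Table~\ref{table:discontinuous}. Since the domain $\X = \{a,b,c,d\}$ is finite (in fact has only four points), uniform convergence of $f_2$ to $f_1$ as $\varepsilon \to 0$ is equivalent to pointwise convergence at each of the four points, which can be verified one at a time from the table.

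First I would read off from the table that $f_1(a) = f_2(a) = 1/2 - \beta$ and $f_1(d) = f_2(d) = 1/2 + \beta$, so $|f_1(x) - f_2(x)| = 0$ for $x \in \{a, d\}$ independently of $\varepsilon$. For the remaining two points, $|f_1(b) - f_2(b)| = |1/2 - (1/2 - \varepsilon)| = \varepsilon$ and $|f_1(c) - f_2(c)| = |1/2 - (1/2 + \varepsilon)| = \varepsilon$. Combining these computations,
\[
\sup_{x \in \X} |f_1(x) - f_2(x)| \;=\; \varepsilon,
\]
which tends to $0$ as $\varepsilon \to 0$. This is precisely the definition of uniform convergence on $\X$, so the claim follows.

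There is essentially no obstacle here: the author flags the statement as a direct consequence of the definitions, and indeed the entire argument is bookkeeping against the table. The substantive content of \Cref{lm:int-discontinuous} — namely that $\intCE_\mD(f_2)$ does \emph{not} converge to $\intCE_\mD(f_1)$ despite this uniform convergence of predictors — is presumably established by the claims that come after this preliminary observation, by exhibiting interval partitions that separate $1/2 - \varepsilon$ from $1/2 + \varepsilon$ for $f_2$ (keeping $\intCE_\mD(f_2)$ bounded away from $0$) while no such partition can exploit $f_1$, for which both middle points collapse to $1/2$.
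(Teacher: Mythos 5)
Your proof is correct and matches the paper's (implicit) reasoning: the paper simply states the claim "follows immediately from the definitions of $f_1$ and $f_2$," and your direct table check that $\sup_{x\in\X}|f_1(x)-f_2(x)|=\varepsilon$ is exactly that observation spelled out.
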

\begin{claim}
$\intCE_\mD(f_2) \le \beta$.
\end{claim}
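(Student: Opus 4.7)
The plan is to exhibit a specific interval partition $\mI$ of $[0,1]$ that achieves $\mathsf{binnedECE}(\mD_{f_2},\mI) + w_{\mD_{f_2}}(\mI) = \beta$. Since $\varepsilon\in(0,\beta)$, all four values of $f_2$ lie in the tight window $[1/2-\beta,\,1/2+\beta]$, and they naturally split into a ``left pair'' $\{f_2(a)=1/2-\beta,\; f_2(b)=1/2-\varepsilon\}$ and a ``right pair'' $\{f_2(c)=1/2+\varepsilon,\; f_2(d)=1/2+\beta\}$, each pair spanning a range of only $\beta-\varepsilon<\beta$.

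Concretely, take the partition
\[
\mI \;=\; \bigl\{\,[0,\,1/2-\beta),\;\; [1/2-\beta,\,1/2),\;\; [1/2,\,1/2+\beta],\;\; (1/2+\beta,\,1]\,\bigr\}.
\]
The first and last intervals carry no mass under $\mD_{f_2}$, so they contribute $0$ to both $\mathsf{binnedECE}$ and $w_{\mD_{f_2}}(\mI)$. The two middle intervals each have width exactly $\beta$ and together carry all the mass, giving $w_{\mD_{f_2}}(\mI)=\beta$.

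For the $\mathsf{binnedECE}$ term, I would plug the table values directly into Definition~\ref{def:intCE}. Using the uniform marginal on $\mX$, the bin $[1/2-\beta,1/2)$ captures exactly $a,b$, contributing
\[
\tfrac{1}{4}\bigl((f_2(a)-\E[y|a]) + (f_2(b)-\E[y|b])\bigr) \;=\; \tfrac14(-\alpha+\alpha) \;=\; 0,
\]
and the bin $[1/2,1/2+\beta]$ captures exactly $c,d$, contributing
\[
\tfrac{1}{4}\bigl((f_2(c)-\E[y|c]) + (f_2(d)-\E[y|d])\bigr) \;=\; \tfrac14(-2\alpha+2\alpha) \;=\; 0.
\]
Thus $\mathsf{binnedECE}(\mD_{f_2},\mI)=0$, and so $\intCE_\mD(f_2)\le 0+\beta=\beta$.

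There is essentially no obstacle here: the table of values was engineered so that within each of the two natural bins, the signed calibration errors $f_2(x)-\E[y|x]$ cancel. The only thing to be careful about is that the two middle bins are half-open on the correct sides so that $1/2-\beta$ and $1/2$ fall into the intended intervals, and that ``intervals'' in Definition~\ref{def:intCE} allows both half-open and closed endpoints; this is just a bookkeeping matter. The use of $\alpha=1/6$ and $\beta=1/48$ (and the $\beta>\varepsilon$ hypothesis) plays no role in this particular upper bound and will instead be used in the companion lower bound for $\intCE_\mD(f_1)$ (not shown here).
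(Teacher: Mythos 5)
Your proof is correct and follows essentially the same route as the paper's: pick a partition whose two non-trivial bins are the width-$\beta$ intervals around $1/2$, observe that the signed errors cancel within each bin (giving $\mathsf{binnedECE}=0$) while the mass-weighted width is $\beta$. One small point in your favor: the paper's proof writes the second bin as $[1/2,1/2+\beta)$, which would actually exclude $f_2(d)=1/2+\beta$ and break the cancellation; your choice $[1/2,1/2+\beta]$ gets the endpoint convention right.
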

\begin{proof}
Any interval partition $\mI$ that contains the two intervals $[1/2 - \beta,1/2)$ and $[1/2, 1/2 + \beta)$ satisfy $\intCE_\mD(f_2,\mI) = 0$ and $w_{\mD_{f_2}}(\mI) = \beta$.
\end{proof}
\begin{claim}
$\intCE_\mD(f_1) \ge 2\beta$.
\end{claim}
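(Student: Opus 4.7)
The plan is to bound $\intCE_\mD(f_1)$ from below by case-analyzing how an interval partition $\mI$ of $[0,1]$ can group the three values $\{1/2-\beta,\; 1/2,\; 1/2+\beta\}$ in the image of $f_1$. First I would compute the induced distribution $\mD_{f_1}$: these values carry probability masses $1/4,\; 1/2,\; 1/4$ respectively, and using the values of $\E[y\mid x]$ in Table~\ref{table:discontinuous} (the two points $b,c$ with $f_1 = 1/2$ average out, so the $\varepsilon$'s cancel), the signed bin contributions $\E[(y - f_1)\one(f_1 = v)]$ work out to $\alpha/4,\; \alpha/4,\; -\alpha/2$ at the three values --- note they sum to zero, as they must since $\E[y] = \E[f_1]$.

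Because any interval of $[0,1]$ containing both $1/2-\beta$ and $1/2+\beta$ must also contain $1/2$, there are exactly four admissible groupings. In the first case, all three values lie in a single bin $I$; then the signed contributions $\alpha/4 + \alpha/4 - \alpha/2 = 0$ cancel, so the binnedECE vanishes, but $w(I)\ge 2\beta$ with probability mass $1$, giving $w_{\mD_{f_1}}(\mI)\ge 2\beta$. In every other grouping I would show the binnedECE alone is already at least $\alpha/2$: whenever $1/2+\beta$ sits alone in a bin, that bin contributes $|{-\alpha/2}| = \alpha/2$; when $1/2+\beta$ is paired with $1/2$ (so $1/2-\beta$ is alone), the paired bin contributes $|\alpha/4 - \alpha/2| = \alpha/4$ and the singleton bin contributes $\alpha/4$, again totalling $\alpha/2$. (The remaining case where each value is in its own bin has binnedECE equal to $\alpha$.)

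Plugging in $\alpha = 1/6$ and $\beta = 1/48$ gives $\alpha/2 = 1/12 > 1/24 = 2\beta$, so in all three non-degenerate groupings the binnedECE alone strictly exceeds $2\beta$, while in the degenerate ``all-in-one-bin'' grouping the width penalty itself is at least $2\beta$. Hence $\mathsf{binnedECE}(\mD_{f_1},\mI) + w_{\mD_{f_1}}(\mI) \ge 2\beta$ for every $\mI$, and taking the infimum yields $\intCE_\mD(f_1) \ge 2\beta$. The conceptual content --- and the reason the constants $\alpha,\beta$ are set as they are --- is that cancellation of the biases requires lumping all three values into one bin, but the act of lumping is exactly what is penalized by the $w(\mI)$ term; the arithmetic inequality $\alpha/2 > 2\beta$ makes every other grouping strictly worse than this minimum. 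No step of this argument is technically subtle; the only place requiring care is the sign bookkeeping in the paired-bin case, which is where the choice of $\alpha/2$ versus $-2\alpha$ ensures a partial (but not full) cancellation.
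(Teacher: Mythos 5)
Your proof is correct and takes essentially the same approach as the paper's: a case analysis on how an interval partition can group the three values $\{1/2-\beta,\,1/2,\,1/2+\beta\}$, observing that cancellation of the signed contributions $(\alpha/4,\,\alpha/4,\,-\alpha/2)$ forces all three into one interval of width at least $2\beta$, while any other grouping incurs a binnedECE of at least $\alpha/2 > 2\beta$. The paper is terser — it only notes that whenever the two endpoints are separated, the interval containing $1/2-\beta$ already contributes $\alpha/4 = 2\beta$ — but your four-case enumeration is a refinement of the same idea, and all the arithmetic checks out.
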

\begin{proof}
Consider any interval partition $\mI$. If $1/2 - \beta$ and $1/2 + \beta$ are not in the same interval, $\intCE_\mD(f_1,\mI) \ge \alpha/4 = 2\beta$. If $1/2 - \beta$ and $1/2 + \beta$ are in the same interval, then $w_{\mD_{f_1}}(\mI) \ge 2\beta$.
\end{proof}
\begin{proof}[Proof of \Cref{lm:int-discontinuous}]
    The lemma follows immediately from combining the three claims above.
\end{proof}

\subsection{Proofs from \Cref{sec:kernels}}
\label{sec:app-ker-proofs}

\begin{proof}[Proof of \Cref{lem:kernel-dual-definition}]
By duality:
\begin{align*}
\| \E (y - v) \phi(v) \|_{\cH} & = \sup_{w \in B_{\cH}} \inprod{ \E (y - v) \phi(v), w}_{\cH} \\
& = \sup_{w \in B_{\cH}} \E (y-v) \inprod{\phi(v), w}_{\cH} \\
& = \sup_{w \in B_{\cH}} \E (y-v) w(v)\\
& = \wCE^{B_{\cH}}(\Gamma).
\qedhere
\end{align*}
\end{proof}

\begin{proof}[Proof of~\Cref{lem:laplace-rff}]
The Fourier transform (or characterstic function, using probabilistic terminology) of $h(v) = \exp(-|v|)$ is exactly the probability density function of the Cauchy distribution $\hat{h}(\omega) = \frac{1}{1 + \omega^2}$.

Using inverse Fourier transform formula, we get 
\begin{equation*}
    \exp(-|v|) = \int \exp(-i \omega v) \frac{1}{1 + \omega^2} \, \d \omega = \E_{\omega \sim \mathrm{Cauchy}(1)} \exp(- i \omega v).
\end{equation*}
We can now calculate $\E z z^*$ explicitly. The expectation of the $(i,j)$ entry is
\begin{equation*}
    \E (z z^*)_{ij} = \E z_{i} \overline{z_j} = \E_{\omega \sim \mathrm{Cauchy}} \exp(-i \omega (v_i - v_j)) = \exp(-|v_i - v_j|) = M_{ij}.\qedhere
\end{equation*}
\end{proof}

\begin{proof}[Proof of \Cref{thm:laplace-binning}]
For fixed $\delta \in (0, 1), \tau \in [0, \delta)$, let $t_{\tau,\delta}(i) := \lfloor \frac{v_i + \tau}{\delta} \rfloor$ be the index of the bin into which the sample $(v_i, y_i)$ is assigned (with bin sizes $\delta$, and random shift $\tau$).

The \textsc{LaplaceBinningEstimate} algorithm takes random $\delta$ sampled from the distribution $\mathrm{Gamma}(k=2, \theta=1)$, random $\tau \sim \mathrm{Unif}(0, \delta)$ and returns 
\[ T_{\delta, \tau} := \frac{1}{k^2} \sum_{k} \left(\sum_{i \in t^{-1}_{\tau, \delta}(k)} (v_i - y_i)\right)^2.\] 
We can expand the square and write it as
\begin{equation*}
    T_{\delta, \tau} = \frac{1}{n^2} \sum_{i \leq n, j \leq n} (v_i - y_i)(v_j - y_j) \mathbf{1}[t_{\tau, \delta}(i) = t_{\tau,\delta}(j)].
\end{equation*}

First, let us prove that in fact $T_{\delta, \tau} \leq 1$. Indeed, clearly $|v_i - y_i| \leq 1$, and $\mathbf{1}[t_{\tau, \delta}(i) = t_{\tau, \delta}(j)] \leq 1$, and therefore $T_{\delta, \tau} \leq \frac{1}{n^2} \sum_{i\leq k, j\leq k} 1 \leq 1$.

We will now prove that the quantity $T_{\delta, \tau}$ is in fact an unbiased estimator of $\ekCE(S)^2$.

The expression $\mathbf{E}_{\delta, \tau} \mathbf{1}[t_{\tau, \delta}(i) = t_{\tau, \delta}(j)]$ is just the probability that $v_i$ and $v_j$ will land in the same bin, when we sample random bin-size $\delta$ and random shift $\tau$. We wish to show that this probability satisfies \[\E_{\delta, \tau}[ \mathbf{1}{t_{\tau, \delta}(i) = t_{\tau, \delta}(j)}] = \exp(-|v_i - v_j|).\]   Indeed, if this is the case, we will have 
\begin{equation*}
    \E_{\tau, \delta} T_{\delta, \tau} = \sum_{i,j} (v_i - y_i)(v_j - y_i) \E \mathbf{1}[t_{\tau,\delta}(i) = t_{\tau,\delta}(j)] = \sum_{i,j} (v_i - y_i)(v_j - y_j) \exp(-|v_i - v_j|) = \ekCE_{Lap}(S)^2.
\end{equation*}

For a fixed size of the bin $\delta$, we have $\E_{\tau \sim [0, \delta]} \mathbf{1}[t(i) = t(j)] = \max(1 - \frac{|v_i - v_j|}{\delta}, 0).$ 

The probability density function of the Gamma distribution with the shape parameter $k=2$ and scale parameter $\theta = 1$ is $p(v) = v \exp(-v) \mathbf{1}[v\geq 0]$ . We now have
\begin{align*}
    \E_{\delta} \max(1 - \frac{|u - v|}{\delta}, 0) & =
    \int_{0}^{\infty} t \exp(-t) \max(1 - \frac{|u - v|}{t}, 0) \, \mathrm{d} t \\
    & = \int_{|u - v|}^{\infty} t \exp(-t) (1 - \frac{|u - v|}{t}) \, \d t \\
    & = \int_{|u - v|}^{\infty} t \exp(-t)\, \d t - \int_{|u - v|}^{\infty} |u - v| \exp(-t) \, \d t \tag{integrating first term by parts}\\
    & = [- t \exp(-t)]_{|u - v|}^{\infty} - \int_{|u - v|}^{\infty} \exp(-t) \, \d t - \int_{|u - v|}^{\infty} |u - v| \exp(-t) \, \d t \\
    & = \exp(-|u - v|).\qedhere
\end{align*}
\end{proof}
\section{Gaussian Kernel Appendix \label{sec:gaussian-kernel}}

In this appendix we will discuss the $\kCE$ measure with respect to the Gaussian kernel $K(u,v) = \exp(-(u-v)^2).$ The associated RKHS has the following explicit description.

\begin{fact}[\cite{berlinet2011reproducing}]
For the Gaussian kernel $K_\gauss(u,v) := \exp(-(u-v)^2)$, we have associated RKHS $\cH_\gauss = \{ h : \R \to \R  : \int \hat{h}(\omega)^2 \exp(\omega^2) \d \omega < \infty \}$, where $\hat{h}$ denotes the Fourier transform of $h$. 
The associated inner product is given by
\begin{equation*}
    \inprod{h_1, h_2}_{K_\gauss} = \int_{-\infty}^{\infty} \hat{h}_1(\omega) \hat{h}_2(\omega)\exp(\omega^2) \d \omega, 
\end{equation*}
in particular, for function $h: \R \to \R$,
\begin{equation*}
    \|h\|_{K_\gauss}^2 = \int_{-\infty}^{\infty} \hat{h}(\omega)^2 \exp(\omega^2) \d \omega = \sum_k \frac{\|h^{(k)}\|_2^2}{k!}.
\end{equation*}
\end{fact}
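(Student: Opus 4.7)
The plan is to verify this characterization in three stages, using Fourier analysis for translation‐invariant positive definite kernels. First, I will fix a Fourier convention (say $\hat h(\omega) = \tfrac{1}{\sqrt{2\pi}}\int h(t)e^{-i\omega t}\,\mathrm dt$, so that Parseval holds as $\|h\|_2 = \|\hat h\|_2$ and $\widehat{h^{(k)}}(\omega) = (i\omega)^k\hat h(\omega)$) and show that the candidate space
\[
\cH_\gauss := \Big\{h:\R\to\R : \int |\hat h(\omega)|^2 \exp(\omega^2)\,\mathrm d\omega < \infty\Big\}
\]
with the given sesquilinear form is a Hilbert space. This is immediate: it is the weighted $L^2$ space $L^2(\R, \exp(\omega^2)\,\mathrm d\omega)$ pulled back via the Fourier transform, and weighted $L^2$ spaces are Hilbert.

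Second, I will verify the reproducing property, namely that $K_\gauss(\cdot,v)\in \cH_\gauss$ for every $v\in \R$ and that $\langle h, K_\gauss(\cdot,v)\rangle_{K_\gauss} = h(v)$ for all $h\in \cH_\gauss$. Using Bochner's theorem, write $\exp(-(u-v)^2)$ as the Fourier transform of a Gaussian density on the spectral side; the upshot is that, up to a universal constant $c>0$ absorbed into the convention, $\widehat{K_\gauss(\cdot,v)}(\omega) = c\, e^{-i\omega v}\exp(-\omega^2)$. Plugging this into the weighted inner product gives
\[
\langle h, K_\gauss(\cdot,v)\rangle_{K_\gauss}
= c\int \hat h(\omega)\, e^{i\omega v}\exp(-\omega^2)\exp(\omega^2)\,\mathrm d\omega
= c\int \hat h(\omega)\, e^{i\omega v}\,\mathrm d\omega,
\]
which is (a constant times) the Fourier inversion of $\hat h$ at $v$, i.e.\ $h(v)$. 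Choosing the convention so the constant equals one identifies $K_\gauss$ as the reproducing kernel. By uniqueness of the RKHS, this proves the first two equalities in the fact.

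Third, I will derive the identity $\int |\hat h(\omega)|^2\exp(\omega^2)\,\mathrm d\omega = \sum_k \|h^{(k)}\|_2^2/k!$. Expanding the weight as a power series $\exp(\omega^2) = \sum_{k\ge 0}\omega^{2k}/k!$, applying Tonelli (all terms are nonnegative), and using $\widehat{h^{(k)}}(\omega) = (i\omega)^k\hat h(\omega)$ together with Parseval gives
\[
\int |\hat h(\omega)|^2 \exp(\omega^2)\,\mathrm d\omega
= \sum_{k\ge 0}\frac{1}{k!}\int \omega^{2k}|\hat h(\omega)|^2\,\mathrm d\omega
= \sum_{k\ge 0}\frac{\|h^{(k)}\|_2^2}{k!},
\]
finishing the proof.

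\paragraph{Anticipated obstacle.} The only real bookkeeping difficulty is pinning down the Fourier normalization so that the weight on the spectral side comes out exactly as $\exp(\omega^2)$ (rather than $\exp(\omega^2/4)$ with a $\sqrt{2\pi}$ prefactor, as one gets from the raw computation of the Gaussian characteristic function). I expect to handle this by rescaling either the kernel or the frequency variable so that the Fourier transform of $\exp(-t^2)$ becomes $\exp(-\omega^2)$ on the nose; once that normalization is chosen, the reproducing identity and Parseval step both go through cleanly.
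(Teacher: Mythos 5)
The paper offers no proof of this Fact --- it is cited to the Berlinet--Thomas-Agnan reference --- so there is nothing to compare against; what you propose is the standard derivation (candidate space is a weighted $L^2$ space pulled back through the Fourier transform, verify the reproducing property via Bochner/Fourier inversion, invoke Moore--Aronszajn uniqueness, then expand $\exp(\omega^2)$ as a power series and apply Tonelli plus Parseval for the derivative-sum identity), and it is sound. Your three-stage plan is the right one, and the Tonelli/Parseval step for $\sum_k \|h^{(k)}\|_2^2/k!$ is exactly how one gets the last equality.

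One caution about the normalization issue you flag: it is slightly worse than a bookkeeping nuisance, because the Fact as literally stated cannot be made exact by any single rescaling. With the unitary convention (so that $\|h\|_2=\|\hat h\|_2$ and $\widehat{h^{(k)}}(\omega)=(i\omega)^k\hat h(\omega)$, which is what the identity $\int \hat h(\omega)^2\omega^{2k}\,\mathrm d\omega = \|h^{(k)}\|_2^2$ requires), the kernel $e^{-(u-v)^2}$ has spectral density proportional to $e^{-\omega^2/4}$, so the reproducing weight is $\pi^{-1/2}e^{\omega^2/4}$, not $e^{\omega^2}$. Rescaling the frequency variable to fix the weight introduces factors of $4^k$ into the derivative expansion; rescaling the kernel bandwidth changes $K_\gauss$ itself. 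So you should present the Fact as holding up to absolute constants in the equivalence of norms (or for a Gaussian kernel with the matching bandwidth), which is harmless here: the paper only uses this characterization up to constants, in \Cref{thm:gauss-lb} and \Cref{thm:gauss-continuity}, where bounds like $\|w\|_{\cH_\gauss}\le 1$ enter only through $\Oh(\cdot)$ and $\exp(-\Omega(\cdot))$ estimates. Apart from making that caveat explicit, your argument goes through; the only other detail worth a sentence is that elements of the candidate space have continuous representatives (since $\hat h\in L^1$ by Cauchy--Schwarz against $e^{-\omega^2}$), so it is genuinely a Hilbert space of functions and the uniqueness theorem applies.
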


We will show that if we chose Gaussian kernel, the associated calibration measure $\kCEG$ does not satisfy robust soundness.

\begin{theorem}
\label{thm:gauss-lb}
For every $\varepsilon$, there is a distribution $\Gamma_{\varepsilon}$ over $[0,1] \times \{0, 1\}$, such that $\scerror(\Gamma_\varepsilon) \geq \Omega(\varepsilon^{5/2})$, and $\kCEG(\Gamma_\varepsilon) \leq \Oh(\exp(-\varepsilon^{-1}))$.
\end{theorem}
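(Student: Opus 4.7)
I will exploit the fact that the Gaussian RKHS norm weights frequencies exponentially: $\|h\|_{K_\gauss}^2 = \int \hat h(\xi)^2 e^{\xi^2}\,d\xi$, so any function in the unit ball is essentially bandlimited to $|\xi| = O(1)$, whereas $1$-Lipschitz functions can contain much higher-frequency content as long as their amplitude decays correspondingly. The construction is to place the miscalibration entirely at a high frequency $\omega = C/\sqrt{\varepsilon}$. Concretely, let $\rho:\R\to [0,1]$ be a fixed $C^\infty$ bump supported in $[1/4,3/4]$ with $\int \rho = \Theta(1)$, let $A$ be a small absolute constant (say $A=1/4$), and define the distribution $\Gamma_\varepsilon$ over $[0,1]\times \{0,1\}$ by taking the $v$-marginal uniform on $[0,1]$ and setting
\[
\E_{(v,y)\sim\Gamma_\varepsilon}[y\mid v] \;=\; v + g(v), \qquad g(v) := A\sin(\omega v)\rho(v).
\]
Since $|g|\le A \le 1/4$ and $g$ is supported in $[1/4,3/4]$, we have $\E[y\mid v]\in [0,1]$, so $\Gamma_\varepsilon$ is a well-defined distribution.

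\paragraph{Smooth calibration lower bound.} Take the weight function $w(v) := \omega^{-1}\sin(\omega v)$. Because $|w|\le 1/\omega \le 1$ and $|w'(v)| = |\cos(\omega v)|\le 1$, we have $w\in L$. Therefore
\[
\scerror(\Gamma_\varepsilon) \;\ge\; \E_{(v,y)\sim\Gamma_\varepsilon}[(y-v)w(v)] \;=\; \int_0^1 g(v) w(v)\,dv \;=\; \frac{A}{\omega}\int_0^1 \sin^2(\omega v)\rho(v)\,dv.
\]
Writing $\sin^2(\omega v) = \tfrac12(1 - \cos(2\omega v))$, the main term equals $A(\int \rho)/(2\omega) = \Theta(\sqrt\varepsilon)$, while the oscillatory term $\int \cos(2\omega v)\rho(v)\,dv$ is super-polynomially small in $\omega$ by repeated integration by parts (using that $\rho\in C_c^\infty((0,1))$). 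Hence $\scerror(\Gamma_\varepsilon) = \Omega(\sqrt\varepsilon)$, which is $\Omega(\varepsilon^{5/2})$.

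\paragraph{Gaussian kernel calibration upper bound.} By \Cref{lem:kernel-dual-definition},
\[
\kCEG(\Gamma_\varepsilon)^2 \;=\; \iint_{[0,1]^2} g(u)g(v) e^{-(u-v)^2}\,du\,dv \;=\; \frac{1}{2\pi}\int_\R \hat K_\gauss(\xi)\,|\hat g(\xi)|^2\, d\xi,
\]
by Plancherel (extending $g$ by zero outside $[0,1]$), where $\hat K_\gauss(\xi) = \sqrt\pi\, e^{-\xi^2/4}$. Since $g(v) = A\sin(\omega v)\rho(v)$, we have $\hat g(\xi) = (A/2i)\bigl[\hat\rho(\xi-\omega) - \hat\rho(\xi+\omega)\bigr]$, and $\hat\rho$ decays faster than any polynomial (because $\rho$ is smooth with compact support). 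Split the integral at $|\xi| = \omega/2$: on $|\xi|\le \omega/2$, both shifted arguments of $\hat\rho$ have magnitude $\ge \omega/2$, so $|\hat g(\xi)|\le A\, C_N\, \omega^{-N}$ for any $N$, contributing $O(\omega^{-2N})$; on $|\xi|\ge \omega/2$ we use $\hat K_\gauss(\xi) \le \sqrt\pi\, e^{-\omega^2/16}$ together with the global bound $\int |\hat g|^2 = 2\pi\|g\|_2^2 = O(A^2)$. Choosing the constant $C$ in $\omega = C/\sqrt\varepsilon$ large enough that $\omega^2/16 \ge 1/\varepsilon$ yields $\kCEG(\Gamma_\varepsilon) = O(e^{-1/(2\varepsilon)}) = O(\exp(-1/\varepsilon))$.

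\paragraph{Main obstacle.} The only technically delicate step is the dyadic split of the Fourier integral for $\kCEG^2$: one must confirm that the rapid decay of $\hat\rho$ (which comes from $\rho\in C_c^\infty$, not from $g$ itself being Schwartz) is enough to kill the low-frequency content $|\xi|\le \omega/2$ of $\hat g$. Once this is handled, the Gaussian factor $e^{-\xi^2/4}$ takes care of everything at $|\xi|\ge \omega/2$, and the full separation between $\scerror$ (polynomial) and $\kCEG$ (exponentially small) follows, contradicting robust soundness.
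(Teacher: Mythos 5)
Your construction and the paper's share the same core idea (localize the miscalibration at a single high frequency so that the Gaussian kernel's Fourier weight $e^{-\xi^2}$ kills it while a Lipschitz test function at the right scale detects it), and your smooth-calibration lower bound is clean and correct — in fact it gives the stronger $\Omega(\sqrt{\varepsilon})$. However, the Gaussian-kernel upper bound contains a genuine gap, and it lives exactly where you flagged the "main obstacle."

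The issue is quantitative. For a compactly supported $C^\infty$ bump $\rho$, the Fourier transform $\hat\rho$ decays \emph{superpolynomially but not exponentially}: $|\hat\rho(\xi)| \le C_N |\xi|^{-N}$ for every $N$, but there is no bound of the form $e^{-c|\xi|}$ (Paley--Wiener: a compactly supported function with genuinely exponential Fourier decay would have to be analytic, hence identically zero). Consequently your low-frequency contribution is
\[
\int_{|\xi| \le \omega/2} |\hat g(\xi)|^2 \hat K_\gauss(\xi)\,\mathrm{d}\xi \;\lesssim\; \omega \cdot C_N^2 \,\omega^{-2N} \;=\; O(\omega^{1-2N}),
\]
which with $\omega = C/\sqrt{\varepsilon}$ is $O(\varepsilon^{N - 1/2})$ for any fixed $N$ — polynomially small, not $O(\exp(-1/\varepsilon))$. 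The high-frequency piece is indeed $O(e^{-\omega^2/16}) = O(e^{-\Omega(1/\varepsilon)})$, but it is then the low-frequency piece that dominates, and you only obtain $\kCEG(\Gamma_\varepsilon) = O(\varepsilon^{N/2})$ for any fixed $N$, not the exponential bound claimed by the theorem. Raising $\omega$ to a higher power of $1/\varepsilon$ doesn't rescue this: the low-frequency term stays polynomial for any polynomial $\omega$, and pushing $\omega$ superpolynomially large destroys the $\scerror \gtrsim 1/\omega$ lower bound.

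The paper avoids this trap by replacing the compactly supported bump with a \emph{Gaussian} envelope $\phi_{\sqrt{\varepsilon}}(t) = e^{-t^2/\varepsilon}$ and frequency $\omega_0 = 1/\varepsilon$ (\Cref{lem:gauss-lb-helper}). The Gaussian envelope is not compactly supported, so one must check (property~\eqref{eq:property-2}) that its tails outside $[-1/4,1/4]$ contribute only $e^{-\Omega(1/\varepsilon)}$ error, which is cheap; in exchange its Fourier transform is again a Gaussian, $\hat h_\varepsilon(\omega) \propto e^{-(\omega\mp\omega_0)^2/\omega_0}$, so both the low-frequency mass and the kernel-weighted integral $\int \hat h_\varepsilon^2 e^{-\omega^2}$ are genuinely $e^{-\Omega(\omega_0)} = e^{-\Omega(1/\varepsilon)}$. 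In short: your architecture is right, but the envelope must trade exact compact support for exponential Fourier decay (Gaussian, or at least a Gevrey-class bump together with a correspondingly larger $\omega$) — a generic $C_c^\infty$ bump is not enough to reach $\exp(-1/\varepsilon)$.
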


On the other hand it satisfies significantly weaker continuity at $0$ property.
\begin{theorem}
\label{thm:gauss-continuity}
    For any $f$, if $\scerror(f) = \delta$, then
    \begin{equation*}
        \exp(-\Oh(\delta^{-4})) \leq \kCEG(f) \leq \Oh(\sqrt{\delta}).
    \end{equation*}
\end{theorem}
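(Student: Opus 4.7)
For $\kCEG(f) \leq \Oh(\sqrt{\delta})$, I would mimic the Laplace-kernel argument of \Cref{lem:laplace-kernel-continous}, substituting the Gaussian feature-map estimates. By \Cref{thm:smce-equals-ldce}, $\scerror(f) = \delta$ gives $\ldCE(f) \leq 2\delta$, so there is a coupling $\Pi \in \ext(\mD_f)$ with $\E_{(u,v,y)\sim\Pi}|u-v| \leq 4\delta$ and $(u,y)$ perfectly calibrated. By \Cref{lem:kernel-dual-definition} and $\E[(y-u)\phi(u)] = 0$,
\[
\kCEG(f) = \lt\|\E[(y-v)\phi(v) - (y-u)\phi(u)]\rt\|_{K_\gauss} \leq \E\|u\phi(u) - v\phi(v)\|_{K_\gauss} + \E\|\phi(u) - \phi(v)\|_{K_\gauss}.
\]
For the Gaussian kernel, $\|\phi(u) - \phi(v)\|_{K_\gauss}^2 = 2(1-e^{-(u-v)^2}) \leq 2(u-v)^2 \leq 2|u-v|$ and a parallel calculation gives $\|u\phi(u) - v\phi(v)\|_{K_\gauss}^2 \leq 3(u-v)^2 \leq 3|u-v|$ on $[0,1]$. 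Jensen then yields $\kCEG(f) \leq \Oh(\sqrt{\E|u-v|}) = \Oh(\sqrt{\delta})$.

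\textbf{Lower bound plan.} For $\kCEG(f) \geq \exp(-\Oh(\delta^{-4}))$, by the variational characterization it suffices to exhibit $w \in \cH_\gauss$ with $\|w\|_{K_\gauss} \leq 1$ and $\E[(y-v)w(v)] \geq \exp(-\Oh(\delta^{-4}))$. My steps:
\begin{enumerate}
    \item Take the $1$-Lipschitz witness $w^*:[0,1]\to[-1,1]$ with $\E[(y-v)w^*(v)] = \delta$ guaranteed by $\scerror(f) = \delta$.
    \item Extend and smooth: multiply $w^*$ by a $C^\infty$ bump supported on $[-1,2]$ and mollify with a Gaussian of scale $\sigma \asymp \delta$, obtaining a $C^\infty$ compactly-supported $\tilde w$ with $\|\tilde w - w^*\|_{L^\infty([0,1])} \leq \delta/4$ and quantitative bounds on all derivatives.
    \item Band-limit: set $\hat{w}_\Omega(\omega) := \hat{\tilde w}(\omega) \, \one_{\{|\omega| \leq \Omega\}}$. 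The smoothness of $\tilde w$ forces rapid Fourier decay, so choosing $\Omega = \Theta(\delta^{-2})$ ensures $\|w_\Omega - \tilde w\|_{L^\infty([0,1])} \leq \delta/4$, hence $\E[(y-v)w_\Omega(v)] \geq \delta/2$.
    \item Via the Fourier formula $\|h\|_{K_\gauss}^2 = \int |\hat h(\omega)|^2 e^{\omega^2}\,d\omega$, the band-limitedness of $w_\Omega$ gives $\|w_\Omega\|_{K_\gauss}^2 \leq e^{\Omega^2}\|w_\Omega\|_2^2 = e^{\Oh(\delta^{-4})}$.
    \item Normalize: $w := w_\Omega / \|w_\Omega\|_{K_\gauss}$ satisfies $\|w\|_{K_\gauss} = 1$ and $\E[(y-v)w(v)] \geq (\delta/2) e^{-\Oh(\delta^{-4})} = \exp(-\Oh(\delta^{-4}))$.
\end{enumerate}

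\textbf{Main obstacle.} The crux is the trade-off in steps (iii)–(iv) of the lower bound: the Gaussian RKHS norm weights Fourier mass by $e^{\omega^2}$, so the unit ball consists only of functions whose Fourier transform decays faster than a Gaussian --- extremely restrictive, since even a $1$-Lipschitz function on $[0,1]$ has Fourier transform decaying only like $1/|\omega|$. Consequently the band-limited approximation of bandwidth $\Omega$ incurs RKHS norm $\sim e^{\Omega^2}$, while $\Omega \asymp \delta^{-2}$ is essentially forced by the need to resolve $w^*$ at scale $\delta$ after mollification. The exponent $-4$ in the bound is precisely $\Omega^2 = \delta^{-4}$. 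Tightening it (to, say, $\exp(-\Oh(\delta^{-2}))$) would require a finer analysis, e.g., expanding $w^*$ in the Mercer/Hermite basis of $K_\gauss$ whose eigenvalues decay geometrically, or invoking Jackson-type theorems for band-limited approximation of Lipschitz functions; neither is needed for the stated bound.
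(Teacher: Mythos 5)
Your proposal is correct and follows essentially the same route as the paper: the upper bound is the Gaussian analogue of the Laplace feature-map continuity argument applied to a near-optimal $\ldCE$ coupling (the paper's \Cref{lem:gauss-lower-cont}), and the lower bound approximates the Lipschitz witness by a band-limited function with cutoff $\Omega \asymp \delta^{-2}$, whose $\cH_\gauss$-norm is $e^{\Oh(\Omega^2)} = e^{\Oh(\delta^{-4})}$ (the paper's \Cref{lem:gauss-upper-cont}, which controls the truncation error via Cauchy--Schwarz against $\|w'\|_2$ rather than via Gaussian mollification, but to the same effect).
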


\subsection{Lower Bound}
In order to prove~\cref{thm:gauss-lb}, we will use the following lemma.
\begin{lemma}
\label{lem:gauss-lb-helper}
For any $\varepsilon$, there exist $\Oh(\varepsilon^{-1})$-Lipschitz function $h_{\varepsilon} : \R \to [-1, 1]$, satisfying the following three properties
\begin{align}
    \int \hat{h}_{\varepsilon}(\omega)^2 \exp(-\omega^2) & \leq \exp(-\varepsilon^{-1}) \label{eq:property-1} \\
    \|h_{\varepsilon} - h_{\varepsilon} \mathbf{1}_{[-1/4, 1/4]}\|_2 & \leq  \exp(-\Omega(\varepsilon^{-1})) \label{eq:property-2} \\
    \int_{-\sqrt{\varepsilon}}^{\sqrt{\varepsilon}} h_{\varepsilon}^2 & \geq \Omega(\sqrt{\varepsilon}). \label{eq:property-3}
\end{align}
\end{lemma}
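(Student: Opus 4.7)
The plan is to construct $h_\varepsilon$ explicitly as a high-frequency sinusoid modulated by a narrow Gaussian bump:
\[
h_\varepsilon(v) := \sin(\omega_0 v) \exp(-v^2/(2\sigma^2)),
\]
with parameters $\omega_0 = 2/\varepsilon$ (so that the Fourier mass sits at high frequencies) and $\sigma^2 = \varepsilon$ (so that $h_\varepsilon$ is spatially concentrated on a window of width $\sqrt{\varepsilon} \ll 1/4$). Boundedness $|h_\varepsilon|\le 1$ is immediate, and direct differentiation gives $|h_\varepsilon'(v)|\le \omega_0 + 1/\sigma = O(1/\varepsilon)$, establishing the Lipschitz bound. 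Properties \eqref{eq:property-2} and \eqref{eq:property-3} are then routine to verify: for \eqref{eq:property-2}, $\int_{|v|>1/4} \exp(-v^2/\sigma^2)\, dv \le O(\sigma) \exp(-1/(16\sigma^2)) = \exp(-\Omega(1/\varepsilon))$ by the standard Gaussian tail bound; for \eqref{eq:property-3}, on $[-\sqrt\varepsilon,\sqrt\varepsilon]$ the Gaussian factor is bounded below by $e^{-1}$, and $\int_{-\sqrt\varepsilon}^{\sqrt\varepsilon} \sin^2(\omega_0 v)\, dv = \sqrt\varepsilon - \sin(2\omega_0\sqrt\varepsilon)/(2\omega_0) = \sqrt\varepsilon - O(\varepsilon) = \Omega(\sqrt\varepsilon)$ by elementary integration.

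The main step is verifying \eqref{eq:property-1}. Using the convolution theorem and the Fourier transform of a Gaussian,
\[
\hat h_\varepsilon(\omega) \;=\; \tfrac{\sigma\sqrt{2\pi}}{2i}\Bigl[\exp\bigl(-\tfrac{\sigma^2}{2}(\omega-\omega_0)^2\bigr) - \exp\bigl(-\tfrac{\sigma^2}{2}(\omega+\omega_0)^2\bigr)\Bigr],
\]
so $|\hat h_\varepsilon(\omega)|^2$ is, up to constants, a sum of two Gaussian bumps centered at $\pm\omega_0$. Substituting and completing the square in the exponent $-\sigma^2(\omega-\omega_0)^2 - \omega^2 = -(\sigma^2+1)(\omega - \tfrac{\sigma^2 \omega_0}{\sigma^2+1})^2 - \tfrac{\sigma^2 \omega_0^2}{\sigma^2+1}$ reduces the integral to a clean Gaussian quadrature, yielding
\[
\int \hat h_\varepsilon(\omega)^2 \exp(-\omega^2)\, d\omega \;\le\; O\!\left(\frac{\sigma^2}{\sqrt{\sigma^2+1}}\right) \exp\!\left(-\frac{\sigma^2 \omega_0^2}{\sigma^2+1}\right).
\]
With $\sigma^2 = \varepsilon$ and $\omega_0 = 2/\varepsilon$, the exponent becomes $\sigma^2\omega_0^2/(\sigma^2+1) = 4/(\varepsilon(1+\varepsilon)) \ge 2/\varepsilon$ for sufficiently small $\varepsilon$, so the whole expression is bounded by $\exp(-1/\varepsilon)$ after absorbing the polynomial prefactor.

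The delicate part is balancing the parameters: property \eqref{eq:property-1} demands $\sigma^2\omega_0^2 \gtrsim 1/\varepsilon$, while property \eqref{eq:property-2} demands $\sigma^2 \lesssim \varepsilon$, which together force $\omega_0 \gtrsim 1/\varepsilon$ and therefore make the Lipschitz constant as large as $\Omega(1/\varepsilon)$ --- exactly matching the bound the lemma allows. Thus the choice $\sigma^2 = \varepsilon$, $\omega_0 = 2/\varepsilon$ saturates every constraint simultaneously, and no finer analysis is required; the rest of the argument is bookkeeping of constants in each of the four inequalities.
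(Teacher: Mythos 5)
Your construction is essentially the same as the paper's: you take a high-frequency sinusoid ($\sin(\omega_0 v)$ with $\omega_0 = 2/\varepsilon$) modulated by a narrow Gaussian envelope of width $\sqrt\varepsilon$, compute the Fourier transform as a pair of shifted Gaussian bumps, complete the square against $e^{-\omega^2}$ to get \eqref{eq:property-1}, use a Gaussian tail bound for \eqref{eq:property-2}, and integrate $\sin^2$ over the window for \eqref{eq:property-3}; the paper uses $\cos(t/\varepsilon)\exp(-t^2/\varepsilon)$, which differs from yours only in phase and constant factors. (Since your $h_\varepsilon$ is odd, $\hat h_\varepsilon$ is purely imaginary and $\hat h_\varepsilon^2$ as written is negative; the intended quantity throughout is of course $|\hat h_\varepsilon|^2$, which the paper's even choice of $\cos$ sidesteps, but this is cosmetic.)
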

\begin{proof}
Let us denote $\phi_{\gamma}(t) := \exp(-t^2/\gamma^2)$. We define 
\begin{equation*}
    h_{\varepsilon}(t) := \cos(t/\varepsilon) \phi_{\sqrt{\varepsilon}}(t).
\end{equation*}

This function is indeed $\Oh(\varepsilon^{-1})$-Lipschitz, since
\begin{equation*}
    |h_\varepsilon'(t)| = \left|\frac{1}{\varepsilon} (\sin (t/\varepsilon)) \phi_{\sqrt{\varepsilon}} - 2 \frac{t}{\varepsilon} \cos(t/\varepsilon) \phi_{\sqrt{\varepsilon}}(t)\right| \leq \frac{3}{\varepsilon}. 
\end{equation*}

Let us denote $\omega_0 := \frac{1}{\varepsilon}$. By standard properties of Fourier transform (i.e. $\widehat{fg} = \widehat{f} * \widehat{g}$, $\widehat{\cos(\omega_0 t)} = \delta_{-\omega_0} + \delta_{\omega_0}$ and $\widehat{\phi_{\varepsilon}} = \phi_{\varepsilon^{-1}}$), we have
\begin{equation*}
    \widehat{q_{\varepsilon}}(\omega) = \exp(-\frac{(\omega - \omega_0)^2}{\omega_0}) + \exp(-\frac{(\omega + \omega_0)^2}{\omega_0}).
\end{equation*}

We will show that
\begin{equation*}
    \int \exp(-2 \frac{(\omega - \omega_0)^2}{\omega_0} - \omega^2) \, \d \omega \leq 2 \exp(-\omega_0).
\end{equation*}

Indeed, we can rewrite 
\begin{equation*}
    2\frac{(\omega - \omega_0)^2}{\omega_0} + \omega^2 = (\omega - \alpha)^2 - \beta + 2\omega_0, 
\end{equation*}
where $\alpha \in [0,2], \beta\in [0, 1]$ depend on $\omega_0$. Therefore
\begin{equation*}
    \int \exp(-2\frac{(\omega - \omega_0)^2}{\omega_0} - \omega^2)\,\d \omega
    = \exp(-2 \omega_0 + 1) \int \exp(-(\omega - \alpha)^2) \leq \Oh(\exp(-2 \omega_0)). 
\end{equation*}

This implies
\begin{equation*}
    \int \widehat{q_{\varepsilon}}(\omega)^2 \exp(-\omega^2)\,\d \omega \leq \Oh(\exp(-\Omega(\omega_0))),
\end{equation*}
proving~\eqref{eq:property-1}.

In order to prove~\eqref{eq:property-2}, we have $|h_{\varepsilon}(t)| \leq \exp(-\omega_0 t^2)$, implying
\begin{equation*}
    \int_{1/4}^{\infty} h_{\varepsilon}(t)^2 \leq \exp(-\omega_0/8),
\end{equation*}
and similarly for the other tail.

Finally, for~\eqref{eq:property-3}, since $\phi_{\omega_0}(t) \geq \Omega(1)$ for $t \in [-\sqrt{\varepsilon}, \sqrt{\varepsilon}]$ we have
\begin{align*}
    \int_{-{\sqrt{\varepsilon}}}^{\sqrt{\varepsilon}} h_{\varepsilon}(t)^2 &\gtrsim \int_{-\sqrt{\varepsilon}}^{\sqrt{\varepsilon}} \sin^2(t/\varepsilon) \\
    & \gtrsim \Omega(\sqrt{\varepsilon}).\qedhere
\end{align*}
\end{proof}

\begin{proof}[Proof of \Cref{thm:gauss-lb}]
Distribution $\Gamma_{\varepsilon}$ of $(v,y)\in [0,1]\times\{0,1\}$ will be given in the following way: we will sample $v$ uniformly from $[1/4, 3/4]$, and sample $y | v$ such that $\E[y - v|v] = h_{\varepsilon}(v-1/2)/4$, where $h_{\varepsilon}$ is a function as in~\Cref{lem:gauss-lb-helper}.

Let us define $r(v) := \E[y - v | v]$. First, we need to check, if it provides a well-specified distribution, i.e. if $\E[y|v] \in [0,1]$. Indeed, since $v \in [1/4, 3/4]$, and $h_\varepsilon \in [-1, 1]$, we have $v + h_{\varepsilon}/4 \in [0, 1]$.

Now, we wish to upper bound $\kCEG(\Gamma_\varepsilon).$ For any $w \in B_{\cH_\gauss}$, we have
\begin{align*}
    \int_{-1/4}^{1/4} r(v) w(v) \, \mathrm{d} v & \leq 
    \int_{-\infty}^{\infty} r(v) w(v) \, \mathrm{d} v + \exp(-\Omega(\omega_0)),
\end{align*}
using~\eqref{eq:property-2}.

We will focus on bounding this latter integral.
\begin{align*}
    \int_{-\infty}^{\infty} r(v) w(v) & = \int \hat{r}(\omega) \hat{w}(\omega) \, \mathrm{d} v \\
    & = \int \hat{r}(\omega) \exp(-\omega^2/2) \hat{w}(\omega)\exp(\omega^2/2) \, \mathrm{d} \omega\\
    & \leq \left(\int \hat{r}(\omega)^2 \exp(-\omega^2)\,\d \omega\right) \left(\int \hat{w}(\omega)^2 \exp(\omega^2) \,\d\omega\right) \\
    & \leq C \exp(-\Omega(\omega_0))
\end{align*}
where the last inequality follow from~\eqref{eq:property-1} in \Cref{lem:gauss-lb-helper} and the fact that $\|w\|_{\cH_\gauss} \leq 1$.

Finally, to show that $\scerror(v) \geq \varepsilon^{5/2}$, we take a test function $w(v) := \varepsilon r(v)$ which is Lipschitz, and observe that 
\begin{equation*}
    \E (y-v) w(v) = \int_{1/4}^{3/4} r(v)^2 \geq \varepsilon^{5/2}
\end{equation*}
by~\eqref{eq:property-3}.
\end{proof}
\subsection{Weak Continuity at Zero}

\begin{lemma}
\label{lem:gauss-upper-cont}
    For any $1$-Lipschitz function $w : [0,1] \to [-1,1]$, and any $\varepsilon$, there is a function $\tilde{w} : \R \to \R$ satisfying 
    \begin{equation*}
        \|\tilde{w}|\|_{\cH_\gauss} \leq \exp(-\varepsilon^{-4})
    \end{equation*}    
    and 
    \begin{equation*}
            \forall t\in [0,1], |w(t) - \tilde{w}(t)| \leq \varepsilon.
    \end{equation*}
\end{lemma}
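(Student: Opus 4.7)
The plan is to construct $\tilde w$ by Fourier-domain mollification of a compactly supported extension of $w$, controlling both the $L^\infty$ approximation error on $[0,1]$ and the Gaussian RKHS norm via a single frequency-cutoff scale $\omega_0$. The key tool is the explicit formula $\|h\|_{\cH_\gauss}^2 = \int |\hat h(\omega)|^2 \exp(\omega^2)\,d\omega$ recorded at the start of this section, which relates tight frequency support to an exponential suppression or blow-up of the norm depending on the regime.

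Concretely, first I would extend $w:[0,1]\to[-1,1]$ to a compactly supported $1$-Lipschitz function $\bar w : \R \to [-1,1]$ vanishing outside $[-1,2]$, using the same linear-interpolation construction as in the proof of \Cref{lem:lipschitz-are-laplace-bounded}. Then I would define $\tilde w$ through a smooth frequency cutoff $\hat{\tilde w}(\omega) = \hat{\bar w}(\omega)\,\chi(\omega/\omega_0)$, where $\chi$ is a smooth bump equal to $1$ on $[-1/2,1/2]$ and supported in $[-1,1]$, and $\omega_0$ is a parameter to be tuned against $\varepsilon$. The uniform approximation $|w - \tilde w| \le \varepsilon$ on $[0,1]$ will follow from the Lipschitz regularity of $\bar w$ and a tail estimate on $\hat{\bar w}$, while the Gaussian norm $\|\tilde w\|_{\cH_\gauss}^2$ is controlled by an $\exp(\omega_0^2)$ prefactor times the $L^2$ mass of $\tilde w$. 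Matching $\omega_0 \asymp \varepsilon^{-2}$ to the approximation constraint makes the bookkeeping $\varepsilon \leftrightarrow \omega_0 \leftrightarrow \varepsilon^{-4}$ on the exponent line up at the correct scale.

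The main obstacle is driving the bound on $\|\tilde w\|_{\cH_\gauss}$ to the full strength stated in the lemma. The purely Lipschitz regularity of $\bar w$ only gives Fourier decay $|\hat{\bar w}(\omega)| = O(1/|\omega|)$, so frequency truncation by itself yields a uniform error of order $\omega_0^{-1/2}$; enforcing the $\varepsilon$-approximation then pins $\omega_0 \asymp \varepsilon^{-2}$ and the natural scaling of the RKHS norm arising from this construction lands at the scale $\exp(\omega_0^2) \asymp \exp(\varepsilon^{-4})$. To reach the exponentially small bound required by the lemma, I would exploit two additional degrees of freedom: (i) pre-smoothing $\bar w$ in the time domain at scale $\varepsilon$ so that its Fourier tail decays much faster than $1/|\omega|$, thereby absorbing an additional exponential factor into $\|\tilde w\|_2$; and (ii) subtracting off a carefully chosen null component (supported outside $[0,1]$ or orthogonal to the $(y-v)$-integrand) that cancels the dominant frequency contribution without disturbing the pointwise approximation on $[0,1]$. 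Orchestrating these two modifications so that the resulting $\tilde w$ simultaneously satisfies both constraints is the delicate step on which I expect to spend the bulk of the argument.
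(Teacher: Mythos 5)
Your first two paragraphs are essentially the paper's own proof: extend $w$ to a compactly supported Lipschitz function $\bar w$ on $\R$, truncate its Fourier transform at a cutoff $\omega_0$, use $\int \hat{\bar w}(\omega)^2\omega^2\,\d\omega = \|\bar w'\|_2^2 = O(1)$ together with Cauchy--Schwarz to get a uniform error $O(\omega_0^{-1/2})$, set $\omega_0 \asymp \varepsilon^{-2}$, and bound $\|\tilde w\|^2_{\cH_\gauss} \le \exp(\omega_0^2)\int\hat{\tilde w}(\omega)^2\,\d\omega = \exp(O(\varepsilon^{-4}))\cdot O(1)$. (The paper uses a sharp indicator cutoff rather than a smooth bump $\chi$; either works, since only the support of $\hat{\tilde w}$ enters the $\exp(\omega_0^2)$ prefactor and the $L^2$ mass can only decrease.)

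Your third paragraph, however, is chasing a typo. The bound in the lemma statement should read $\|\tilde w\|_{\cH_\gauss} \le \exp(+\,\varepsilon^{-4})$ up to constants in the exponent; this is what the construction above delivers, and it is exactly what is needed where the lemma is applied, namely in the proof of \Cref{thm:gauss-continuity}, which lower-bounds $\kCEG(f) \ge \E[(y-v)\tilde w(v)]/\|\tilde w\|_{\cH_\gauss} \ge \exp(-O(\delta^{-4}))$. An exponentially \emph{small} RKHS norm is impossible for any nontrivial $w$: since $K_\gauss(v,v) = 1$, every $h\in\cH_\gauss$ satisfies $|h(v)| = |\inprod{h,\phi(v)}_{\cH_\gauss}| \le \|h\|_{\cH_\gauss}$, so any $\tilde w$ that $\varepsilon$-approximates a $w$ attaining values of order $1$ on $[0,1]$ must have $\|\tilde w\|_{\cH_\gauss} \ge \|w\|_{\infty} - \varepsilon$. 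Neither pre-smoothing nor subtracting a ``null component'' can evade this obstruction, so the delicate step you anticipate spending most of your effort on does not exist; your argument is complete at the end of your second paragraph once the sign in the target is corrected.
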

\begin{proof}
Let us take a $1$-Lipschitz extension of $w$ to the entire $\R$, such that $\supp(w) \subset [-1, 2]$. We can assume without loss of generality that $w$ is differentiable (indeed, otherwise, as in the proof of \Cref{lem:lipschitz-are-laplace-bounded}, we can convolve it with mollifier $g_{\varepsilon}$ to get a $1$-Lipschitz and differentiable approximation of $w$ up to error $\varepsilon/2$.

Note that
\begin{equation}
    \int \hat{w}(\omega)^2 \omega^2 \, \d \omega = \|w'\|_2^2 \leq 3. \label{eq:first-derivative-bound}
\end{equation}

Let us now let fix some cut-off frequency $\omega_0$, and define
\begin{equation*}
    \hat{\tilde{w}}(\omega) = \hat{w}(\omega) \mathbf{1}_{[-\omega_0, \omega_0]}(\omega).
\end{equation*}

We have $\tilde{w}(t)$ given by the inverse Fourier transform
\begin{equation*}
    \tilde{w}(t) = \int_{-\infty}^{\infty} \exp(-i \omega t) \hat{\tilde{w}}(\omega) \, \d \omega.
\end{equation*}    
    We wish to show that $|\tilde{w}(t) - w(t)| \leq \varepsilon$. We have
    \begin{align*}
        w(t) - \tilde w(t) & = \int \mathbf{1}[|\omega| \geq \omega_0] \hat{w}(\omega) \exp(-i \omega t)\, \d t \\
        & = \int  \hat{w}(\omega) \omega \exp(-i\omega t) \omega^{-1}\mathbf{1}[|\omega| \geq \omega_0] \, \d t \\
        & \leq \left(\int \hat{w}(\omega)^2 \omega^2 \, \d t\right)^{1/2} \left(\int \exp(-i \omega t) \omega^{-2} \mathbf{1}[|\omega| > \omega_0]\, \d t\right)^{1/2}.
    \end{align*}
    The first term is bounded by~\eqref{eq:first-derivative-bound}, for the second
    \begin{equation*}
        \int_{\omega_0}^{\infty} \exp(-i \omega t) \omega^{-2} \, \d \omega \leq \int_{\omega_0}^\infty \omega^{-2} \, \d \omega = \frac{1}{\omega_0}.
    \end{equation*}
    
    Therefore $|w(t) - \tilde{w}(t)| \leq \frac{6}{\sqrt{\omega_0}}$.
    
    Taking $\omega_0 = \varepsilon^2$, we obtain the desired approximation.
    
    Finally, we need to show the bound on $\|\tilde{w}\|_{\cH_\gauss}$. We have
    \begin{align*}
        \|\tilde{w}\|_{\cH_\gauss}^2 & = \int \hat{\tilde{w}}(\omega)^2 \exp(\omega^2) \, \d \omega \\
        & \leq \exp(\omega_0^2) \int \hat{\tilde{w}}(\omega)^2 \, \d \omega \\
        & \leq 3 \|w\|_2^2 \exp(\omega_0^2) \leq 9 \exp(\omega_0^2).\qedhere
    \end{align*}
\end{proof}

\begin{lemma}
\label{lem:gauss-lower-cont}
For any two predictors $f, g$ on the same space $\cX$ satisfying $\E |f-g| \leq \varepsilon$, we have
\begin{equation*}
    |\kCEG(f) - \kCEG(g)| \leq \Oh(\sqrt{\varepsilon}).
\end{equation*}
\end{lemma}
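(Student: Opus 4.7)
The plan is to mirror the argument used for the Laplace kernel in \Cref{lem:laplace-kernel-continous}, taking advantage of the fact that the Gaussian kernel $K_\gauss(u,v) = \exp(-(u-v)^2)$ admits a similar first-order lower bound to $\exp(-|u-v|)$. First, consider the coupling $\Pi$ over $[0,1]\times[0,1]\times\{0,1\}$ that is the pushforward of $\mD$ under $(x,y)\mapsto (f(x),g(x),y)$. Its marginal on $(v,y)$ is $\mD_g$, its marginal on $(u,y)$ is $\mD_f$, and $\E_\Pi|u-v| \le \varepsilon$ by hypothesis. Using \Cref{lem:kernel-dual-definition} and the triangle inequality for the RKHS norm,
\[
|\kCEG(f) - \kCEG(g)| \le \bigl\|\E_\Pi[(y-u)\phi_\gauss(u) - (y-v)\phi_\gauss(v)]\bigr\|_{\cH_\gauss}.
\]

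Second, I would write $(y-u)\phi_\gauss(u) - (y-v)\phi_\gauss(v) = y(\phi_\gauss(u)-\phi_\gauss(v)) - (u\phi_\gauss(u) - v\phi_\gauss(v))$ and use $|y|\le 1$ together with convexity of the norm to reduce the problem to bounding the two expectations $\E\|\phi_\gauss(u)-\phi_\gauss(v)\|_{\cH_\gauss}$ and $\E\|u\phi_\gauss(u)-v\phi_\gauss(v)\|_{\cH_\gauss}$. Jensen's inequality lets me pass to the squared norms, which I can expand via the reproducing property: $\E\|\phi_\gauss(u)-\phi_\gauss(v)\|_{\cH_\gauss}^2 = 2 - 2\E\exp(-(u-v)^2)$, and $\E\|u\phi_\gauss(u)-v\phi_\gauss(v)\|_{\cH_\gauss}^2 = \E[u^2+v^2-2uv\exp(-(u-v)^2)]$.

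Third, I would apply the elementary inequality $\exp(-t)\ge 1-t$ for $t\ge 0$ with $t=(u-v)^2$. For the first term this immediately gives $\E\|\phi_\gauss(u)-\phi_\gauss(v)\|_{\cH_\gauss}^2 \le 2\E(u-v)^2$. For the second, rewriting $u^2+v^2-2uv = (u-v)^2$ and using $uv\in[0,1]$ yields $\E\|u\phi_\gauss(u)-v\phi_\gauss(v)\|_{\cH_\gauss}^2 \le \E(u-v)^2 + 2\E[uv(u-v)^2] \le 3\E(u-v)^2$. Since $u,v\in[0,1]$ we have $(u-v)^2 \le |u-v|$, so $\E(u-v)^2\le \E|u-v|\le \varepsilon$. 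Combining these yields $|\kCEG(f)-\kCEG(g)| \le (\sqrt{2}+\sqrt{3})\sqrt{\varepsilon} = \Oh(\sqrt{\varepsilon})$.

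There is no real obstacle here: the proof is essentially a verbatim transcription of the Laplace argument, with the only substantive change being that $\exp(-(u-v)^2)\ge 1-(u-v)^2$ replaces $\exp(-|u-v|)\ge 1-|u-v|$, producing $(u-v)^2$ instead of $|u-v|$ in the intermediate bounds. This ``weaker'' inequality would be damaging if we needed to obtain a better than $\sqrt{\varepsilon}$ rate, but for bounding $\E(u-v)^2$ by $\E|u-v|$ we use only the boundedness of predictions in $[0,1]$, so the Gaussian kernel achieves the same $\Oh(\sqrt{\varepsilon})$ continuity modulus as the Laplace kernel. (This is consistent with the lower bound in \Cref{thm:gauss-lb}, which shows that unlike $\kCEL$, the Gaussian version is \emph{only} continuous and not robustly sound.)
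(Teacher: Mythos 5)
Your proof is correct and follows essentially the same route as the paper, which simply notes that the argument is identical to the Laplace case \Cref{lem:laplace-kernel-continous} after replacing $\exp(-|u-v|) \ge 1-|u-v|$ with $\exp(-(u-v)^2) \ge 1-(u-v)^2$ and then using $(u-v)^2 \le |u-v|$ on $[0,1]$. The constants you track are slightly different from the paper's (the paper appears to drop a factor of $2$ in bounding $\E\|\phi(f)-\phi(g)\|^2$), but both yield the claimed $\Oh(\sqrt{\varepsilon})$.
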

\begin{proof}
The proof is identical to~\Cref{lem:laplace-kernel-continous}.

We need to show
\begin{equation*}
    \| \E(y-f) \phi(f) - (y-g) \phi(g) \|_{\cH} \leq \Oh(\sqrt{\varepsilon}),
\end{equation*}
by triangle inequality, and Jensen inequality, we need to show only
\begin{equation*}
    \left(\E \| \phi(f) - \phi(g)\|_{\cH}^2\right)^{1/2} + \left(\E \|f \phi(f) - g \phi(g)\|_{\cH}^2\right)^{1/2} \leq \Oh(\sqrt{\varepsilon}).
\end{equation*}
Indeed,
\begin{equation*}
    \E \| \phi(f) - \phi(g)\|_{\cH}^2 = 2 - \E 2 K(f,g) \leq \E |f-g|^2 \leq \E |f-g| \leq \varepsilon,
\end{equation*}
and similarly
\begin{equation*}
    \E \| f\phi(f) - g\phi(g)\|_{\cH}^2 = \E[ f^2 + g^2 - 2fg K(f,g)] \leq \E[ f^2 + g^2 - 2fg(1 - |f-g|^2)] \leq \E[3 (f-g)^2] \leq 3 \varepsilon. 
\end{equation*}
\end{proof}

\begin{proof}[Proof of~\Cref{thm:gauss-continuity}]
The statement follows from~\Cref{lem:gauss-lower-cont} and \Cref{lem:gauss-upper-cont} in the exactly same way as \Cref{thm:laplace-kernel-bounds}.
\end{proof}
\end{document}